\newcommand*\circled[1]{\tikz[baseline=(char.base)]{\node[shape=circle,draw,inner sep=0.2pt] (char) {#1};}}
\newcommand{\rom}[1]{\uppercase\expandafter{\romannumeral #1\relax}}
\def\1{\bm{1}}
\def\eps{{\epsilon}}
\def\rx{{\textnormal{x}}}
\def\ry{{\textnormal{y}}}
\def\rmA{{\mathbf{A}}}
\def\rmI{{\mathbf{I}}}
\def\rmK{{\mathbf{K}}}
\def\vzero{{\bm{0}}}
\def\vone{{\bm{1}}}
\def\vtheta{{\bm{\theta}}}
\def\valpha{{\bm{\alpha}}}
\def\vDelta{{\bm{\Delta}}}
\def\vphi{{\bm{\phi}}}
\def\vpsi{{\bm{\psi}}}
\def\va{{\bm{a}}}
\def\vb{{\bm{b}}}
\def\vg{{\bm{g}}}
\def\vk{{\bm{k}}}
\def\vu{{\bm{u}}}
\def\vv{{\bm{v}}}
\def\vw{{\bm{w}}}
\def\vx{{\bm{x}}}
\def\vy{{\bm{y}}}
\def\vz{{\bm{z}}}
\def\mPhi{{\bm{\Phi}}}
\def\mPsi{{\bm{\Psi}}}
\DeclareMathAlphabet{\mathsfit}{\encodingdefault}{\sfdefault}{m}{sl}
\SetMathAlphabet{\mathsfit}{bold}{\encodingdefault}{\sfdefault}{bx}{n}
\def\gD{{\mathcal{D}}}
\def\gN{{\mathcal{N}}}
\def\gO{{\mathcal{O}}}
\def\gX{{\mathcal{X}}}
\def\sP{{\mathbb{P}}}
\def\sR{{\mathbb{R}}}
\newcommand{\E}{\mathbb{E}}
\DeclareMathOperator*{\argmin}{arg\,min}
\newcommand{\alg}{{\fontfamily{qpl}\selectfont FZooS}}
\newcommand{\fedzo}{{\fontfamily{qpl}\selectfont FedZO}}
\newcommand{\fedavg}{{\fontfamily{qpl}\selectfont FedAvg}}
\newcommand{\fedprox}{{\fontfamily{qpl}\selectfont FedProx}}
\newcommand{\scaffold}{{\fontfamily{qpl}\selectfont SCAFFOLD}}
\newcommand{\scaffoldone}{{\fontfamily{qpl}\selectfont SCAFFOLD (Type \rom{1})}}
\newcommand{\scaffoldtwo}{{\fontfamily{qpl}\selectfont SCAFFOLD (Type \rom{2})}}
\newcommand{\typeone}{{\fontfamily{qpl}\selectfont Type \rom{1}}}
\newcommand{\typetwo}{{\fontfamily{qpl}\selectfont Type \rom{2}}}
\newcommand{\labeltext}[3][]{%
    \@bsphack%
    \csname phantomsection\endcsname
    \def\tst{#1}%
    \def\labelmarkup{\emph}
    \def\refmarkup{}%
    \ifx\tst\empty\def\@currentlabel{\refmarkup{#2}}{\label{#3}}%
    \else\def\@currentlabel{\refmarkup{#1}}{\label{#3}}\fi%
    \@esphack%
    \labelmarkup{#2}
}
\newtheorem{theorem}{Theorem}
\newtheorem*{corollary*}{Corollary}
\newtheorem{corollary}{Corollary}
\newtheorem{proposition}{Proposition}
\newtheorem{lemma}{Lemma}
\newtheorem*{remark}{Remark}
\let\svthefootnote\thefootnote
\newcommand\blankfootnote[1]{%
  \let\thefootnote\relax\footnotetext{#1}%
  \let\thefootnote\svthefootnote%
}
\title{Federated Zeroth-Order Optimization using Trajectory-Informed Surrogate Gradients}
\author{%
Yao Shu, Xiaoqiang Lin, Zhongxiang Dai, Bryan Kian Hsiang Low\\
Dept. of Computer Science, National University of Singapore, Republic of Singapore \\
\texttt{\{shuyao,xiaoqiang.lin,daizhongxiang,lowkh\}@comp.nus.edu.sg}
}
\begin{document}

\maketitle

\begin{abstract}
    Federated optimization, an emerging paradigm which finds wide real-world applications such as federated learning, enables multiple clients (e.g., edge devices) to collaboratively optimize a global function. The clients do not share their local datasets and typically only share their local gradients. However, the gradient information is not available in many applications of federated optimization, which hence gives rise to the paradigm of federated \textit{zeroth-order optimization} (ZOO). Existing federated ZOO algorithms suffer from the limitations of query and communication inefficiency, which can be attributed to (a) their reliance on a substantial number of function queries for gradient estimation and (b) the significant disparity between 
    their realized local updates and the intended global updates.
    To this end, we (a) introduce \textit{trajectory-informed gradient surrogates} which is able to use the 
    history of function queries during optimization for accurate and query-efficient gradient estimation, and (b) develop the technique of \textit{adaptive gradient correction} using these gradient surrogates to mitigate the aforementioned disparity. 
    Based on these, we propose the \textit{\underline{f}ederated} \textit{\underline{z}eroth-\underline{o}rder} \textit{\underline{o}ptimization} \textit{using trajectory-informed} \textit{\underline{s}urrogate gradients} (\alg{}) algorithm for query- and communication-efficient federated ZOO. Our \alg{} achieves theoretical improvements over the existing approaches, which is supported by our real-world experiments such as federated black-box adversarial attack and federated non-differentiable metric optimization.
\end{abstract}

\section{Introduction}
\blankfootnote{Correspondence to: Zhongxiang Dai <daizhongxiang@comp.nus.edu.sg>.}
Due to the growing computational power of edge devices and increasing privacy concerns, recent years have witnessed a surging interest in \textit{federated optimization}, which finds real-world applications such as federated learning \citep{federated}. 
Federated optimization allows the agents to retain their local datasets and hence only share their gradients.
However, in many important applications of federated optimization such as federated black-box adversarial attack \citep{fedzo}, the gradient information is not available. This consequently gives rise to the paradigm of federated \textit{zeroth-order} \textit{optimization} (ZOO), in which the global function to be optimized is an aggregation of the local functions that are distributed on edge devices (i.e., clients) and are only accessible via function queries \citep{fedzo}. 
To tackle federated ZOO, existing algorithms \citep{fedzo} follow the framework of using \textit{finite difference} (FD) for local gradient estimation and hence resorting to federated \textit{first-order optimization} (FOO) algorithms (e.g., \fedavg{} \citep{fedavg}) for optimization.\footnote{So, existing federated FOO algorithms (e.g., \fedprox{} \citep{fedprox}, \scaffold{} \citep{scaffold} and etc.) can be easily adapted to this framework (refer to Sec.~\ref{sec:framework&challenge}). We refer to this simple integration of FD methods and federated FOO algorithms as the \emph{existing federated ZOO algorithms} throughout this paper.} 
Nevertheless, these algorithms 
usually suffer from
both query and communication inefficiency, especially in heterogeneous settings characterized by significant disparities between local and global functions.
This thus impedes their practical applicability, especially in scenarios with restricted query and communication resources. 
To the best of our knowledge, little attention has been dedicated to achieving query- and communication-efficient federated ZOO algorithms.

To address this problem, it is imperative to firstly identify the challenges faced by federated ZOO algorithms which are responsible for their query and communication inefficiency (Sec.~\ref{sec:framework&challenge}). 
Federated ZOO requires multiple \emph{communication} rounds for central server aggregation; between consecutive communication rounds, every client performs several iterations of local optimization using their estimated gradients which are usually approximated via additional function \emph{queries} (e.g., based on FD).
Firstly, we show (Sec.~\ref{sec:framework&challenge}) that the \emph{query inefficiency} of existing federated ZOO algorithms arises from their employment of FD for local gradient estimation, which often requires an excessive number of additional function~queries. 
Therefore, addressing the challenge of query efficiency in federated ZOO calls for a gradient estimation method that requires minimal (ideally zero) additional function queries.
Secondly, we show (Sec.~\ref{sec:framework&challenge}) that the \emph{communication inefficiency} of these existing algorithms results from the disparity between their realized local updates and the intended global updates,~which is typically caused by client heterogeneity. 
Hence, resolving the challenge of communication efficiency requires developing a high-quality gradient correction technique to mitigate such a disparity.

To this end, we propose the \textit{\underline{f}ederated \underline{z}eroth-\underline{o}rder \underline{o}ptimization using trajectory-informed \underline{s}urrogate gradients} (\alg{}) algorithm to address the aforementioned challenges, and hence to achieve query- and communication-efficient federated ZOO. 
Firstly, we introduce the recent \textit{derived Gaussian~process} \citep{zord}, which only requires the optimization trajectory (i.e., the history of function queries during optimization) for gradient estimation, as the local gradient surrogates for the clients, thereby realizing query-efficient gradient estimation in federated ZOO~(Sec.~\ref{sec:local-surrogates}). 
Secondly, based on these local gradient surrogates, we use \textit{random Fourier features} (RFF) approximation \citep{rahimi2007random} to produce a transferable global gradient surrogate (without transferring raw observations), which is an accurate estimate of the gradient of the global function
(Sec.~\ref{sec:global-surrogates}). Using these surrogates, we develop the technique of \textit{adaptive gradient correction} using adaptive gradient correction vector and length to mitigate the disparity between our local updates and the intended global updates, and consequently to improve the communication efficiency of federated ZOO (Sec.~\ref{sec:adaptive-est}).

We verify that our \alg{} has addressed the aforementioned challenges via both theoretical analysis and empirical experiments. 
We firstly theoretically bound the disparity between our realized local updates in \alg{} and the intended global updates in the federated ZOO problems with heterogeneous clients. 
It shows that our local update is superior to those employed by the previous works because it achieves both a better query efficiency and smaller disparity error (Sec.~\ref{sec:est-analysis}).
Based on this, we then prove the convergence of our \alg{} and show that \alg{} also enjoys an improved communication efficiency over the existing algorithms (Sec.~\ref{sec:conv-analysis}). 
Lastly, we use extensive experiments, such as synthetic experiments, federated black-box adversarial attack and federated non-differentiable metric optimization, to show that our \alg{} consistently outperforms the existing federated ZOO algorithms in terms of both query efficiency and communication efficiency (Sec.~\ref{sec:exps}).

\section{Problem Setup and Notations}\label{sec:setting}
In the federated \textit{zeroth-order optimization} (ZOO) setting \citep{fedzo}, we aim to minimize a global function $F$ defined on the domain $\gX\triangleq [0,1]^d$, which is the arithmetic average of $N$ local functions $\{f_1, \cdots, f_N\}$ distributed on $N$ different clients with $\left|f_i(\vx)\right| \leq 1$ for any $\vx \in \gX$ and $i \in [N]$, \footnote{Of note, our proposed algorithm and theoretical analyses can be easily extended to the setting where the global function has the more general form of $F(\vx)=\sum_{i=1}^N w_i f_i(\vx)$ with $\sum_{i=1}^N w_i=1$ and $w_i\geq 0$.} without sharing these local functions:
\vspace{-1.5mm}
\begin{equation}
    \min_{\vx \in \gX} F(\vx) \triangleq \frac{1}{N}{\textstyle\sum}_{i \in [N]} f_i(\vx). \label{eq:obj}
\end{equation}
A central server is typically introduced to periodically aggregate the updated inputs sent from the distributed clients after their several iterations of local optimization.
Of note, in this federated ZOO setting, the gradients of the local functions are either not accessible or too computationally expensive to obtain.
Consequently, the gradients can not be directly employed for optimization, which is our main difference from the standard federated \textit{first-order optimization} (FOO) setting \citep{beyond,wang2021field,adap-fed}.
Instead, given an input $\vx \in \gX$, agent $i$ is only allowed to observe a noisy output $y_i(\vx) \triangleq f_i(\vx) + \zeta$ of the local function $f_i$, in which $\zeta \sim \gN(0, \sigma^2)$. 
Moreover, we focus on federated ZOO with heterogeneous clients, i.e., the local functions $\{f_i\}_{i=1}^N$ differ from the global function $F$.
Besides, we adopt a common assumption on $\{f_i\}_{i=1}^N$: 
We assume that every local function $f_i$ is sampled from a \textit{Gaussian process} (GP), i.e., $f_i \sim \mathcal{GP}(\mu(\cdot), k(\cdot, \cdot))$ \citep{zord}, in which $k$ is a shift-invariant kernel and is assumed to have $\left\|\partial_{\vz}\partial_{\vz'} k(\vz,\vz')|_{\vz=\vz'=\vx}\right\|\leq \kappa, \left\|\partial_{\vz} k(\vz,\vx')|_{\vz=\vx}\right\|\leq L \, (\forall{\vx, \vx'} \in \gX)$ for some $\kappa > 0$ and $L>0$.
This encompasses commonly used kernels such as the squared exponential kernel \citep{RasmussenW06}.
Unless specified otherwise,
we use $\|\cdot\|$ to denote the norm $\|\cdot\|_2$, $[Z]$ to denote the set $\{1,\cdots, Z\}$, and $[Z)$ to denote the set $\{0,\cdots, Z-1\}$ where $Z$ is an integer.
We will use $i \in [N]$ to denote the formulas related to client $i$ throughout this paper.

\section{Framework and Challenges for Federated ZOO}\label{sec:framework&challenge}
Here we firstly summarize the framework to solve the federated ZOO problem (Sec.~\ref{sec:framework}), and then identify the challenges which existing algorithms following this framework fail to address (Sec.~\ref{sec:challenges}).

\subsection{Optimization Framework}\label{sec:framework}
To solve \eqref{eq:obj}, a general optimization framework is to estimate the gradients of $\{f_i\}_{i=1}^N$ using only function queries and then employ the standard federated FOO algorithms for the optimization, as in Algo.~\ref{alg:fedzoo}. Specifically, in round $r$, every client performs $T$ iterations of local gradient decent updates in parallel (line 2-5 of Algo.~\ref{alg:fedzoo}), in which $\widehat{\vg}_{r,t-1}^{\smash{(i)}} \in \sR^d$ denotes the estimated gradient by client $i$ for the local update in iteration $t$ of round $r$. After that, each client sends its locally updated input $\vx_{r,T}^{\smash{(i)}}$ to server (line 6 of Algo.~\ref{alg:fedzoo}). After receiving the updated inputs from all clients (i.e., $\{\vx_{r,T}^{\smash{(i)}}\}_{i=1}^N$), the server aggregates them (e.g., via arithmetic average) to produce a globally updated input $\vx_{r}$, and then sends it back to the clients for the optimization in the next round (line 7-8 of Algo.~\ref{alg:fedzoo}).

The aforementioned $\widehat{\vg}_{r,t-1}^{\smash{(i)}}$ used in the literature can be summarized into the following general form:
\vspace{-2mm}
\begin{equation}
    \widehat{\vg}_{r,t-1}^{(i)} \triangleq \vg_{r,t-1}^{(i)} + \gamma_{r,t-1}^{(i)} \Big(\vg_{r-1}(\vx') - \vg_{r-1}^{(i)}(\vx'')\Big) \label{eq:general-grad-est}
\end{equation}
where $\vg_{r,t-1}^{\smash{(i)}} \in \sR^d$ is an estimate of $\nabla f_i(\vx_{r,t-1}^{\smash{(i)}})$ and is usually obtained using the \textit{finite difference} (FD) methods (refer to Sec.~\ref{sec:challenges}). In addition,
the \emph{gradient correction vector} $\vg_{r-1}(\vx') - \vg_{r-1}^{\smash{(i)}}(\vx'') \in \sR^d$ is usually obtained from the previous round $r-1$. 
This aims to make the resulting $\widehat{\vg}_{r,t-1}^{\smash{(i)}}$ better aligned with $\nabla F(\vx_{r,t-1}^{\smash{(i)}})$, such that the local update on each client (i.e., line 5 of Algo.~\ref{alg:fedzoo}) can better approximate the intended global update along the direction of $\nabla F(\vx_{r,t-1}^{\smash{(i)}})$.
It is especially important in the presence of client heterogeneity, i.e., $\{\nabla f_i\}_{i=1}^N$ differ from $\nabla F$.
Intuitively, to accomplish this alignment, $\vg_{r-1}(\vx')$ and $\vg_{r-1}^{\smash{(i)}}(\vx'')$ should be good estimates of $\nabla F(\vx_{r,t-1}^{\smash{(i)}})$ and $\nabla f_i(\vx_{r,t-1}^{\smash{(i)}})$, respectively, which we theoretically justify in Sec.~\ref{sec:challenges}. 
Of note, the form of $\vg_{r-1}(\vx') - \vg_{r-1}^{\smash{(i)}}(\vx'')$ for gradient correction usually aims to ensure that the estimation biases from $\vg_{r-1}(\vx')$ and $\vg_{r-1}^{\smash{(i)}}(\vx'')$ could cancel out \citep{svrg-first}.
Finally, $\gamma_{r,t-1}^{\smash{(i)}} \in [0,1]$ denotes the \emph{gradient correction length}, which can be adjusted to trade off the utilization of the gradient correction vector (Sec.~\ref{sec:challenges}).

Remarkably, \eqref{eq:general-grad-est} subsumes the forms of gradient updates employed in many existing federated ZOO algorithms, and hence Algo.~\ref{alg:fedzoo} can reduce to the corresponding optimization algorithms (more details in Appx.~\ref{app-sec:existing}).
E.g., when $\gamma_{r,t-1}^{\smash{(i)}}=0$ and $\vg_{r,t-1}^{\smash{(i)}}$ is obtained using FD, Algo.~\ref{alg:fedzoo} becomes the \fedzo{} algorithm \citep{fedzo}; 
when $\gamma_{r,t-1}^{\smash{(i)}}{=}1$, $\vg_{r-1}(\vx'){=}\frac{1}{NT}\sum_{i,t=1}^{N,T} \vg_{r-1,t-1}^{\smash{(i)}}$, and $\vg_{r-1}^{\smash{(i)}}(\vx'')=\frac{1}{T}\sum_{t=1}^T \vg_{r-1,t-1}^{\smash{(i)}}$, \eqref{eq:general-grad-est} reduces to the gradient update in \citep{scaffold} and hence Algo.~\ref{alg:fedzoo} becomes the \scaffoldtwo{} algorithm in the federated ZOO setting;
let the gradient correction vector $\vg_{r-1}(\vx') - \vg_{r-1}^{\smash{(i)}}(\vx'')$ in \eqref{eq:general-grad-est} be $\vx_{r,t-1}^{\smash{(i)}} - \vx_r$, Algo.~\ref{alg:fedzoo} is then equivalent to \fedprox{} \citep{fedprox} in the federated ZOO setting.

\subsection{Existing Challenges}\label{sec:challenges}

\begin{figure}
\vspace{-3mm}
\begin{minipage}{.45\textwidth}
\begin{algorithm}[H]
\DontPrintSemicolon
\caption{The General Optimization Framework for Federated ZOO}\label{alg:fedzoo}
\KwIn{Initial $\vx_0$, rounds $R$, learning rate $\eta$, iterations $T$ for each round, number of clients $N$}
\For{each round $r \in [R]$}{
\tcp*[l]{\textcolor{red}{Client-Side Update}}
\For{each client $i \in [N]$ \textbf{in parallel}}{
    Initialization: $\vx^{\smash{(i)}}_{r,0}\leftarrow \vx_{r-1}$ \;
    \For{each iteration $t \in [T]$}{
        $\vx^{\smash{(i)}}_{r,t} \leftarrow \vx^{\smash{(i)}}_{r,t-1} - \eta\,\widehat{\vg}^{\smash{(i)}}_{r,t-1}$ \;
    }
    Send $\vx^{\smash{(i)}}_{r, T}$ to receive $\vx_r$ back \;
}
\tcp*[l]{\textcolor{red}{Server-Side Update}}
$\vx_{r} \leftarrow \frac{1}{N}\sum_{i \in [N]} \vx^{\smash{(i)}}_{r, T}$ \;
Send $\vx_{r}$ back to each client \;
}
\end{algorithm}
\end{minipage}
\hfill
\begin{minipage}{.55\textwidth}
\begin{algorithm}[H]
\DontPrintSemicolon
\caption{\alg{}}\label{alg:fzoos}
\KwIn{Input of Algo.~\ref{alg:fedzoo}, length $\gamma$, $M$ features}
\For{each round $r \in [R]$}{
\tcp*[l]{\textcolor{red}{Client-Side Update}}
\For{each client $i \in [N]$ \textbf{in parallel}}{
    $\vx^{\smash{(i)}}_{r,0}\leftarrow \vx_{r-1}$, \textcolor{blue}{$\nabla \widehat{\mu}_{r-1}$ based on $\vw_{r-1}$} \;
    \For{each iteration $t \in [T]$}{
        \textcolor{blue}{$\nabla \mu_{r,t-1}^{\smash{(i)}}$ conditioned on $\gD_{r,t-1}^{\smash{(i)}}$} \;
        $\vx^{\smash{(i)}}_{r,t} \leftarrow \vx^{\smash{(i)}}_{r,t-1} - \eta\,\widehat{\vg}^{\smash{(i)}}_{r,t-1}$ with \eqref{eq:fzoos-grad-est} \;
    }
    Send $\vx^{\smash{(i)}}_{r, T}$ to receive $\vx_r$, \textcolor{blue}{query around $\vx_r$} \;
    \textcolor{blue}{Approx. $\nabla \mu^{\smash{(i)}}_{r,T}$ via RFF to get $\vw_{r,T}^{\smash{(i)}}$} \;
    \textcolor{blue}{Send $\vw^{\smash{(i)}}_{r, T}$ to receive $\vw_r$ back} \;
}
\tcp*[l]{\textcolor{red}{Server-Side Update}}
$\vx_{r}{\leftarrow}\frac{1}{N}\sum_{i \in [N]} \vx^{\smash{(i)}}_{r, T}$,\, \textcolor{blue}{$\vw_{r}{\leftarrow}\frac{1}{N}\sum_{i \in [N]} \vw^{\smash{(i)}}_{r, T}$} \;
Send $\vx_{r}$ back first and then \textcolor{blue}{$\vw_{r}$} to each client\;
}
\end{algorithm}
\end{minipage}
\vspace{-4mm}
\end{figure}

Existing federated ZOO algorithms aiming to solve the problem in Sec.~\ref{sec:setting} typically fail to address the challenges of query efficiency and communication efficiency, which we discuss in more detail below.

\vspace{-1mm}
\paragraph{Challenge of Query Efficiency.} Similar to standard ZOO algorithms \citep{Nesterov2017, prgf}, existing federated~ZOO algorithms (e.g., \citep{fedzo}) also commonly apply the FD methods \citep{approx-error} for gradient estimation. Specifically, given a parameter $\lambda>0$ and directions $\{\vu_q\}_{q=1}^{\smash{Q}}$, the gradient of the function $f_i$ on client $i$ at $\vx$ can be estimated as
\vspace{-1mm}
\begin{equation} 
\nabla f_i(\vx) \approx \vDelta^{(i)}(\vx) \triangleq \frac{1}{Q}\sum_{q \in [Q]}
\frac{y_i(\vx + \lambda \vu_q) - y_i(\vx)}{\lambda} \vu_q \ . \label{eq:grad-est-fd}
\end{equation}
That is, for existing federated ZOO algorithms, $\vg^{\smash{(i)}}_{r,t-1} = \vDelta^{(i)}(\vx^{\smash{(i)}}_{r,t-1})$ in \eqref{eq:general-grad-est}. As implied in \eqref{eq:grad-est-fd}, $Q$ additional function queries are required for the gradient estimation at every local updated input $\vx^{\smash{(i)}}_{r,t-1}$. This therefore results in $NTQ \times$ more function queries than the standard federated FOO algorithms \citep{fedprox,scaffold} in every communication round, which is unsatisfying in practice especially when $\{f_i\}_{i=1}^N$ are prohibitively costly to evaluate. 
So, tackling the challenge of query efficiency in federated ZOO requires designing query-efficient gradient estimators.

\paragraph{Challenge of Communication Efficiency.}
When $\widehat{\vg}_{r,t-1}^{\smash{(i)}} = \nabla F(\vx_{r,t-1}^{\smash{(i)}})$ in \eqref{eq:general-grad-est}, Algo.~\ref{alg:fedzoo} is then able to attain the convergence of centralized FOO algorithms, which is known to be better than the one in the federated setting \citep{scaffold}. Therefore, intuitively, the convergence or the communication efficiency (i.e., the number of communication rounds $R$ required to achieve an $\eps$ convergence error) of Algo.~\ref{alg:fedzoo} depends on the disparity between \eqref{eq:general-grad-est} and $\nabla F(\vx_{r,t-1}^{\smash{(i)}})$. Define the gradient disparity $\Xi^{\smash{(i)}}_{r,t} \triangleq \|\widehat{\vg}_{r,t-1}^{\smash{(i)}} - \nabla F(\vx_{r,t-1}^{\smash{(i)}})\|^2$, we propose the following Prop.~\ref{prop:opt-correction} (proof in Appx.~\ref{app-sec:proof:prop:opt-correction}) to show the condition for the best-performing \eqref{eq:general-grad-est} and thus to justify the challenge in communication efficiency that existing federated ZOO algorithms typically fail to address well.
\begin{proposition}\label{prop:opt-correction}
Let $\vg_{r-1}^{\smash{(i)}}(\vx'') \neq \vg_{r-1}(\vx')$, the minimum of $\Xi^{\smash{(i)}}_{r,t}$ w.r.t $\gamma_{r,t-1}^{\smash{(i)}}$ is achieved when
\vspace{-1mm}
\begin{equation*}
    \gamma_{r,t-1}^{\smash{(i)}} = \gamma_{r,t-1}^{(i)*} \triangleq \left(\nabla F(\vx^{(i)}_{r,t-1}) - \vg_{r,t-1}^{(i)}\right)^{\top}\left(\vg_{r-1}(\vx') - \vg_{r-1}^{(i)}(\vx'')\right) \left\|\vg_{r-1}(\vx') - \vg_{r-1}^{(i)}(\vx'')\right\|^{-2}.
\end{equation*}
When $\gamma_{r,t-1}^{\smash{(i)*}}=1$, $\Xi^{\smash{(i)}}_{r,t}=0$ iff we have $\vg_{r-1}(\vx') - \vg_{r-1}^{\smash{(i)}}(\vx'') = \nabla F(\vx^{\smash{(i)}}_{r,t-1}) - \vg_{r,t-1}^{\smash{(i)}}$.
\end{proposition}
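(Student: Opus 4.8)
The plan is to treat $\Xi^{(i)}_{r,t}$ as a scalar quadratic in the single real variable $\gamma^{(i)}_{r,t-1}$ and apply elementary one-dimensional convex optimization. To lighten notation I would write $\vb \triangleq \nabla F(\vx^{(i)}_{r,t-1}) - \vg_{r,t-1}^{(i)}$ for the residual that the correction must cancel, and $\vc \triangleq \vg_{r-1}(\vx') - \vg_{r-1}^{(i)}(\vx'')$ for the gradient correction vector, abbreviating $\gamma$ for $\gamma^{(i)}_{r,t-1}$. The hypothesis $\vg_{r-1}^{(i)}(\vx'') \neq \vg_{r-1}(\vx')$ guarantees $\vc \neq \vzero$, which is the only nondegeneracy the argument needs. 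Substituting the definition of $\widehat{\vg}_{r,t-1}^{(i)}$ from \eqref{eq:general-grad-est}, the disparity then reads compactly as $\Xi^{(i)}_{r,t} = \|\gamma \vc - \vb\|^2$.

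For the first claim, I would expand this squared norm into $\gamma^2 \|\vc\|^2 - 2\gamma\, \vb^\top \vc + \|\vb\|^2$, a strictly convex quadratic in $\gamma$ since $\|\vc\|^2 > 0$. Its unique global minimizer is found by setting the derivative $2\gamma \|\vc\|^2 - 2\,\vb^\top\vc$ to zero, which yields $\gamma^* = \vb^\top \vc / \|\vc\|^2$; reinstating the full notation recovers exactly the stated $\gamma^{(i)*}_{r,t-1}$. (Equivalently one may complete the square.) The division is legitimate precisely because $\vc \neq \vzero$.

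For the second claim, I would simply evaluate the disparity at the optimal length $\gamma^* = 1$: the expression collapses to $\Xi^{(i)}_{r,t} = \|\vc - \vb\|^2$, a squared norm that vanishes if and only if $\vc = \vb$, i.e. $\vg_{r-1}(\vx') - \vg_{r-1}^{(i)}(\vx'') = \nabla F(\vx^{(i)}_{r,t-1}) - \vg_{r,t-1}^{(i)}$. One also checks consistency in the converse direction: if $\vc = \vb$ then $\gamma^* = \vb^\top\vc/\|\vc\|^2 = 1$ automatically, so the hypothesis $\gamma^{(i)*}_{r,t-1}=1$ is not an extra constraint but is entailed by the equality. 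This delivers the iff directly.

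There is no serious obstacle: the result is a textbook quadratic minimization. The single point requiring (minimal) care is the nondegeneracy $\vc \neq \vzero$, which simultaneously makes the quadratic strictly convex — so that the stationary point is the genuine global minimum rather than a saddle or maximum — and makes the closed form for $\gamma^*$ well-defined. Since this is exactly the stated hypothesis, the argument goes through cleanly.
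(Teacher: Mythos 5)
Your proof is correct, and the first claim is established exactly as in the paper: expand $\Xi^{(i)}_{r,t}$ as a quadratic in $\gamma^{(i)}_{r,t-1}$ and read off the vertex, with $\vc\neq\vzero$ supplying both strict convexity and well-definedness of the closed form. Where you diverge is the iff: the paper substitutes the optimizer back in to get the minimum value $\|\vb\|^2-(\vb^{\top}\vc)^2/\|\vc\|^2$, deduces from its vanishing that $\vb$ and $\vc$ are linearly dependent via the equality case of Cauchy--Schwarz, and then uses $\gamma^{(i)*}_{r,t-1}=1$ to pin down $\|\vb\|=\|\vc\|$ and hence $\vb=\vc$. You instead just evaluate $\Xi^{(i)}_{r,t}=\|\gamma\vc-\vb\|^2$ at $\gamma=1$ and observe it is $\|\vc-\vb\|^2$, which vanishes iff $\vc=\vb$; your added remark that $\vc=\vb$ forces $\gamma^*=1$ confirms the hypothesis is self-consistent. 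Your route is shorter and avoids the Cauchy--Schwarz equality-case argument entirely; the paper's longer route has the side benefit of exhibiting the minimum value explicitly (its Eq.~\eqref{eq:temp-bvud}), which it reuses to interpret alignment between the correction vector and the drift, but as a proof of the stated iff your direct evaluation is cleaner and fully rigorous.
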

\vspace{-1mm}
Prop.~\ref{prop:opt-correction} shows that to achieve a small gradient disparity, $\gamma_{r,t-1}^{\smash{(i)}}$ should be adaptive w.r.t. the alignment between the \emph{gradient correction vector} $\vg_{r-1}(\vx') - \vg_{r-1}^{\smash{(i)}}(\vx'')$ and the \emph{drift} $\nabla F(\vx^{\smash{(i)}}_{r,t-1}) - \vg_{r,t-1}^{\smash{(i)}}$. 
We have shown (Appx.~\ref{app-sec:proof:prop:opt-correction}) that a better alignment between the gradient correction vector and the drift leads to a smaller gradient disparity, Prop.~\ref{prop:opt-correction} further shows that
a zero gradient disparity (i.e., $\Xi^{\smash{(i)}}_{r,t}=0$ for any $r\in[R],t\in[T]$) can be reached when these two are perfectly aligned.
To achieve such an alignment, i.e., to make $\vg_{r-1}(\vx')=\nabla F(\vx^{\smash{(i)}}_{r,t-1})$ and $\vg_{r-1}^{\smash{(i)}}(\vx'')=\vg_{r,t-1}^{\smash{(i)}}$ hold more likely, it requires not only \emph{(a)} accurate gradient surrogates $\vg_{r-1}$ and $\vg_{r-1}^{\smash{(i)}}$ to accurately represent $\nabla F$ and $\nabla f_i$, respectively, but also \emph{(b)} adaptive $\vx',\vx''$ to avoid the discrepancy between $\vx^{\smash{(i)}}_{r,t-1}$ and $\vx',\vx''$.

Consequently, resolving the challenge of communication efficiency in federated ZOO mainly requires 
\labeltext{\normalfont \textbf{(A)}}{com:1}
\textit{accurate} local and global surrogates (i.e., $\vg_{r-1}^{\smash{(i)}}$ and $\vg_{r-1}$) for the gradient correction in \eqref{eq:general-grad-est}, and \labeltext{\normalfont \textbf{(B)}}{com:2} \textit{adaptive} gradient correction in \eqref{eq:general-grad-est} with both adaptive $\vx',\vx''$ and adaptive $\gamma_{r,t-1}^{\smash{(i)}}$. However, existing federated ZOO algorithms usually fail to address them well:
Firstly, these algorithms rely on the FD methods for gradient estimation, which usually lead to poor estimation quality and consequently inaccurate gradient correction vectors in \eqref{eq:general-grad-est} when the query budget is very limited. 
Secondly, although $\vx_{r,t-1}^{\smash{(i)}}$ changes during local updates, existing algorithms typically rely on $\vg_{r-1}, \vg_{r-1}^{\smash{(i)}}$ evaluated at a fixed input $\vx_{r-1}=\vx'=\vx''$ to estimate $\nabla F$ or $\nabla f_i$ (e.g., \citep{fedprox, scaffold}), leading to large discrepancies between $\vx^{\smash{(i)}}_{r,t-1}$ and $\vx',\vx''$.
Thirdly, existing algorithms use a fixed gradient correction length (e.g., $\gamma_{r,t-1}^{\smash{(i)}}=0$ in \citep{fedzo} and $\gamma_{r,t-1}^{\smash{(i)}}=1$ in \citep{scaffold}), which is likely to result in misspecified gradient correction length during optimization.

\vspace{-2mm}
\section{\alg{} Algorithm}\label{sec:fzoos}
\vspace{-1mm}
To address the aforementioned challenges, we propose our \textit{\underline{f}ederated \underline{z}eroth-\underline{o}rder \underline{o}ptimization using trajectory-informed \underline{s}urrogate gradients} (\alg{}) algorithm in Algo.~\ref{alg:fzoos}, which improves the query and communication efficiency of existing algorithms thanks to our two major contributions, correspondingly. Firstly, we introduce the \textit{trajectory-informed derived Gaussian Process} in \citep{zord} as local gradient surrogates for query-efficient gradient estimations (Sec.~\ref{sec:local-surrogates}). Secondly, we use \textit{random Fourier features} (RFF) approximation \citep{rahimi2007random} to attain a transferable global~gradient surrogate that can accurately estimate the gradient of the global function (Sec.~\ref{sec:global-surrogates}); based on these surrogates, we then develop the technique of \textit{adaptive gradient correction} with both adaptive gradient correction vector and length to mitigate the disparity between our local updates and the intended global updates (Sec.~\ref{sec:adaptive-est}), which thus lead to communication-efficient federated ZOO.

\vspace{-2mm}
\subsection{Trajectory-Informed Gradient Estimation for Query Efficiency}\label{sec:local-surrogates}
Of note, we assumed that $f_i \sim \mathcal{GP}(\mu(\cdot), k(\cdot, \cdot)),\forall i \in [N]$ (Sec.~\ref{sec:setting}). Then, in iteration $t$ of communication round $r$ (Algo.~\ref{alg:fzoos}), conditioned on the optimization trajectory 
$\gD^{\smash{(i)}}_{r, t-1} \triangleq \{(\vx^{\smash{(i)}}_{\tau}, y^{\smash{(i)}}_{\tau})\}_{\tau=1}^{T(r-1)+t-1}$ 
of client $i$,\footnote{We slightly abuse notation and use $(\vx^{\smash{(i)}}_{\tau}, y^{\smash{(i)}}_{\tau})$ to denote a historical query till iteration $t-1$ of round $r$.} $\nabla f_i$ follows a \emph{derived posterior Gaussian Process} \citep{zord}:
\vspace{-2mm}
\begin{equation}
\nabla f_i \sim \mathcal{GP}\Big(\nabla\mu^{(i)}_{r, t-1}(\cdot), \partial \left(\sigma^{(i)}_{r, t-1}\right)^2(\cdot, \cdot)\Big)
\label{eq:derived-gp}
\end{equation}
where the mean function $\nabla \mu^{\smash{(i)}}_{r, t-1}(\vx)$ and the covariance function $\partial (\sigma^{\smash{(i)}}_{r, t-1})^2(\vx, \vx')$ are defined as
\vspace{-2mm}
\begin{equation}
\begin{aligned}
    \nabla \mu^{(i)}_{r, t-1}(\vx) &\triangleq \partial_{\vx} \vk^{(i)}_{r, t-1}(\vx)^{\top}\left(\rmK^{(i)}_{r, t-1}+\sigma^2\rmI\right)^{-1}\vy^{(i)}_{r, t-1} \ , \\
    \partial \left(\sigma^{(i)}_{r, t-1}\right)^2(\vx, \vx') &\triangleq  \partial_{\vx}\partial_{\vx'} k(\vx, \vx') - \partial_{\vx} \vk^{(i)}_{r, t-1}(\vx)^{\top}\left(\rmK^{(i)}_{r, t-1}+\sigma^{2} \rmI\right)^{-1} \partial_{\vx'} \vk^{(i)}_{r, t-1}(\vx') \ . \label{eq:posterior-derived}
\end{aligned}
\end{equation}
Both $\vk^{\smash{(i)}}_{r, t-1}(\vx)^{\top} \triangleq [k(\vx, \vx^{\smash{(i)}}_{\tau})]_{\tau=1}^{\smash{T(r-1)+t-1}}$ and $(\vy^{\smash{(i)}}_{r, t-1})^{\top} \triangleq [y^{\smash{(i)}}_{\tau}]_{\tau=1}^{\smash{T(r-1)+t-1}}$ 
 are $[T(r-1)+t-1]$-dimensional row vectors, and $\displaystyle \rmK^{\smash{(i)}}_{r, t-1} \triangleq [k(\vx^{\smash{(i)}}_{\tau}, \vx^{\smash{(i)}}_{\tau'})]_{\tau,\tau'=1}^{T(r-1)+t-1}$ is a $[T(r-1)+t-1]\times [T(r-1)+t-1]$-dimensional matrix. 

We propose to use the posterior mean $\nabla \mu^{\smash{(i)}}_{r, t-1}(\vx)$ \eqref{eq:posterior-derived} as the local gradient surrogate for client $i$ since it is a prediction of the gradient $\nabla f_i(\vx)$, and $\partial (\sigma^{\smash{(i)}}_{r, t-1})^2(\vx)\triangleq \partial (\sigma^{\smash{(i)}}_{r, t-1})^2(\vx, \vx)$ provides a principled uncertainty measure for this gradient surrogate \citep{zord}.
Of note, our gradient surrogate only requires the optimization trajectory (i.e., the history of function queries $\gD^{\smash{(i)}}_{r, t-1}$ till iteration $t-1$ of round $r$) and thus \emph{eliminates the need for additional queries} required by the FD methods 
adopted by existing federated ZOO 
(Sec.~\ref{sec:challenges}).
This therefore leads to more query-efficient gradient estimations in federated ZOO.
Moreover, the aforementioned uncertainty measure 
can theoretically guarantee the quality of our gradient estimation, and provide theoretical support for our technique of using active queries to further improve the local gradient estimations (Sec.~\ref{sec:est-analysis}).

\vspace{-1mm}
\subsection{High-Quality Gradient Correction for Communication Efficiency}
\vspace{-1mm}
\subsubsection{Transferable Global Gradient Surrogate}\label{sec:global-surrogates}
\vspace{-1mm}
Of note, our local gradient surrogates from Sec.~\ref{sec:local-surrogates} can produce not only query-efficient but also accurate gradient estimations \citep{zord}. 
So, these 
local surrogates can be used to construct an accurate global gradient surrogate, which then satisfies requirement \ref{com:1} for communication-efficient federated ZOO from Sec.~\ref{sec:challenges}: accurate local and global gradient surrogates.
Unfortunately, due to the non-parametric nature of Gaussian processes, \eqref{eq:derived-gp} cannot be transferred to the server without sending the raw observations.
To this end, we introduce the idea of \textit{random Fourier features} (RFF) approximation from \citep{rahimi2007random} to approximate the mean of \eqref{eq:derived-gp} and then transfer this approximated mean to server for the construction of high-quality global gradient surrogate.

We firstly approximate the mean of \eqref{eq:derived-gp} on each client $i \in [N]$ to ease its transfer between the clients and the server. 
Since $k(\cdot,\cdot)$ is assumed to be shift-invariant, it can be approximated by a finite number of random features \citep{rahimi2007random}. 
That is, we have that $k(\vx,\vx') \approx \phi(\vx)^{\top}\phi(\vx')$ where $\phi(\vx) \in \sR^{M}$ contains $M$ random features defined before optimization and is shared across all clients and the server (Appx.~\ref{app-sec:rff}). By incorporating this approximation into \eqref{eq:posterior-derived}, the local gradient surrogates on each client $i$ at the end of every round $r$ (i.e., $\nabla \mu^{\smash{(i)}}_{r, T}(\vx)$) can then be approximated as
\vspace{-2mm}
\begin{equation}
    \nabla \widehat{\mu}^{(i)}_{r, T}(\vx) \triangleq \nabla \phi(\vx)^{\top} \mPhi^{(i)}_{r, T}\left(\widehat{\rmK}^{(i)}_{r, T}+\sigma^2\rmI\right)^{-1}\vy^{(i)}_{r, T} \label{eq:local-surrogate-approx}
\end{equation}
where $\nabla \phi(\vx)$ is an $M \times d$-dimensional matrix, $\mPhi^{\smash{(i)}}_{r, T} \triangleq [\phi(\vx^{\smash{(i)}}_{\tau})]_{\tau=1}^{rT}$ is an $M \times rT$-dimensional matrix, and $\displaystyle \widehat{\rmK}^{\smash{(i)}}_{r, T} \triangleq [\phi(\vx^{\smash{(i)}}_{\tau})^{\top} \phi(\vx^{\smash{(i)}}_{\tau'})]_{\tau,\tau'=1}^{rT}$ is an $rT \times rT$-dimensional matrix. Define an $M$-dimensional column vector $\vw_{r,T}^{\smash{(i)}} \triangleq \mPhi^{\smash{(i)}}_{r, T}(\widehat{\rmK}^{\smash{(i)}}_{r, T}+\sigma^2\rmI)^{-1}\vy^{\smash{(i)}}_{r, T}$, \eqref{eq:local-surrogate-approx} can be rewritten as $\nabla \widehat{\mu}^{\smash{(i)}}_{r, t-1}(\vx) = \nabla \phi(\vx)^{\top} \vw_{r,T}^{\smash{(i)}}$ (line 8 of Algo.~\ref{alg:fzoos}). 
So, each client only needs to calculate and send the $M$-dimensional vector $\vw_{r,T}^{\smash{(i)}}$ to the server for constructing the global gradient surrogate (line 9 of Algo.~\ref{alg:fzoos}).

After receiving $\{\vw_{r,T}^{\smash{(i)}}\}_{i=1}^N$ from all clients, the server can construct the global gradient surrogate at the end of every round $r$ by averaging the local gradient surrogates \eqref{eq:local-surrogate-approx} from all clients, i.e.,
\vspace{-1.5mm}
\begin{equation}
    \nabla \widehat{\mu}_r(\vx) \triangleq \frac{1}{N} {\textstyle\sum}_{i\in[N]} \widehat{\mu}^{(i)}_{r, T}(\vx) = \nabla \phi(\vx)^{\top}\Big(\frac{1}{N} {\textstyle\sum}_{i\in[N]}\vw_{r,T}^{(i)}\Big) \ . \label{eq:global-surrogate}
\end{equation}
To transfer this global gradient surrogate to clients, we only need to send the $M$-dimensional~vector $\vw_r \triangleq \frac{1}{N} \sum_{i=1}^N \vw_{r,T}^{\smash{(i)}}$ back (lines 10-11 of Algo.~\ref{alg:fzoos}). 
Importantly, after receiving $\vw_r$ from the server, each client can calculate the global gradient surrogate \emph{at any input in the domain}.
Although this global gradient surrogate incurs an additional transmission of $M$-dimensional vectors compared with existing federated ZOO algorithms (Algo.~\ref{alg:fedzoo}), it enjoys the advantage of achieving an improved gradient correction with theoretical guarantees (Sec.~\ref{sec:est-analysis}), which is known to be essential for addressing federated ZOO with heterogeneous clients (Sec.~\ref{sec:challenges}) and is thus able to outweigh its drawback of increased transmission burden in practice. 
To further improve the quality of this surrogate, we can actively query in the neighbourhood of the updated input $\vx_{r}$ on every client (line 7 of Algo.~\ref{alg:fzoos}) as supported in Sec.~\ref{sec:est-analysis}.
This incurs an additional server-clients transmission because the transmission of the gradient surrogates via $\vw^{\smash{(i)}}_{r, T}$ needs to happen after the active queries (i.e., after the gradient surrogates are improved), which is consistent with \scaffoldone{} \citep{scaffold}.
Without active queries, only one transmission is needed because lines 7 and 9 in Algo.~\ref{alg:fzoos} can be executed simultaneously.

\vspace{-1mm}
\subsubsection{Adaptive Gradient Correction}\label{sec:adaptive-est}
\vspace{-1mm}
By exploiting our aforementioned high-quality local and global gradient surrogates, we then develop the technique of adaptive gradient correction to meet requirement \ref{com:2} for communication-efficient federated ZOO from Sec.~\ref{sec:challenges}. Specifically, thanks to the ability of our gradient surrogates to \emph{estimate the gradient at any input in the domain}, we can let $\vx'=\vx''=\vx^{\smash{(i)}}_{r,t-1}$ in \eqref{eq:general-grad-est} to realize a more accurate gradient correction vector during optimization. Moreover, we propose to employ an adaptive gradient correction length $\gamma_{r, t-1}$ (shared across all clients) to better trade off the utilization of our gradient correction vector during optimization.

That is, for every iteration $t$ of round $r$, we propose to use the following $\widehat{\vg}^{\smash{(i)}}_{r,t-1}$ on each client $i \in [N]$ (i.e., line 6 of Algo.~\ref{alg:fzoos}):
\vspace{-1.8mm}
\begin{equation}
    \widehat{\vg}^{(i)}_{r,t-1} = \nabla \mu_{r,t-1}^{(i)}(\vx^{(i)}_{r,t-1}) + \gamma_{r, t-1} \left(\nabla \widehat{\mu}_{r-1}(\vx^{(i)}_{r,t-1}) - \nabla \widehat{\mu}_{r-1, T}^{(i)}(\vx^{(i)}_{r,t-1})\right)  \ , \label{eq:fzoos-grad-est}
\end{equation}
in which $\nabla \widehat{\mu}_{r-1, T}^{(i)}$ is the local gradient surrogate of client $i$ with RFF approximation at the end of round $r-1$ from \eqref{eq:local-surrogate-approx}, $\nabla \widehat{\mu}_{r-1}$ is our global gradient surrogate from \eqref{eq:global-surrogate}, and $\gamma_{r, t-1}$ is a theoretically inspired adaptive gradient correction length which we will discuss in Sec.~\ref{sec:est-analysis}. Of note, the advantage of this adaptive gradient correction can be theoretically justified (Sec.~\ref{sec:est-analysis}).

\vspace{-2mm}
\section{Theoretical Analysis}\label{sec:analysis}
\vspace{-2mm}
In this section, we present our theoretical analysis on the gradient disparity of our local gradient update \eqref{eq:fzoos-grad-est} in Sec.~\ref{sec:est-analysis} and the convergence of our \alg{} (Algo.~\ref{alg:fzoos})
in Sec.~\ref{sec:conv-analysis}.

\vspace{-1.5mm}
\subsection{Gradient Disparity Analysis}\label{sec:est-analysis}
\vspace{-1.5mm}

We assume that $\frac{1}{N}\sum_{i=1}^N \left\|\nabla f_i(\vx) - \nabla F(\vx)\right\|^2 \leq G$ for any $\vx \in \gX$, which is a common assumption in the analysis of federated optimization \citep{adap-fed}.
Here a larger $G$ indicates a larger degree of client heterogeneity.
By making use of the uncertainty measure from \eqref{eq:posterior-derived}, we derive an upper bound on the gradient disparity of our \eqref{eq:fzoos-grad-est} in Thm.~\ref{th:grad-error} below (proof in Appx.~\ref{app-sec:proof:grad-error}).

\vspace{-0.5mm}
\begin{theorem}\label{th:grad-error}
Define $\rho_i \triangleq \max_{\vx \in \gX, r\geq 1, t\geq 1} \big\|\partial (\sigma^{\smash{(i)}}_{r,t})^2(\vx)\big\| / \big\|\partial \left(\sigma^{\smash{(i)}}_{r,t-1}\right)^2(\vx)\big\|$ and $\rho \triangleq \frac{1}{N}\sum_{i=1}^N \rho_i$,

$\rho, \rho_i {\in} [\frac{1}{1+1/\sigma^2}, 1]$. Given constant $\omega{>}0$ and $\eps=\gO(\frac{1}{M})$, the following holds with constant probability
\vspace{-2mm}
\begin{equation*}
    \frac{1}{N} \sum_{i\in[N]} \Xi_{r,t}^{(i)} \leq \underbrace{4\omega\kappa \rho^{(r-1)T+t-1}}_{\normalfont \circled{1}} + \gamma^2_{r, t-1}\underbrace{(8\omega\kappa \rho^{(r-1)T}  + 8N\eps)}_{\normalfont \circled{2}} + (1 - \gamma_{r, t-1})^2\underbrace{4G}_{\normalfont \circled{3}} \ . \label{eq:trade-off}
\end{equation*}
\end{theorem}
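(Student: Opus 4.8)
The plan is to rewrite the gradient disparity as a sum of three interpretable error sources and bound each one separately. Fix a client $i$ and abbreviate $\vx\triangleq\vx^{(i)}_{r,t-1}$. The first step is to insert $\nabla f_i(\vx)$ and $\nabla F(\vx)$ into \eqref{eq:fzoos-grad-est} and regroup: adding and subtracting $\nabla f_i(\vx)$ in the local-surrogate term, and rewriting the correction $\nabla\widehat{\mu}_{r-1}(\vx)-\nabla\widehat{\mu}^{(i)}_{r-1,T}(\vx)$ as $(\nabla\widehat{\mu}_{r-1}(\vx)-\nabla F(\vx))-(\nabla\widehat{\mu}^{(i)}_{r-1,T}(\vx)-\nabla f_i(\vx))-(\nabla F(\vx)-\nabla f_i(\vx))$, yields the exact identity
\begin{equation*}
\widehat{\vg}^{(i)}_{r,t-1}-\nabla F(\vx)=\underbrace{\big(\nabla\mu^{(i)}_{r,t-1}(\vx)-\nabla f_i(\vx)\big)}_{a_i}+\gamma_{r,t-1}\underbrace{\big(\epsilon^{\mathrm{glob}}_i-\epsilon^{\mathrm{loc}}_i\big)}_{b_i}+(1-\gamma_{r,t-1})\underbrace{\big(\nabla f_i(\vx)-\nabla F(\vx)\big)}_{c_i},
\end{equation*}
where $\epsilon^{\mathrm{glob}}_i\triangleq\nabla\widehat{\mu}_{r-1}(\vx)-\nabla F(\vx)$ and $\epsilon^{\mathrm{loc}}_i\triangleq\nabla\widehat{\mu}^{(i)}_{r-1,T}(\vx)-\nabla f_i(\vx)$ are the global- and local-surrogate errors at the current iterate. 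This cleanly separates the disparity into the local estimation error $a_i$, the gradient-correction error $\gamma_{r,t-1}b_i$, and the residual heterogeneity $(1-\gamma_{r,t-1})c_i$, matching the three summands \circled{1}, \circled{2}, \circled{3}.

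Next I would apply the elementary inequality $\|x+y+z\|^2\le 4\|x\|^2+2\|y\|^2+4\|z\|^2$ (valid since $\tfrac14+\tfrac12+\tfrac14=1$) to $a_i+\gamma_{r,t-1}b_i+(1-\gamma_{r,t-1})c_i$, then use $\|b_i\|^2\le 2\|\epsilon^{\mathrm{glob}}_i\|^2+2\|\epsilon^{\mathrm{loc}}_i\|^2$, and average over $i\in[N]$. Each group is bounded by a dedicated estimate. The heterogeneity term gives $\tfrac1N\sum_i\|c_i\|^2\le G$ directly from the assumed client-heterogeneity bound, producing \circled{3}. For the local estimation error I would invoke the derived-GP posterior \eqref{eq:derived-gp}: a confidence bound of the form $\|\nabla\mu^{(i)}_{r,t-1}(\vx)-\nabla f_i(\vx)\|^2\le\omega\,\|\partial(\sigma^{(i)}_{r,t-1})^2(\vx)\|$ holds with constant probability, and iterating the definition of $\rho_i$ together with the prior-variance bound (the kernel assumption gives $\|\partial_{\vz}\partial_{\vz'}k(\vz,\vz')|_{\vz=\vz'=\vx}\|\le\kappa$) yields $\|\partial(\sigma^{(i)}_{r,t-1})^2(\vx)\|\le\kappa\,\rho_i^{(r-1)T+t-1}$, which after averaging gives \circled{1}. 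For the correction error, each $\epsilon^{\mathrm{loc}}_i$ — and, through the average $\nabla\widehat{\mu}_{r-1}=\tfrac1N\sum_j\nabla\widehat{\mu}^{(j)}_{r-1,T}$, also $\epsilon^{\mathrm{glob}}_i$ — splits into an exact derived-GP error at the end of round $r-1$, bounded as above by $\omega\kappa\rho_i^{(r-1)T}$, plus the RFF approximation error bounded by $\eps=\gO(1/M)$; collecting these across the $N$ clients produces \circled{2}.

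The main obstacle is the gradient confidence bound: establishing, with the stated constant probability and constant $\omega$, that $\|\nabla\mu^{(i)}_{r,t-1}-\nabla f_i\|^2$ is controlled by the posterior gradient variance $\|\partial(\sigma^{(i)}_{r,t-1})^2(\vx)\|$ \emph{uniformly} over $\vx\in\gX$. This is what licenses trading function-query error for the geometrically decaying uncertainty, and it is where the derived-GP structure \citep{zord} does the real work; the uniformity is also what legitimizes evaluating the round-$(r-1)$ surrogates at the shifted iterate $\vx^{(i)}_{r,t-1}$. A secondary technical point is controlling how the kernel approximation $k(\vx,\vx')\approx\phi(\vx)^\top\phi(\vx')$ propagates through the inverse-kernel surrogate \eqref{eq:local-surrogate-approx} into a clean $\gO(1/M)$ gradient error, and the bookkeeping that aggregates the per-client rates $\rho_i\in[\tfrac{1}{1+1/\sigma^2},1]$ and the RFF errors into the single aggregate rate $\rho$ and the term $8N\eps$ — a convexity/constant-tracking step rather than a conceptual one. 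Finally, I would note that the bound holds for arbitrary $\gamma_{r,t-1}$, exposing the $\gamma^2_{r,t-1}$ versus $(1-\gamma_{r,t-1})^2$ trade-off between surrogate/RFF error and residual heterogeneity that motivates the adaptive choice of $\gamma_{r,t-1}$ in the spirit of Prop.~\ref{prop:opt-correction}.
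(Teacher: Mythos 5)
Your proposal is correct and follows essentially the same route as the paper's proof: the identity $\widehat{\vg}^{(i)}_{r,t-1}-\nabla F = a_i+\gamma b_i+(1-\gamma)c_i$ is the same algebraic decomposition the paper uses (the paper merely splits $\epsilon^{\mathrm{glob}}_i$ further into the $j\neq i$ and $j=i$ contributions before squaring), and you invoke the identical three ingredients — the derived-GP confidence bound, the geometric decay $\|\partial(\sigma^{(i)}_{r,t})^2\|\le\kappa\rho_i^{(r-1)T+t}$, and the $\gO(1/M)$ RFF error. The only divergence is in constant bookkeeping: your weighted triangle inequality yields roughly $\gamma^2(16\omega\kappa\,\frac{1}{N}\sum_i\rho_i^{(r-1)T}+16\eps)$ for term \circled{2} rather than the stated $\gamma^2(8\omega\kappa\rho^{(r-1)T}+8N\eps)$, which is of the same order and immaterial to how the theorem is used.
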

\begin{corollary}\label{co:better-gamma}
Thm.~\ref{th:grad-error} implies a better-performing choice of $\gamma_{r,t-1}$, i.e., $\gamma_{r,t-1} = \frac{G}{G + 2\omega\kappa\rho^{(r-1)T} + 2N\eps}$.
\end{corollary}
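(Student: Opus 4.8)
The plan is to treat Corollary~\ref{co:better-gamma} as a one-dimensional optimization problem: the right-hand side of the bound in Thm.~\ref{th:grad-error} is a quadratic in $\gamma_{r,t-1}$, so I would minimize it exactly and read off the optimal length. Writing $a \triangleq 8\omega\kappa\rho^{(r-1)T} + 8N\eps$ for the coefficient inside \circled{2} and $b \triangleq 4G$ for \circled{3}, the $\gamma_{r,t-1}$-dependent part of the bound is $\gamma_{r,t-1}^2\, a + (1-\gamma_{r,t-1})^2\, b$, since term \circled{1} is independent of $\gamma_{r,t-1}$. This is a standard convex quadratic in $\gamma_{r,t-1}$.

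First I would differentiate with respect to $\gamma_{r,t-1}$ and set the derivative to zero, giving $2a\,\gamma_{r,t-1} - 2b(1-\gamma_{r,t-1}) = 0$, hence $\gamma_{r,t-1} = b/(a+b)$. Substituting $a$ and $b$ yields $\gamma_{r,t-1} = 4G / (4G + 8\omega\kappa\rho^{(r-1)T} + 8N\eps)$, and cancelling the common factor of $4$ in numerator and denominator gives exactly $\gamma_{r,t-1} = G/(G + 2\omega\kappa\rho^{(r-1)T} + 2N\eps)$, which is the claimed expression. Second, I would confirm this is a minimizer rather than a maximizer by noting the second derivative is $2(a+b) > 0$ (using $a,b \geq 0$, with $b = 4G > 0$ whenever there is genuine heterogeneity), so the quadratic is strictly convex and the stationary point is the unique global minimum. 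Third, I would check feasibility: since $a,b \geq 0$, the minimizer $b/(a+b)$ lies in $[0,1]$, consistent with the constraint $\gamma_{r,t-1}^{(i)} \in [0,1]$ imposed on the gradient correction length in \eqref{eq:general-grad-est}.

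The only subtlety — and the part most worth spelling out rather than waving through — is the interpretation of the word ``better-performing.'' The bound in Thm.~\ref{th:grad-error} is an \emph{upper} bound on the averaged gradient disparity $\frac{1}{N}\sum_i \Xi_{r,t}^{(i)}$, not the disparity itself, so strictly speaking minimizing $\gamma_{r,t-1}$ over this surrogate objective optimizes the \emph{guarantee} rather than the true quantity. The honest statement is therefore that the proposed $\gamma_{r,t-1}$ is the choice that minimizes the derived upper bound, which is what ``better-performing'' should be read to mean here and is all the corollary asserts. I expect no serious obstacle: the entire argument is elementary calculus on a scalar quadratic, and the main care required is bookkeeping the constants so the factor-of-$4$ cancellation is transparent and the connection to the circled terms of Thm.~\ref{th:grad-error} is explicit.
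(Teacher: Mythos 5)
Your proposal is correct and matches the paper's own argument: the appendix remark following the proof of Thm.~\ref{th:grad-error} likewise observes that the bound is a quadratic in $\gamma_{r,t-1}$ and reads off the minimizer, yielding the same expression after the factor-of-$4$ cancellation (the appendix even drops the $\kappa$ in a typo that your bookkeeping avoids). Your added checks of strict convexity, feasibility in $[0,1]$, and the caveat that one is minimizing the upper bound rather than the disparity itself are all consistent with how the paper intends ``better-performing'' to be read.
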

\vspace{-2.5mm}
In the upper bound of Thm.~\ref{th:grad-error}, term $\circled{1}$ represents the error of estimating $\{\nabla f_i(\cdot)\}_{i=1}^N$ using our local gradient surrogates in Sec.~\ref{sec:local-surrogates}, and term $\circled{2}$ characterizes the disparity between our gradient correction vector in \eqref{eq:fzoos-grad-est} and its corresponding ground truth $\{\nabla F(\cdot) - \nabla f_i(\cdot)\}_{i=1}^N$. 
The $\eps$ within term $\circled{2}$ denotes the RFF approximation error for our global gradient surrogate in Sec.~\ref{sec:global-surrogates} and $\eps$ decreases with a larger number $M$ of random features. 
Term $\circled{3}$ results from the client heterogeneity in federated ZOO. Compared with the gradient disparity of existing algorithms (provided in Appx.~\ref{app-sec:existing}), Thm.~\ref{th:grad-error} shows that our \eqref{eq:fzoos-grad-est} enjoys a number of major advantages: 
\textit{\textbf{(a)}} Our \eqref{eq:fzoos-grad-est} is more query-efficient since it does not require any additional function query for gradient estimation,
in contrast to existing algorithms which incur $\gO(NQ)$ additional function queries in every iteration.
\textit{\textbf{(b)}} The estimation error in our \eqref{eq:fzoos-grad-est} (i.e., terms $\circled{1}$ and $\circled{2}$) can be exponentially decreasing when $\rho<1$ and $\eps$ is small, 
whereas other existing algorithms only achieve a reduction rate of $\gO(1/Q)$, which implies that our gradient estimation is significantly more accurate.
Of note, $\rho_i<1$ is likely to be satisfied as justified in \citep{zord} and more importantly, $\rho<1$ is even easier to be realized 
as it only needs one of the clients to satisfy $\rho_i<1$. 
\textit{\textbf{(c)}} Our \eqref{eq:fzoos-grad-est} mitigates the disparity caused by the fixed gradient correction vector adopted by existing works, i.e., in contrast to \fedprox{} and \scaffold{}, our Thm.~\ref{th:grad-error} does not contain an additional disparity term of $\sum_{i=1}^{\smash{N}} \|\vx_{r,t-1}^{\smash{(i)}} - \vx_{r-1}\|^2$. 
\textit{\textbf{(d)}} Our \eqref{eq:fzoos-grad-est} can trade off between the impacts of our gradient correction vector and client heterogeneity, and can consequently urther improve the gradient estimation when $\gamma_{r,t-1}$ is chosen intelligently while accounting for this trade-off. 
Specifically, the upper bound in Thm.~\ref{th:grad-error} has characterized such a trade-off: When the estimation error of our gradient correction vector (i.e., term $\circled{2}$) is relatively small compared with the client heterogeneity (i.e., term $\circled{3}$), a large $\gamma_{t-1}$ is preferred to reduce the impact of client heterogeneity and hence to achieve a small gradient disparity.
Furthermore, this also implies a theoretically better choice of $\gamma_{r,t-1}$ in our Cor.~\ref{co:better-gamma} (refer to Appx.~\ref{app-sec:prac-gamma} for a more practical choice of $\gamma_{r,t-1}$).

In addition to the theoretical insights above, Thm.~\ref{th:grad-error} also offers valuable insights to enhance the practical efficacy of our \eqref{eq:fzoos-grad-est}.
Firstly, during local updates, we can actively query more function values on each client to further decrease the uncertainty (i.e., $\big\|\partial (\sigma^{\smash{(i)}}_{r,t})^2(\vx)\big\|$) of our local gradient surrogates,
which improves our \eqref{eq:fzoos-grad-est} by decreasing term $\circled{1}$ in Thm.~\ref{th:grad-error} with a larger exponent. 
Secondly, after receiving $\vx_r$ from the server (i.e., at the end of every round $r$ of our Algo.~\ref{alg:fzoos}), we can actively query in the neighborhood of $\vx_r$ on every client, in order to decrease term $\circled{2}$ in Thm.~\ref{th:grad-error} using a larger exponent and thus to improve the quality of gradient correction in our \eqref{eq:fzoos-grad-est}. 
Thirdly, we can use a large number $M$ of random features to achieve a small RFF approximation error $\eps$ in term $\circled{2}$ of Thm.~\ref{th:grad-error}. Fourthly, we can choose an adaptive gradient correction length $\gamma_{r,t-1}$ (e.g., the $\gamma_{r,t-1}$ in Cor.~\ref{co:better-gamma}) to better trade off the impacts of the gradient correction and client heterogeneity.

\subsection{Convergence Analysis}\label{sec:conv-analysis}

We prove the convergence of our \alg{} (measured by the number of communication rounds to achieve $\eps$ convergence error) under different assumptions, 
in addition to assuming that $F$ is $\beta$-smooth.

\begin{theorem}\label{th:convergence-fzoos}
 Define $D_0 \triangleq \left\|\vx_0 - \vx^*\right\|^2$ and $D_1 \triangleq F(\vx_0) - F(\vx^*)$, to achieve an $\eps$ convergence~error for our \alg{} (Algo.~\ref{alg:fzoos}) with a constant probability when $\rho<1$, the number $M$ of random features and the number $R$ of communication rounds need to satisfy the following,
\vspace{-2.5mm}
\begin{enumerate}[\hspace{0pt}\normalfont(i)]
\itemsep-0.5em
    \item If $F$ is strongly convex and $\eta \leq\frac{1}{10 \beta T}$, $M = \gO\left(\frac{NG}{\eps^2}\right)$ and $R = \gO\left(\frac{1}{\eta T}\ln\frac{D_0}{\eps} + \ln\frac{\sqrt{G}}{\eps}\right)$.
    \item If $F$ is convex and $\eta \leq \frac{1}{10 \beta T}$, $M = \gO\left(\frac{NG}{\eps^2} + \frac{d^2NG}{\eps^4}\right)$ and $R = \gO\left(\frac{D_0}{\eta T\eps} + \frac{\sqrt{G} + \sqrt[4]{d^2G}}{\eps}\right)$.
    \item If $F$ is non-convex and $\eta \leq \frac{7}{100 \beta T}$, $M = \gO\left(\frac{NG}{\eps^2}\right)$ and $R = \gO\left(\frac{D_1}{\eta T \eps} + \frac{\sqrt{G}}{\eps}\right)$.
\end{enumerate}
\end{theorem}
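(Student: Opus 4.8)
The plan is to treat \alg{} as a \fedavg{}-style local-update scheme driven by \emph{biased} surrogate gradients, and to feed the disparity control of Thm.~\ref{th:grad-error} into a descent-plus-telescoping argument specialised to the three curvature regimes. The starting point is the effective per-round global update implied by Algo.~\ref{alg:fzoos}: since $\vx_r=\frac1N\sum_{i\in[N]}\vx^{(i)}_{r,T}$ and each client runs $\vx^{(i)}_{r,t}=\vx^{(i)}_{r,t-1}-\eta\,\widehat{\vg}^{(i)}_{r,t-1}$, one has $\vx_r=\vx_{r-1}-\eta\sum_{t\in[T]}\bar{\vg}_{r,t-1}$ with $\bar{\vg}_{r,t-1}\triangleq\frac1N\sum_{i\in[N]}\widehat{\vg}^{(i)}_{r,t-1}$. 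I would write each surrogate as $\widehat{\vg}^{(i)}_{r,t-1}=\nabla F(\vx^{(i)}_{r,t-1})+\vb^{(i)}_{r,t-1}$, so that $\frac1N\sum_{i\in[N]}\|\vb^{(i)}_{r,t-1}\|^2=\frac1N\sum_{i\in[N]}\Xi^{(i)}_{r,t}$ is exactly the quantity bounded by Thm.~\ref{th:grad-error}, and then apply $\beta$-smoothness of $F$ to obtain a one-round inequality, on $\|\vx_r-\vx^*\|^2$ for cases (i)--(ii) and on $F(\vx_r)-F(\vx_{r-1})$ for case (iii).

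The three recurring error sources to control are the averaged surrogate bias $\bar{\vb}_{r,t-1}$, the \emph{client drift} $\nabla F(\vx^{(i)}_{r,t-1})-\nabla F(\vx_{r-1})$, and the averaging gap. For the drift I would bound $\|\vx^{(i)}_{r,t-1}-\vx_{r-1}\|\le\eta\sum_{s<t}\|\widehat{\vg}^{(i)}_{r,s-1}\|$ and close the resulting self-referential inequality using $\beta$-smoothness with the step-size restriction $\eta\le\frac{1}{10\beta T}$ (resp. $\frac{7}{100\beta T}$), which keeps the accumulated drift a small multiple of $\|\nabla F(\vx_{r-1})\|^2$ plus the bias. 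For the bias I would insert the adaptive length of Cor.~\ref{co:better-gamma}: substituting $\gamma_{r,t-1}=\frac{G}{G+2\omega\kappa\rho^{(r-1)T}+2N\eps}$ into Thm.~\ref{th:grad-error} collapses terms \circled{2} and \circled{3} into $\frac{4G(2\omega\kappa\rho^{(r-1)T}+2N\eps)}{G+2\omega\kappa\rho^{(r-1)T}+2N\eps}$, so that the per-iteration average disparity is at most a geometrically decaying term $\gO(\rho^{(r-1)T})$ (from \circled{1} and the transient of \circled{2}) plus an irreducible RFF floor that scales like $\gO(N/M)$ up to problem constants.

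Summing the descent inequality over $t\in[T]$ and $r\in[R]$, the geometric part contributes $\sum_r\rho^{(r-1)T}=\gO(1/(1-\rho^T))$, a constant in $R$ once $\rho<1$, while the floor contributes per round. The three regimes then follow from standard templates: in the strongly convex case I would establish a contraction $\|\vx_r-\vx^*\|^2\le(1-c\eta T)\|\vx_{r-1}-\vx^*\|^2+(\text{errors})$ for some $c>0$ fixed by the modulus and unroll it, turning $D_0$ into the $\frac{1}{\eta T}\ln\frac{D_0}{\eps}$ term; in the convex and non-convex cases I would instead average the inequality to obtain the $\gO(D_0/(\eta T\eps))$ and $\gO(D_1/(\eta T\eps))$ rates. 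In each regime the transient geometric error $\gO(\rho^{(r-1)T})$, scaled by $G$, must itself be driven below the target, which contributes the additive $\ln\frac{\sqrt G}{\eps}$ (strongly convex) and $\sqrt G/\eps$ (convex, non-convex) terms in $R$. Finally $M$ is taken large enough that the RFF floor $\gO(N/M)$ is dominated by the targeted accuracy; carrying the constants through the recursions yields $M=\gO(NG/\eps^2)$ for (i) and (iii), while case (ii), lacking strong convexity, must additionally absorb a dimension-dependent second-moment contribution from the $d$-dimensional RFF gradient approximation, demanding a smaller floor and hence the heavier $M=\gO(NG/\eps^2+d^2NG/\eps^4)$ together with the extra $\sqrt[4]{d^2G}/\eps$ in $R$.

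The delicate part, I expect, is the coupling between the drift bound and the bias bound: the drift depends on $\|\widehat{\vg}^{(i)}_{r,s-1}\|$, which in turn depends on $\nabla F$ evaluated at drifted iterates and on the biases $\vb^{(i)}_{r,s-1}$, so these two estimates must be resolved simultaneously rather than sequentially. Making the constants line up so that $\eta\le\frac{1}{10\beta T}$ exactly suffices (and $\frac{7}{100\beta T}$ in the non-convex case) requires carefully tracking the smoothness constant through this self-bounding argument. A secondary subtlety is that all guarantees hold only \emph{with constant probability}, inherited jointly from Thm.~\ref{th:grad-error} and from the RFF concentration validating $\eps=\gO(1/M)$; I would fix the relevant failure events once and reuse them across all $R$ rounds so that the success probability does not degrade with $R$.
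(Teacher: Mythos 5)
Your overall architecture matches the paper's: the paper first proves a convergence theorem for the generic framework of Algo.~\ref{alg:fedzoo} (Thm.~\ref{th:conv-general}) via exactly the descent-plus-telescoping argument you describe, with a client-drift lemma (Lemma~\ref{le:drift}) of the self-referential form you anticipate, and then specializes by substituting the disparity bound of Thm.~\ref{th:grad-error} with the adaptive $\gamma_{r,t-1}$ of Cor.~\ref{co:better-gamma}. The step-size thresholds, the handling of the three curvature regimes, and the ``fix the failure events once'' treatment of the constant probability are all as in the paper.

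There is, however, a genuine gap in your quantitative accounting of the collapsed disparity term, and it is precisely the step that produces the stated scalings of $M$ and $R$. After substituting Cor.~\ref{co:better-gamma}, the paper does \emph{not} keep the ratio $\frac{4G(2\omega\kappa\rho^{(r-1)T}+2N\eps)}{G+2\omega\kappa\rho^{(r-1)T}+2N\eps}$ as ``a geometric transient plus an $\gO(N/M)$ floor''; it applies the AM--GM inequality $\frac{4GX}{G+X}\le 2\sqrt{GX}$ with $X=2\omega\kappa\rho^{(r-1)T}+2N\eps$, yielding the bound $2\sqrt{2\omega\kappa G}\,\rho^{(r-1)T/2}+2\sqrt{2NG\eps}$ (see \eqref{eq:temp-cbskv}). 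This square-root structure is essential: the residual floor is $\gO(\sqrt{NG/M})$, which forces $M=\gO(NG/\eps^2)$, and the transient is $\sqrt{G}\,\rho^{(r-1)T/2}$, whose Ces\`aro average over rounds gives the $\sqrt{G}/R$ term and hence $R=\gO(\sqrt{G}/\eps)$. Your version --- a floor of $\gO(N/M)$ and a transient ``scaled by $G$'' --- would instead yield $M=\gO(N/\eps)$ and an additive $G/\eps$ (rather than $\sqrt{G}/\eps$) in $R$, so the stated theorem would not follow from your bookkeeping. A secondary imprecision: in the convex case the $d^2NG/\eps^4$ term in $M$ and the $\sqrt[4]{d^2G}/\eps$ term in $R$ do not come from a ``second-moment contribution of the $d$-dimensional RFF approximation''; they come from the cross term $(\vx^*-\vx_r)^{\top}(\widehat{\vg}-\nabla F)$, which without strong convexity cannot be absorbed into a negative $\|\vx_r-\vx^*\|^2$ term and must instead be bounded by the domain diameter $\sqrt{d}$ times $\sqrt{\smash[b]{\Xi^{(i)}_{r,t}}}$; since $\sqrt{\smash[b]{\Xi}}$ carries a $(NG/M)^{1/4}$ floor, this produces the fourth-root dependences.
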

\vspace{-2.5mm}
The proof is in Appx.~\ref{app-sec:proof:conv-fzoos}.\footnote{The poor convergence of our \alg{} under convex $F$ (vs. the one under non-convex $F$) results from the drawback of the commonly applied proof technique for convex $F$ rather than the algorithm itself. This has been widely recognized in the literature of stochastic gradient descent \citep{harvey2019tight, liu2023high}.} 
Thm.~\ref{th:convergence-fzoos} suggests that the learning rate $\eta$ in \alg{} should be proportionally reduced w.r.t. the number $T$ of local updates, which is in fact consistent with the results in federated FOO \citep{scaffold}. 
Thm.~\ref{th:convergence-fzoos} also shows that when client heterogeneity (i.e., measured by $G$) increases, both the number $M$ of random features and the number $R$ of communication rounds in our \alg{} should be increased in order to achieve the same convergence error, which is also empirically verified in our Sec.~\ref{sec:exps} and Appx.~\ref{app-sec:more}.
Moreover, Thm.~\ref{th:convergence-fzoos} has revealed that given a constant learning rate $\eta$ that satisfies the conditions in Thm.~\ref{th:convergence-fzoos} under various $T$, a larger $T$ usually improves the communication efficiency (i.e., $R$) of our \alg{} (see Appx.~\ref{app-sec:more}). 
More importantly, compared with the convergence of other existing algorithms (provided in Appx.~\ref{app-sec:existing}), \alg{} enjoys an improved communication efficiency in a number of major aspects, which can be attributed to the advantages of our \eqref{eq:fzoos-grad-est} as discussed in Sec.~\ref{sec:est-analysis} (see Appx.~\ref{app-sec:existing} for a detailed comparison).

\vspace{-2mm}
\section{Experiments}\label{sec:exps}
In this section, we demonstrate that our \alg{} outperforms existing federated ZOO algorithms using synthetic experiments (Sec.~\ref{sec:syn}), as well as real-world experiments on federated black-box adversarial attack (Sec.~\ref{sec:attack}) and federated non-differentiable metric optimization (\ref{sec:metric}).

\subsection{Synthetic Experiments}\label{sec:syn}
We firstly employ federated synthetic functions to illustrate the superiority of our proposed \alg{} over a number of existing federated ZOO baselines such as \fedzo{}, \fedprox{}, and \scaffold{} in the federated ZOO setting (see Appx.~\ref{app-sec:existing} for their specific forms). We refer to Appx.~\ref{app-sec:setting-syn} for the details of these synthetic functions and the experimental setting applied here. Fig.~\ref{fig:quadratic} provides the results with $d=300$, $N=5$, and varying $C$ to control the client heterogeneity (more results in Appx.~\ref{app-sec:syn}).
It shows that our \alg{} considerably outperforms the other baselines in terms of both communication and query efficiency, which can be attributed to the superiority of our \eqref{eq:fzoos-grad-est}. 
When $C$ is increased, a larger number of communication rounds and total queries is required to achieve the same convergence error, which empirically verifies our Thm.~\ref{th:convergence-fzoos}. Interestingly, \scaffoldtwo{} consistently outperforms \scaffoldone{} while \typetwo{} in fact is an approximation of \typeone{} in \citep{scaffold}. This is likely because \scaffoldtwo{} achieves improved gradient correction by implicitly increasing the number of additional function queries for a smaller approximation error of $\nabla F$
(refer to Appx.~\ref{app-sec:existing}). 
This thus indicates the necessity of achieving an accurate approximation of $\nabla F$ for federated ZOO with heterogeneous clients, which is achieved by our \alg{}. 
Meanwhile, when client heterogeneity is small (i.e., $C\leq5.0$), both \fedprox{} and \scaffoldone{} perform worse than \fedzo{} which does not apply any gradient correction. This is likely because the impact of the inaccurate gradient correction 
applied in these two algorithms outweighs that of client heterogeneity as justified in our Appx.~\ref{app-sec:existing}. This corroborates the importance of developing improved gradient correction for federated ZOO of varying client heterogeneity, which is realized by our \alg{}.

\begin{figure}[t]
\vspace{-3mm}
\centering
\begin{tabular}{cc}
    \hspace{-4mm}
    \includegraphics[width=0.5\columnwidth]{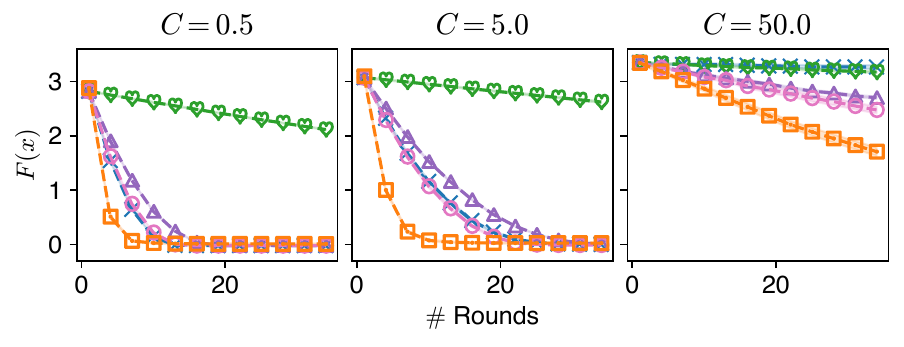}&
    \hspace{-4mm}
    \includegraphics[width=0.5\columnwidth]{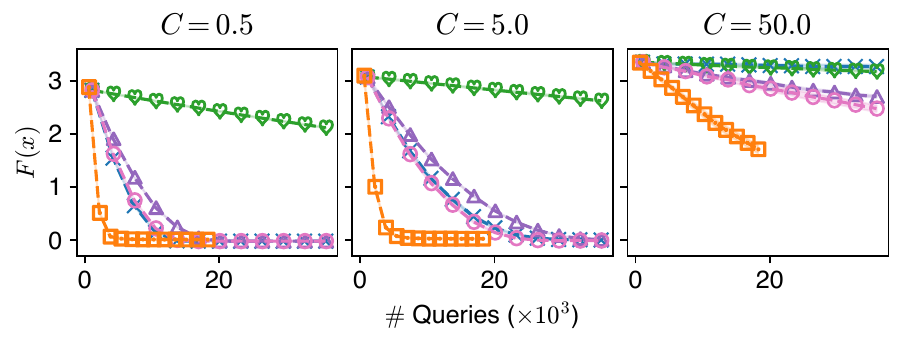} \\
\end{tabular}
\vskip -0.05in
\includegraphics[width=0.57\columnwidth]{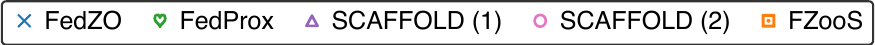}
\vskip -0.1in
\caption{
Comparison of the communication and query efficiency between our \alg{} and other existing baselines on the federated synthetic functions with varying client heterogeneity (controlled by $C\geq0$), where a larger $C$ implies larger client heterogeneity. The $x$-axes of the first and last three plots are the number of rounds and total queries required by these algorithms. SCAFFOLD (1) and (2) stand for \scaffoldone{} and \scaffoldtwo{} algorithms, respectively.
}
\label{fig:quadratic}
\vspace{-4mm}
\end{figure}


\subsection{Federated Black-Box Adversarial Attack}\label{sec:attack}
Following the practice of \citep{fedzo}, we then examine the advantages of our \alg{} in the task of federated black-box adversarial attack.
Here we aim to find a small perturbation $\vx$ to be added to an input image $\vz$ such that the perturbed image $\vz + \vx$ will be wrongly classified by the \textit{majority} of the private ML models on various clients through only the function queries of these models.
Specifically, we randomly select 15 images from
CIFAR-10 \citep{cifar} and then attempt to find one single perturbation (
$d=32 \times 32$
) for every image to make the averaged output of $N=10$ deep neural networks trained using private datasets on different clients misclassify the image using federated ZOO algorithms (refer to Appx.~\ref{app-sec:setting-attack} for more details). Fig.~\ref{fig:attack} illustrates the success rates on these 15 images 
achieved by various federated ZOO algorithms during optimization (more results in Appx.~\ref{app-sec:attack}). Remarkably, our \alg{} again achieves consistently improved communication efficiency over the other baselines under varying client heterogeneity. 
Thanks to this improved communication efficiency and the ability of our \eqref{eq:fzoos-grad-est} to avoid a large number of additional function queries in every communication round, our \alg{} also achieves a substantial improvement in query efficiency. 
Overall, these results support the superiority of our \alg{} over the other existing approaches in real-world federated ZOO problems in terms of both communication and query efficiency.

\begin{figure}[t]
\vspace{-3mm}
\centering
\begin{tabular}{cc}
    \hspace{-4mm}
    \includegraphics[width=0.5\columnwidth]{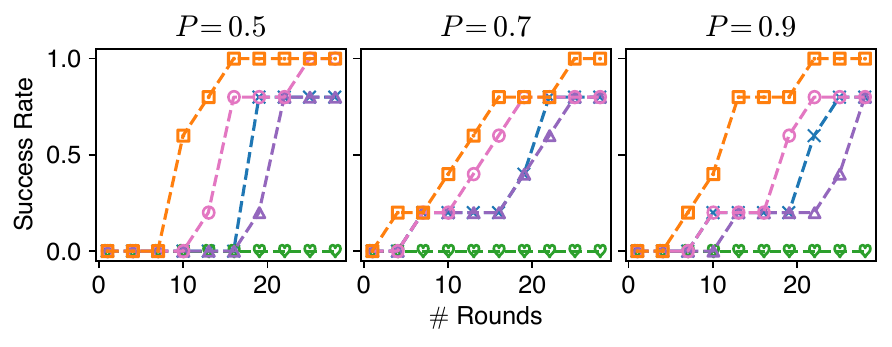}&
    \hspace{-4mm}
    \includegraphics[width=0.5\columnwidth]{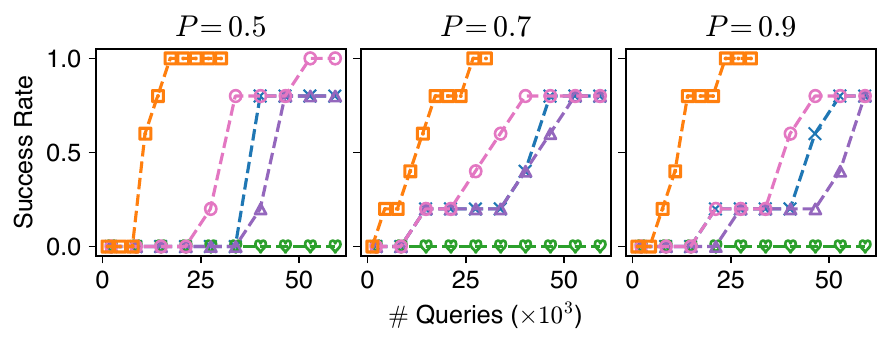} \\
\end{tabular}
\vskip -0.05in
\includegraphics[width=0.57\columnwidth]{workspace/figs/legend.pdf}
\vskip -0.1in
\caption{
Comparison of the success rate in federated black-box adversarial attack achieved by \alg{} and other existing federated ZOO algorithms on CIFAR-10 under varying client heterogeneity (controlled by $P \in [0,1]$, a larger $P$ implies smaller client heterogeneity). The $x$ and $y$-axis are the number of rounds/queries and the corresponding success rate (higher is better).
}
\label{fig:attack}
\vspace{-3mm}
\end{figure}

\subsection{Federated Non-Differentiable Metric Optimization}\label{sec:metric}
Inspired by \citep{zord}, we lastly demonstrate the superior performance of our \alg{} in the task of federated non-differentiable metric optimization, which has received a surging interest recently \citep{HiranandaniMNFK21, HuangZGS21}. 
Specifically, we employ federated ZOO algorithms to fine-tune a fully trained MLP model ($d=2189$) to optimize a non-differentiable metric such as precision and recall, using the Covertype dataset \citep{Dua19} distributed on $N=7$ clients (refer to Appx.~\ref{app-sec:setting-metric} for more details). 
This is similar to the widely applied federated learning setting \citep{federated} whereas the gradient information here is unavailable due to the non-differentiability of these metrics. Fig.~\ref{fig:metricopt} reports the comparison among various federated ZOO algorithms under varying client heterogeneity (more results in Appx.~\ref{app-sec:metric}). The results show that in the task of federated non-differentiable metric optimization with varying client heterogeneity, our \alg{} is still able to consistently outperform the other existing federated ZOO algorithms in terms of both communication and query efficiency, which therefore further substantiates the superiority of our \alg{} in optimizing high-dimensional non-differentiable functions in the federated setting.

\begin{figure}[t]
\centering
\begin{tabular}{cc}
    \hspace{-4mm}
    \includegraphics[width=0.5\columnwidth]{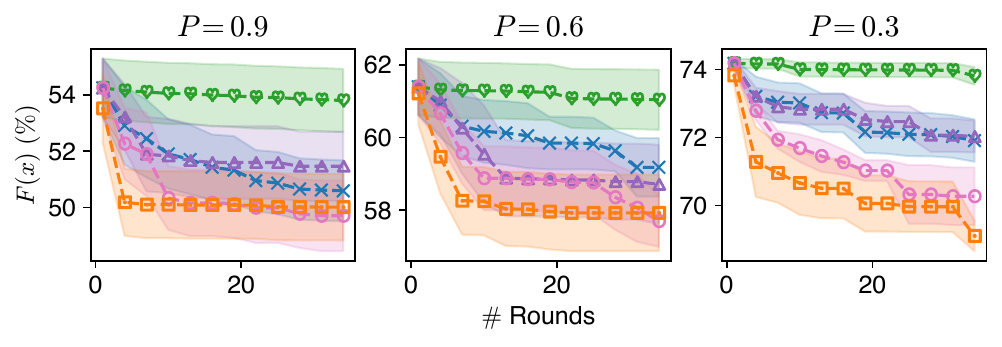}&
    \hspace{-4mm}
    \includegraphics[width=0.5\columnwidth]{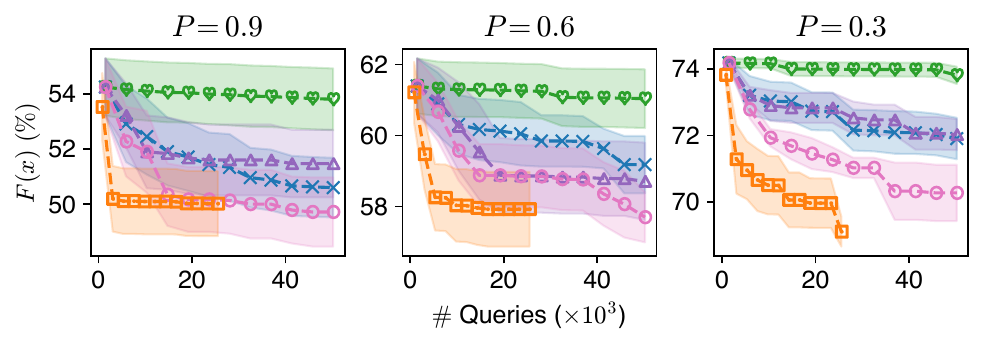}
\end{tabular}
\vskip -0.05in
\includegraphics[width=0.57\columnwidth]{workspace/figs/legend.pdf}
\vskip -0.1in
\caption{Comparison of the non-differentiable metric optimization between \alg{} and other existing federated ZOO algorithms under varying client heterogeneity (controlled by $P \in [0,1]$, a larger $P$ implies smaller client heterogeneity). The $y$-axis is $(1-\text{precision})\times 100\%$ and each curve is the mean $\pm$ standard error from five independent runs.}
\label{fig:metricopt}
\vspace{-4mm}
\end{figure}

\vspace{-2mm}
\section{Conclusion and Discussion}
In this paper, we first identify the non-trivial challenges of query and communication inefficiency faced by federated ZOO algorithms in the presence of client heterogeneity (Sec.~\ref{sec:framework&challenge}) and then introduce our \alg{} algorithm to address these challenges (Sec.~\ref{sec:fzoos}). We employ both theoretical justifications (Sec.~\ref{sec:analysis}) and empirical demonstrations (Sec.~\ref{sec:exps}) to show that \alg{} is indeed able to address these challenges and consequently to achieve considerably improved query and communication efficiency over the existing federated ZOO algorithms. Of note, the limitation of our \alg{} lies in two major aspects. Firstly, as discussed in Sec.~\ref{sec:global-surrogates}, \alg{} incurs an additional transmission of $M$-dimensional vectors for every communication round compared with existing algorithms, which is acceptable given a high-speed transmission between clients and server. Secondly, it will be hard for \alg{} to solve extremely high-dimensional federated ZOO problems (e.g., the model training of neural networks with millions/billions of parameters) due to the restrictive modeling capacity of GP where a promising solution is to use neural networks as the GPs of compelling representation power \citep{sto-bnts, fed-neural-bandit}.

\newpage
\bibliographystyle{unsrtnat}
\bibliography{neurips2023}

\newpage
\appendix
\begin{appendices}
\counterwithin{lemma}{section}
\counterwithin{proposition}{section}
\counterwithin{theorem}{section}
\counterwithin{corollary}{section}

\section{Related Work}


\paragraph{Federated Learning and Federated First-Order Optimization.} Federated learning (FL) has become a paradigm of applying multiple edge devices (i.e., clients) to collaboratively train a global model without sharing the private data on these edge devices \citep{federated}. We refer to the surveys \citep{fl-chanllenges, fl-advance} for more comprehensive reviews of FL. Such a paradigm then gives rise to recent interest in federated optimization or more precisely federated first-order optimization (FOO) \citep{wang2021field} to broaden its real-world application. Since the first federated FOO algorithm \fedavg{} proposed in \citep{fedavg}, a number of techniques have been developed to further improve its performance in different aspects, e.g., federated FOO with momentum \citep{mom-fed} and adaptive learning rates \citep{adap-fed, acce-fed, deco-fed} for convergence speedup, federated FOO with local posterior sampling for de-biased client updates \citep{fedpa}, and federated FOO with regularized functions \citep{fedprox, feddane} and control variates \citep{scaffold, mime} for the challenge of heterogeneous clients, in which the global function to be optimized differs from the local functions on clients.

\paragraph{Federated Zeroth-Order Optimization.} Despite the success of federated FOO algorithms, some important applications, e.g., federated black-box adversarial attack in \citep{fedzo}, suggests the development of federated zeroth-order (ZOO) algorithms for the federated optimization where gradient information is not available. Nevertheless, very limited efforts have been devoted to the development of federated zeroth-order (ZOO) algorithms especially when the clients are heterogeneous. To the best of our knowledge, \citet{fedzo} are the first to consider federated ZOO, in which they simply combine \fedavg{} with existing FD methods as their \fedzo{} algorithm. Similar to the \fedavg{} algorithm in federated FOO, the \fedzo{} algorithm also likely performs poorly in the heterogeneous setting. This thus encourages the design of federated ZOO algorithms for heterogeneous federated ZOO problems. Following the practice of \fedzo{}, existing federated FOO algorithms for heterogeneous clients, e.g., \citep{fedprox, scaffold}, can be simply adapted to the corresponding federated ZOO algorithms for this kind of problem. However, these algorithms shall be query- and communication-inefficient in practice, which therefore raises the question of how to improve query efficiency and the communication efficiency of these algorithms. To answer this question, we first identify the challenges of such an improvement and then develop a federated ZOO algorithm to overcome these challenges in this paper.

\section{Random Fourier Features}\label{app-sec:rff}
According to \citep{rahimi2007random}, the random Fourier features can usually be represented as a $M$-dimensional row vector $\phi(\vx)^{\top} = \left[\frac{2}{\sqrt{M}} \cos(\vv_j \vx + b_j)\right]_{j=1}^M$ where every $\vv_j$ is independently randomly sampled from a distribution $p(\vv)$ and every $b_j$ is independently randomly sampled from the uniform distribution over $[0,2\pi]$. Particularly, for the squared exponential kernel $k(\vx,\vx') = \exp\left(-\left\|\vx - \vx'\right\|^2 / (2l^2)\right)$ in which $l$ is the length scale, $p(\vv) = \gN(0, \frac{1}{l^2}\rmI)$. In \alg{}, we typically adopt the squared exponential kernel for the optimization. Importantly, before the start of our \alg{}, $\{\vv_j\}_{j=1}^M$ and $\{b_j\}_{j=1}^M$ need to be sampled and shared across all clients as well as server (as mentioned in Sec.~\ref{sec:global-surrogates}), which however will only happen once for whole optimization process.

\newpage
\section{Theoretical Analyses}
\subsection{Proof of Proposition~\ref{prop:opt-correction}}\label{app-sec:proof:prop:opt-correction}

Based on the definition of $\Xi^{\smash{(i)}}_{r,t}$ in Sec.~\ref{sec:challenges}, we have that
\begin{equation}
\begin{aligned}
    \Xi^{\smash{(i)}}_{r,t} &= \left\|\widehat{\vg}^{(i)}_{r,t-1} - \nabla F(\vx^{(i)}_{r,t-1})\right\|^2 \\
    &= \left\|\vg_{r,t-1}^{(i)} + \gamma_{r,t-1}^{(i)} \left(\vg_{r-1}(\vx') - \vg_{r-1}^{(i)}(\vx'')\right) - \nabla F(\vx^{(i)}_{r,t-1})\right\|^2\\
    &= \left\|\vg_{r,t-1}^{(i)} - \nabla F(\vx^{(i)}_{r,t-1})\right\|^2 - 2\gamma_{r,t-1}^{(i)}\left(\nabla F(\vx^{(i)}_{r,t-1}) - \vg_{r,t-1}^{(i)}\right)^{\top}\left(\vg_{r-1}(\vx') - \vg_{r-1}^{(i)}(\vx'')\right) + \\
    &\qquad\qquad \left(\gamma_{r,t-1}^{(i)}\right)^2\left\|\vg_{r-1}(\vx') - \vg_{r-1}^{(i)}(\vx'')\right\|^2 \ ,
\end{aligned}
\end{equation}
which is a quadratic function w.r.t. $\gamma_{r,t-1}^{(i)}$. It is easy to show that when
\begin{equation}
    \gamma_{r,t-1}^{(i)} = \gamma_{r,t-1}^{(i)*} \triangleq \frac{\left(\nabla F(\vx^{(i)}_{r,t-1}) - \vg_{r,t-1}^{(i)}\right)^{\top}\left(\vg_{r-1}(\vx') - \vg_{r-1}^{(i)}(\vx'')\right)}{\left\|\vg_{r-1}(\vx') - \vg_{r-1}^{(i)}(\vx'')\right\|} \ , \label{eq:temp-savx}
\end{equation}
$\Xi^{(i)}_{r,t}$ can achieve its global minimum w.r.t. $\gamma_{r,t-1}^{(i)}$ as
\begin{equation}
    \Xi^{\smash{(i)}}_{r,t} = \left\|\vg_{r,t-1}^{(i)} - \nabla F(\vx^{(i)}_{r,t-1})\right\|^2 - \frac{\left\|\left(\nabla F(\vx^{(i)}_{r,t-1}) - \vg_{r,t-1}^{(i)}\right)^{\top}\left(\vg_{r-1}(\vx') - \vg_{r-1}^{(i)}(\vx'')\right)\right\|^2}{\left\|\vg_{r-1}(\vx') - \vg_{r-1}^{(i)}(\vx'')\right\|^2} \ . \label{eq:temp-bvud}
\end{equation}
This therefore finishes the proof of the fist-part result in Prop.~\ref{prop:opt-correction}. Interestingly, \eqref{eq:temp-bvud} implies that given the $\gamma_{r,t-1}^{(i)}$ in \eqref{eq:temp-savx}, a better alignment between the gradient correction vector $\vg_{r-1}(\vx') - \vg_{r-1}^{(i)}(\vx'')$ and the shift $\nabla F(\vx^{(i)}_{r,t-1}) - \vg_{r,t-1}^{(i)}$ leads to a smaller gradient disparity $\Xi^{(i)}_{r,t}$.

Given the $\gamma_{r,t-1}^{(i)*}=1$ in \eqref{eq:temp-savx}, when $\vg_{r-1}(\vx') - \vg_{r-1}^{(i)}(\vx'') = \nabla F(\vx^{(i)}_{r,t-1}) - \vg_{r,t-1}^{(i)}$, we can easily verify that $\Xi^{(i)}_{r,t}$ in \eqref{eq:temp-savx} has $\Xi^{(i)}_{r,t}=0$. On the contrary, when $\Xi^{(i)}_{r,t}=0$, we have that
\begin{equation}
    \left\|\vg_{r,t-1}^{(i)} - \nabla F(\vx^{(i)}_{r,t-1})\right\| = \frac{\left\|\left(\nabla F(\vx^{(i)}_{r,t-1}) - \vg_{r,t-1}^{(i)}\right)^{\top}\left(\vg_{r-1}(\vx') - \vg_{r-1}^{(i)}(\vx'')\right)\right\|}{\left\|\vg_{r-1}(\vx') - \vg_{r-1}^{(i)}(\vx'')\right\|} \ , \label{eq:tem-ceiv}
\end{equation}
which implies that $\nabla F(\vx^{(i)}_{r,t-1}) - \vg_{r,t-1}^{(i)}$ and $\vg_{r-1}(\vx') - \vg_{r-1}^{(i)}(\vx'')$ are linear dependent according to the Cauchy-Schwarz inequality. Since $\gamma_{r,t-1}^{(i)*}=1$, we further have 
\begin{equation}
    \left\|\nabla F(\vx^{(i)}_{r,t-1}) - \vg_{r,t-1}^{(i)}\right\| = \left\|\vg_{r-1}(\vx') - \vg_{r-1}^{(i)}(\vx'')\right\| \ . \label{eq:temp-fiewn}
\end{equation}
These two results, i.e., \eqref{eq:tem-ceiv} and \eqref{eq:temp-fiewn} thus imply that $\nabla F(\vx^{(i)}_{r,t-1}) - \vg_{r,t-1}^{(i)} = \vg_{r-1}(\vx') - \vg_{r-1}^{(i)}(\vx'')$, which therefore concludes our proof.

\newpage
\subsection{Proof of Theorem~\ref{th:grad-error}}\label{app-sec:proof:grad-error}
\subsubsection{Gradient Estimation Error Using Uncertainty}
We introduce the following lemma that is adapted from \citep{zord} to bound the estimation error of our local gradient surrogates using the uncertainty measure in our \eqref{eq:posterior-derived}.
\begin{lemma}\label{le:confidence-bound}
Let $\delta \in (0,1)$ and $\omega \triangleq d + 2(\sqrt{d}+1)\ln(1/\delta)$. For any $\vx \in \gX$, $i \in [N]$, $r\geq1$ and $t\geq 1$, the following holds with probability of at least $1-\delta$,
\begin{equation*}
\vspace{-0.5mm}
\begin{aligned}
\left\|\nabla \mu^{(i)}_{r,t}(\vx) - \nabla f_i(\vx)\right\|^2 \leq 
\omega \left\|\partial \left(\sigma^{(i)}_{r,t}\right)^2(\vx)\right\| \ .
\end{aligned}
\end{equation*}
\end{lemma}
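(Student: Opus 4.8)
The plan is to exploit the fact that, conditioned on the optimization trajectory $\gD^{(i)}_{r,t}$, the gradient $\nabla f_i(\vx)$ is itself a Gaussian random vector. Indeed, \eqref{eq:derived-gp} states that $\nabla f_i$ follows the derived posterior GP, so at any fixed $\vx \in \gX$ the vector $\nabla f_i(\vx)$ is distributed as $\gN\big(\nabla \mu^{(i)}_{r,t}(\vx),\, \mSigma\big)$ with $\mSigma \triangleq \partial (\sigma^{(i)}_{r,t})^2(\vx)$ a $d \times d$ positive semidefinite covariance matrix given by the diagonal of \eqref{eq:posterior-derived}. Consequently the estimation error $\nabla \mu^{(i)}_{r,t}(\vx) - \nabla f_i(\vx)$ is a centered Gaussian vector with covariance $\mSigma$, and the lemma reduces to a tail bound on the squared norm of such a vector.

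First I would write the error as $\mSigma^{1/2}\bm{\xi}$ with $\bm{\xi} \sim \gN(\vzero, \rmI)$, so that
\begin{equation*}
\left\|\nabla \mu^{(i)}_{r,t}(\vx) - \nabla f_i(\vx)\right\|^2 = \bm{\xi}^{\top} \mSigma\, \bm{\xi} \leq \lambda_{\max}(\mSigma)\,\|\bm{\xi}\|^2 = \left\|\mSigma\right\|\,\|\bm{\xi}\|^2 ,
\end{equation*}
using that the operator norm $\|\mSigma\|$ of the symmetric PSD matrix $\mSigma$ equals its largest eigenvalue. This isolates the only random quantity, $\|\bm{\xi}\|^2 \sim \chi^2_d$, and turns the problem into a one-dimensional chi-squared tail estimate.

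Next I would apply a standard chi-squared concentration inequality (Laurent--Massart): for any $x>0$, $\mathbb{P}\big(\|\bm{\xi}\|^2 \geq d + 2\sqrt{dx} + 2x\big) \leq e^{-x}$. Setting $x = \ln(1/\delta)$ gives, with probability at least $1-\delta$, the bound $\|\bm{\xi}\|^2 \leq d + 2\sqrt{d\ln(1/\delta)} + 2\ln(1/\delta)$. Bounding the cross term via $\sqrt{d\ln(1/\delta)} \leq \sqrt{d}\,\ln(1/\delta)$ then yields $\|\bm{\xi}\|^2 \leq d + 2(\sqrt{d}+1)\ln(1/\delta) = \omega$, and substituting back gives $\|\nabla \mu^{(i)}_{r,t}(\vx) - \nabla f_i(\vx)\|^2 \leq \omega\,\|\mSigma\|$ with probability at least $1-\delta$, which is exactly the claim.

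I expect two steps to require the most care. The first is the structural claim that the posterior law of $\nabla f_i(\vx)$ is Gaussian with the mean and covariance of \eqref{eq:posterior-derived}; this rests on the smoothness of $k$ (guaranteed here by the assumed bounds on $\partial_\vz \partial_{\vz'} k$ and $\partial_\vz k$ in Sec.~\ref{sec:setting}), which legitimizes differentiating the GP and interchanging differentiation with conditioning, and is the fact borrowed from \citep{zord}. The second, more quantitative subtlety is the cross-term simplification $\sqrt{d\ln(1/\delta)} \leq \sqrt{d}\,\ln(1/\delta)$, which holds in the high-probability regime $\ln(1/\delta) \geq 1$; for the small values of $\delta$ relevant to a confidence bound this is harmless, but it is the only place where the clean form of $\omega$ is obtained at the cost of a mild restriction on $\delta$.
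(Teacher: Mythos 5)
Your proposal is correct and follows the standard route: the paper itself states this lemma without proof (importing it from \citep{zord}), but your argument---posterior Gaussianity of $\nabla f_i(\vx)$ with covariance $\partial(\sigma^{(i)}_{r,t})^2(\vx)$, the operator-norm bound $\bm{\xi}^{\top}\mSigma\bm{\xi}\leq\|\mSigma\|\,\|\bm{\xi}\|^2$, and the Laurent--Massart $\chi^2_d$ tail (which is exactly the paper's Lemma~\ref{le:chi-square})---is precisely how the constant $\omega = d + 2(\sqrt{d}+1)\ln(1/\delta)$ arises. You also correctly flag the only real caveat, namely that the simplification $\sqrt{d\ln(1/\delta)}\leq\sqrt{d}\,\ln(1/\delta)$ requires $\delta\leq 1/e$, which is harmless in the regime where the lemma is used.
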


\subsubsection{RFF Approximation Error for Global Gradient Surrogate}
\begin{lemma}[\citet{laurent2000adaptive}]\label{le:chi-square}
    If $\rx_1,\cdots,\rx_k$ are independent standard normal random variables, for $\ry=\sum_{i=1}^k \rx_i^2$ and any $\eps$, 
    \begin{equation*}
         \sP(\ry - k \geq 2\sqrt{k\eps} + 2\eps) \leq \exp(-\eps) \ .
    \end{equation*}
\end{lemma}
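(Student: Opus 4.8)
The random variable $\ry$ is a sum of $k$ independent squared standard normals, i.e.\ a chi-squared variable with $k$ degrees of freedom, so the plan is to control its upper tail by the Cram\'er--Chernoff exponential moment method. First I would compute the moment generating function: by independence together with the standard Gaussian integral $\E[e^{\lambda \rx_i^2}] = (1-2\lambda)^{-1/2}$, valid for $0 \leq \lambda < 1/2$, one obtains $\E[e^{\lambda \ry}] = (1-2\lambda)^{-k/2}$ on that range.

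Next, for the target level $u \triangleq 2\sqrt{k\eps}+2\eps$, Markov's inequality applied to $e^{\lambda(\ry - k)}$ gives, for every $\lambda \in [0, 1/2)$,
\begin{equation*}
\sP(\ry - k \geq u) \leq e^{-\lambda(k+u)}(1-2\lambda)^{-k/2} = \exp\!\left(-\lambda(k+u) - \tfrac{k}{2}\ln(1-2\lambda)\right).
\end{equation*}
The exponent is convex in $\lambda$, so I would minimize it via its first-order condition; this yields the minimizer $\lambda^* = \frac{u}{2(k+u)}$, which lies in $(0,1/2)$ since $u>0$ (ensuring the MGF stays finite). Substituting $\lambda^*$ back, using $1-2\lambda^* = k/(k+u)$, collapses the bound to
\begin{equation*}
\sP(\ry - k \geq u) \leq \exp\!\left(-\tfrac{u}{2} + \tfrac{k}{2}\ln\!\left(1 + \tfrac{u}{k}\right)\right).
\end{equation*}

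The remaining and most delicate step will be the purely deterministic verification that this exponent is at most $-\eps$ for the specific choice $u = 2\sqrt{k\eps}+2\eps$; here one must resist the naive bound $\ln(1+x) \leq x$, which is too weak and returns only $0$. Writing $s \triangleq \sqrt{\eps/k}$ so that $u/k = 2s + 2s^2$ and $\eps = ks^2$, the required inequality $-\tfrac{u}{2} + \tfrac{k}{2}\ln(1+u/k) \leq -\eps$ rearranges to $2s \geq \ln(1 + 2s + 2s^2)$, i.e.\ $e^{2s} \geq 1 + 2s + 2s^2$. This final inequality holds for all $s \geq 0$ because every term of the Taylor expansion of $e^{2s}$ is nonnegative while its first three terms are exactly $1 + 2s + 2s^2$. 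Assembling the three pieces then yields $\sP(\ry - k \geq u) \leq e^{-\eps}$, as claimed.
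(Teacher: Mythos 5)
Your proof is correct. Note that the paper does not prove this lemma at all: it is imported verbatim as Lemma~1 of \citet{laurent2000adaptive}, so there is no in-paper argument to compare against. Your derivation is the standard Cram\'er--Chernoff route and every step checks out: the MGF $(1-2\lambda)^{-k/2}$, the optimizer $\lambda^*=\frac{u}{2(k+u)}\in(0,1/2)$, the collapsed exponent $-\frac{u}{2}+\frac{k}{2}\ln(1+\frac{u}{k})$, and the substitution $s=\sqrt{\eps/k}$ reducing everything to $e^{2s}\geq 1+2s+2s^2$, which indeed follows from the Taylor expansion since the first three terms match exactly and the rest are nonnegative. You are also right to flag that $\ln(1+x)\leq x$ is useless here; identifying and verifying the sharper elementary inequality is the real content of the deterministic step. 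For what it is worth, Laurent and Massart argue slightly differently: they bound the centered log-MGF by $\frac{k\lambda^2}{1-2\lambda}$ via $-\log(1-x)-x\leq \frac{x^2}{2(1-x)}$ and then invoke a general sub-gamma tail lemma, whereas you optimize the Chernoff bound exactly and finish with a direct inequality. Your version is more self-contained; theirs generalizes immediately to any variable with a sub-gamma MGF. The only cosmetic point is that the statement's ``any $\eps$'' should be read as $\eps\geq 0$ (your argument takes $u>0$, and the case $\eps=0$ is trivial).
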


Following the general idea in~\citep{rahimi2007random}, we present the following Lemma~\ref{le:kenerl-derivative-error} to bound the difference of our approximated kernel using random features and the ground truth kernel $k$, as well as the difference between their partial derivatives first. To ease our presentation, we let the kernel $k$ be defined by an infinite dimensional vector $\psi(\vx)$, which is defined by the corresponding infinite number of features for $k$, throughout this section. That is, $k(\vx, \vx') = \psi(\vx)^{\top}\psi(\vx')$ for any $\vx, \vx' \in \gX$.
\begin{lemma}\label{le:kenerl-derivative-error}
Let $\delta \in (0,1)$. Assume that $\E\left[\left\|\vv\right\|^2\right] \leq V$, for any $\vx,\vx' \in \gX$, the following holds with probability of at least $1-\delta$,
\begin{equation*}
\begin{aligned}
    \left|\vphi(\vx)^{\top}\vphi(\vx') - \vpsi(\vx)^{\top}\vpsi(\vx')\right| &\leq \sqrt{8\ln(2/\delta) / M} \ , \\    
    \left\|\nabla\vphi(\vx)^{\top}\vphi(\vx') - \nabla\vpsi(\vx)^{\top}\vpsi(\vx')\right\| &\leq \sqrt{4V / (M\delta)} 
\end{aligned}
\end{equation*}
where $M$ is the number of random Fourier features. 
\end{lemma}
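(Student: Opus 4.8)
The plan is to recognize each of the two quantities as an empirical average over the $M$ i.i.d.\ random features and then apply a concentration argument whose form depends on whether the summands are bounded. Writing $z_j(\vx) \triangleq \sqrt{2}\cos(\vv_j^\top\vx + b_j)$ with the normalization of Appx.~\ref{app-sec:rff}, we have $\vphi(\vx)^\top\vphi(\vx') = \frac{1}{M}\sum_{j\in[M]} z_j(\vx)z_j(\vx')$, and the random-phase construction together with Bochner's theorem gives $\E[z_j(\vx)z_j(\vx')] = k(\vx,\vx') = \vpsi(\vx)^\top\vpsi(\vx')$. So both claims are statements that an empirical mean of $M$ i.i.d.\ terms concentrates around its expectation; only the tail tool changes between them.

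For the first bound I would use that the summands $X_j \triangleq z_j(\vx)z_j(\vx')$ are bounded: $|z_j(\vx)|\le\sqrt{2}$ forces $X_j\in[-2,2]$. Applying Hoeffding's inequality to $\frac{1}{M}\sum_j X_j$ with range width $4$ yields $\sP\big(|\vphi(\vx)^\top\vphi(\vx') - k(\vx,\vx')| \ge t\big) \le 2\exp(-Mt^2/8)$; setting the right-hand side equal to $\delta$ and solving for $t$ gives exactly $t=\sqrt{8\ln(2/\delta)/M}$, the claimed rate, where the $2$ inside the logarithm is the two-sided penalty.

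For the second bound, differentiating each feature gives $\nabla_\vx z_j(\vx) = -\sqrt{2}\sin(\vv_j^\top\vx+b_j)\,\vv_j$, so that $\nabla_\vx[\vphi(\vx)^\top\vphi(\vx')] = \frac{1}{M}\sum_j Y_j$ with $Y_j \triangleq -2\sin(\vv_j^\top\vx+b_j)\cos(\vv_j^\top\vx'+b_j)\,\vv_j$. I would first justify exchanging gradient and expectation, i.e.\ $\E[Y_j] = \nabla\vpsi(\vx)^\top\vpsi(\vx')$, via dominated convergence, the dominating function being integrable since $\E\|\vv\|<\infty$ (which follows from $\E\|\vv\|^2\le V$). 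The crucial difference from the first bound is that $Y_j$ is \emph{unbounded} — it is linear in the heavy-tailed $\vv_j$ — so Hoeffding no longer applies. Instead I would control the second moment: since $|\sin|,|\cos|\le 1$ we have $\|Y_j\|^2\le 4\|\vv_j\|^2$, hence $\E\|Y_j\|^2 \le 4V$, and by independence the empirical mean satisfies $\E\big\|\frac{1}{M}\sum_j Y_j - \nabla\vpsi(\vx)^\top\vpsi(\vx')\big\|^2 = \frac{1}{M}\E\|Y_1 - \E Y_1\|^2 \le \frac{4V}{M}$. Markov's inequality on the squared norm then gives $\sP\big(\big\|\frac{1}{M}\sum_j Y_j - \nabla\vpsi(\vx)^\top\vpsi(\vx')\big\|^2 \ge s\big)\le 4V/(Ms)$, and taking $s = 4V/(M\delta)$ produces the stated bound $\sqrt{4V/(M\delta)}$.

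The main obstacle is this second bound. Two features make it more delicate than the first: (a) one must legitimately differentiate under the expectation, which is where the moment assumption $\E\|\vv\|^2\le V$ is genuinely needed; and (b) the gradient summands are unbounded, so the sub-Gaussian Hoeffding tail is unavailable and I must fall back on a second-moment plus Markov argument. This is precisely why the gradient bound carries the weaker polynomial dependence $1/\sqrt{\delta}$ rather than the logarithmic $\sqrt{\ln(1/\delta)}$ of the value bound. If both inequalities are required simultaneously, a union bound replacing $\delta$ by $\delta/2$ in each suffices.
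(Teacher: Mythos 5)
Your proposal is correct and follows essentially the same route as the paper: Hoeffding's inequality on the bounded summands $2\cos(\vv_j^{\top}\vx+b_j)\cos(\vv_j^{\top}\vx'+b_j)\in[-2,2]$ for the first bound, and a second-moment computation ($\E\|\vv_j\|^2\le V$ giving variance at most $4V/M$ by independence) followed by Chebyshev/Markov for the unbounded gradient terms in the second bound, with the same constants at every step. Your added remarks on justifying the derivative--expectation interchange and on the union bound for simultaneous validity are sensible refinements of details the paper states without elaboration.
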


\begin{proof}
Recall that $\vphi(\vx)^{\top}\vphi(\vx') = 1 / M \sum_{j=1}^M 2\cos(\vv_j^{\top}\vx + b_j)\cos(\vv_j^{\top}\vx' + b_j)$ as shown in Appx.~\ref{app-sec:rff}. Then, according to \citep{rahimi2007random}, for any $j\in[M]$,
\begin{equation}
\begin{aligned}
    \E\left[2\cos(\vv_j^{\top}\vx + b_j)\cos(\vv_j^{\top}\vx' + b_j)\right] &= \vpsi(\vx)^{\top}\vpsi(\vx') \ , \\
    \E\left[\vphi(\vx)^{\top}\vphi(\vx')\right] &= \vpsi(\vx)^{\top}\vpsi(\vx') \ . \label{eq:temp-vnsvns}
\end{aligned}
\end{equation}
Since $2\cos(\vv_j^{\top}\vx + b_j)\cos(\vv_j^{\top}\vx' + b_j) \in [-2,2]$ and both $\{\vv_1, \cdots, \vv_M\}$ and $\{b_1, \cdots, b_M\}$ are randomly independently sampled, according to Hoeffding's inequality, the following inequality holds for any $\eps > 0$
\begin{equation}
\begin{aligned}
    \sP\left(\left|\vphi(\vx)^{\top}\vphi(\vx') - \vpsi(\vx)^{\top}\vpsi(\vx')\right| \geq \eps \right) \leq 2\exp\left(- \frac{M\eps^2}{8}\right) \ .
\end{aligned}
\end{equation}
Choose $\delta = 2\exp(M\eps^2)$, the following holds with a probability of at least $1-\delta$,
\begin{equation}
\begin{aligned}
    \left|\vphi(\vx)^{\top}\vphi(\vx') - \vpsi(\vx)^{\top}\vpsi(\vx')\right| \leq \sqrt{\frac{8\ln(2/\delta)}{M}} \ .
\end{aligned}
\end{equation}

Moreover, based on the interchangeability of derivative and expectation, we then have the following results derived from \eqref{eq:temp-vnsvns}
\begin{equation}
\begin{aligned}
    \E\left[-2\sin(\vv_j^{\top}\vx + b_j)\cos(\vv_j^{\top}\vx' + b_j)\vv_j^{\top}\right] &= \nabla\vpsi(\vx)^{\top}\vpsi(\vx')  \ , \\
    \E\left[\nabla\vphi(\vx)^{\top}\vphi(\vx')\right] &= \nabla\vpsi(\vx)^{\top}\vpsi(\vx') \ .
\end{aligned}
\end{equation}

Since both $\{\vv_1, \cdots, \vv_M\}$ and $\{b_1, \cdots, b_M\}$ are randomly independently sampled, we then can bound the variance $\E\left[\left\|\nabla\vphi(\vx)^{\top}\vphi(\vx') - \nabla\vpsi(\vx)^{\top}\vpsi(\vx')\right\|^2\right]$ as below
\begin{equation}\label{equ:phi-t-phi-norm}
\begin{aligned}
    &\E\left[\left\|\nabla\vphi(\vx)^{\top}\vphi(\vx') - \nabla\vpsi(\vx)^{\top}\vpsi(\vx')\right\|^2\right] \\
    \stackrel{(a)}{=}& \E\left[\left\|\frac{1}{M}\sum_{j=1}^M \left(-2\sin(\vv_j^{\top}\vx + b_j)  \cos(\vv_j^{\top}\vx' + b_j) \vv_j - \nabla\vpsi(\vx)^{\top}\vpsi(\vx')\right)\right\|^2\right] \\
    \stackrel{(b)}{=}& \frac{1}{M^2}\E\left[\sum_{j=1}^M \left\|-2\sin(\vv_j^{\top}\vx + b_j)  \cos(\vv_j^{\top}\vx' + b_j) \vv_j - \nabla\vpsi(\vx)^{\top}\vpsi(\vx')\right\|^2\right] \\
    \stackrel{(c)}{=}& \frac{1}{M^2}\sum_{j=1}^M \left(\E\left[\left\|-2\sin(\vv_j^{\top}\vx + b_j)  \cos(\vv_j^{\top}\vx' + b_j) \vv_j\right\|^2\right] - \E\left[\left\|\nabla\vpsi(\vx)^{\top}\vpsi(\vx')\right\|^2\right]\right) \\
    \stackrel{(d)}{\leq}& \frac{1}{M^2}\sum_{j=1}^M \E\left[\left\|-2\sin(\vv_j^{\top}\vx + b_j)  \cos(\vv_j^{\top}\vx' + b_j) \vv_j\right\|^2\right] \\
    \stackrel{(e)}{\leq}& \frac{4}{M^2} \sum_{j=1}^M \E\left[\left\|\vv_j\right\|^2\right] \\
    \stackrel{(f)}{\leq}& \frac{4V}{M}
\end{aligned}
\end{equation}
where $(b)$ is from the independence among $\{\vv_1, \cdots, \vv_M\}$ and $\{b_1, \cdots, b_M\}$ for variance derivation and $(c)$ is based on the definition of variance. In addition, $(e)$ is due to the fact that $\sin(\vv_j^{\top}\vx + b_j), \cos(\vv_j^{\top}\vx' + b_j) \in [-1,1]$ and $(f)$ is because of the assumption that $\E\left[\left\|\vv\right\|^2\right] \leq V$.

Therefore, according to Chebyshev’s inequality, we have the following inequalities for any $\eps>0$
\begin{equation}
\begin{aligned}
    \sP\left(\left\|\nabla\vphi(\vx)^{\top}\vphi(\vx') - \nabla\vpsi(\vx)^{\top}\vpsi(\vx')\right\| > \eps \right) &\leq \frac{\E\left[\left\|\nabla\vphi(\vx)^{\top}\vphi(\vx') - \nabla\vpsi(\vx)^{\top}\vpsi(\vx')\right\|^2\right]}{\eps^2} \\
    &\leq \frac{4V}{M \eps^2} \ .
\end{aligned}
\end{equation}

Choose $\eps = \sqrt{4V / (M\delta)}$, the following holds for a probability of at least $1-\delta$,
\begin{equation}
\begin{aligned}
    \left\|\nabla\vphi(\vx)^{\top}\vphi(\vx') - \nabla\vpsi(\vx)^{\top}\vpsi(\vx')\right\| \leq \sqrt{\frac{4V}{M\delta}} \ ,
\end{aligned}
\end{equation}
which finally completes the proof.
\end{proof}

\begin{lemma}\label{th:approximation-error-mean-derivative-gp}
For any $\vx, \vx' \in \gX$ and $i \in [N]$, assume that $\E\left[\left\|\vv\right\|^2\right] \leq V$, $\left\|\nabla \vpsi(\vx)^{\top}\vpsi(\vx') \right\| \leq L$ and $\left|f_i(\vx)\right| \leq 1$, then the following holds with a constant probability for all $r \in [R]$,
\begin{equation*}
    \left\|\nabla\widehat{\mu}_{r,T}^{(i)}(\vx) - \nabla \mu^{(i)}_{r,T}(\vx)\right\|^2 \le \gO\left(\frac{1}{M}\right) \ .
\end{equation*}
\end{lemma}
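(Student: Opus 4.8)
The plan is to regard $\nabla\widehat{\mu}^{(i)}_{r,T}(\vx)$ and $\nabla\mu^{(i)}_{r,T}(\vx)$ as the \emph{same} algebraic expression evaluated with two different feature maps, and to propagate the per-entry random-feature errors of Lemma~\ref{le:kenerl-derivative-error} through the surrounding linear algebra. Writing $n\triangleq rT$, both quantities have the form (derivative cross-covariance)$\,\times\,$(regularized Gram inverse)$\,\times\,\vy^{(i)}_{r,T}$: the exact mean uses $\mJ\triangleq\partial_\vx\vk^{(i)}_{r,T}(\vx)^\top$, whose $\tau$-th column is $\nabla\vpsi(\vx)^\top\vpsi(\vx^{(i)}_\tau)$, together with $\rmK^{(i)}_{r,T}$, while the RFF mean uses $\widehat{\mJ}\triangleq\nabla\vphi(\vx)^\top\mPhi^{(i)}_{r,T}$, whose $\tau$-th column is $\nabla\vphi(\vx)^\top\vphi(\vx^{(i)}_\tau)$, together with $\widehat{\rmK}^{(i)}_{r,T}$. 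Since $n=rT\le RT$ is constant in $M$, every factor of $n$, $V$, $L$, $\sigma$ and $\delta$ will be absorbed into the hidden constant of $\gO(1/M)$, so the only quantity I need to track carefully is the power of $M$.

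Next I would add and subtract a mixed term to split the discrepancy into two contributions,
\begin{equation*}
\big\|\nabla\widehat{\mu}^{(i)}_{r,T}(\vx)-\nabla\mu^{(i)}_{r,T}(\vx)\big\|
\le \big\|(\widehat{\mJ}-\mJ)(\widehat{\rmK}^{(i)}_{r,T}+\sigma^2\rmI)^{-1}\vy^{(i)}_{r,T}\big\|
+\big\|\mJ\big[(\widehat{\rmK}^{(i)}_{r,T}+\sigma^2\rmI)^{-1}-(\rmK^{(i)}_{r,T}+\sigma^2\rmI)^{-1}\big]\vy^{(i)}_{r,T}\big\|,
\end{equation*}
so that the first term isolates the error in the derivative cross-covariance and the second isolates the error in the Gram inverse. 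The common ``stable'' factors I would bound once: $\|\vy^{(i)}_{r,T}\|=\gO(\sqrt n)$ with constant probability, using $|f_i|\le1$ together with Lemma~\ref{le:chi-square} to control $\sum_\tau\zeta_\tau^2=\sigma^2\chi^2_n$; both $\widehat{\rmK}^{(i)}_{r,T}$ and $\rmK^{(i)}_{r,T}$ are positive semidefinite Gram matrices, so $\|(\cdot+\sigma^2\rmI)^{-1}\|\le\sigma^{-2}$; and $\|\mJ\|\le\sqrt n\,L$ because each column has norm at most $L$ by assumption.

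It then remains to bound the two feature-induced factors, each of which is $\gO(1/\sqrt M)$. For the first term, Lemma~\ref{le:kenerl-derivative-error} gives $\|\nabla\vphi(\vx)^\top\vphi(\vx^{(i)}_\tau)-\nabla\vpsi(\vx)^\top\vpsi(\vx^{(i)}_\tau)\|\le\sqrt{4V/(M\delta)}$ columnwise, hence $\|\widehat{\mJ}-\mJ\|\le\|\widehat{\mJ}-\mJ\|_F=\gO(1/\sqrt M)$. For the second, I would invoke the resolvent identity $(\widehat{\rmK}+\sigma^2\rmI)^{-1}-(\rmK+\sigma^2\rmI)^{-1}=(\widehat{\rmK}+\sigma^2\rmI)^{-1}(\rmK-\widehat{\rmK})(\rmK+\sigma^2\rmI)^{-1}$ so that the inverse difference reduces to $\|\rmK^{(i)}_{r,T}-\widehat{\rmK}^{(i)}_{r,T}\|/\sigma^4$, and the scalar-kernel part of Lemma~\ref{le:kenerl-derivative-error} bounds each of the $n^2$ entries by $\sqrt{8\ln(2/\delta)/M}$, giving $\|\rmK-\widehat{\rmK}\|\le\|\rmK-\widehat{\rmK}\|_F=\gO(1/\sqrt M)$. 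Multiplying by the stable factors and squaring both contributions yields the claimed $\gO(1/M)$ bound.

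The hard part will be the probabilistic bookkeeping rather than the algebra. Lemma~\ref{le:kenerl-derivative-error} is a \emph{pointwise} guarantee over the single random draw of $\{\vv_j,b_j\}$, whereas I need it simultaneously at all $\gO(n)$ columns and $\gO(n^2)$ matrix entries, and ``for all $r\in[R]$'', i.e.\ up to $n=RT$ query points; a union bound rescaling $\delta$ by the (constant) number of events, which only inflates the hidden constant, should suffice to retain constant overall probability. A subtler point is that the trajectory inputs $\{\vx^{(i)}_\tau\}$ are themselves random and may depend on the realized features, breaking the independence that Lemma~\ref{le:kenerl-derivative-error} implicitly uses; I would address this either by conditioning on the trajectory (treating the points as fixed with respect to the feature randomness) or by upgrading to a uniform-over-$\gX$ version of the random-feature bound via a covering argument, at the cost of an extra logarithmic factor that is again absorbed into the $\gO(1/M)$ constant.
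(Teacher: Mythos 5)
Your proposal follows essentially the same route as the paper's proof: the same split into a cross-covariance error term and a Gram-inverse error term (handled via the resolvent identity), the same Frobenius-norm reduction to the entrywise/columnwise bounds of Lemma~\ref{le:kenerl-derivative-error}, the same $\gO(\sqrt{rT})$ bound on $\|\vy^{(i)}_{r,T}\|$ via Lemma~\ref{le:chi-square}, and the same union-bound bookkeeping. Your closing remark about the trajectory points depending on the realized features is a genuine subtlety that the paper's proof silently ignores, so flagging it (and proposing conditioning or a covering argument) is if anything more careful than the original.
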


\begin{proof}
Based on the definition in \eqref{eq:posterior-derived} and \eqref{eq:local-surrogate-approx}, we have that:
\begin{equation}
\begin{aligned}
    &\left\|\nabla\widehat{\mu}^{(i)}_{r, T}(\vx) - \nabla \mu^{(i)}_{r, T}(\vx)\right\| \\
    \stackrel{(a)}{=}& \left\|\nabla \vphi(\vx)^{\top}\mPhi^{(i)}_{r, t-1}\left(\widehat{\rmK}^{(i)}_{r, T}+\sigma^2\rmI\right)^{-1}\vy^{(i)}_{r, T} - \nabla\vpsi(\vx)^{\top}\mPsi^{(i)}_{r, T}\left(\rmK^{(i)}_{r, T}+\sigma^2\rmI\right)^{-1}\vy^{(i)}_{r, T}\right\| \\
    \stackrel{(b)}{\leq}& \left\|\nabla \vphi(\vx)^{\top}\mPhi^{(i)}_{r, T}\left(\widehat{\rmK}^{(i)}_{r, T}+\sigma^2\rmI\right)^{-1} - \nabla\vpsi(\vx)^{\top}\mPsi^{(i)}_{r, T}\left(\rmK^{(i)}_{r, T}+\sigma^2\rmI\right)^{-1}\right\|\left\|\vy^{(i)}_{r, T}\right\| \\
    \stackrel{(c)}{=}& \underbrace{\left\|\nabla \vphi(\vx)^{\top}\mPhi^{(i)}_{r, T}\left(\widehat{\rmK}^{(i)}_{r, T}+\sigma^2\rmI\right)^{-1} - \nabla\vpsi(\vx)^{\top}\mPsi^{(i)}_{r, T}\left(\widehat{\rmK}^{(i)}_{r, T}+\sigma^2\rmI\right)^{-1}\right\|}_{\circled{1}}\left\|\vy^{(i)}_{r, T}\right\| + \\ 
    &\qquad \underbrace{\left\|\nabla\vpsi(\vx)^{\top}\mPsi^{(i)}_{r, T}\left(\widehat{\rmK}^{(i)}_{r,T}+\sigma^2\rmI\right)^{-1} - \nabla\vpsi(\vx)^{\top}\mPsi^{(i)}_{r,T}\left(\rmK^{(i)}_{r,T}+\sigma^2\rmI\right)^{-1}\right\|}_{\circled{2}}\left\|\vy^{(i)}_{r,T}\right\| \label{eq:temp-2uvnu}
\end{aligned}
\end{equation}
where $(b)$ and $(c)$ are from the Cauchy–Schwarz inequality and the triangle inequality, respectively.

We bound term $\circled{1}$, term $\circled{2}$ and $\left\|\vy^{(i)}_{r,T}\right\|$ above separately. Firstly, the following holds with probability of at least $1-rT\delta'$
\begin{equation}
\begin{aligned}
\circled{1} &\stackrel{(a)}{=} \left\|\nabla \vphi(\vx)^{\top}\mPhi^{(i)}_{r,T}\left(\widehat{\rmK}^{(i)}_{r,T}+\sigma^2\rmI\right)^{-1} - \nabla\vpsi(\vx)^{\top}\mPsi^{(i)}_{r,T}\left(\widehat{\rmK}^{(i)}_{r,T}+\sigma^2\rmI\right)^{-1}\right\| \\
&\stackrel{(b)}{\leq} \left\|\nabla \vphi(\vx)^{\top}\mPhi^{(i)}_{r,T} - \nabla\vpsi(\vx)^{\top}\mPsi^{(i)}_{r,T}\right\|\left\|\left(\widehat{\rmK}^{(i)}_{r,T}+\sigma^2\rmI\right)^{-1}\right\| \\
&\stackrel{(c)}{\leq} \sqrt{\sum_{\tau=1}^{rT}\left\|\nabla \vphi(\vx)^{\top}\vphi(\vx^{(i)}_{\tau}) - \nabla \vpsi(\vx)^{\top}\vpsi(\vx^{(i)}_{\tau})\right\|^2}\left\|\left(\widehat{\rmK}^{(i)}_{r,T}+\sigma^2\rmI\right)^{-1}\right\| \\
&\stackrel{(d)}{\leq} \frac{1}{\sigma^2} \sqrt{\frac{4rTV}{M\delta'}} \label{eq:temp-quf3}
\end{aligned}
\end{equation}
Where $(b)$ comes from the Cauchy–Schwarz inequality and $(c)$ follows from the fact that for any matrix $A$ with $n$ rows and each row identified as $\bm{a}_i$ we have $\|A\| \le \|A\|_{\text{F}} \triangleq \sqrt{\sum_{i=1}^n \|\bm{a}_i\|^2}$. Finally, $(d)$ is due to the fact that $\widehat{\rmK}^{(i)}_{r,T}$ is positive semi-definite and therefore $\widehat{\rmK}^{(i)}_{r,T}+\sigma^2\rmI \succcurlyeq \sigma^2\rmI$ as well as the results in Lemma~\ref{le:kenerl-derivative-error}.

Secondly, the following holds with probability of at least $1-r^2T^2\delta''$,
\begin{equation}
\begin{aligned}
\circled{2} &\stackrel{(a)}{=} \left\|\nabla \vpsi(\vx)^{\top}\mPsi^{(i)}_{r,T}\left(\widehat{\rmK}^{(i)}_{r,T}+\sigma^2\rmI\right)^{-1} - \nabla \vpsi(\vx)^{\top}\mPsi^{(i)}_{r,T}\left(\rmK^{(i)}_{r,T}+\sigma^2\rmI\right)^{-1}\right\| \\
&\stackrel{(b)}{\leq} \left\|\nabla \vpsi(\vx)^{\top}\mPsi^{(i)}_{r, t-1}\right\|\left\|\left(\widehat{\rmK}^{(i)}_{r,T}+\sigma^2\rmI\right)^{-1} - \left(\rmK^{(i)}_{r,T}+\sigma^2\rmI\right)^{-1}\right\| \\
&\stackrel{(c)}{=} \left\|\nabla \vpsi(\vx)^{\top}\mPsi^{(i)}_{r,T}\right\|\left\|\left(\rmK^{(i)}_{r,T} - \widehat{\rmK}^{(i)}_{r,T}\right)\left(\widehat{\rmK}^{(i)}_{r,T}+\sigma^2\rmI\right)^{-1}\left(\rmK^{(i)}_{r,T}+\sigma^2\rmI\right)^{-1}\right\| \\
&\stackrel{(d)}{\leq} \sqrt{\sum_{\tau=1}^{rT} \left\|\nabla \vpsi(\vx)^{\top}\vpsi(\vx^{(i)}_{\tau}) \right\|^2}\left\|\rmK^{(i)}_{r,T} - \widehat{\rmK}^{(i)}_{r,T}\right\|\left\|\left(\widehat{\rmK}^{(i)}_{r,T}+\sigma^2\rmI\right)^{-1}\right\|\left\|\left(\rmK^{(i)}_{r,T}+\sigma^2\rmI\right)^{-1}\right\| \\
&\stackrel{(e)}{\leq} \frac{L}{\sigma^4} \sqrt{rT}\sqrt{\sum_{\tau,\tau'=1}^{rT} \left\|\vpsi(\vx^{(i)}_{\tau})^{\top}\vpsi(\vx^{(i)}_{\tau'}) - \vphi(\vx^{(i)}_{\tau})^{\top}\vphi(\vx^{(i)}_{\tau'})\right\|^2} \\
&\stackrel{(f)}{\leq} \frac{L \left(rT\right)^{3/2}}{\sigma^4}\sqrt{\frac{8\ln(2/\delta'')}{M}} \label{eq:temp-scj82}
\end{aligned}
\end{equation}
where $(b)$ is from the Cauchy–Schwarz inequality. Besides, $(c)$ and $(e)$ come from the aforementioned inequality $\|A\| \le \|A\|_{\text{F}}$. In addition, $(f)$ is based on the assumption that $\left\|\nabla \vpsi(\vx)^{\top}\vpsi(\vx') \right\| \leq L$, $\|A\| \le \|A\|_{\text{F}}$, $\widehat{\rmK}^{(i)}_{r,T}+\sigma^2\rmI \succcurlyeq \sigma^2\rmI$ and $\rmK^{(i)}_{r,T}+\sigma^2\rmI \succcurlyeq \sigma^2\rmI$.

Thirdly, the following holds with probability of at least $1-rT\delta'''$,
\begin{equation}
\begin{aligned}
    \left\|\vy_{r,T}^{(i)}\right\| &\stackrel{(a)}{=} \sqrt{\sum_{\tau=1}^{rT} \left(f_i(\vx_{\tau}) + \zeta_{\tau}\right)^2} \\
    &\stackrel{(b)}{\leq} \sqrt{\sum_{\tau=1}^{rT} 2f^2_i(\vx_{\tau}) + 2\zeta^2_{\tau} } \\
    &\stackrel{(c)}{\leq} \sqrt{2rT + 2\sigma^2 \sum_{\tau=1}^{rT} \left(\frac{\zeta_{\tau}}{\sigma}\right)^2} \\
    &\stackrel{(d)}{\leq} \sqrt{2rT + 2\sigma^2\left(rT + 2\sqrt{rT\ln(1/\delta''')} + 2\ln(1/\delta''')\right)} \label{eq:temp-fwnsl}
\end{aligned}
\end{equation}
where $\zeta_{\tau}$ denote the observation noise associated with the input $\vx_{\tau}$. Besides, $(c)$ is from the~assumption that $\zeta_{\tau} \sim \gN(0, \sigma^2)$ for any $\tau$ in Sec.~\ref{sec:setting} and  $\left|f_i(\vx)\right| \leq 1$ for any $\vx \in \gX$. Finally, $(d)$ comes from our Lemma~\ref{le:chi-square}.

By introducing \eqref{eq:temp-quf3}, \eqref{eq:temp-scj82} and \eqref{eq:temp-fwnsl} with $\delta' = \frac{\delta}{3rT}$, $\delta'' = \frac{\delta}{3r^2T^2}$ and $\delta''' = \frac{\delta}{3rT}$ into \eqref{eq:temp-2uvnu}, the following then holds with probability of at least $1-\delta$,
\begin{equation}
\begin{aligned}
    &\left\|\nabla\widehat{\mu}^{(i)}_{r, T}(\vx) - \nabla \mu^{(i)}_{r, T}(\vx)\right\| \\
    \leq& \left(\frac{rT}{\sigma^2} \sqrt{\frac{12V}{M\delta}} + \frac{4L \left(rT\right)^{3/2}}{\sigma^4}\sqrt{\frac{\ln(6rT/\delta)}{M}}\right)  \sqrt{2rT + 2\sigma^2\left(rT + 2\sqrt{rT\ln(3rT/\delta)} + 2\ln(3rT/\delta)\right)} \\
    =& \gO\left(\frac{rT\sqrt{rT}}{\sqrt{M}} + \frac{r^2T^2\sqrt{\ln(rT)}}{\sqrt{M}}\right) \ . \label{eq:temp-2ivnw}
\end{aligned}
\end{equation}

Of note, it is easy to show that when \eqref{eq:temp-2ivnw} holds for $r=R$, it must hold for any $r \leq R$. Therefore, the following finally holds with a constant probability for all $r \in [R]$,
\begin{equation}
    \left\|\nabla\widehat{\mu}^{(i)}_{r, T}(\vx) - \nabla \mu^{(i)}_{r, T}(\vx)\right\|^2 \leq \gO\left(\frac{1}{M}\right) \ ,
\end{equation}
which concludes our proof.
\end{proof}

\begin{remark}
\normalfont
Note that the assumption $\E\left[\left\|\vv\right\|^2\right] \leq V$ implies that the distribution $p(\vv)$ in Appx.~\ref{app-sec:rff} has a bounded mean and covariance since $\E\left[\left\|\vv\right\|^2\right] = \left\|\E\left[\vv\right]\right\|^2 + \E\left[\left\|\vv - \E\left[\vv\right]\right\|^2\right]$. This is usually valid for the widely applied kernels (e.g., the squared exponential kernel in Appx.~\ref{app-sec:rff}) in practice.

Remarkably, \eqref{eq:temp-2ivnw} with $r=R$ has demonstrated that a larger number $M$ of random features is preferred to maintain the approximation quality of $\nabla\widehat{\mu}^{(i)}_{R, T}(\vx) \approx \nabla \mu^{(i)}_{R, T}$ when the number $R$ of communication rounds and the number $T$ of local iterations increase. This in fact aligns with the intuition that a larger hypothesis space (defined by the $M$ random features) should be used when the target function (defined by the existing $RT$ function queries) becomes more complex. However, for any communication round $r+1 \in [R]$ in our \alg{}, the approximation of $\nabla \mu^{(i)}_{r, T}$ using $\nabla\widehat{\mu}^{(i)}_{r, T}(\vx)$ needs to be accurate only at the local updated inputs $\{\vx^{(i)}_{r+1, t-1}\}_{t \in [T], i \in [N]}$ with a relatively small $T$ (i.e., $T \leq 20$), which consequently usually does not requires an extremely large $M$ to realize a good approximation quality in practice. This has actually been supported by the empirical results in our Sec.~\ref{sec:exps} and Appx.~\ref{app-sec:more}.

\end{remark}

\subsubsection{Final Gradient Disparity Analysis Using Uncertainty}
We introduce the following Lemma~\ref{le:triangle} and Lemma~\ref{le:uncertainty-error} from \citep{zord} to ease our proof of Thm.~\ref{th:grad-error}:
\begin{lemma}\label{le:triangle}
Let $\left\{\vv_{1}, \ldots, \vv_{\tau}\right\}$ be any $\tau$ vectors in $\mathbb{R}^{d}$. Then the following holds for any $a>0$:
\begin{align}
    \left\|\vv_i\right\|\left\|\vv_j\right\| &\leq \frac{a}{2}\left\|\vv_i\right\|^2 + \frac{1}{2a}\left\|\vv_j\right\|^2 \ , \label{eq:triangle-1} \\
    \left\|\vv_{i}+\vv_{j}\right\|^{2} &\leq (1+a)\left\|\vv_{i}\right\|^{2}+\left(1+\frac{1}{a}\right)\left\|\vv_{j}\right\|^{2} \ , \label{eq:triangle-2} \\
    \left\|\sum_{i=1}^{\tau} \vv_{i}\right\|^{2} &\leq \tau \sum_{i=1}^{\tau}\left\|\vv_{i}\right\|^{2} \ . \label{eq:triangle-3}
\end{align}
\end{lemma}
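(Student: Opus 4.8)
The plan is to establish the three inequalities essentially independently, deriving \eqref{eq:triangle-2} as an immediate consequence of \eqref{eq:triangle-1} and treating \eqref{eq:triangle-3} on its own, since all three are elementary consequences of the Cauchy--Schwarz inequality together with the nonnegativity of squares. No induction or structural argument is needed; the only ingredient kept general is the free parameter $a>0$.

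First I would prove \eqref{eq:triangle-1}, which is the weighted arithmetic--geometric-mean (Young) inequality. For any $a>0$, expanding the manifestly nonnegative quantity $\big(\sqrt{a}\,\|\vv_i\| - \|\vv_j\|/\sqrt{a}\big)^2 \geq 0$ gives $a\|\vv_i\|^2 - 2\|\vv_i\|\|\vv_j\| + \tfrac{1}{a}\|\vv_j\|^2 \geq 0$, and rearranging yields $\|\vv_i\|\|\vv_j\| \leq \tfrac{a}{2}\|\vv_i\|^2 + \tfrac{1}{2a}\|\vv_j\|^2$, as claimed.

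Next, for \eqref{eq:triangle-2}, I would expand $\|\vv_i + \vv_j\|^2 = \|\vv_i\|^2 + 2\langle \vv_i, \vv_j\rangle + \|\vv_j\|^2$, bound the cross term by Cauchy--Schwarz as $\langle \vv_i,\vv_j\rangle \leq \|\vv_i\|\|\vv_j\|$, and then substitute \eqref{eq:triangle-1} for the product $\|\vv_i\|\|\vv_j\|$; collecting the coefficients of $\|\vv_i\|^2$ and $\|\vv_j\|^2$ produces exactly $(1+a)$ and $(1+\tfrac{1}{a})$. For \eqref{eq:triangle-3}, I would apply the triangle inequality and then Cauchy--Schwarz (equivalently, convexity of $t\mapsto t^2$, i.e.\ the power-mean inequality): $\big\|\sum_{i=1}^{\tau}\vv_i\big\|^2 \leq \big(\sum_{i=1}^{\tau}\|\vv_i\|\big)^2 = \big(\sum_{i=1}^{\tau} 1\cdot\|\vv_i\|\big)^2 \leq \tau\sum_{i=1}^{\tau}\|\vv_i\|^2$, where the final step pairs the all-ones vector with $(\|\vv_1\|,\dots,\|\vv_\tau\|)$.

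Since each step is a one-line invocation of a standard inequality, there is no genuine obstacle; the only points needing mild care are keeping $a$ arbitrary throughout (so that the lemma can later be instantiated with problem-specific values of $a$ in the proof of Thm.~\ref{th:grad-error}), and observing that \eqref{eq:triangle-1}--\eqref{eq:triangle-2} hold for arbitrary indices $i,j$ while \eqref{eq:triangle-3} holds for any finite collection of vectors in $\mathbb{R}^d$.
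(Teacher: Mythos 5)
Your proposal is correct and follows essentially the same route as the paper: a completed square (equivalently AM--GM) for \eqref{eq:triangle-1}, the same weighted splitting of the cross term for \eqref{eq:triangle-2}, and Cauchy--Schwarz with the all-ones vector for \eqref{eq:triangle-3}, which is the same bound the paper obtains via Jensen's inequality applied to the convex map $\vx \mapsto \|\vx\|^2$. The only cosmetic difference is the direction of the argument in \eqref{eq:triangle-2} (you expand the left-hand side, the paper lower-bounds the right-hand side), which changes nothing of substance.
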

\begin{proof}
For \eqref{eq:triangle-1}, we have that 
\begin{equation}
\begin{aligned}
    \frac{a}{2}\left\|\vv_i\right\|^2 + \frac{1}{2a}\left\|\vv_j\right\|^2 \geq 2\sqrt{\frac{a}{2}\left\|\vv_i\right\|^2 \cdot \frac{1}{2a}\left\|\vv_j\right\|^2} = \left\|\vv_i\right\|\left\|\vv_j\right\| \ .
\end{aligned}
\end{equation}

For \eqref{eq:triangle-2}, we have that 
\begin{equation}
\begin{aligned}
    (1+a)\left\|\vv_{i}\right\|^{2}+\left(1+\frac{1}{a}\right)\left\|\vv_{j}\right\|^{2} &= \left\|\vv_{i}\right\|^2 + \left\|\vv_{j}\right\|^2 + \left(a\left\|\vv_{i}\right\|^2 + \frac{1}{a} \left\|\vv_{j}\right\|^2\right) \\
    &\geq \left\|\vv_{i}\right\|^2 + \left\|\vv_{j}\right\|^2 + 2\sqrt{a\left\|\vv_{i}\right\|^2  \cdot \frac{1}{a} \left\|\vv_{j}\right\|^2} \\
    &= \left\|\vv_{i}+\vv_{j}\right\|^{2} \ .
\end{aligned}
\end{equation}

For \eqref{eq:triangle-3}, we can directly employ the convexity of function $h(\vx)=\left\|\vx\right\|^2$ and Jensen’s inequality:
\begin{equation}
\begin{aligned}
    \left\|\frac{1}{\tau}\sum_{i=1}^{\tau} \vv_{i}\right\|^{2} \leq \frac{1}{\tau}\sum_{i=1}^{\tau}\left\|\vv_{i}\right\|^{2} \ .
\end{aligned}
\end{equation}
By multiplying the inequality above with $\tau^2$, we conclude the proof.
\end{proof}

\begin{lemma}\label{le:uncertainty-error}
Define $\rho_i \triangleq \max_{\vx \in \gX, r\geq 1, t\geq 1} \left\|\partial \left(\sigma^{(i)}_{r,t}\right)^2(\vx)\right\| \bigg/ \left\|\partial \left(\sigma^{(i)}_{r,t-1}\right)^2(\vx)\right\|$, we have that $\rho_i \in \left[1 / (1+1/\sigma^2), 1\right]$, and that for any $\vx\in\gX, r\geq 1,t\geq1$ the following holds,
\begin{equation*}
\begin{aligned}
    \left\|\partial \left(\sigma^{(i)}_{r,t}\right)^2(\vx)\right\| \leq \kappa \rho_i^{(r-1)T+t} \ .
\end{aligned}
\end{equation*}
\end{lemma}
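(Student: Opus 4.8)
The plan is to prove the exponential decay bound by telescoping the single-step contraction that is built into the definition of $\rho_i$, and to pin down the range $\rho_i\in[1/(1+1/\sigma^2),1]$ by analysing how conditioning on one extra noisy query changes the derived posterior covariance in \eqref{eq:posterior-derived}. The starting observation is that advancing the index from $(r,t-1)$ to $(r,t)$ appends exactly one observation $y_*=f_i(\vx_*)+\zeta_*$, $\zeta_*\sim\gN(0,\sigma^2)$, to the conditioning set. Writing $\Sigma_{\mathrm{old}}\triangleq\partial(\sigma^{(i)}_{r,t-1})^2(\vx)$ and $\Sigma_{\mathrm{new}}\triangleq\partial(\sigma^{(i)}_{r,t})^2(\vx)$, the Gaussian conditioning identity applied to the vector-valued quantity $\nabla f_i(\vx)$ gives the rank-one downdate
\begin{equation*}
\Sigma_{\mathrm{new}} = \Sigma_{\mathrm{old}} - \frac{\vc\,\vc^{\top}}{s+\sigma^2},
\end{equation*}
where $\vc\triangleq\mathrm{cov}_{\mathrm{old}}(\nabla f_i(\vx),f_i(\vx_*))\in\sR^d$ is the posterior cross-covariance and $s\triangleq\mathrm{var}_{\mathrm{old}}(f_i(\vx_*))\geq0$. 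Since $\vc\vc^{\top}\succcurlyeq 0$ we get $\Sigma_{\mathrm{new}}\preccurlyeq\Sigma_{\mathrm{old}}$, and because both matrices are PSD their largest eigenvalues (operator norms) are ordered; hence every single-step ratio is at most $1$ and $\rho_i\leq 1$.

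For the lower bound I would use that the joint posterior covariance $\left(\begin{smallmatrix}\Sigma_{\mathrm{old}} & \vc\\ \vc^{\top} & s\end{smallmatrix}\right)$ is PSD, so the Schur complement yields $(\vu_1^{\top}\vc)^2\leq s\,\vu_1^{\top}\Sigma_{\mathrm{old}}\vu_1=s\,\|\Sigma_{\mathrm{old}}\|$ for the top eigenvector $\vu_1$ of $\Sigma_{\mathrm{old}}$. Testing $\Sigma_{\mathrm{new}}$ against $\vu_1$ then gives
\begin{equation*}
\|\Sigma_{\mathrm{new}}\| \geq \vu_1^{\top}\Sigma_{\mathrm{new}}\vu_1 = \|\Sigma_{\mathrm{old}}\| - \frac{(\vu_1^{\top}\vc)^2}{s+\sigma^2} \geq \|\Sigma_{\mathrm{old}}\|\,\frac{\sigma^2}{s+\sigma^2}.
\end{equation*}
The remaining ingredient is $s\leq k(\vx_*,\vx_*)\leq 1$: a posterior function-value variance never exceeds its prior variance, which is normalized to one for the kernels under consideration (e.g.\ the squared exponential kernel of Appx.~\ref{app-sec:rff}). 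This forces $\sigma^2/(s+\sigma^2)\geq\sigma^2/(1+\sigma^2)=1/(1+1/\sigma^2)$, so each single-step ratio, and therefore $\rho_i$, is at least $1/(1+1/\sigma^2)$.

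Finally I would obtain the exponential bound by telescoping. By definition $\rho_i$ is the maximum single-step ratio, so $\|\partial(\sigma^{(i)}_{r,t})^2(\vx)\|\leq\rho_i\,\|\partial(\sigma^{(i)}_{r,t-1})^2(\vx)\|$ at every fixed $\vx$; chaining this continuously across round boundaries from the prior (zero observations) down through the $(r-1)T+t$ queries accumulated at index $(r,t)$, and bounding the prior gradient variance by $\|\partial_{\vz}\partial_{\vz'}k(\vz,\vz')|_{\vz=\vz'=\vx}\|\leq\kappa$ from the assumption in Sec.~\ref{sec:setting}, gives $\|\partial(\sigma^{(i)}_{r,t})^2(\vx)\|\leq\kappa\rho_i^{(r-1)T+t}$. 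The main obstacle is the lower-bound step: getting the rank-one downdate right in the vector-valued derived-GP setting and combining the PSD joint-covariance (Schur complement) argument with the normalization $s\leq 1$ to recover the sharp constant $1/(1+1/\sigma^2)$; by contrast the upper bound and the telescoping are routine once the monotone downdate is established.
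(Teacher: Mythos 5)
The paper does not actually prove this lemma: it is imported verbatim from \citep{zord} ("We introduce the following Lemma~\ref{le:triangle} and Lemma~\ref{le:uncertainty-error} from \citep{zord}"), so there is no in-paper argument to compare yours against. Judged on its own, your proof is correct and essentially the canonical one. The rank-one Gaussian downdate $\Sigma_{\mathrm{new}}=\Sigma_{\mathrm{old}}-\vc\vc^{\top}/(s+\sigma^2)$ is the right single-step identity (the noise being independent of $f_i$ is what puts $\sigma^2$ in the denominator), the PSD ordering of the downdate gives $\rho_i\leq 1$, the Schur-complement bound $(\vu_1^{\top}\vc)^2\leq s\,\vu_1^{\top}\Sigma_{\mathrm{old}}\vu_1$ tested against the top eigenvector gives the per-step lower bound $\sigma^2/(s+\sigma^2)$, and the telescoping count of $(r-1)T+t$ observations at index $(r,t)$ matches the definition of $\gD^{(i)}_{r,t-1}$ in Sec.~\ref{sec:local-surrogates}, with the prior term bounded by the assumption $\left\|\partial_{\vz}\partial_{\vz'}k(\vz,\vz')|_{\vz=\vz'=\vx}\right\|\leq\kappa$. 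Two small caveats are worth recording. First, the constant $1/(1+1/\sigma^2)$ requires $s\leq k(\vx_*,\vx_*)\leq 1$; Sec.~\ref{sec:setting} never states the normalization $k(\vx,\vx)\leq 1$, so you are implicitly adding it (it holds for the squared exponential kernel of Appx.~\ref{app-sec:rff}, and without it the constant becomes $\sigma^2/(k(0)+\sigma^2)$). Second, your per-step lower bound presumes exactly one new observation per index increment, which is consistent with the lemma's idealized trajectory but not with the algorithm's active queries; this affects only the lower end of the interval, not the decay bound, and is a discrepancy the paper itself glosses over.
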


Let $\delta \in (0, 1)$, $\eps = \gO(\frac{1}{M})$ and $\omega = d + 2(\sqrt{d}+1)\ln(2NRT/\delta)$, the following inequalities then hold with a probability of at least $1 - \delta$:
\begin{equation}
\begin{aligned}
    &\left\|\frac{1}{N}\sum_{j=1,j\neq i}^N \left(\nabla \widehat{\mu}^{(j)}_{r-1, T}(\vx_{r,t-1}^{(i)}) - \nabla f_j(\vx_{r,t-1}^{(i)})\right)\right\|^2 \\
    \stackrel{(a)}{\leq}& \frac{N-1}{N^2} \sum_{j=1,j\neq i}^N \left\|\nabla \widehat{\mu}^{(j)}_{r-1, T}(\vx_{r,t-1}^{(i)}) - \nabla f_j(\vx_{r,t-1}^{(i)})\right\|^2 \\
    \stackrel{(b)}{=}& \frac{N-1}{N^2} \sum_{j=1,j\neq i}^N  \left\|\nabla \widehat{\mu}^{(j)}_{r-1, T}(\vx_{r,t-1}^{(i)}) - \nabla \mu^{(j)}_{r-1, T}(\vx_{r,t-1}^{(i)}) + \nabla \mu^{(j)}_{r-1, T}(\vx_{r,t-1}^{(i)}) - \nabla f_j(\vx_{r,t-1}^{(i)})\right\|^2 \\
    \stackrel{(c)}{\leq}& \frac{N-1}{N^2} \sum_{j=1,j\neq i}^N \left(\frac{N}{N-1}\left\|\nabla \mu^{(j)}_{r-1, T}(\vx_{r,t-1}^{(i)}) - \nabla f_j(\vx_{r,t-1}^{(i)})\right\|^2 + N\left\|\nabla \widehat{\mu}^{(j)}_{r-1, T}(\vx_{r,t-1}^{(i)}) - \nabla \mu^{(j)}_{r-1, T}(\vx_{r,t-1}^{(i)})\right\|^2\right) \\
    \stackrel{(d)}{\leq}& \frac{\omega}{N} \sum_{j=1,j\neq i}^N \left\|\partial \left(\sigma^{(j)}_{r-1,T}\right)^2(\vx_{r,t-1}^{(i)})\right\| + \frac{(N-1)^2}{N} \eps \ , \label{eq:temp-hvwj}
\end{aligned}
\end{equation}
in which $(a)$ is from \eqref{eq:triangle-3} and $(c)$ is from \eqref{eq:triangle-2} with $a=\frac{1}{N-1}$. In addition, $(d)$ comes from Lemma~\ref{le:confidence-bound} and Lemma~\ref{th:approximation-error-mean-derivative-gp}.

\begin{equation}
\begin{aligned}
    &\frac{(N-1)^2}{N^2} \left\|\nabla f_i(\vx_{r,t-1}^{(i)}) - \nabla \widehat{\mu}_{r-1,T}^{(i)}(\vx_{r,t-1}^{(i)})\right\|^2 \\
    \stackrel{(a)}{=}&\frac{(N-1)^2}{N^2} \left\|\nabla f_i(\vx_{r,t-1}^{(i)}) - \nabla \mu_{r-1,T}^{(i)}(\vx_{r,t-1}^{(i)}) + \nabla \mu_{r-1,T}^{(i)}(\vx_{r,t-1}^{(i)}) - \nabla \widehat{\mu}_{r-1,T}^{(i)}(\vx_{r,t-1}^{(i)})\right\|^2 \\
    \stackrel{(b)}{\leq}& \frac{(N-1)^2}{N^2}\left(\frac{N}{N-1}\left\|\nabla f_i(\vx_{r,t-1}^{(i)}) - \nabla \mu_{r-1,T}^{(i)}(\vx_{r,t-1}^{(i)})\right\|^2 + N\left\|\nabla \mu_{r-1,T}^{(i)}(\vx_{r,t-1}^{(i)}) \nabla \widehat{\mu}_{r-1,T}^{(i)}(\vx_{r,t-1}^{(i)})\right\|^2\right) \\
    \stackrel{(c)}{\leq}& \left(\frac{\omega(N-1)}{N}\left\|\partial \left(\sigma^{(i)}_{r-1,T}\right)^2(\vx_{r, t-1}^{(i)})\right\| + \frac{(N-1)^2}{N} \eps\right) \ , \label{eq:temp-vcbewi}
\end{aligned}
\end{equation}
in which $(c)$ is from \eqref{eq:triangle-2} with $a=\frac{1}{N-1}$. In addition, $(d)$ comes from Lemma~\ref{le:confidence-bound} and Lemma~\ref{th:approximation-error-mean-derivative-gp}.

By exploiting the inequalities above, we have
\begin{equation}
\begin{aligned}
    &\frac{1}{N} \sum_{i=1}^N \Xi_{r,t}^{(i)} \\
    \stackrel{(a)}{=}& \frac{1}{N} \sum_{i=1}^N \left\|\nabla \mu_{r,t-1}^{(i)}(\vx_{r,t-1}^{(i)}) + \gamma_{r, t-1}\left(\nabla \widehat{\mu}_{r-1}(\vx_{r,t-1}^{(i)}) - \nabla \widehat{\mu}_{r-1,T}^{(i)}(\vx_{r,t-1}^{(i)})\right) - \nabla F(\vx_{r,t-1}^{(i)})\right\|^2 \\
    \stackrel{(b)}{=}& \frac{1}{N} \sum_{i=1}^N \left\|\nabla \mu_{r,t-1}^{(i)}(\vx_{r,t-1}^{(i)}) - \nabla f_i(\vx_{r,t-1}^{(i)}) + \gamma_{r, t-1}\left(\frac{1}{N}\sum_{j=1,j\neq i}^N \left(\nabla \widehat{\mu}^{(j)}_{r-1, T}(\vx_{r,t-1}^{(i)}) - \nabla f_j(\vx_{r,t-1}^{(i)})\right)\right) + \right. \\
    &\qquad \left. \frac{\gamma_{r, t-1}(N-1)}{N} \left(\nabla f_i(\vx_{r,t-1}^{(i)}) - \nabla \widehat{\mu}_{r-1,T}^{(i)}(\vx_{r,t-1}^{(i)})\right) + (1 - \gamma_{r, t-1}) \left(\nabla f_i(\vx_{r,t-1}^{(i)}) - \nabla F(\vx_{r,t-1}^{(i)})\right)\right\|^2 \\
    \stackrel{(c)}{\leq}& \frac{1}{N} \sum_{i=1}^N \left(4\left\|\nabla \mu_{r,t-1}^{(i)}(\vx_{r,t-1}^{(i)}) - \nabla f_i(\vx_{r,t-1}^{(i)})\right\|^2 + 4\gamma^2_{r, t-1}\left\|\frac{1}{N}\sum_{j=1,j\neq i}^N \left(\nabla \widehat{\mu}^{(j)}_{r-1, T}(\vx_{r,t-1}^{(i)}) - \nabla f_j(\vx_{r,t-1}^{(i)})\right)\right\|^2 + \right. \\
    &\qquad \left. \frac{4\gamma_{r, t-1}^2(N-1)^2}{N^2} \left\|\nabla f_i(\vx_{r,t-1}^{(i)}) - \nabla \widehat{\mu}_{r-1,T}^{(i)}(\vx_{r,t-1}^{(i)})\right\|^2 + 4(1 - \gamma_{r, t-1})^2 \left\|\nabla f_i(\vx_{r,t-1}^{(i)}) - \nabla F(\vx_{r,t-1}^{(i)})\right\|^2\right) \\
    \stackrel{(d)}{\leq}& \frac{4\omega}{N} \sum_{i=1}^N \left\|\partial \left(\sigma^{(i)}_{r,t-1}\right)(\vx_{r,t-1}^{(i)})\right\| + 4\gamma^2_{r, t-1} \left(\frac{\omega}{N^2}\sum_{i=1}^N\sum_{j=1,j\neq i}^N \left\|\partial \left(\sigma^{(j)}_{r-1,T}\right)^2(\vx_{r,t-1}^{(i)})\right\| +  \frac{(N-1)^2}{N} \eps\right) + \\
    &\qquad 4\gamma_{r, t-1}^2\left(\frac{\omega(N-1)}{N^2}\sum_{i=1}^N \left\|\partial \left(\sigma^{(i)}_{r-1,T}\right)^2(\vx_{r, t-1}^{(i)})\right\| + \frac{(N-1)^2}{N} \eps\right) + 4(1 - \gamma_{r, t-1})^2G \label{eq:temp-vbej}
\end{aligned}
\end{equation}
where $(c)$ is from the \eqref{eq:triangle-3}. In addition, $(d)$ is from Lemma~\ref{le:confidence-bound}, \eqref{eq:temp-hvwj} and \eqref{eq:temp-vcbewi}.

By introducing the results in Lemma~\ref{le:uncertainty-error} into \eqref{eq:temp-vbej}, we have
\begin{equation}
\begin{aligned}
    \frac{1}{N} \sum_{i=1}^N \Xi_{r,t}^{(i)}
    &\stackrel{(a)}{\leq} \frac{4\omega}{N}\sum_{i=1}^N \kappa \rho_i^{(r-1)T+t-1} + 4\gamma^2_{r, t-1}\left(\frac{2\omega(N-1)}{N^2} \sum_{i=1}^N \kappa \rho_i^{(r-1)T}  + \frac{2(N-1)^2}{N}\eps \right) \\
    &\qquad\qquad + 4(1 - \gamma_{r, t-1})^2G \\
    &\stackrel{(b)}{\leq} \frac{4\omega}{N}\sum_{i=1}^N \kappa \rho_i^{(r-1)T+t-1} + 4\gamma^2_{r, t-1}\left(\frac{2\omega}{N} \sum_{i=1}^N \kappa \rho_i^{(r-1)T}  + 2N\eps \right) + 4(1 - \gamma_{r, t-1})^2G \\
    &\stackrel{(c)}{\leq} 4\omega\kappa \rho^{(r-1)T+t-1} + 4\gamma^2_{r, t-1}\left(2\omega\kappa \rho^{(r-1)T}  + 2N\eps \right) + 4(1 - \gamma_{r, t-1})^2G \label{eq:temp-amvsw}
\end{aligned}
\end{equation}
where $(c)$ is from Jansen's inequality with $\rho \triangleq \frac{1}{N} \sum_{i=1}^N\rho_i$. This finally concludes our proof.

\begin{remark}
\normalfont 
Of note, the upper bound in our Thm.~\ref{th:grad-error} is a quadratic function w.r.t. the gradient correction length $\gamma_{r, t-1}$. As a consequence, it is easy to verify that in order to minimize the upper bound in our Thm.~\ref{th:grad-error} (i.e., to achieve a better-performing \eqref{eq:fzoos-grad-est}) w.r.t. $\gamma_{r, t-1}$, $\gamma_{r, t-1}$ needs to be chosen as
\begin{equation}
    \gamma_{r, t-1} = \frac{G}{G + 2\omega\rho^{(r-1)T} + 2N\eps} \ ,
\end{equation}
as shown in our Cor.~\ref{co:better-gamma}. This better-performing $\gamma_{r, t-1}$ therefore implies that \textbf{\textit{(a)}} an adaptive $\gamma_{r, t-1}$ is indeed able to theoretically reduce the gradient disparity, which therefore aligns with the conclusion from our Prop.~\ref{prop:opt-correction} and \textbf{\textit{(b)}} when the estimation error of our gradient correction vector (characterized by $2\omega\rho^{rT} + 2N\eps$) in \eqref{eq:fzoos-grad-est} is smaller than the client heterogeneity (characterized by $G$), a large $\gamma_{t-1}$ is suggested to be applied in order to minimize the gradient disparity $\frac{1}{N} \sum_{i=1}^N \Xi_{r,t}^{(i)}$, as shown in our Sec.~\ref{sec:est-analysis}.

By introducing this $\gamma_{r,t-1}$ into the upper bound in Thm.~\ref{th:grad-error}, we have
\begin{equation}
\begin{aligned}
    \frac{1}{N} \sum_{i=1}^N \Xi_{r,t}^{(i)} &\stackrel{(a)}{\leq} 4\omega\kappa \rho^{(r-1)T+t-1} + 4\gamma^2_{r, t-1}\left(2\omega\kappa \rho^{(r-1)T}  + 2N\eps \right) + 4(1 - \gamma_{r, t-1})^2G \\[-6pt]
    &\stackrel{(b)}{=} 4\omega\kappa \rho^{(r-1)T+t-1} + \frac{4G\left(2\omega\kappa \rho^{(r-1)T}  + 2N\eps\right)}{G + \left(2\omega\rho^{(r-1)T} + 2N\eps\right)} \\
    &\stackrel{(c)}{\leq} 4\omega\kappa \rho^{(r-1)T+t-1} + 2\sqrt{2G(\omega\kappa \rho^{(r-1)T}  + N\eps)} \\
    &\stackrel{(d)}{\leq} 4\omega\kappa \rho^{(r-1)T+t-1} + 2\sqrt{2\omega\kappa\rho^{(r-1)T}G}  + 2\sqrt{2NG\eps} \label{eq:temp-cbskv}
\end{aligned}
\end{equation}
where $(c)$ is from the inequality of $G + 2\omega\rho^{(r-1)T} + 2N\eps \geq 2\sqrt{G(2\omega\rho^{(r-1)T} + 2N\eps)}$ (i.e., the relationship between the geometric mean and arithmetic mean of $G$ and $2\omega\rho^{(r-1)T} + 2N\eps$) and $(d)$ is from the fact that $(\sqrt{2\omega\kappa\rho^{(r-1)T}G} + \sqrt{2NG\eps})^2 > 2\omega\kappa\rho^{(r-1)T}G + 2NG\eps$. Interestingly, \eqref{eq:temp-cbskv} enjoys two major aspects. \textbf{\textit{(a)}} In contrast to the algorithm where $\gamma_{r,t-1}=0$ (e.g., \fedzo{}), the impact of client heterogeneity (i.e., $G$) is able to be reduced in our \alg{} through decreasing the estimation error of our gradient surrogates (i.e., $\omega\kappa\rho^{(r-1)T}$) and the RFF approximation error (i.e., $\eps$) for our global gradient surrogates. \textbf{\textit{(b)}} In contrast to the federated ZOO algorithms where $\gamma_{r,t-1}=1$ (e.g., \scaffold{}), the impact of the large estimation error of our gradient surrogates (i.e., $\omega\kappa\rho^{(r-1)T}$) is also able to be mitigated in our \alg{} through a small client heterogeneity (i.e., $G$) in practice. As a result, these advantages will intuitively make our \alg{} produce more robust optimization performance under different scenarios in practice, as supported by our Sec.~\ref{sec:exps} and Appx.~\ref{app-sec:more}.
\end{remark}

\newpage
\subsection{Gradient Estimation Analysis Based on Euclidean Distance} \label{app-sec:prac-gamma}
Of note, for every iteration $t$ of round $r$, our global gradient surrogate in Sec.~\ref{sec:global-surrogates} is obtained based on the optimization trajectory $\gD^{(i)}_{r-1, T} = \{(\vx^{(i)}_{\tau}, y^{(i)}_{\tau})\}_{\tau=1}^{T(r-1)}$ and is not capable of being updated immediately although $t-1$ new function queries are given at this time. This is because the update of our global gradient surrogate only occurs when clients and server can communicate with each other, i.e., at the end of each round. Intuitively, this will result in the phenomenon that the quality of our global gradient surrogate and hence the quality of our \eqref{eq:fzoos-grad-est} decays w.r.t. the iterations of local updates, as empirically supported in Appx.~\ref{app-sec:syn}. This is likely because the Euclidean distance between the input to be evaluated in our global gradient surrogate and the queried inputs from the optimization trajectory becomes larger and consequently the optimization trajectory becomes less informative. Unfortunately, such a quality decay within the local updates fails to be captured in Thm.~\ref{th:grad-error} and hence may result in an impractical choice of $\gamma_{r,t-1}$ in Cor.~\ref{co:better-gamma}. To this end, we develop another uncertainty analysis of our global gradient surrogate that is based on Euclidean distance to capture such a phenomenon in this section, which finally gives us a more practical choice of gradient correction length.

We first introduce the following lemma to ease our proof in this section.
\begin{lemma}\label{le:same-eigenvalues}
For any matrix $\rmA$, $\rmA^{\top}\rmA$ and $\rmA \rmA^{\top}$ share the same non-zero eigenvalues.
\end{lemma}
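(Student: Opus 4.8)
The plan is to exploit the intertwining relation between $\rmA^{\top}\rmA$ and $\rmA\rmA^{\top}$ furnished by left-multiplication by $\rmA$, keeping in mind that $\rmA$ is in general rectangular (in the application it is the $M \times rT$ feature matrix), so that $\rmA^{\top}\rmA$ and $\rmA\rmA^{\top}$ have different sizes and can only be expected to agree on their \emph{non-zero} spectra.

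First I would take an arbitrary non-zero eigenvalue $\lambda \neq 0$ of $\rmA^{\top}\rmA$ together with a corresponding eigenvector $\vv \neq \vzero$, so that $\rmA^{\top}\rmA\,\vv = \lambda\vv$. Left-multiplying this identity by $\rmA$ gives $\rmA\rmA^{\top}(\rmA\vv) = \lambda(\rmA\vv)$, which says that $\rmA\vv$ is an eigenvector of $\rmA\rmA^{\top}$ with eigenvalue $\lambda$ --- \emph{provided} $\rmA\vv \neq \vzero$.

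The one step that genuinely needs verification is precisely this non-vanishing, and it is exactly where the hypothesis $\lambda \neq 0$ enters: if $\rmA\vv = \vzero$, then $\rmA^{\top}\rmA\,\vv = \vzero$, whereas $\rmA^{\top}\rmA\,\vv = \lambda\vv \neq \vzero$ since $\lambda \neq 0$ and $\vv \neq \vzero$, a contradiction. Hence $\rmA\vv \neq \vzero$ and $\lambda$ is a non-zero eigenvalue of $\rmA\rmA^{\top}$. This establishes that every non-zero eigenvalue of $\rmA^{\top}\rmA$ is one of $\rmA\rmA^{\top}$; running the identical argument with the roles of $\rmA$ and $\rmA^{\top}$ interchanged yields the reverse inclusion, so the two sets of non-zero eigenvalues coincide.

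I do not anticipate a real obstacle: the lemma is a standard consequence of the intertwining $\rmA(\rmA^{\top}\rmA) = (\rmA\rmA^{\top})\rmA$, and the only subtlety is the bookkeeping that $\rmA\vv$ stays non-zero. Should one later need equality of \emph{multiplicities} rather than just of the underlying sets, I would strengthen the argument by noting that $\vv \mapsto \rmA\vv$ maps the $\lambda$-eigenspace of $\rmA^{\top}\rmA$ injectively into that of $\rmA\rmA^{\top}$ (injectivity again following from $\lambda \neq 0$) and use the symmetric map to match dimensions; but for the set-level statement asserted in Lemma~\ref{le:same-eigenvalues} this refinement is unnecessary.
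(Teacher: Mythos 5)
Your proposal is correct and follows essentially the same route as the paper: left-multiply the eigenvalue identity $\rmA^{\top}\rmA\,\vv = \lambda\vv$ by $\rmA$ to obtain $\rmA\rmA^{\top}(\rmA\vv) = \lambda(\rmA\vv)$, then argue symmetrically for the reverse inclusion. Your explicit verification that $\rmA\vv \neq \vzero$ when $\lambda \neq 0$ is a small point the paper leaves implicit, so your write-up is if anything slightly more complete.
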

\begin{proof}
Let $\lambda$ be any non-zero eigenvalue of $\rmA^{\top}\rmA$, for some $\vx \neq \vzero$, we have
\begin{equation}
\begin{aligned}
    \rmA^{\top}\rmA \vx = \lambda \vx \ .
\end{aligned}
\end{equation}
By multiplying $\rmA$ on both sides above, we have
\begin{equation}
\begin{aligned}
    \rmA\rmA^{\top}\left(\rmA \vx\right) = \lambda \left(\rmA\vx\right) \ ,
\end{aligned}
\end{equation}
which implies that $\lambda$ is also the eigenvalue of $\rmA\rmA^{\top}$ with $\rmA\vx$ being the eigenvector. Following the same proof, it is easy to show that any non-zero eigenvalue of $\rmA\rmA^{\top}$ remains the eigenvalue of $\rmA^{\top}\rmA$, which therefore concludes the proof.
\end{proof}

We then introduce another estimation error analysis (different from the one presented in Appx.~\ref{app-sec:proof:grad-error}) of our global gradient surrogate as follows where we slightly abuse the notation and use $\vx^{(i)}_{\tau} \in \gD^{(i)}_{r, T}$ to denote that $\vx^{(i)}_{\tau}$ is from the optimization trajectory $\gD^{(i)}_{r, T}$.

\begin{proposition}\label{prop:error-to-dist}
Let the shift-invariant kernel $k(\vx, \vx') = k(\left\|\vx - \vx'\right\|^2)$ where $k(\cdot)$ is assumed to be non-increasing and function $h(\iota) = \iota \nabla k(\iota)^2$ is assumed to be convex, the following then holds with a probability of at least $1-\delta$ for any $\vx \in \gX$,
\begin{equation*}
    \left\|\nabla \mu_{r}(\vx) - \nabla F(\vx)\right\|^2 \leq \omega\kappa - \frac{4\omega\iota_r^2 \nabla k(\iota_r)^2}{k(0) d  + \sigma^2 d / (rT)}
\end{equation*}
where $\omega = d + 2(\sqrt{d}+1)\ln(1/\delta)$, $\iota_r \triangleq \frac{1}{rNT}\sum_{i=1}^N\sum_{\vx^{(i)}_{\tau} \in \gD^{(i)}_{r, T}} \left\|\vx - \vx^{(i)}_{\tau}\right\|^2$, and $k(0) = k(\vx,\vx)$.
\end{proposition}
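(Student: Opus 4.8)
The plan is to reduce the global‑surrogate error to an average of per‑client posterior‑variance norms, and then to bound each such norm by exploiting the isotropy of the prior gradient covariance of a shift‑invariant kernel together with the convexity of $h$. First, since $\nabla F=\frac1N\sum_{i}\nabla f_i$ and the global surrogate is the average $\nabla\mu_r=\frac1N\sum_{i}\nabla\mu^{(i)}_{r,T}$, I would write $\nabla\mu_r(\vx)-\nabla F(\vx)=\frac1N\sum_i\big(\nabla\mu^{(i)}_{r,T}(\vx)-\nabla f_i(\vx)\big)$ and apply \eqref{eq:triangle-3} of Lemma~\ref{le:triangle} (Jensen for $\|\cdot\|^2$) to get $\|\nabla\mu_r(\vx)-\nabla F(\vx)\|^2\le\frac1N\sum_i\|\nabla\mu^{(i)}_{r,T}(\vx)-\nabla f_i(\vx)\|^2$. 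Lemma~\ref{le:confidence-bound} bounds each summand by $\omega\,\|\partial(\sigma^{(i)}_{r,T})^2(\vx)\|$ with probability $1-\delta$, so it suffices to show $\frac1N\sum_i\|\partial(\sigma^{(i)}_{r,T})^2(\vx)\|\le \kappa-\frac{4\iota_r^2\nabla k(\iota_r)^2}{k(0)d+\sigma^2 d/(rT)}$, with the $\omega$ simply carried through.

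Second, I would exploit the shift‑invariant form $k(\vx,\vx')=k(\|\vx-\vx'\|^2)$. A direct computation gives $\partial_{\vx}\partial_{\vx'}k(\vx,\vx')|_{\vx=\vx'}=-2\nabla k(0)\,\rmI$, a nonnegative multiple of the identity whose norm $-2\nabla k(0)\le\kappa$. Hence, reading off \eqref{eq:posterior-derived}, for each client the posterior gradient covariance is $-2\nabla k(0)\,\rmI-R_i$ where $R_i\triangleq \partial_{\vx}\vk^{(i)}_{r,T}(\vx)^{\top}\big(\rmK^{(i)}_{r,T}+\sigma^2\rmI\big)^{-1}\partial_{\vx}\vk^{(i)}_{r,T}(\vx)\succeq 0$ is the variance‑reduction matrix; because the prior part is isotropic, its operator norm is \emph{exactly} $-2\nabla k(0)-\lambda_{\min}(R_i)\le\kappa-\lambda_{\min}(R_i)$. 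The task therefore collapses to lower bounding the reduction $\lambda_{\min}(R_i)$.

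Third, I would bound the two factors of $R_i$ separately. For the inverse Gram factor, $\lambda_{\min}\big((\rmK^{(i)}_{r,T}+\sigma^2\rmI)^{-1}\big)=(\lambda_{\max}(\rmK^{(i)}_{r,T})+\sigma^2)^{-1}\ge (rT\,k(0)+\sigma^2)^{-1}$ via $\lambda_{\max}(\rmK^{(i)}_{r,T})\le\Tr \rmK^{(i)}_{r,T}=rT\,k(0)$ (using $k(\vx,\vx)=k(0)$); the count $rT$ is exactly what yields the $\sigma^2 d/(rT)$ term, since each local surrogate conditions on $rT$ queries. For the gradient features, $\partial_{\vx}k(\vx,\vx^{(i)}_{\tau})=2\nabla k(\iota^{(i)}_{\tau})(\vx-\vx^{(i)}_{\tau})$ gives $\|\partial_{\vx}k(\vx,\vx^{(i)}_{\tau})\|^2=4\,\iota^{(i)}_{\tau}\nabla k(\iota^{(i)}_{\tau})^2=4\,h(\iota^{(i)}_{\tau})$, and convexity of $h$ with Jensen yields $\frac1{rT}\sum_\tau h(\iota^{(i)}_{\tau})\ge h(\iota^{(i)}_r)=\iota^{(i)}_r\nabla k(\iota^{(i)}_r)^2$. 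Passing from $\Tr R_i=\Tr\!\big((\rmK^{(i)}_{r,T}+\sigma^2\rmI)^{-1}\sum_\tau g^{(i)}_\tau (g^{(i)}_\tau)^{\top}\big)$ to $\lambda_{\min}(R_i)$ is what supplies the $1/d$ factor, and Lemma~\ref{le:same-eigenvalues} — swapping the spectra of $\partial_{\vx}\vk^{(i)\top}\partial_{\vx}\vk^{(i)}$ and its $rT\times rT$ Gram counterpart — is the device for this transfer. Combining these, averaging over clients, and folding the convexity estimate into the reduction normalization produces the stated numerator $4\iota_r^2\nabla k(\iota_r)^2$ and the denominator $k(0)d+\sigma^2 d/(rT)$.

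The hard part will be precisely the lower bound on $\lambda_{\min}(R_i)$, equivalently on $\lambda_{\min}\!\big(\sum_\tau g^{(i)}_\tau (g^{(i)}_\tau)^{\top}\big)$: a sum of rank‑one gradient outer products degenerates whenever the displacements $\vx-\vx^{(i)}_{\tau}$ fail to span $\sR^d$ (e.g.\ clustered queries), so the reduction can vanish in some direction and no naive eigenvalue bound survives. Threading the isotropy of the prior, the $1/d$ trace argument, and Lemma~\ref{le:same-eigenvalues} together while retaining the convexity‑based dependence on $\iota_r$ is the delicate core of the argument; the surrounding high‑probability bookkeeping is routine once Lemma~\ref{le:confidence-bound} is in hand, and monotonicity of the bound in $r$ (as in the analogous step of Lemma~\ref{th:approximation-error-mean-derivative-gp}) lets the statement hold uniformly over $r\in[R]$.
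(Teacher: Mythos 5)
Your outline tracks the paper's own proof almost step for step: the decomposition $\nabla\mu_r-\nabla F=\frac1N\sum_i(\nabla\mu^{(i)}_{r,T}-\nabla f_i)$ followed by Jensen and Lemma~\ref{le:confidence-bound}; the bound $\lambda_{\max}(\rmK^{(i)}_{r,T})\le rT\,k(0)$; the computation $\partial_{\vx}k(\vx,\vx^{(i)}_{\tau})=2(\vx-\vx^{(i)}_{\tau})\nabla k(\iota^{(i)}_{\tau})$ giving the trace $\sum_\tau 4\iota^{(i)}_\tau\nabla k(\iota^{(i)}_\tau)^2$; the spectrum swap via Lemma~\ref{le:same-eigenvalues}; and Jensen on the convex $h$ (applied once over $\tau$ and once more over clients). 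So the route is not different from the paper's — but you stop exactly at the step the paper uses to close the argument, and you declare it unresolved. That is a genuine gap in your proposal: you do not prove the proposition, you prove everything except the one inequality that produces the $1/d$ factor.

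What the paper actually does at that step is \emph{not} a lower bound on $\lambda_{\min}(R_i)$. Writing $B\triangleq\partial_{\vz}\vk^{(i)}_{r,T}(\vz)^{\top}\partial_{\vz'}\vk^{(i)}_{r,T}(\vz')\big|_{\vz=\vz'=\vx}$, the paper bounds the \emph{largest} eigenvalue from below, $\lambda_{\max}(B)\ge\frac1d\Tr(B)=\frac1d\sum_\tau 4\iota^{(i)}_\tau\nabla k(\iota^{(i)}_\tau)^2\ge\frac{4rT}{d}h(\iota^{(i)}_r)$, and substitutes this into the matrix inequality $\partial(\sigma^{(i)}_{r,T})^2(\vx)\preccurlyeq\kappa\rmI-cB$. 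Your objection is well taken: if $\|\cdot\|$ on the posterior covariance means the operator norm, then what one needs is $\kappa-c\lambda_{\min}(B)$, and $\lambda_{\min}(B)$ can be zero whenever the displacements $\vx-\vx^{(i)}_\tau$ fail to span $\sR^d$ — your worry about degenerate rank-one sums is exactly right, and no spanning condition is assumed. So you have correctly diagnosed the delicate point, but you should be aware that the paper does not resolve it the way you anticipate (by controlling $\lambda_{\min}(R_i)$); it sidesteps it via the $\lambda_{\max}\ge\frac1d\Tr$ route, which is only consistent with the stated conclusion under a reading of the norm that tracks the reduced direction rather than the worst direction. If you want to reproduce the paper's bound, adopt that step; if you want an operator-norm statement, your concern stands and an additional assumption (e.g., on the spread of the query points) would be needed. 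One last discrepancy worth flagging: the proposition as stated has $\iota_r^2\nabla k(\iota_r)^2$ in the numerator, while both your derivation and the paper's yield $\iota_r\nabla k(\iota_r)^2=h(\iota_r)$ to the first power; the exponent in the statement appears to be a typo.
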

\begin{proof}
Recall that the uncertainty measure function (see \eqref{eq:posterior-derived}) of our local gradient surrogate on client $i$ for iteration $T$ of round $r$ will be 
\begin{equation}
\begin{aligned}
    \partial \left(\sigma_{r,T}^{(i)}\right)^2(\vx) &= \partial_{\vz}\partial_{\vz'} k(\vz, \vz') - \partial_{\vz} \vk^{(i)}_{r,T}(\vz)^{\top}\left(\rmK^{(i)}_{r,T}+\sigma^{2} \rmI\right)^{-1} \partial_{\vz'} \vk^{(i)}_{r,T}(\vz') \Big|_{\vz=\vz'=\vx} \\
    &\stackrel{(a)}{\preccurlyeq} \kappa \rmI - \left(\lambda_{\max}(\rmK^{(i)}_{r,T}) + \sigma^2 \right)^{-1}\partial_{\vz} \vk^{(i)}_{r,T}(\vz)^{\top}\partial_{\vz'} \vk^{(i)}_{r,T}(\vz') \Big|_{\vz=\vz'=\vx} \\
    &\stackrel{(b)}{\preccurlyeq} \kappa\rmI - \frac{\partial_{\vz} \vk^{(i)}_{r,T}(\vz)^{\top}\partial_{\vz'} \vk^{(i)}_{r,T}(\vz')\big|_{\vz=\vz'=\vx}}{rT \max_{\vx,\vx' \in \gD_{r,T}^{(i)}} k(\vx,\vx') + \sigma^2} \label{eq:temp-cemv}
\end{aligned}
\end{equation}
where $(a)$ is based on the assumption on $\partial_{\vz}\partial_{\vz'} k(\vz, \vz')$ in our Sec.~\ref{sec:setting} and the definition of maximum eigenvalue. In addition, $(b)$ comes from $\lambda_{\max}(\rmK^{(i)}_{r,T}) \leq rT \max_{\vx,\vx' \in \gD_{r,T}^{(i)}} k(\vx,\vx')$ (i.e., the Gershgorin theorem).

Based on the assumption that $k(\vx, \vx') = k(\left\|\vx - \vx'\right\|^2)$ and $k(\cdot)$ is non-increasing, we have
\begin{equation}
\begin{aligned}
    \max_{\vx,\vx' \in \gD_{r,T}^{(i)}} k(\vx,\vx') \leq k(\vx,\vx) = k(0) \ . \label{eq:temp-3rgx}
\end{aligned}
\end{equation}
Moreover, define $\iota \triangleq \left\|\vz - \vz'\right\|^2$, the partial derivative of kernel $k(\cdot, \cdot)$ will be
\begin{equation}
\begin{aligned}
    \partial_{\vz}k(\vz, \vz') &= 2\left(\vz - \vz'\right) \nabla k(\iota) \\
    \partial_{\vz'}k(\vz, \vz') &= 2\left(\vz' - \vz\right) \nabla k(\iota) \ .
\end{aligned}
\end{equation}

Therefore, the each element in the $rT \times rT$ matrix $\partial_{\vz} \vk^{(i)}_{r,T}(\vz)\partial_{\vz'} \vk^{(i)}_{r,T}(\vz')^{\top}\big|_{\vz=\vz'=\vx}$ that is induced by the input pair $(\vx^{(i)}_{\tau}, \vx^{(i)}_{\tau'})$ with $\vx^{(i)}_{\tau}, \vx^{(i)}_{\tau'} \in \gD_{r,T}^{(i)}$ and $\tau,\tau' \in [rT]$ will be:
\begin{equation}
\begin{aligned}
    4\left(\vx - \vx^{(i)}_{\tau}\right)^{\top}\left(\vx - \vx^{(i)}_{\tau'}\right) \nabla k(\iota^{(i)}_{\tau})\nabla k(\iota^{(i)}_{\tau'})
\end{aligned}
\end{equation}
where $\iota^{(i)}_{\tau} \triangleq \left\|\vx - \vx^{(i)}_{\tau}\right\|^2, \iota^{(i)}_{\tau'} \triangleq \left\|\vx - \vx^{(i)}_{\tau'}\right\|^2$. Based on these results, the trace norm $\left\| \cdot \right\|_{\text{tr}}$ of $\partial_{\vz} \vk^{(i)}_{r,T}(\vz)\partial_{\vz'} \vk^{(i)}_{r,T}(\vz')^{\top}\big|_{\vz=\vz'=\vx}$ will be
\begin{equation}
\begin{aligned}
    \left\|\partial_{\vz} \vk^{(i)}_{r,T}(\vz)\partial_{\vz'} \vk^{(i)}_{r,T}(\vz')^{\top}\big|_{\vz=\vz'=\vx}\right\|_{\text{tr}} &= \sum_{\tau=1}^{rT} 4\left\|\vx - \vx_{\tau}\right\|^2 \nabla k(\iota_{\tau})^2 \\
    &= \sum_{\tau=1}^{rT} 4\iota_{\tau} \nabla k(\iota_{\tau})^2 \ . \label{eq:temp-csnvs}
\end{aligned}
\end{equation}
 
By further assuming that the function $h(\iota) = \iota \nabla k(\iota)^2$ is convex, we then have
\begin{equation}
\begin{aligned}
    \left\|\partial_{\vz} \vk^{(i)}_{r,T}(\vz)^{\top}\partial_{\vz'} \vk^{(i)}_{r,T}(\vz')\big|_{\vz=\vz'=\vx}\right\| &\stackrel{(a)}{\geq} \frac{1}{d} \left\|\partial_{\vz} \vk^{(i)}_{r,T}(\vz)^{\top}\partial_{\vz'} \vk^{(i)}_{r,T}(\vz')\big|_{\vz=\vz'=\vx}\right\|_{\text{tr}} \\[7.5pt]
    &\stackrel{(b)}{=} \frac{1}{d} \left\|\partial_{\vz} \vk^{(i)}_{r,T}(\vz) \partial_{\vz'} \vk^{(i)}_{r,T}(\vz')^{\top} \big|_{\vz=\vz'=\vx}\right\|_{\text{tr}} \\
    &\stackrel{(c)}{=} \frac{1}{d} \sum_{\tau=1}^{rT} 4\iota^{(i)}_{\tau} \nabla k(\iota^{(i)}_{\tau})^2 \\
    &\stackrel{(d)}{\geq} \frac{4rT}{d} \iota_r^{(i)} \nabla k(\iota_r^{(i)})^2 \label{eq:temp-3hufe}
\end{aligned}
\end{equation}
where $(a)$ comes from the fact the maximum eigenvalue of a matrix is always larger or equal to its averaged eigenvalues and $(b)$ is based on Lemma~\ref{le:same-eigenvalues}. In addition, $(c)$ is obtained from \eqref{eq:temp-csnvs} while $(d)$ results from the definition of $\iota_r^{(i)} \triangleq \frac{1}{rT}\sum_{\vx^{(i)}_{\tau} \in \gD^{(i)}_{r, T}} \left\|\vx - \vx^{(i)}_{\tau}\right\|^2$ as well as the Jansen's inequality for the convex function $h(\cdot)$.

Finally, by introducing the results above, i.e., \eqref{eq:temp-3rgx} and \eqref{eq:temp-3hufe}, into \eqref{eq:temp-cemv},
we have
\begin{equation}
\begin{aligned}
    \left\|\partial \left(\sigma_{r,T}^{(i)}\right)^2(\vx) \right\| \leq \kappa - \frac{4 \iota_r^{(i)} \nabla k(\iota_r^{(i)})^2}{k(0) d + \sigma^2 d/(rT)} \ .
\end{aligned}
\end{equation}

Define $\iota_r \triangleq \frac{1}{N}\sum_{i=1}^N \overline{\iota}_{r}^{(i)}$, we then have
\begin{equation}
\begin{aligned}
    \left\|\nabla \mu_r(\vx) - \nabla F(\vx)\right\|^2 &\stackrel{(a)}{=} \left\|\frac{1}{N}\sum_{i=1}^N \left(\nabla \mu^{(i)}_{r,T}(\vx) - \nabla f_i(\vx)\right)\right\|^2 \\
    &\stackrel{(b)}{\leq} \frac{1}{N} \sum_{i=1}^N \left\|\nabla \mu^{(i)}_{r,T}(\vx) - \nabla f_i(\vx)\right\|^2 \\
    &\stackrel{(c)}{\leq} \frac{1}{N} \sum_{i=1}^N \omega\kappa - \frac{4\omega\iota_r^{(i)} \nabla k(\iota_r^{(i)})^2}{k(0) d  + \sigma^2 d / (rT)} \\
    &\stackrel{(d)}{\leq} \omega\kappa - \frac{4\omega\iota_r \nabla k(\iota_r)^2}{k(0) d  + \sigma^2 d / (rT)}
\end{aligned}
\end{equation}
where $(b)$ is from the Cauchy-Schwarz inequality, $(c)$ derives from Lemma~\ref{prop:error-to-dist}, and $(d)$ results from the Jansen's inequality for convex function $h(\cdot)$.
which finally concludes the proof.
\end{proof}

\begin{remark}
\normalfont
Of note, the assumption that $k(\vx, \vx') = k(\left\|\vx - \vx'\right\|^2)$ where $k(\cdot)$ is non-increasing and function $h(\iota) = \iota \nabla k(\iota)^2$ is convex can be satisfied by the widely applied squared exponential kernel $k(\vx,\vx') = \exp\left(-\left\|\vx - \vx'\right\|^2 / (2l^2)\right)$, which has also been applied in our \alg{}. To justify the validity of these assumptions on the squared exponential kernel, we first show that this kernel can be represented as $k(\iota) = \exp\left(-\iota / (2l^2)\right)$, which is non-increasing w.r.t. its input $\iota$, and $h(\iota) = \iota \exp\left(-\iota / l^2\right) / (4l^4)$ is convex when $\iota \geq 2l^2$.

Remarkably, Prop.~\ref{prop:error-to-dist} reveals that the quality of the gradient estimation at an input $\vx \in \gX$ when using our global gradient surrogate without RFF approximation is highly related to the averaged Euclidean distance between $\vx$ and $\vx_{\tau} \in \bigcup_{i=1}^N\gD_{r,T}^{(i)}$ (i.e., $\iota_r$ in Prop.~\ref{prop:error-to-dist}). Specifically, when the input $\vx$ to be evaluated in our global gradient surrogate leads to a larger value of $\iota_r \nabla k(\iota_r)^2$, the upper bound in our Prop.~\ref{prop:error-to-dist} demonstrates that the gradient estimation error of our global gradient surrogate tends to be more accurate. Note that when the kernel is the squared exponential kernel, we have that $h(\iota) = \iota \nabla k(\iota)^2 = \iota \exp\left(-\iota / l^2\right) / (4l^4)$ decreases w.r.t. $\iota$ and that a smaller averaged Euclidean distance between $\vx$ and $\vx_{\tau} \in \bigcup_{i=1}^N\gD_{r,T}^{(i)}$ likely enjoys a smaller gradient estimation error. This is intuitively aligned with the common practice that $\vx_{\tau} \in \bigcup_{i=1}^N\gD_{r,T}^{(i)}$ is more informative when it achieves a smaller averaged Euclidean distance with $\vx$. Intuitively, when the iteration $t$ of local updates is increased, the input $\vx_{r,t-1}$ to be evaluated in our global gradient surrogate likely achieves a larger distance with the history of function queries $\bigcup_{i=1}^N\gD_{r,T}^{(i)}$ and consequently the quality of our global gradient surrogate likely decays, which finally aligns with the phenomenon that we have mentioned at the beginning of this section.
\end{remark}

\paragraph{More Practical Choice of $\gamma_{r,t-1}$.} Finally, by introducing Prop.~\ref{prop:error-to-dist} into the analysis in Appx.~\ref{app-sec:proof:grad-error}, we achieve the following better-performing choice of gradient correction length $\gamma_{r,t-1}$:
\begin{corollary}\label{co:prac-gamma}
Based on our Prop.~\ref{prop:error-to-dist}, a better-performing choice choice of $\gamma_{r,t-1}$ should be
\begin{equation*}
    \gamma_{r,t-1} = \frac{G}{G + 2\left(\omega\kappa - \frac{4\omega\iota_r \nabla k(\iota_r)^2}{k(0) d  + \sigma^2 d / (rT)} + N\eps\right)} \ .
\end{equation*}
\end{corollary}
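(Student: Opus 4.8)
The plan is to piggyback entirely on the machinery already assembled for Theorem~\ref{th:grad-error} and Corollary~\ref{co:better-gamma}, swapping only the bound used for the surrogate estimation error. Recall that the upper bound in Theorem~\ref{th:grad-error} is a quadratic in $\gamma_{r,t-1}$ of the form $\circled{1} + \gamma_{r,t-1}^2\,\circled{2} + (1-\gamma_{r,t-1})^2\,\circled{3}$, and that Corollary~\ref{co:better-gamma} was obtained simply by minimizing this quadratic. The key observation is that the only $\gamma_{r,t-1}$-dependent surrogate error sits inside $\circled{2}$, where the previous analysis bounded each per-client uncertainty $\|\partial(\sigma^{(i)}_{r-1,T})^2(\vx)\|$ via Lemma~\ref{le:uncertainty-error}, yielding the exponential factor $\kappa\rho^{(r-1)T}$. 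For this corollary I would instead substitute the Euclidean-distance bound supplied by Proposition~\ref{prop:error-to-dist}, then re-minimize.

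Concretely, I would restart from the intermediate inequality \eqref{eq:temp-vbej}, where the estimation errors still appear as the raw uncertainty measures $\|\partial(\sigma^{(i)}_{r-1,T})^2(\vx^{(i)}_{r,t-1})\|$ rather than in their $\rho$-reduced form. At this point, rather than invoking Lemma~\ref{le:uncertainty-error}, I would apply the per-client distance bound derived inside the proof of Proposition~\ref{prop:error-to-dist}, namely $\|\partial(\sigma^{(i)}_{r,T})^2(\vx)\| \le \kappa - \frac{4\iota_r^{(i)}\nabla k(\iota_r^{(i)})^2}{k(0)d + \sigma^2 d/(rT)}$, followed by Jensen's inequality on the convex $h(\iota)=\iota\nabla k(\iota)^2$ to pass from the per-client $\iota_r^{(i)}$ to the averaged $\iota_r$, exactly as in that proof. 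This replaces the coefficient $\circled{2}$ by $8\bigl(\omega\kappa - \tfrac{4\omega\iota_r\nabla k(\iota_r)^2}{k(0)d+\sigma^2 d/(rT)}\bigr) + 8N\eps$, while $\circled{3}=4G$ and the $\gamma_{r,t-1}$-independent term $\circled{1}$ are untouched. Minimizing $\gamma^2 C + (1-\gamma)^2\,4G$ in $\gamma$ gives $\gamma^\star = 4G/(C+4G)$; plugging in $C$ and simplifying the factor of $4$ yields precisely the claimed $\gamma_{r,t-1} = G\big/\bigl(G + 2(\omega\kappa - \tfrac{4\omega\iota_r\nabla k(\iota_r)^2}{k(0)d+\sigma^2 d/(rT)} + N\eps)\bigr)$.

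The routine part is the one-line quadratic minimization, which is identical to the computation already carried out in the remark following Theorem~\ref{th:grad-error}. The genuine obstacle is the \emph{legitimacy of the substitution}: I must check that Proposition~\ref{prop:error-to-dist}'s distance-based bound validly replaces the uncertainty bound for \emph{both} surrogate-error contributions inside $\circled{2}$ (the error of the aggregated global surrogate against $\nabla F$ \emph{and} the error of the local surrogate $\nabla\widehat{\mu}^{(i)}_{r-1,T}$ against $\nabla f_i$), since the proposition is phrased for $\nabla\mu_r$ versus $\nabla F$ whereas \eqref{eq:temp-vbej} carries individual client uncertainties. Reusing the proposition's \emph{intermediate} per-client estimate rather than its final statement resolves this, but it forces care with the round index (the correction vector uses the round-$(r-1)$ trajectory, so $\iota_r$ here should be read as the distance to $\bigcup_i\gD^{(i)}_{r-1,T}$) and requires the kernel assumptions of Proposition~\ref{prop:error-to-dist} (shift-invariant, non-increasing $k(\cdot)$, convex $h$) to be in force, which holds for the squared-exponential kernel used in \alg{}.
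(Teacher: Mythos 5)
Your proposal matches the paper's derivation exactly: the paper obtains Cor.~\ref{co:prac-gamma} by "introducing Prop.~\ref{prop:error-to-dist} into the analysis in Appx.~\ref{app-sec:proof:grad-error}," i.e., substituting the distance-based per-client uncertainty bound for the $\kappa\rho^{(r-1)T}$ factor in term $\circled{2}$ and re-minimizing the same quadratic in $\gamma_{r,t-1}$ as in Cor.~\ref{co:better-gamma}. Your attention to reusing the intermediate per-client estimate (with Jensen's inequality on the convex $h$) and to the round-index bookkeeping is in fact more careful than the paper's one-sentence justification, and the quadratic minimization checks out.
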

Cor.~\ref{co:prac-gamma} implies that $\gamma_{r,t-1}$ should decay w.r.t the iteration $t$ of local updates if $\iota_r \nabla k(\iota_r)^2$ decreases w.r.t. $t$. Particularly, when $k(\vx,\vx') = \exp\left(-\left\|\vx - \vx'\right\|^2 / (2l^2)\right)$ and $\iota_r \nabla k(\iota_r)^2$ decreases at a rate of $\gO(\frac{1}{t})$ for the iteration $t$ of local updates, we then have that better-performing choice of $\gamma_{r,t-1}$ in Prop.~\ref{prop:error-to-dist} has the form of $\gamma_{r,t-1} = \frac{G}{G + C_0 - C_1/t}$ for some constant $C_0 \geq C_1>0$. Since we usually have no prior knowledge of client heterogeneity $G$ as well as the constants $C_0, C_1$, we commonly apply the approximated form of $\gamma_{r,t-1} = 1/t$, which will be widely applied in our experiments as shown in our Appx.~\ref{app-sec:exp-settings}.

\newpage
\subsection{Convergence of Algo.~\ref{alg:fedzoo}}\label{app-sec:conv-general}
We first introduce the following lemmas that are inspired by the results in \citep{scaffold}.
\begin{lemma}\label{le:smooth&convex}
For any  $\alpha$-strongly convex and $\beta$-smooth function $f$, and any $\vx, \vy, \vz$ in the domain of $f$, we have
\begin{equation*}
\begin{aligned}
    \nabla f(\vx)^{\top}\left(\vy - \vz \right) \leq f(\vy)-f(\vz) - \alpha\|\vy-\vz\|^{2} / 4 + \beta\|\vz-\vx\|^{2}
\end{aligned}
\end{equation*}
\end{lemma}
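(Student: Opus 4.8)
The plan is to bound $\nabla f(\vx)^{\top}(\vy - \vz)$ by writing the displacement as $\vy - \vz = (\vy - \vx) - (\vz - \vx)$ and handling the two pieces with the two sides of the curvature of $f$. First I would apply the strong-convexity lower bound anchored at $\vx$, namely $f(\vy) \geq f(\vx) + \nabla f(\vx)^{\top}(\vy - \vx) + \frac{\alpha}{2}\|\vy - \vx\|^2$, which rearranges to $\nabla f(\vx)^{\top}(\vy - \vx) \leq f(\vy) - f(\vx) - \frac{\alpha}{2}\|\vy - \vx\|^2$. Next I would apply the $\beta$-smoothness upper bound at the same anchor, $f(\vz) \leq f(\vx) + \nabla f(\vx)^{\top}(\vz - \vx) + \frac{\beta}{2}\|\vz - \vx\|^2$, giving $-\nabla f(\vx)^{\top}(\vz - \vx) \leq f(\vx) - f(\vz) + \frac{\beta}{2}\|\vz - \vx\|^2$. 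Adding the two inequalities cancels the $f(\vx)$ terms and yields
\begin{equation*}
\nabla f(\vx)^{\top}(\vy - \vz) \leq f(\vy) - f(\vz) - \frac{\alpha}{2}\|\vy - \vx\|^2 + \frac{\beta}{2}\|\vz - \vx\|^2 .
\end{equation*}

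The remaining work is purely to reshape the quadratic penalty, which is measured in $\|\vy - \vz\|$ in the claim rather than in $\|\vy - \vx\|$. Here I would invoke the elementary bound $\|\vy - \vz\|^2 \leq 2\|\vy - \vx\|^2 + 2\|\vz - \vx\|^2$, which is the special case $a=1$ of \eqref{eq:triangle-2} already available in Lemma~\ref{le:triangle} (taking $\vv_i = \vy - \vx$ and $\vv_j = \vx - \vz$). Rearranging it gives $-\frac{\alpha}{2}\|\vy - \vx\|^2 \leq -\frac{\alpha}{4}\|\vy - \vz\|^2 + \frac{\alpha}{2}\|\vz - \vx\|^2$, and substituting this into the displayed inequality collapses the two $\|\vz - \vx\|^2$ contributions into a single term with coefficient $\frac{\alpha}{2} + \frac{\beta}{2}$, while producing exactly the claimed $-\frac{\alpha}{4}\|\vy - \vz\|^2$.

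The final step is where the constant must be pinned down, and I expect it to be the only place needing care: since any function that is simultaneously $\alpha$-strongly convex and $\beta$-smooth necessarily satisfies $\alpha \leq \beta$ (the curvature is bounded below by $\alpha$ and above by $\beta$), the combined coefficient obeys $\frac{\alpha}{2} + \frac{\beta}{2} \leq \beta$, so the penalty on $\|\vz - \vx\|^2$ is at most $\beta\|\vz - \vx\|^2$. This recovers the stated inequality. The essential bookkeeping is therefore to ensure the factor $\alpha/4$ emerges from the $2\|\vy-\vx\|^2 + 2\|\vz-\vx\|^2$ splitting and that $\alpha \leq \beta$ is precisely what licenses absorbing both residual quadratic penalties into the single coefficient $\beta$; everything else is a direct combination of the two defining curvature inequalities.
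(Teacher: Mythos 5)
Your proposal is correct and follows essentially the same route as the paper's proof: anchor both curvature inequalities at $\vx$ (strong convexity toward $\vy$, smoothness toward $\vz$), add them, convert $-\frac{\alpha}{2}\|\vy-\vx\|^2$ into $-\frac{\alpha}{4}\|\vy-\vz\|^2 + \frac{\alpha}{2}\|\vz-\vx\|^2$ via the $a=1$ case of \eqref{eq:triangle-2}, and absorb the combined coefficient using $\alpha \leq \beta$. The only cosmetic difference is that the paper treats $\alpha=0$ as a separate case, whereas your uniform bound $\frac{\alpha}{2}+\frac{\beta}{2}\leq\beta$ covers it directly.
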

\begin{proof}

Since $f$ is both $\alpha$-strongly convex and $\beta$-smooth, we have that 
\begin{equation}
\begin{aligned}
    f(\vz) - f(\vx) &\leq \nabla f(\vx)^{\top}\left(\vz - \vx\right) + \frac{\beta}{2}\left\|\vz - \vx\right\|^2 \\
    f(\vy) - f(\vx) &\geq \nabla f(\vx)^{\top}\left(\vy - \vx\right) + \frac{\alpha}{2}\left\|\vy - \vx\right\|^2 \ .
\end{aligned}
\end{equation}
Note that when $\alpha = 0$, the inequalities above still hold. By aggregating the results above, we have
\begin{equation}
\begin{aligned}
    f(\vz) - f(\vy) &= f(\vz) - f(\vx) + f(\vx) - f(\vy) \\
    &\leq \nabla f(\vx)^{\top}\left(\vz - \vx\right) + \nabla f(\vx)^{\top}\left(\vx - \vy\right) + \frac{\beta}{2}\left\|\vz - \vx\right\|^2 - \frac{\alpha}{2}\left\|\vy - \vx\right\|^2 \\
    &\leq \nabla f(\vx)^{\top}\left(\vz - \vy\right) + \frac{\beta}{2}\left\|\vz - \vx\right\|^2 - \frac{\alpha}{4}\left\|\vy - \vz\right\|^2 + \frac{\alpha}{2}\left\|\vx - \vz\right\|^2 \\
    &= \nabla f(\vx)^{\top}\left(\vz - \vy\right) + \frac{\beta + \alpha}{2}\left\|\vz - \vx\right\|^2 - \frac{\alpha}{4}\left\|\vy - \vz\right\|^2 
\end{aligned}
\end{equation}
where the second inequality comes from $\alpha\left\|\vy - \vx\right\|^2/2 \geq \alpha\left\|\vy - \vz\right\|^2/4 - \alpha\left\|\vx - \vz\right\|^2/2$ (triangle inequality). When $\alpha>0$, since $\beta > \alpha$, we have 
\begin{align}
    f(\vz) - f(\vy) \leq \nabla f(\vx)^{\top}\left(\vz - \vy\right) + \beta\left\|\vz - \vx\right\|^2 - \frac{\alpha}{4}\left\|\vy - \vz\right\|^2 \ .
\end{align}
By rearranging the inequality above, we can directly derive the result in Lemma~\ref{le:smooth&convex} with $\alpha >0$. Even when $\alpha=0$, since $\left\|\vz - \vx\right\|^2 \geq 0$, we have 
\begin{equation}
\begin{aligned}
    f(\vz) - f(\vy) &\leq \nabla f(\vx)^{\top}\left(\vz - \vy\right) + \frac{\beta}{2}\left\|\vz - \vx\right\|^2 \\
    &\leq \nabla f(\vx)^{\top}\left(\vz - \vy\right) + \beta \left\|\vz - \vx\right\|^2 \ .
\end{aligned}
\end{equation}
By rearranging the inequality above, we show that the result in Lemma~\ref{le:smooth&convex} also holds for $\alpha = 0$.
\end{proof}

\begin{lemma}\label{le:contractive}
For any $\beta$-smooth function $f$, inputs $\vx, \vy$ in the domain of $f$, the following holds for any $\eta>0$
\begin{equation*}
\|\vx-\eta \nabla f(\vx)-\vy+\eta \nabla f(\vy)\|^{2} \leq (1+\eta\beta)^2\|\vx-\vy\|^{2} \ . \label{eq:contractive-1}
\end{equation*}

\end{lemma}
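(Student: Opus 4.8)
The plan is to bound the Euclidean norm directly by the triangle inequality and then invoke $\beta$-smoothness, deferring the squaring to the very last step so that no cross terms ever appear. First I would group the argument as $(\vx - \vy) - \eta(\nabla f(\vx) - \nabla f(\vy))$ and apply the triangle inequality, which gives
\[
\left\|\vx - \eta\nabla f(\vx) - \vy + \eta\nabla f(\vy)\right\| \leq \left\|\vx - \vy\right\| + \eta\left\|\nabla f(\vx) - \nabla f(\vy)\right\|,
\]
where I have used $\eta > 0$ to pull the positive scalar out of the norm of the second term.

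Next I would use the defining property of a $\beta$-smooth function, namely that its gradient is $\beta$-Lipschitz, so that $\left\|\nabla f(\vx) - \nabla f(\vy)\right\| \leq \beta\left\|\vx - \vy\right\|$. Substituting this into the previous display yields
\[
\left\|\vx - \eta\nabla f(\vx) - \vy + \eta\nabla f(\vy)\right\| \leq \left\|\vx - \vy\right\| + \eta\beta\left\|\vx - \vy\right\| = (1 + \eta\beta)\left\|\vx - \vy\right\|.
\]
Since both sides are nonnegative and $1 + \eta\beta > 0$, squaring preserves the inequality and produces exactly the claimed bound $(1+\eta\beta)^2\left\|\vx - \vy\right\|^2$.

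I do not anticipate any genuine obstacle here, as the result is a one-line consequence of gradient Lipschitzness. An equivalent route, should one prefer to avoid the triangle inequality, is to expand the squared norm as $\left\|\vx - \vy\right\|^2 - 2\eta(\vx - \vy)^{\top}(\nabla f(\vx) - \nabla f(\vy)) + \eta^2\left\|\nabla f(\vx) - \nabla f(\vy)\right\|^2$, bound the middle term by Cauchy–Schwarz together with the Lipschitz property (giving at most $2\eta\beta\left\|\vx - \vy\right\|^2$) and the last term by $\eta^2\beta^2\left\|\vx - \vy\right\|^2$; both approaches collapse to $(1 + 2\eta\beta + \eta^2\beta^2)\left\|\vx - \vy\right\|^2$. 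The only point requiring the slightest care is fixing the orientation of the inequality before squaring, which is immediate from $\eta > 0$.
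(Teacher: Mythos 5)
Your proof is correct and matches the paper's in substance: both rest on grouping the expression as $(\vx-\vy)-\eta\left(\nabla f(\vx)-\nabla f(\vy)\right)$ and invoking the $\beta$-Lipschitzness of $\nabla f$. The paper reaches $(1+\eta\beta)^2$ via the weighted inequality $\|\vu+\vv\|^2\le\left(1+\tfrac{1}{a}\right)\|\vu\|^2+(1+a)\|\vv\|^2$ with the choice $a=1/(\eta\beta)$, which is exactly the squared form of your triangle-inequality step, so the two arguments coincide up to presentation.
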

\begin{proof}
Since $f$ is $\beta$-smooth, we have
\begin{equation}
\begin{aligned}
    \left\|\vx-\eta \nabla f(\vx)-\vy+\eta \nabla f(\vy)\right\|^{2} &\leq \left(1 + \frac{1}{a}\right)\left\|\vx - \vy\right\|^2 + \left(1 + a\right)\eta^2\left\|\nabla f(\vx) - \nabla f(\vy)\right\|^2 \\
    &\leq \left(1 + \frac{1}{a} + \left(1+a\right)\eta^2\beta^2\right)\left\|\vx - \vy\right\|^2
\end{aligned}
\end{equation}
where the first inequality derives from Lemma~\ref{le:triangle} and the second inequality comes from the smoothness of $f$. By choosing $a=1/(\eta\beta)$, we conclude our proof.
\end{proof}

\begin{remark}
\normalfont 
Lemma~\ref{le:contractive} only requires the smoothness of function $f$. When $f$ is both $\beta$-smooth and $\alpha$-strongly convex ($\alpha > 0$), we will have a tighter bound as below when $\eta < \alpha / \beta^2$ (see proof below),
\begin{align}
    \|\vx-\eta \nabla f(\vx)-\vy+\eta \nabla f(\vy)\|^{2} \leq (1-\eta\alpha)\|\vx-\vy\|^{2} \ , \label{eq:contractive-2}
\end{align}
which can lead to a better convergence (by achieving a smaller constant term) compared with the inequality \eqref{eq:inter-3} we will prove later. However, for simplicity and consistency under various assumptions on the function to be optimized, we only use Lemma~\ref{le:contractive} for the convergence analysis of our Thm.~\ref{th:convergence-fzoos} in the main paper.
\begin{proof}
Based on the strong convexity of $f$, for any inputs $\vx, \vy$ in the domain of $f$, we have
\begin{equation}
\begin{aligned}
    f(\vy) - f(\vx) \geq \nabla f(\vx)^{\top}(\vy - \vx) + \frac{\alpha}{2}\left\|\vy - \vx\right\|^2 \ , \\
    f(\vx) - f(\vy) \geq \nabla f(\vy)^{\top}(\vx - \vy) + \frac{\alpha}{2}\left\|\vy - \vx\right\|^2 \ .
\end{aligned}
\end{equation}
By summing up these inequalities, we have
\begin{equation}
\begin{aligned}
    \left(\nabla f(\vy) - \nabla f(\vx)\right)^{\top}(\vy - \vx) \geq \alpha \left\|\vy - \vx\right\|^2 \ . \label{eq:strong-conv-contractive}
\end{aligned}
\end{equation}
Finally, we have
\begin{equation}
\begin{aligned}
     &\left\|\vx-\eta \nabla f(\vx)-\vy+\eta \nabla f(\vy)\right\|^{2} \\
     \stackrel{(a)}{=}&\left\|\vx-\vy\right\|^2 + \eta^2\left\|\nabla f(\vx)-\nabla f(\vy)\right\|^2 -2\eta\left(\nabla f(\vx)-\nabla f(\vy)\right)^{\top}\left(\vx - \vy\right) \\
     \stackrel{(b)}{\leq}& \left\|\vx-\vy\right\|^2 + \eta^2\beta^2\left\|\vx-\vy\right\|^2 -2\eta\alpha\left\|\vx-\vy\right\|^2 \\
     \stackrel{(c)}{=}&\left(1 + \eta^2\beta^2 - 2\eta\alpha\right)\left\|\vx-\vy\right\|^2 \label{eq:inter-2}
\end{aligned}
\end{equation}
where $(b)$ comes from the smoothness of $f$ and \eqref{eq:strong-conv-contractive}. Since $\alpha>0$, by introducing $\eta\leq \alpha / \beta^2$ into \eqref{eq:inter-2}, we can complete our proof.
\end{proof}
\end{remark}

\begin{lemma}\label{le:bound-grad}
Let $f$ be $\beta$-smooth and $\vx^* = \argmin f(\vx)$, then for any input $\vx$ in the domain of $f$, the following holds
\begin{equation*}
\left\|\nabla f(\vx)\right\|^2 \leq 2\beta\left(f(\vx) - f(\vx^*)\right)
\end{equation*}

\end{lemma}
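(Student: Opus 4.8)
The plan is to derive this standard ``gradient domination'' inequality directly from the $\beta$-smoothness of $f$, without invoking convexity at all. First I would write down the quadratic upper bound that $\beta$-smoothness provides: for any two points $\vx,\vy$ in the domain,
\[
f(\vy) \leq f(\vx) + \nabla f(\vx)^{\top}(\vy - \vx) + \frac{\beta}{2}\left\|\vy - \vx\right\|^2 .
\]
The whole argument hinges on choosing $\vy$ cleverly. Rather than keeping $\vy$ arbitrary, I would substitute the one-step gradient-descent point $\vy = \vx - \frac{1}{\beta}\nabla f(\vx)$, which is precisely the minimizer of the right-hand side viewed as a quadratic in $\vy$.

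With this choice the linear and quadratic terms combine explicitly: the inner-product term contributes $-\frac{1}{\beta}\left\|\nabla f(\vx)\right\|^2$ and the quadratic term contributes $+\frac{1}{2\beta}\left\|\nabla f(\vx)\right\|^2$, so the bound collapses to
\[
f\!\left(\vx - \tfrac{1}{\beta}\nabla f(\vx)\right) \leq f(\vx) - \frac{1}{2\beta}\left\|\nabla f(\vx)\right\|^2 .
\]
The final step is to use global optimality of $\vx^*$, namely $f(\vx^*) \leq f(\vz)$ for every $\vz$, applied at $\vz = \vx - \frac{1}{\beta}\nabla f(\vx)$. Chaining this with the displayed inequality gives $f(\vx^*) \leq f(\vx) - \frac{1}{2\beta}\left\|\nabla f(\vx)\right\|^2$, and rearranging yields the claimed $\left\|\nabla f(\vx)\right\|^2 \leq 2\beta\left(f(\vx) - f(\vx^*)\right)$.

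I do not anticipate a genuine obstacle here, as the result is elementary once the test point is chosen correctly; the only point requiring care is recognizing that $\frac{1}{\beta}$ (and not some other step size) is exactly the value that minimizes the smoothness upper bound and thus extracts the sharpest $\frac{1}{2\beta}$ constant. It is worth noting that this step only uses $\vx^*$ being a \emph{global} minimizer over the whole domain, so no convexity assumption is needed, which is consistent with the lemma being stated for a merely $\beta$-smooth $f$.
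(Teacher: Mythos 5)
Your proposal is correct and follows essentially the same route as the paper: apply the $\beta$-smoothness quadratic upper bound at the test point $\vy = \vx - \frac{1}{\beta}\nabla f(\vx)$, simplify to $f(\vy) \leq f(\vx) - \frac{1}{2\beta}\left\|\nabla f(\vx)\right\|^2$, lower-bound $f(\vy)$ by $f(\vx^*)$ using global optimality, and rearrange. No differences worth noting.
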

\begin{proof}
Since $f$ is $\beta$-smooth, we have the following inequality for any $x,y$ in the domain of $f$
\begin{equation}
\begin{aligned}
    f(\vy) \leq f(\vx) + \nabla f(\vx)^{\top}(\vy - \vx) + \frac{\beta}{2}\left\|\vy - \vx\right\|^2 \ .
\end{aligned}
\end{equation}
By setting $\vy = \vx - \nabla f(\vx) / \beta$, we have
\begin{equation}
\begin{aligned}
    f(\vx^*) &\leq f(\vx - \frac{1}{\beta}\nabla f(\vx)) \\
    &\leq f(\vx) + \nabla f(\vx)^{\top}\left(\vx - \frac{1}{\beta}\nabla f(\vx) - \vx\right) + \frac{\beta}{2}\left\|\vx - \frac{1}{\beta}\nabla f(\vx) - \vx\right\|^2 \\
    &= f(\vx) - \frac{1}{2\beta} \left\|\nabla f(\vx)\right\|^2 \ .
\end{aligned}
\end{equation}
We finally conclude our proof by rearranging the inequality above.
\end{proof}

We then bound the drift between $\vx^{(i)}_{r,t}$ and $\vx_r$ for every iteration $t$ of any round $r$ as below, which is the key difference between the convergence of general federated ZOO and centralized optimization.
\begin{lemma}\label{le:drift}
Assume that $F$ is $\beta$-smooth.
Then the updated input $\vx^{(i)}_{r,t}$ at any iteration $t\geq1$ of round $r\geq1$ on client $i$ in Algo.~\ref{alg:fedzoo} has the following bounded drift with $\eta \leq \frac{1}{\beta T}$
\begin{equation*}
\begin{aligned}
    \left\|\vx^{(i)}_{r+1,t} - \vx_r\right\|^2 \leq 2\eta^2 T \sum_{\tau=1}^{t} S^{t-\tau}\Xi_{r+1,\tau}^{(i)} + 22\eta^2 T^2\left\|\nabla F(\vx_r)\right\|^2
\end{aligned}
\end{equation*}
where $S \triangleq (T+1)^2/(T(T-1))$.
\end{lemma}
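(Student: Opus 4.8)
The plan is to track the local update recursion and bound the distance $\|\vx^{(i)}_{r+1,t} - \vx_r\|$ by unrolling the update rule in line 5 of Algo.~\ref{alg:fedzoo}, which for round $r+1$ reads $\vx^{(i)}_{r+1,t} = \vx^{(i)}_{r+1,t-1} - \eta\,\widehat{\vg}^{(i)}_{r+1,t-1}$ with $\vx^{(i)}_{r+1,0} = \vx_r$. First I would write $\vx^{(i)}_{r+1,t} - \vx_r = -\eta \sum_{\tau=1}^{t} \widehat{\vg}^{(i)}_{r+1,\tau-1}$ and split each surrogate gradient as $\widehat{\vg}^{(i)}_{r+1,\tau-1} = \big(\widehat{\vg}^{(i)}_{r+1,\tau-1} - \nabla F(\vx^{(i)}_{r+1,\tau-1})\big) + \big(\nabla F(\vx^{(i)}_{r+1,\tau-1}) - \nabla F(\vx_r)\big) + \nabla F(\vx_r)$. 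The first bracket is exactly the quantity whose squared norm is $\Xi^{(i)}_{r+1,\tau}$; the second bracket is controlled by $\beta$-smoothness as $\|\nabla F(\vx^{(i)}_{r+1,\tau-1}) - \nabla F(\vx_r)\| \leq \beta \|\vx^{(i)}_{r+1,\tau-1} - \vx_r\|$, which is the previous drift term; and the third is the anchor $\nabla F(\vx_r)$.

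Next I would take squared norms and apply \eqref{eq:triangle-3} from Lemma~\ref{le:triangle} to the sum, paying for a factor of $t \leq T$. This yields a recursive inequality of the form
\begin{equation*}
\big\|\vx^{(i)}_{r+1,t} - \vx_r\big\|^2 \leq c_1 \eta^2 T \sum_{\tau=1}^{t} \Xi^{(i)}_{r+1,\tau} + c_2 \eta^2 T \beta^2 \sum_{\tau=1}^{t} \big\|\vx^{(i)}_{r+1,\tau-1} - \vx_r\big\|^2 + c_3 \eta^2 T^2 \big\|\nabla F(\vx_r)\big\|^2
\end{equation*}
for explicit constants. The key move is then to absorb the middle self-referential sum using $\eta \leq \frac{1}{\beta T}$, so that $\eta^2 T \beta^2 \leq \frac{1}{T}$ and the coefficient on the drift history becomes small; a discrete Gr\"onwall / induction argument on $t$ then converts this into a closed form where the accumulated $\Xi$ terms are weighted geometrically by powers $S^{t-\tau}$ with $S = (T+1)^2/(T(T-1))$, and the $\|\nabla F(\vx_r)\|^2$ contribution aggregates to the stated $22\eta^2 T^2$ coefficient.

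The main obstacle will be handling the self-referential drift sum cleanly: because $\|\vx^{(i)}_{r+1,\tau-1} - \vx_r\|^2$ appears on both sides, a naive bound blows up, and the correct treatment is to set up an induction (or solve the associated linear recursion explicitly) and verify that the geometric factor $S^{t-\tau}$ emerges with exactly the claimed base. Tracking the numerical constants so that they collapse to $2$ and $22$ under the stepsize restriction $\eta \leq \frac{1}{\beta T}$ is the delicate bookkeeping step; I would choose the splitting weights in \eqref{eq:triangle-3} and the induction hypothesis to make $S = (T+1)^2/(T(T-1))$ appear naturally, then confirm the base case $t=1$ and close the induction.
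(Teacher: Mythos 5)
Your strategy is sound at the level of ideas — recursive drift control via smoothness, the stepsize condition $\eta\le 1/(\beta T)$ to tame the self-reference, and a geometric aggregation of the disparity terms — but it diverges from the paper's proof in a way that matters for the stated constants. The paper does not unroll the full sum and invoke Gr\"onwall; it runs a \emph{one-step} recursion: it writes $\vx^{(i)}_{r+1,t}-\vx_r$ as $\bigl(\vx^{(i)}_{r+1,t-1}-\eta\nabla F(\vx^{(i)}_{r+1,t-1})\bigr)-\bigl(\vx_r-\eta\nabla F(\vx_r)\bigr)$ plus the error $\eta\bigl(\nabla F(\vx^{(i)}_{r+1,t-1})-\widehat{\vg}^{(i)}_{r+1,t-1}-\nabla F(\vx_r)\bigr)$, applies the weighted inequality \eqref{eq:triangle-2} with $a=1/(T-1)$, and controls the first block with the expansion bound of Lemma~\ref{le:contractive}, $\|\vx-\eta\nabla f(\vx)-\vy+\eta\nabla f(\vy)\|^2\le(1+\eta\beta)^2\|\vx-\vy\|^2$. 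This is the step that makes the base $S=\tfrac{T}{T-1}(1+\eta\beta)^2\le\tfrac{(T+1)^2}{T(T-1)}$ appear exactly, and because the error term is then only a \emph{two}-way split ($\Xi$ plus the anchor), the coefficient $2\eta^2T$ falls out cleanly; the $22\eta^2T^2$ comes from $\sum_{\tau=0}^{t-1}S^\tau\le 11T$ as in \eqref{eq:inter-3}. Your three-way decomposition of $\widehat{\vg}^{(i)}_{r+1,\tau-1}$ followed by \eqref{eq:triangle-3} necessarily puts a factor $3$ (not $2$) in front of the disparity sum, and the Gr\"onwall route replaces the per-step factor $S^{t-\tau}$ by flat weights inflated by $(1+3\eta^2T\beta^2)^{t}\le e^{3}$, so you would end up with a bound of the same form and order but with constants roughly $3e^3$ rather than $2$ and $22$, and with a different (implicit) geometric base. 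That is fine for the downstream $\gO(\cdot)$ convergence statements, but it does not literally establish the inequality as written; if you want the stated constants, the fix is precisely to keep the smoothness difference paired with the position difference inside a contraction/expansion lemma rather than peeling it off as a separate summand.
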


\begin{proof}
Since $\vx^{(i)}_{r+1,t} = \vx^{(i)}_{r+1,t-1} - \eta \widehat{\vg}^{(i)}_{r+1,t-1}$, we have the following inequalities when $T>1$
\begin{equation}
\begin{aligned}
    &\left\|\vx^{(i)}_{r+1,t} - \vx_r\right\|^2 \\
    \stackrel{(a)}{=}& \left\|\vx^{(i)}_{r+1,t-1} - \eta \widehat{\vg}^{(i)}_{r+1,t-1} - \vx_r\right\|^2 \\
    \stackrel{(b)}{=}&\left\|\vx^{(i)}_{r+1,t-1} - \eta\nabla F(\vx^{(i)}_{r+1,t-1}) + \eta\nabla F(\vx_r) - \vx_r + \eta\left(\nabla F(\vx^{(i)}_{r+1,t-1}) - \widehat{\vg}^{(i)}_{r+1,t-1} - \nabla F(\vx_r)\right)\right\|^2 \\
    \stackrel{(c)}{\leq}& \frac{T}{T-1}\left\|\vx^{(i)}_{r+1,t-1} - \eta\nabla F(\vx^{(i)}_{r+1,t-1}) + \eta\nabla F(\vx_r) - \vx_r\right\|^2 \\
    &\qquad\qquad + \eta^2 T\left\|\nabla F(\vx^{(i)}_{r+1,t-1}) - \widehat{\vg}^{(i)}_{r+1,t-1} - \nabla F(\vx_r)\right\|^2 \\
    \stackrel{(d)}{\leq}& \frac{T}{T-1}\left\|\vx^{(i)}_{r+1,t-1} - \eta\nabla F(\vx^{(i)}_{r+1,t-1}) + \eta\nabla F(\vx_r) - \vx_r\right\|^2 \\
    &\qquad\qquad + 2\eta^2 T\left[\left\|\nabla F(\vx^{(i)}_{r+1,t-1}) - \widehat{\vg}^{(i)}_{r+1,t-1}\right\|^2 + \left\|\nabla F(\vx_r)\right\|^2\right] \label{eq:inter-1}
\end{aligned}
\end{equation}
where $(c)$ and $(d)$ come from the \eqref{eq:triangle-2} in Lemma~\ref{le:triangle} by setting $a=1/(T-1)$ and $a=1$, respectively. Since $F$ is $\beta$-smooth, we can introduce Lemma~\ref{le:contractive} into \eqref{eq:inter-1} to obtain the following result given the constant $S \triangleq (T+1)^2/(T(T-1))$
\begin{equation}
\begin{aligned}
    &\left\|\vx^{(i)}_{r+1,t} - \vx_r\right\|^2 \\
    \stackrel{(a)}{\leq}& \frac{T(1+\eta\beta)^2}{T-1} \left\|\vx^{(i)}_{r+1,t-1} - \vx_r\right\|^2 + 2\eta^2 T\left[\left\|\nabla F(\vx^{(i)}_{r+1,t-1}) - \widehat{\vg}^{(i)}_{r+1,t-1}\right\|^2 + \left\|\nabla F(\vx_r)\right\|^2\right] \\
    \stackrel{(b)}{=}& 2\eta^2 T \sum_{\tau=0}^{t-1} \left(\frac{T(1+\eta\beta)^2}{T-1}\right)^{t-\tau-1} \left\|\nabla F(\vx^{(i)}_{r+1,\tau}) - \widehat{\vg}^{(i)}_{r+1,\tau}\right\|^2 + 2\eta^2 T \left\|\nabla F(\vx_r)\right\|^2 \sum_{\tau=0}^{t-1} \left(\frac{(1+\eta\beta)^2T}{T-1}\right)^{\tau} \\
    \stackrel{(c)}{\leq}& 2\eta^2 T \sum_{\tau=0}^{t-1} \left(\frac{(T+1)^2}{T(T-1)}\right)^{t-\tau-1} \left\|\nabla F(\vx^{(i)}_{r+1,\tau}) - \widehat{\vg}^{(i)}_{r+1,\tau}\right\|^2 + 2\eta^2 T \left\|\nabla F(\vx_r)\right\|^2 \sum_{\tau=0}^{t-1} \left(\frac{(T+1)^2}{T(T-1)}\right)^{\tau} \\
    \stackrel{(d)}{\leq}& 2\eta^2 T \sum_{\tau=0}^{t-1} S^{t-\tau-1}\left\|\nabla F(\vx^{(i)}_{r+1,\tau}) - \widehat{\vg}^{(i)}_{r+1,\tau}\right\|^2 + 22\eta^2 T^2\left\|\nabla F(\vx_r)\right\|^2  \\
    \stackrel{(e)}{=}& 2\eta^2 T \sum_{\tau=1}^{t} S^{t-\tau}\Xi_{r+1,\tau}^{(i)} + 22\eta^2 T^2\left\|\nabla F(\vx_r)\right\|^2 
\end{aligned}
\end{equation}
where $(b)$ comes from the summation of geometric series and $(c)$ is from the fact that $\eta \leq 1/(\beta T)$. In addition, $(d)$ results from the definition of $S$
as well as the following results
\begin{equation}
\begin{aligned}
    \sum_{\tau=0}^{t-1} \left(\frac{(T+1)^2}{T(T-1)}\right)^{\tau} &\leq \sum_{\tau=0}^{T-1} \left(\frac{(T+1)^2}{T(T-1)}\right)^{\tau} \\
    &= \frac{\left((T+1)^2/[T(T-1)]\right)^T - 1}{(T+1)^2/[T(T-1)] - 1} \\
    &= \frac{T(T-1)}{3T+1}\left(\left(1 + \frac{3T+1}{T(T-1)}\right)^T - 1\right) \\
    &< \frac{T(T-1)}{3T+1}\left(\exp\left(\frac{3T+1}{T}\right) - 1\right) \\
    &< \frac{T}{3}\left(\exp\left(\frac{7}{2}\right) - 1\right) \\[7pt]
    &< 11T \ . \label{eq:inter-3}
\end{aligned}
\end{equation}
Finally, $(e)$ results from the definition of $\Xi^{(i)}_{r+1,t} \triangleq \left\|\widehat{\vg}_{r+1,t-1} - \nabla F(\vx_{r+1,t-1}^{(i)})\right\|^2$ in our Sec.~\ref{sec:challenges}.

\end{proof}

We finally present the convergence of Algo.~\ref{alg:fedzoo} in the following theorem for the general federated ZOO framework, which then can be easily applied to prove the convergence of our \alg{} in Appx.~\ref{app-sec:proof:conv-fzoos} and the convergence of existing federated ZOO algorithms in Appx.~\ref{app-sec:existing}.
\begin{theorem}\label{th:conv-general}
Define $\Xi^{(i)}_{r,t} \triangleq \sum_{t=1}^T \left\|\widehat{\vg}^{(i)}_{r,t-1} - \nabla F(\vx^{(i)}_{r,t-1})\right\|^2$, $S \triangleq (T+1)^2/(T(T-1))$, and $\vx^* \triangleq \argmin F(\vx)$. Algo.~\ref{alg:fedzoo} then has the following convergence when $F$ is under different assumptions:

\begin{enumerate}[\hspace{0pt}\normalfont(i)]
\item When $F$ is $\beta$-smooth and $\alpha$-strongly convex, by defining $p_r \triangleq \frac{(1 - \alpha \eta T / 4)^{R-r}}{\sum_{r=0}^R \left(1-\alpha\eta T / 4\right)^{R-r}}$ and choosing a constant learning rate $\eta \leq \frac{1}{10\beta T}$,
\begin{equation*}
\begin{aligned}
    \min_{r \in [R+1)} F(\vx_r) - F(\vx^*) &\leq 2 \alpha \exp\left(-\frac{\alpha\eta TR}{4}\right)\left\|\vx_0 - \vx^*\right\|^2 \\
    &\qquad + \sum_{r=0}^{R}\sum_{i=1}^N\sum_{t=1}^T p_r\left(\frac{\eta}{NT}\sum_{\tau=1}^{t} S^{t-\tau}\Xi_{r+1,\tau}^{(i)} + \frac{8(\eta T + 1/\alpha)}{\alpha NT}\Xi_{r+1,t}^{(i)}\right) \ .
\end{aligned}
\end{equation*}
\item When $F$ is $\beta$-smooth and convex, by choosing a constant learning rate $\eta \leq \frac{1}{10\beta T}$, 
\begin{equation*}
\begin{aligned}
    \min_{r \in [R+1)} F(\vx_r) - F(\vx^*) &\leq \frac{2\left\|\vx_0 - \vx^*\right\|^2}{\eta RT} + \frac{1}{R}\sum_{r=0}^{R}\sum_{i=1}^N\sum_{t=1}^T \left(\frac{\eta}{NT}\sum_{\tau=1}^{t} S^{t-\tau}\Xi_{r+1,\tau}^{(i)}\right. \\
    &\qquad \left. + \frac{8\eta}{N}\Xi_{r+1,t}^{(i)} + \frac{4\sqrt{d}}{NT}\sqrt{\Xi_{r+1,t}^{(i)}}\right) \ .
\end{aligned}
\end{equation*}
\item When $F$ is only $\beta$-smooth, by choosing a constant learning rate $\eta \leq \frac{7}{100\beta T}$, 
\begin{equation*}
\begin{aligned}
    \min_{r \in [R+1)} \left\|\nabla F(\vx_r)\right\|^2 &\leq \frac{13(F(\vx_0) - F(\vx^*))}{\eta RT} + \frac{13}{\eta RT}\sum_{r=0}^R\sum_{i=1}^N\sum_{t=1}^T \left(\frac{\left(0.14 \eta + 1/(2\beta T)\right)}{N}\Xi_{r+1,t}^{(i)} \right. \\
    &\qquad \left. + \frac{1.02\eta^2 \beta}{N}\sum_{\tau=1}^{t} S^{t-\tau}\Xi_{r+1,\tau}^{(i)} \right) \ .
\end{aligned}
\end{equation*}
\end{enumerate}

\end{theorem}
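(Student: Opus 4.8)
The plan is to treat each communication round as one inexact multi-step descent step on $F$ and then unroll the resulting per-round recursion separately under the three convexity assumptions. Writing the server aggregate in telescoped form $\vx_{r+1} = \vx_r - \frac{\eta}{N}\sum_{i=1}^N\sum_{t=1}^T \widehat{\vg}^{(i)}_{r+1,t-1}$, I would decompose every local update as $\widehat{\vg}^{(i)}_{r+1,t-1} = \nabla F(\vx^{(i)}_{r+1,t-1}) + \big(\widehat{\vg}^{(i)}_{r+1,t-1} - \nabla F(\vx^{(i)}_{r+1,t-1})\big)$, so that the first term produces genuine descent on $F$ and the second term is exactly the disparity whose squared norm contributes to $\Xi^{(i)}_{r+1,t}$.

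For the strongly convex and convex cases I would work with the potential $\|\vx_{r+1}-\vx^*\|^2$, expand the square, and bound the linear term $-\frac{2\eta}{N}\sum_{i,t}\langle \nabla F(\vx^{(i)}_{r+1,t-1}),\vx_r-\vx^*\rangle$ using Lemma~\ref{le:smooth&convex} with the assignment $\vx=\vx^{(i)}_{r+1,t-1}$, $\vy=\vx^*$, $\vz=\vx_r$. This yields the descent contribution $F(\vx^*)-F(\vx_r)$, the strong-convexity contraction $-\frac{\alpha}{4}\|\vx_r-\vx^*\|^2$, and a drift penalty $\beta\|\vx_r-\vx^{(i)}_{r+1,t-1}\|^2$ that I control with the drift estimate of Lemma~\ref{le:drift}. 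The quadratic term $\eta^2\|\frac{1}{N}\sum_{i,t}\widehat{\vg}^{(i)}_{r+1,t-1}\|^2$ I expand with the inequalities of Lemma~\ref{le:triangle}, converting its true-gradient part into $F(\vx_r)-F(\vx^*)$ via Lemma~\ref{le:bound-grad}. The condition $\eta\le\frac{1}{10\beta T}$ is precisely what lets the accumulated drift and squared-gradient contributions be absorbed into the descent term, leaving a recursion of the form $\|\vx_{r+1}-\vx^*\|^2 \le (1-\frac{\alpha\eta T}{4})\|\vx_r-\vx^*\|^2 - c\,\eta T\,(F(\vx_r)-F(\vx^*)) + (\text{disparity})$.

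The disparity cross term $\langle\widehat{\vg}^{(i)}_{r+1,t-1}-\nabla F(\vx^{(i)}_{r+1,t-1}),\vx_r-\vx^*\rangle$ is where the two convex cases diverge. Under strong convexity I apply Young's inequality and absorb this inner product against the $\alpha\|\vx_r-\vx^*\|^2$ term, which is what generates the coefficient $\frac{8(\eta T+1/\alpha)}{\alpha NT}$; with no quadratic term available, the merely convex case instead bounds the inner product directly by the domain diameter $\|\vx_r-\vx^*\|\le\sqrt{d}$ (since $\gX=[0,1]^d$), which is the origin of the extra $\frac{4\sqrt{d}}{NT}\sqrt{\Xi^{(i)}_{r+1,t}}$ term. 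Unrolling then gives the geometric factor $\exp(-\frac{\alpha\eta TR}{4})$ together with the weights $p_r$ in the strongly convex case, and a telescoped $\frac{1}{\eta RT}$ rate in the convex case, after taking $\min_{r}F(\vx_r)-F(\vx^*)$ to dominate the averaged iterate. For the non-convex case I would instead start from the smoothness descent inequality $F(\vx_{r+1})\le F(\vx_r)+\langle\nabla F(\vx_r),\vx_{r+1}-\vx_r\rangle+\frac{\beta}{2}\|\vx_{r+1}-\vx_r\|^2$, substitute the aggregate update, split off the disparity, apply Lemma~\ref{le:drift} to the second-order term, and telescope over $r$ to isolate $\min_r\|\nabla F(\vx_r)\|^2$; the stricter threshold $\eta\le\frac{7}{100\beta T}$ emerges from forcing the $\frac{\beta}{2}\|\vx_{r+1}-\vx_r\|^2$ drift to be dominated by the first-order gain.

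The main obstacle will be the constant bookkeeping: tracking how the nested drift sums $\sum_{\tau}S^{t-\tau}\Xi^{(i)}_{r+1,\tau}$ from Lemma~\ref{le:drift} interleave with the per-iteration disparity terms, and verifying that the stated learning-rate thresholds are exactly those that render every drift and squared-gradient contribution absorbable into the descent, so that the coefficients in the three displayed bounds come out as claimed.
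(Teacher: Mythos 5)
Your proposal follows essentially the same route as the paper's proof: expanding $\|\vx_{r+1}-\vx^*\|^2$ around the telescoped server update, invoking Lemma~\ref{le:smooth&convex} with exactly the assignment $\vy=\vx^*$, $\vz=\vx_r$, controlling the local drift via Lemma~\ref{le:drift}, converting $\|\nabla F(\vx_r)\|^2$ back to $F(\vx_r)-F(\vx^*)$ with Lemma~\ref{le:bound-grad}, handling the disparity cross term by Young's inequality against the $\alpha\|\vx_r-\vx^*\|^2$ term in the strongly convex case versus the domain diameter $\sqrt{d}$ in the merely convex case, and starting from the smoothness descent inequality for the non-convex case. This matches the paper's argument in structure and in every key step, so only the constant bookkeeping you flagged remains to be carried out.
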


\begin{proof}
Recall that the global update on server in Algo.~\ref{alg:fedzoo} is given as
\begin{equation}
\begin{aligned}
    \vx_{r+1} &= \frac{1}{N} \sum_{i=1}^N \vx_{r+1}^{(i)} = \frac{1}{N} \sum_{i=1}^N \left(\vx_{r}^{(i)} - \eta \sum_{t=1}^T \widehat{\vg}^{(i)}_{r+1,t-1}\right) = \vx_{r} - \frac{\eta}{N} \sum_{i=1}^N \sum_{t=1}^T \widehat{\vg}^{(i)}_{r+1,t-1} \ . \label{eq:temp-vbbv}
\end{aligned}
\end{equation}
Therefore, we have
\begin{equation}
\begin{aligned}
    \left\|\vx_{r+1} - \vx^* \right\|^2 &= \left\|\vx_{r} -  \frac{\eta}{N} \sum_{i=1}^N \sum_{t=1}^T \widehat{\vg}^{(i)}_{r+1,t-1} - \vx^*\right\|^2 \\
    &= \left\|\vx_{r} - \vx^*\right\|^2 \underbrace{- 2\left(\vx_r - \vx^*\right)^{\top} \frac{\eta}{N}\sum_{i=1}^{N} \sum_{t=1}^T \widehat{\vg}^{(i)}_{r+1,t-1}}_{\circled{1}} + \underbrace{\left\|\frac{\eta}{N} \sum_{i=1}^{N} \sum_{t=1}^T \widehat{\vg}^{(i)}_{r+1,t-1}\right\|^2}_{\circled{2}}  \ . \label{eq:temp-vgjern}
\end{aligned}
\end{equation}
We then bound $\circled{1}$ and  $\circled{2}$ based on the different assumptions on $F$ separately. 

\paragraph{Strongly Convex $F$.} Since $F$ is $\beta$-smooth and $\alpha$-strongly convex, we have
\begin{equation}
\begin{aligned}
    \circled{1} &\stackrel{(a)}{=} 2\left(\vx^* - \vx_r\right)^{\top} \frac{\eta}{N}\sum_{i=1}^{N} \sum_{t=1}^T \left(\widehat{\vg}^{(i)}_{r+1,t-1} - \nabla F(\vx^{(i)}_{r+1,t-1}) \right) + 2\left(\vx^* - \vx_r\right)^{\top}\frac{\eta}{N}\sum_{i=1}^{N}\sum_{t=1}^T \nabla F(\vx^{(i)}_{r+1,t-1}) \\
    &\stackrel{(b)}{\leq} 2\left\|\vx^* - \vx_r\right\| \frac{\eta}{N}\sum_{i=1}^{N}\sum_{t=1}^T \left\|\widehat{\vg}^{(i)}_{r+1,t-1} - \nabla F(\vx^{(i)}_{r+1,t-1}) \right\| \\
    &\qquad\qquad + \frac{2\eta}{N} \sum_{i=1}^{N}\sum_{t=1}^T \left[F(\vx^*) - F(\vx_r) - \frac{\alpha}{4}\left\|\vx_r - \vx^*\right\|^2 + \beta \left\|\vx^{(i)}_{r,t-1} - \vx_r\right\|^2\right] \\
    &\stackrel{(c)}{\leq} \frac{2\eta}{N}\left\|\vx^* - \vx_r\right\| \sum_{i=1}^{N}\sum_{t=1}^T \sqrt{\Xi_{r+1,t}^{(i)}} + 2\eta T \big[F(\vx^*) - F(\vx_r)\big] - \frac{\alpha\eta T}{2}\left\|\vx_r - \vx^*\right\|^2 \\
    &\qquad\qquad + \frac{4\eta^3T\beta}{N}\sum_{i=1}^N\sum_{t=1}^T \sum_{\tau=1}^{t} S^{t-\tau}\Xi_{r+1,\tau}^{(i)} + 44\eta^3T^3\beta\left\|\nabla F(\vx_r)\right\|^2 \\
    &\stackrel{(d)}{\leq} - \frac{\alpha\eta T}{4} \left\|\vx^* - \vx_r\right\|^2 + 2\eta T \big[F(\vx^*) - F(\vx_r)\big] + 44\eta^3T^3\beta \left\|\nabla F(\vx_r)\right\|^2 + \\
    &\qquad\qquad \sum_{i=1}^N\sum_{t=1}^T \left(\frac{4\eta^3 T\beta}{N}\sum_{\tau=1}^{t} S^{t-\tau}\Xi_{r+1,\tau}^{(i)} + \frac{4\eta}{\alpha N}\Xi_{r+1,t}^{(i)}\right) \ . \label{eq:inter-6}
\end{aligned}
\end{equation}
where $(b)$ is from Lemma~\ref{le:smooth&convex} by setting $\vy=\vx^*$, $\vz=\vx_r$ and $\vx=\vx_{r,t-1}^{(i)}$ in Lemma~\ref{le:smooth&convex}. In addition, $(c)$ comes from the definition of $\Xi^{(i)}_{r+1,t} \triangleq \left\|\widehat{\vg}_{r+1,t-1} - \nabla F(\vx_{r+1,t-1}^{(i)})\right\|^2$ in our Sec.~\ref{sec:challenges} and Lemma~\ref{le:drift}. Finally, $(d)$ comes from the following results
\begin{equation}
\begin{aligned}
    \frac{2\eta}{N}\left\|\vx^* - \vx_r\right\| \sum_{i=1}^{N}\sum_{t=1}^T \sqrt{\Xi_{r+1,t}^{(i)}} &= \frac{2\eta}{N} \sum_{i=1}^{N} \sum_{t=1}^T \left\|\vx^* - \vx_r\right\|\sqrt{\Xi_{r+1,t}^{(i)}} \\
    &\leq \frac{\eta}{N}\sum_{i=1}^{N} \sum_{t=1}^T \left(\frac{\alpha}{4}\left\|\vx^* - \vx_r\right\|^2 + \frac{4}{\alpha}\Xi_{r+1,t}^{(i)}\right) \\
    &= \frac{\alpha\eta T}{4} \left\|\vx^* - \vx_r\right\|^2 + \frac{4\eta}{\alpha N} \sum_{i=1}^{N}\sum_{t=1}^T \Xi_{r+1,t}^{(i)} \ . \label{eq:temp-cjwnc}
\end{aligned}
\end{equation}

We then bound term $\circled{2}$ in \eqref{eq:temp-vgjern} as below
\begin{equation}
\begin{aligned}
    \circled{2} &\stackrel{(a)}{=} \left\|\frac{\eta}{N} \sum_{i=1}^{N} \sum_{t=1}^T \widehat{\vg}^{(i)}_{r+1,t-1}\right\|^2 \\
    &\stackrel{(b)}{=}\left\|\frac{\eta}{N} \sum_{i=1}^{N} \sum_{t=1}^T \left(\widehat{\vg}^{(i)}_{r+1,t-1} - \nabla F(\vx^{(i)}_{r+1,t-1}) + \nabla F(\vx^{(i)}_{r+1,t-1}) - \nabla F(\vx_r)\right) + \eta T \nabla F(\vx_r)\right\|^2 \\
    &\stackrel{(c)}{\leq} \frac{2\eta^2 T}{N} \sum_{i=1}^{N} \sum_{t=1}^T \left(2\left\|\widehat{\vg}^{(i)}_{r+1,t-1} - \nabla F(\vx^{(i)}_{r+1,t-1})\right\|^2 + 2\left\|\nabla F(\vx^{(i)}_{r+1,t-1}) - \nabla F(\vx_r)\right\|^2\right) + \\
    &\qquad\qquad 2\eta^2T^2 \left\|\nabla F(\vx_r)\right\|^2 \\
    &\stackrel{(d)}{\leq} \frac{4\eta^2 T}{N} \sum_{i=1}^{N}\sum_{t=1}^T \Xi_{r+1,t}^{(i)} + \frac{4\eta^2 T\beta^2}{N} \sum_{i=1}^{N} \sum_{t=1}^T \left\|\vx^{(i)}_{r+1,t-1} - \vx_r\right\|^2 + 2\eta^2T^2 \left\|\nabla F(\vx_r)\right\|^2 \\
    &\stackrel{(e)}{\leq} \sum_{i=1}^N\sum_{t=1}^T \left(\frac{8\eta^4 T^2 \beta^2}{N}\sum_{\tau=1}^{t} S^{t-\tau}\Xi_{r+1,\tau}^{(i)} + \frac{4\eta^2 T}{N}\Xi_{r+1,t}^{(i)}\right) + \left(88\eta^4T^4\beta^2 + 2\eta^2 T^2\right) \left\|\nabla F(\vx_r)\right\|^2 \label{eq:temp-sivbnw}
\end{aligned}
\end{equation}
where $(c)$ is obtained by applying Lemma~\ref{le:triangle} multiple times and $(d)$ is from the smoothness of $F$. Besides, $(e)$ comes from our Lemma~\ref{le:drift} and the fact that $\eta \leq 1/(\beta T)$.

By combining \eqref{eq:inter-6} and \eqref{eq:temp-sivbnw}, we have
\begin{equation}
\begin{aligned}
    &\left\|\vx_{R+1} - \vx^* \right\|^2  \\
    \stackrel{(a)}{\leq}& \left(1-\frac{\alpha\eta T}{4}\right)\left\|\vx_R - \vx^*\right\|^2 + 2\eta T\big[F(\vx^*) - F(\vx_R)\big] \\
    &\qquad + 2\eta^2 T^2\left(44\eta^2T^2\beta^2 + 22\eta T \beta + 1\right)\left\|\nabla F(\vx_R)\right\|^2 \\
    &\qquad\qquad + \sum_{i=1}^N\sum_{t=1}^T \left(\frac{4\eta^3 T\beta(2\eta T \beta + 1)}{N}\sum_{\tau=1}^{t} S^{t-\tau}\Xi_{R+1,\tau}^{(i)} + \frac{4\eta(\eta T + 1/\alpha)}{\alpha N}\Xi_{R+1,t}^{(i)}\right) \\
    \stackrel{(b)}{\leq}&  \left(1-\frac{\alpha\eta T}{4}\right)\left\|\vx_R - \vx^*\right\|^2 + 2\eta T\left(1 - 2\eta T\beta \left(44\eta^2T^2\beta^2 + 22\eta T \beta + 1\right)\right)\big[F(\vx^*) - F(\vx_R)\big] \\
    &\qquad + \sum_{i=1}^N\sum_{t=1}^T \left(\frac{4\eta^3 T\beta(2\eta T \beta + 1)}{N}\sum_{\tau=1}^{t} S^{t-\tau}\Xi_{r+1,\tau}^{(i)} + \frac{4\eta(\eta T + 1/\alpha)}{\alpha N}\Xi_{r+1,t}^{(i)}\right) \\
    \stackrel{(c)}{=}& \left(1-\frac{\alpha\eta T}{4}\right)^{R+1}\left\|\vx_0 - \vx^*\right\|^2 + \sum_{r=0}^{R}\left(1-\frac{\alpha\eta T}{4}\right)^{R-r} H \big[F(\vx^*) - F(\vx_r)\big]  \\
    & \qquad + \sum_{r=0}^{R}\left(1-\frac{\alpha\eta T}{4}\right)^{R-r} \sum_{i=1}^N\sum_{t=1}^T \left(\frac{4\eta^3 T\beta(2\eta T \beta + 1)}{N}\sum_{\tau=1}^{t} S^{t-\tau}\Xi_{r+1,\tau}^{(i)} + \frac{4\eta(\eta T + 1/\alpha)}{\alpha N}\Xi_{r+1,t}^{(i)}\right) \label{eq:temp-dtwj}
\end{aligned}
\end{equation}
where $(b)$ is from Lemma~\ref{le:bound-grad} and $(c)$ is from $H \triangleq 2\eta T\left(1 - 2\eta T\beta \left(44\eta^2T^2\beta^2 + 22\eta T \beta + 1\right)\right)$ as well as the repeated application of $(b)$.

Define $p_r \triangleq \frac{(1 - \alpha \eta T / 4)^{R-r}}{\sum_{r=0}^R \left(1-\alpha\eta T / 4\right)^{R-r}}$. Note that when choose the learning rate $\eta$ that satisfies $\eta \leq \frac{1}{10\beta T}$,  we have $H \geq 0.544\,\eta T$. Based on this and $\left\|\vx_{R+1} - \vx^* \right\|^2 \geq 0$ for \eqref{eq:temp-dtwj}, we further have
\begin{equation}
\begin{aligned}
    \min_{r \in [R+1)} F(\vx_r) - F(\vx^*) &\stackrel{(a)}{\leq} \sum_{r=0}^R p_r \big[F(\vx_r) - F(\vx^*)\big] \\
    &\stackrel{(b)}{\leq} \frac{\left(1-\alpha\eta T/4\right)^{R+1}\left\|\vx_0 - \vx^*\right\|^2}{H \sum_{r=0}^R \left(1-\alpha\eta T / 4\right)^r} \\
    &\qquad + \frac{1}{H}\sum_{r=0}^{R}\sum_{i=1}^N\sum_{t=1}^T p_r\left(\frac{\eta^2}{2N}\sum_{\tau=1}^{t} S^{t-\tau}\Xi_{r+1,\tau}^{(i)} + \frac{4\eta(\eta T + 1/\alpha)}{\alpha N}\Xi_{r+1,t}^{(i)}\right) \\
    &\stackrel{(c)}{\leq} \frac{\alpha \eta T}{H}\exp\left(-\frac{\alpha\eta TR}{4}\right)\left\|\vx_0 - \vx^*\right\|^2 \\
    &\qquad + \frac{1}{H}\sum_{r=0}^{R}\sum_{i=1}^N\sum_{t=1}^T p_r\left(\frac{\eta^2}{2N}\sum_{\tau=1}^{t} S^{t-\tau}\Xi_{r+1,\tau}^{(i)} + \frac{4\eta(\eta T + 1/\alpha)}{\alpha N}\Xi_{r+1,t}^{(i)}\right) \\
    &\stackrel{(d)}{\leq} 2 \alpha \exp\left(-\frac{\alpha\eta TR}{4}\right)\left\|\vx_0 - \vx^*\right\|^2 \\
    &\qquad + \sum_{r=0}^{R}\sum_{i=1}^N\sum_{t=1}^T p_r\left(\frac{\eta}{NT}\sum_{\tau=1}^{t} S^{t-\tau}\Xi_{r+1,\tau}^{(i)} + \frac{8(\eta T + 1/\alpha)}{\alpha NT}\Xi_{r+1,t}^{(i)}\right) \label{eq:temp-cwbuw}
\end{aligned}
\end{equation}
where $(b)$ is from the rearrangement of \eqref{eq:temp-dtwj} and the fact that $\eta \leq \frac{1}{10\beta T}$. Besides, $(c)$ comes from the inequality $1-x \leq \exp(-x)$ as well as the following results when $R+1 \geq 4\ln(3/4)/(\alpha\eta T)$
\begin{equation}
\begin{aligned}
    \sum_{r=0}^R \left(1-\frac{\alpha\eta T}{4}\right)^r &= \frac{1 - \left(1 - \alpha\eta T / 4\right)^{R+1}}{1 - \left(1 - \alpha\eta T / 4\right)} \\
    &\geq \frac{4\left[1 - \exp(-\alpha\eta T(R+1)/4)\right]}{\alpha\eta T} \\
    &\geq \frac{1}{\alpha\eta T} \ . \label{eq:temp-cvjen}
\end{aligned}
\end{equation}
Finally, $(d)$ is due to the fact that $H \geq 0.544\,\eta T$.

\paragraph{Convex $F$.}
When $\alpha=0$, following the derivation in \eqref{eq:inter-6}, we have
\begin{equation}
\begin{aligned}
    \circled{1} &\stackrel{(a)}{\leq} \frac{2\eta}{N}\left\|\vx^* - \vx_r\right\| \sum_{i=1}^{N}\sum_{t=1}^T \sqrt{\Xi_{r+1,t}^{(i)}} + 2\eta T \big[F(\vx^*) - F(\vx_r)\big] + 44\eta^3T^3\beta\left\|\nabla F(\vx_r)\right\|^2 \\
    &\qquad\qquad + \frac{4\eta^3T\beta}{N}\sum_{i=1}^N\sum_{t=1}^T \sum_{\tau=1}^{t} S^{t-\tau}\Xi_{r+1,\tau}^{(i)} \\
    &\stackrel{(b)}{\leq} \frac{2\eta\sqrt{d}}{N}\sum_{i=1}^{N}\sum_{t=1}^T \sqrt{\Xi_{r+1,t}^{(i)}} + 2\eta T \big[F(\vx^*) - F(\vx_r)\big] + 44\eta^3T^3\beta\left\|\nabla F(\vx_r)\right\|^2 \\
    &\qquad\qquad + \frac{4\eta^3T\beta}{N}\sum_{i=1}^N\sum_{t=1}^T \sum_{\tau=1}^{t} S^{t-\tau}\Xi_{r+1,\tau}^{(i)} \\
    &\stackrel{(c)}{=} 2\eta T \big[F(\vx^*) - F(\vx_r)\big] + 44\eta^3T^3\beta\left\|\nabla F(\vx_r)\right\|^2 \\
    &\qquad\qquad + \sum_{i=1}^N\sum_{t=1}^T \left(\frac{4\eta^3T\beta}{N}\sum_{\tau=1}^{t} S^{t-\tau}\Xi_{r+1,\tau}^{(i)} + \frac{2\eta\sqrt{d}}{N}\sqrt{\Xi_{r+1,t}^{(i)}}\right)  \label{eq:inter-7}
\end{aligned}
\end{equation}
where the $(b)$ comes from the diameter of $\gX$, i.e., $\left\|\vx-\vx'\right\| \leq \sqrt{d}$ for any $\vx,\vx' \in \gX = [0,1]^d$.

For term $\circled{2}$ in \eqref{eq:temp-vgjern}, similar to \eqref{eq:temp-sivbnw}, we also have
\begin{equation}
\begin{aligned}
    \circled{2} \leq  \sum_{i=1}^N\sum_{t=1}^T \left(\frac{8\eta^4 T^2 \beta^2}{N}\sum_{\tau=1}^{t} S^{t-\tau}\Xi_{r+1,\tau}^{(i)} + \frac{4\eta^2 T}{N}\Xi_{r+1,t}^{(i)}\right) + \left(88\eta^4T^4\beta^2 + 2\eta^2 T^2\right) \left\|\nabla F(\vx_r)\right\|^2  \ . \label{eq:temp-cbvk}
\end{aligned}
\end{equation}

By combining \eqref{eq:inter-7} and \eqref{eq:temp-cbvk}, we have
\begin{equation}
\begin{aligned}
    & \left\|\vx_{R+1} - \vx^* \right\|^2 \\ 
    \stackrel{(a)}{\leq}& \left\|\vx_R - \vx^*\right\|^2 + 2\eta T\left(1 - 2\eta T\beta \left(44\eta^2T^2\beta^2 + 22\eta T \beta + 1\right)\right)\big[F(\vx^*) - F(\vx_R)\big] \\
    &\qquad\qquad + \sum_{i=1}^N\sum_{t=1}^T \left(\frac{4\eta^3 T\beta(2\eta T \beta + 1)}{N}\sum_{\tau=1}^{t} S^{t-\tau}\Xi_{R+1,\tau}^{(i)} + \frac{4\eta^2 T}{N}\Xi_{R+1,t}^{(i)} + \frac{2\eta\sqrt{d}}{N}\sqrt{\Xi_{R+1,t}^{(i)}} \right) \\
    \stackrel{(b)}{\leq}& \left\|\vx_0 - \vx^*\right\|^2 + \sum_{r=0}^R H \big[F(\vx^*) - F(\vx_r)\big] \\
    &\qquad\qquad + \sum_{i=1}^N\sum_{t=1}^T \left(\frac{4\eta^3 T\beta(2\eta T \beta + 1)}{N}\sum_{\tau=1}^{t} S^{t-\tau}\Xi_{r+1,\tau}^{(i)} + \frac{4\eta^2 T}{N}\Xi_{r+1,t}^{(i)} + \frac{2\eta\sqrt{d}}{N}\sqrt{\Xi_{r+1,t}^{(i)}}\right) \label{eq:temp-qkvn}
\end{aligned}
\end{equation}
where $(a)$ is from Lemma~\ref{le:bound-grad} and $(b)$ is from $H \triangleq 2\eta T\left(1 - 2\eta T\beta \left(44\eta^2T^2\beta^2 + 22\eta T \beta + 1\right)\right)$ as well as the repeated application of $(a)$.

Note that when choose the learning rate $\eta$ that satisfies $\eta \leq \frac{1}{10\beta T}$, we have $H \geq 0.544\,\eta T$. Based on this and $\left\|\vx_{R+1} - \vx^* \right\|^2 \geq 0$ for \eqref{eq:temp-qkvn}, we further have
\begin{equation}
\begin{aligned}
    \min_{r \in [R+1)} F(\vx_r) - F(\vx^*) &\stackrel{(a)}{\leq} \frac{1}{R} \sum_{r=0}^R \big[F(\vx_r) - F(\vx^*)\big] \\
    &\stackrel{(b)}{\leq} \frac{\left\|\vx_0 - \vx^*\right\|^2}{RH} + \frac{1}{RH}\sum_{r=0}^{R}\sum_{i=1}^N\sum_{t=1}^T \left(\frac{\eta^2}{2N}\sum_{\tau=1}^{t} S^{t-\tau}\Xi_{r+1,\tau}^{(i)}\right. \\
    &\qquad\qquad \left. + \frac{4\eta^2 T}{N}\Xi_{r+1,t}^{(i)} + \frac{2\eta\sqrt{d}}{N}\sqrt{\Xi_{r+1,t}^{(i)}}\right) \\
    &\stackrel{(c)}{\leq} \frac{2\left\|\vx_0 - \vx^*\right\|^2}{\eta R} + \frac{1}{R}\sum_{r=0}^{R}\sum_{i=1}^N\sum_{t=1}^T \left(\frac{\eta}{NT}\sum_{\tau=1}^{t} S^{t-\tau}\Xi_{r+1,\tau}^{(i)}\right. \\
    &\qquad\qquad \left. + \frac{8\eta}{N}\Xi_{r+1,t}^{(i)} + \frac{4\sqrt{d}}{NT}\sqrt{\Xi_{r+1,t}^{(i)}}\right)
\end{aligned}
\end{equation}
where $(c)$ is due to the fact that $H \geq 0.544\,\eta T$.

\paragraph{Non-Convex $F$.} When $F$ is only $\beta$-smooth, we have
\begin{equation}
\begin{aligned}
    &F(\vx_{r+1}) - F(\vx_r) \\
    \stackrel{(a)}{\leq}& \nabla F(\vx_r)^{\top}\left(\vx_{r+1} - \vx_r\right) + \frac{\beta}{2}\left\|\vx_{r+1} - \vx_r\right\|^2 \\
    \stackrel{(b)}{=}& -\frac{\eta}{N} \nabla F(\vx_r)^{\top} \sum_{i=1}^N \sum_{t=1}^T \widehat{\vg}^{(i)}_{r+1,t-1} + \frac{\beta}{2}\left\|\frac{\eta}{N}\sum_{i=1}^N \sum_{t=1}^T \widehat{\vg}^{(i)}_{r+1,t-1}\right\|^2 \\
    \stackrel{(c)}{\leq}& -\frac{\eta}{N} \nabla F(\vx_r)^{\top} \sum_{i=1}^N \sum_{t=1}^T \left(\widehat{\vg}^{(i)}_{r+1,t-1} - \nabla F(\vx^{(i)}_{r+1,t-1}) + \nabla F(\vx^{(i)}_{r+1,t-1}) - \nabla F(\vx_r) + \nabla F(\vx_r) \right) \\
    &\qquad + \frac{\beta}{2}\left[\sum_{i=1}^N\sum_{t=1}^T \left(\frac{8\eta^4 T^2 \beta^2}{N}\sum_{\tau=1}^{t} S^{t-\tau}\Xi_{r+1,\tau}^{(i)} + \frac{4\eta^2 T}{N}\Xi_{r+1,t}^{(i)}\right) + \left(88\eta^4T^4\beta^2 + 2\eta^2 T^2\right) \left\|\nabla F(\vx_r)\right\|^2\right] \\
    \stackrel{(d)}{\leq}& \frac{\eta}{N} \sum_{i=1}^N \sum_{t=1}^T \left\|\nabla F(\vx_r)\right\|\left(\left\|\widehat{\vg}^{(i)}_{r+1,t-1} - \nabla F(\vx^{(i)}_{r+1,t-1})\right\| + \left\|\nabla F(\vx^{(i)}_{r+1,t-1}) - \nabla F(\vx_r)\right\|\right)\\
    &\qquad + \sum_{i=1}^N\sum_{t=1}^T \left(\frac{4\eta^4 T^2 \beta^3}{N}\sum_{\tau=1}^{t} S^{t-\tau}\Xi_{r+1,\tau}^{(i)} + \frac{2\eta^2\beta T}{N}\Xi_{r+1,t}^{(i)}\right) + \left(44\eta^4T^4\beta^3 + \eta^2 T^2 \beta - \eta T\right) \left\|\nabla F(\vx_r)\right\|^2 \\
    \stackrel{(e)}{\leq}&\frac{\eta}{N} \sum_{i=1}^N \sum_{t=1}^T \left(\eta\beta T\left\|\nabla F(\vx_r)\right\|^2 + \frac{1}{2\eta\beta T}\left\|\widehat{\vg}^{(i)}_{r+1,t-1} - \nabla F(\vx^{(i)}_{r+1,t-1})\right\|^2 + \frac{\beta}{2\eta T}\left\|\vx^{(i)}_{r+1,t-1} - \vx_r \right\|^2\right) + \\
    &\qquad + \sum_{i=1}^N\sum_{t=1}^T \left(\frac{4\eta^4 T^2 \beta^3}{N}\sum_{\tau=1}^{t} S^{t-\tau}\Xi_{r+1,\tau}^{(i)} + \frac{2\eta^2\beta T}{N}\Xi_{r+1,t}^{(i)}\right) + \left(44\eta^4T^4\beta^3 + \eta^2 T^2 \beta - \eta T\right) \left\|\nabla F(\vx_r)\right\|^2 \\
    \stackrel{(f)}{\leq}& \left(44\eta^4T^4\beta^3 + 13\eta^2 T^2 \beta - \eta T\right) \left\|\nabla F(\vx_r)\right\|^2 + \sum_{i=1}^N\sum_{t=1}^T \left(\frac{\left(4\eta^4 T^2 \beta^3 + \eta^2 \beta\right)}{N}\sum_{\tau=1}^{t} S^{t-\tau}\Xi_{r+1,\tau}^{(i)} \right. \\
    &\qquad \left.+ \frac{\left(2\eta^2\beta T + 1/(2\beta T)\right)}{N}\Xi_{r+1,t}^{(i)}\right)
\end{aligned}
\end{equation}
where $(a)$ comes from the smoothness of $F$ and $(b)$ is from the one-round update \eqref{eq:temp-vbbv} for input $\vx$. In addition, $(c)$ derives from \eqref{eq:temp-sivbnw} and $(e)$ results from \eqref{eq:triangle-1} in Lemma~\ref{le:triangle} by setting $a=\eta\beta T$ in \eqref{eq:triangle-1}. Finally, $(f)$ comes from Lemma~\ref{le:drift}.

Define $H \triangleq \eta T - 44\eta^4T^4\beta^3 - 13\eta^2 T^2 \beta$ and choose $\eta \leq \frac{7}{100\beta T}$, we have that $H > 0.08 \eta T$. Based on this, we further have
\begin{equation}
\begin{aligned}
    \min_{r \in [R+1)} \left\|\nabla F(\vx_r)\right\|^2 &\stackrel{(a)}{\leq} \frac{1}{R} \sum_{r=0}^{R} \left\|\nabla F(\vx_r)\right\|^2 \\
    &\stackrel{(b)}{\leq} \frac{1}{RH} \sum_{r=0}^{R} \big[F(\vx_r) - F(\vx_{r+1})\big] + \frac{1}{RH}\sum_{r=0}^R\sum_{i=1}^N\sum_{t=1}^T \left(\frac{\left(2\eta^2\beta T + 1/(2\beta T)\right)}{N}\Xi_{r+1,t}^{(i)} \right. \\
    &\qquad\qquad \left. + \frac{\left(4\eta^4 T^2 \beta^3 + \eta^2 \beta\right)}{N}\sum_{\tau=1}^{t} S^{t-\tau}\Xi_{r+1,\tau}^{(i)} \right) \\
    &\stackrel{(c)}{\leq} \frac{13(F(\vx_0) - F(\vx^*))}{\eta RT} + \frac{13}{\eta RT}\sum_{r=0}^R\sum_{i=1}^N\sum_{t=1}^T \left(\frac{\left(0.14 \eta + 1/(2\beta T)\right)}{N}\Xi_{r+1,t}^{(i)} \right. \\
    &\qquad\qquad \left. + \frac{1.02\eta^2 \beta}{N}\sum_{\tau=1}^{t} S^{t-\tau}\Xi_{r+1,\tau}^{(i)} \right)
\end{aligned}
\end{equation}
where $(c)$ is due to the fact that $H \geq 0.08\,\eta T$.
\end{proof}

\begin{remark}
\normalfont
Of note, Thm.~\ref{th:conv-general} has presented the convergence of the general optimization framework for federated ZOO problems (i.e., Algo.~\ref{alg:fedzoo}). So, it can be easily adapted to provide the convergence for those algorithms that follow this optimization framework (e.g., our Thm.~\ref{th:convergence-fzoos} and the results in Appx.~\ref{app-sec:existing}). 
This advancement demonstrates superiority over existing federated optimization approaches, such as \fedzo{}, \fedprox{}, and \scaffold{}, in terms of universality. Notably, these prior works primarily focus on providing convergence guarantees exclusively for their specific algorithmic designs.

\end{remark}

\newpage
\subsection{Proof of Theorem~\ref{th:convergence-fzoos}}\label{app-sec:proof:conv-fzoos}
To establish the proof for Thm.~\ref{th:convergence-fzoos}, we introduce the upper bound of gradient disparity $\frac{1}{N}\sum_{i=1}^N \Xi^{(i)}_{r,t}$ derived from our Thm.~\ref{th:grad-error}, into Thm.~\ref{th:conv-general}. This is in fact facilitated by leveraging the gradient correction length in our Cor.~\ref{co:better-gamma} to improve the bound in our Thm.~\ref{th:grad-error} (refer to the remark of Appx.~\ref{app-sec:proof:grad-error}). To begin with, we first derive a set of inequalities below based on our \eqref{eq:temp-cbskv} since they are frequently required in the results of Thm.~\ref{th:conv-general}. It is important to note that for the sake of simplicity in our proof, we present the validity of these inequalities with a constant probability, without explicitly providing the exact form of this probability.
\begin{equation}
\begin{aligned}
    &\frac{1}{NR} \sum_{r=0}^{R}\sum_{t=1}^T\sum_{i=1}^N \sum_{\tau=1}^{t} S^{t-\tau}\Xi_{r+1,\tau}^{(i)} \\
    \stackrel{(a)}{=}&\frac{1}{R} \sum_{r=0}^R\sum_{t=1}^T \sum_{\tau=1}^{t} S^{t-\tau}\left(4\omega\kappa \rho^{rT+\tau-1} + 2\sqrt{2\omega\kappa\rho^{rT}G} + 2\sqrt{2NG\eps}\right) \\
    \stackrel{(b)}{=}& \sum_{t=1}^T \frac{1}{R} \sum_{r=0}^R \left(\frac{4\omega\kappa\rho^{rT}\left(S^{t} - \rho^t\right)}{S - \rho} + \left(2\sqrt{2\omega\kappa\rho^{rT}G} + 2\sqrt{2NG\eps}\right)\frac{S^t-1}{S-1}\right) \\
    \stackrel{(c)}{=}& \sum_{t=1}^T\left[\frac{4\omega\kappa\left(S^{t} - \rho^t\right)(1 - \rho^{(R+1)T})}{R(S-\rho)(1-\rho^T)} + \left(\frac{2\sqrt{2\omega\kappa G}(1 - \rho^{(R+1)T/2})}{R(1 - \rho^{T/2})(S-1)} + \frac{2\sqrt{2NG\eps}}{S-1}\right)(S^t-1)\right] \\
    \stackrel{(d)}{=}& \frac{4\omega\kappa(1 - \rho^{(R+1)T})}{R(S-\rho)(1-\rho^T)}\left(\frac{S(S^T-1)}{S-1} - \frac{\rho(1-\rho^T)}{1-\rho}\right) + \left(\frac{2\sqrt{2\omega\kappa G}(1 - \rho^{(R+1)T/2})}{R(1 - \rho^{T/2})(S-1)} \right. \\
    &\qquad\qquad \left. + \frac{2\sqrt{2NG\eps}}{S-1}\right)\left(\frac{S(S^T-1)}{S-1} - 1\right) \\
    \stackrel{(e)}{=}& \gO\left(\frac{T^2(\sqrt{G}+1)}{R} + T^2\sqrt{\frac{NG}{M}} \right) \label{eq:temp-qkhv}
\end{aligned}
\end{equation}
where $(b),(c),(d)$ are from the summation of geometric series. In addition, $(e)$ comes from the fact that $S \triangleq \frac{(T+1)^2}{T(T-1)}$ (i.e., $S \leq 4.5$), $\frac{S^T-1}{S-1} \leq 11T$ in \eqref{eq:inter-3}, $\frac{S}{S-1}=\frac{(T+1)^2}{3T+1} = \gO\left(T\right)$ and $\eps = \gO\left(\frac{1}{M}\right)$.

\begin{equation}
\begin{aligned}
    \frac{1}{NR}\sum_{r=0}^R\sum_{t=1}^T \sum_{i=1}^N \Xi_{r+1,t}^{(i)} &\stackrel{(a)}{=} \frac{1}{R}\sum_{r=0}^R\sum_{t=1}^T \left(4\omega\kappa \rho^{rT+t-1} + 2\sqrt{2\omega\kappa\rho^{rT}G} + 2\sqrt{2NG\eps}\right) \\
    &\stackrel{(b)}{=} \frac{1}{R}\sum_{r=0}^R \left(\frac{4\omega\kappa\rho^{rT}(1 - \rho^T)}{1 - \rho} + 2T\sqrt{2\omega\kappa\rho^{rT}G} + 2T\sqrt{2NG\eps}\right) \\
    &\stackrel{(c)}{=} \frac{4\omega\kappa(1 - \rho^{(R+1)T})}{R(1-\rho)} + \frac{2T\sqrt{2\omega\kappa G}(1 - \rho^{(R+1)T/2})}{R(1 - \rho^{T/2})} + 2T\sqrt{2NG\eps} \\
    &\stackrel{(d)}{=} \gO\left(\frac{T\sqrt{G} + 1}{R} + T\sqrt{NG\eps}\right) \label{eq:temp-q73b}
\end{aligned}
\end{equation}
where $(c),(d)$ are from the summation of geometric series.

\begin{equation}
\begin{aligned}
    \frac{1}{NR} \sum_{r=0}^R \sum_{t=1}^T \sum_{i=1}^N \sqrt{\Xi_{r+1,t}^{(i)}} 
    &\stackrel{(a)}{\leq} \frac{1}{R}\sum_{r=0}^R\sum_{t=1}^T \sqrt{\frac{1}{N} \sum_{i=1}^N \Xi_{r+1,t}^{(i)}} \\
    &\stackrel{(b)}{\leq} \frac{1}{R}\sum_{r=1}^R\sum_{t=1}^T \left(\sqrt{4\omega\kappa \rho^{rT+t-1}} + \sqrt{2\sqrt{2\omega\kappa\rho^{rT}G}} + \sqrt{2\sqrt{2NG\eps}}\right) \\
    &\stackrel{(c)}{=} \frac{1}{R}\sum_{r=0}^R \left(\frac{\sqrt{4\omega\kappa \rho^{rT}}(1 - \rho^{T/2})}{1 - \rho^{1/2}} + T\sqrt{2\sqrt{2\omega\kappa\rho^{rT}G}} + T\sqrt{2\sqrt{2NG\eps}}\right) \\
    &\stackrel{(d)}{=} \frac{\sqrt{4\omega\kappa}(1 - \rho^{T/2})(1 - \rho^{(R+1)T/2})}{R(1 - \rho^{1/2})(1-\rho^{T/2})} + \frac{T\sqrt[4]{8\omega\kappa G}(1 - \rho^{(R+1)T/4})}{R(1-\rho^{T/4})} + T\sqrt[4]{8NG\eps} \\
    &\stackrel{(e)}{=} \gO\left(\frac{T\sqrt[4]{G}+1}{R} + T\sqrt[4]{\frac{NG}{M}}\right) \label{eq:temp-vviw3}
\end{aligned}
\end{equation}
where $(a)$ is from Cauchy–Schwarz inequality and $(b)$ is from the inequality of $\sum_j c_j \leq \left(\sum_{j} \sqrt{c_j}\right)^2$ for any $c_j>0$. Besides, $(c),(d)$ are from the summation of geometric series.

Subsequently, we proceed to establish the proof for the results in Thm.~\ref{th:convergence-fzoos} that are conditioned on different assumptions of $F$ by systematically demonstrating each case individually as follows.

\paragraph{Strongly Convex $F$.}
Define $c \triangleq 1 - \alpha\eta T / 4$. \footnote{Note that according to \eqref{eq:temp-cjwnc}, we can always find a $\sqrt{\rho} < c < 1$ such that \eqref{eq:temp-cwbuw} still holds with only different constant terms. As a result, $c^{R+1} > \rho^{(R+1)T/2} > \rho^{(R+1)T}$ and $c > \rho^{T/2} > \rho^{T}$.} When $R+1 \geq 4\ln(3/4)/(\alpha \eta T)$, we then  have that $p_r \leq \alpha \eta T c^{R-r}$ according to \eqref{eq:temp-cvjen}, which finally yields the following result
\begin{equation}
\begin{aligned}
    &\frac{1}{N} \sum_{r=1}^R p_r\sum_{t=1}^T\sum_{i=1}^N \sum_{\tau=1}^{t} S^{t-\tau}\Xi_{r,\tau}^{(i)}\\
    \stackrel{(a)}{=}& \sum_{r=1}^R \frac{4p_r\omega\kappa\rho^{rT}}{S-\rho}\left(\frac{S(S^T-1)}{S-1} - \frac{\rho(1-\rho^T)}{1-\rho}\right) + \sum_{r=1}^R\frac{2p_r\sqrt{2\omega\kappa G}\rho^{rT/2}}{S-1}\left(\frac{S(S^T-1)}{S-1} - 1\right) \\
    &\qquad\qquad + \frac{2\sqrt{2NG\eps}}{S-1}\left(\frac{S(S^T-1)}{S-1} - 1\right) \\
    \stackrel{(b)}{\leq}& \frac{4\alpha\eta T\omega\kappa (c^{R+1} - \rho^{(R+1)T})}{(S-\rho)(c - \rho^T)}\left(\frac{S(S^T-1)}{S-1} - \frac{\rho(1-\rho^T)}{1-\rho}\right) \\
    &\qquad\qquad + \frac{2\alpha\eta T\sqrt{2\omega\kappa G}(c^{R+1} - \rho^{(R+1)T/2})}{(S-1)(c - \rho^{T/2})}\left(\frac{S(S^T-1)}{S-1} - 1\right) + \frac{2\sqrt{2NG\eps}}{S-1}\left(\frac{S(S^T-1)}{S-1} - 1\right) \\
    \stackrel{(c)}{\leq}&\gO\left(\alpha\eta T^3 c^{R}(\sqrt{G}+1) + T^2\sqrt{\frac{NG}{M}}\right) \\
    \stackrel{(d)}{=}&\gO\left(\frac{\alpha T^2 c^{R}}{\beta}(\sqrt{G}+1) + T^2\sqrt{\frac{NG}{M}}\right) \label{eq:temp-cuqwuy}
\end{aligned}
\end{equation}
where $(a)$ follows from the derivation in \eqref{eq:temp-qkhv} and $(b)$ is due to the fact that $p_r \leq \alpha \eta T c^{R-r}$ as well as the summation of geometric series. Besides, $(c)$ comes from $c^{R+1} > \rho^{(R+1)T/2} > \rho^{(R+1)T}$ and $c > \rho^{T/2} > \rho^{T}$ when we choose $c$ properly in the proof of \eqref{eq:temp-cwbuw} as well as $\eps = \gO\left(\frac{1}{M}\right)$. Finally, $(d)$ results from the fact that $\eta \leq \frac{1}{10\beta T}$ and $\alpha < \beta$.

Following from the derivation above, we also have
\begin{equation}
\begin{aligned}
    &\frac{1}{N}\sum_{r=0}^R p_r\sum_{t=1}^T \sum_{i=1}^N \Xi_{r+1,t}^{(i)} \\
    =& \sum_{r=0}^R p_r \left(\frac{4\omega\kappa\rho^{rT}(1 - \rho^T)}{1 - \rho} + 2T\sqrt{2\omega\kappa\rho^{rT}G} + 2T\sqrt{2NG\eps}\right) \\
    \leq& \frac{4\alpha\eta T\omega\kappa(1 - \rho^{T})(c^{R+1} - \rho^{(R+1)T})}{(1-\rho)(c - \rho^T)} + \frac{2\alpha\eta T^2\sqrt{2\omega\kappa G}(c^{R+1} - \rho^{(R+1)T/2})}{(c - \rho^{T/2})} + 2T\sqrt{2NG\eps} \\
    =& \gO\left(\frac{\alpha c^{R}}{\beta}(T\sqrt{G}+1) + T\sqrt{\frac{NG}{M}}\right) \ . \label{eq:temp-bvkw}
\end{aligned}
\end{equation}

Finally, by introducing \eqref{eq:temp-cuqwuy} and \eqref{eq:temp-bvkw} into Thm.~\ref{th:conv-general}, we have
\begin{equation}
\begin{aligned}
    &\min_{r \in [R+1)} F(\vx_r) - F(\vx^*) \\
    \stackrel{(a)}{\leq}& 2 \alpha \exp\left(-\frac{\alpha\eta TR}{4}\right)\left\|\vx_0 - \vx^*\right\|^2  + \sum_{r=0}^{R}\sum_{i=1}^N\sum_{t=1}^T p_r\left(\frac{\eta}{NT}\sum_{\tau=1}^{t} S^{t-\tau}\Xi_{r+1,\tau}^{(i)} + \frac{8(\eta T + 1/\alpha)}{\alpha NT}\Xi_{r+1,t}^{(i)}\right) \\
    \stackrel{(b)}{\leq}& \gO\left(\alpha \exp\left(-\frac{\alpha \eta T R}{4}\right)D_0 + \frac{1}{\beta T^2}\left(\frac{\alpha c^{R} T^2}{\beta}(\sqrt{G}+1) + T^2\sqrt{\frac{NG}{M}}\right) \right. \\
    &\qquad\qquad \left. + \frac{1/\beta + 1/\alpha}{\alpha T}\left(\frac{\alpha c^{R}}{\beta}(T\sqrt{G}+1) + T\sqrt{\frac{NG}{M}}\right) \right) \\
    \stackrel{(c)}{=}& \gO\left(\exp(-\eta RT) D_0 + c^{R} \sqrt{G} + \sqrt{\frac{NG}{M}}\right)
\end{aligned}
\end{equation}
where $(b)$ is due to the fact that $\eta \leq \frac{1}{10\beta T}$. Let each item above  achieve an $\eps/4$ error, we then realize the result in our Thm.~\ref{th:convergence-fzoos} when $F$ is $\alpha$-strongly convex and $\beta$-smooth.

\paragraph{Convex $F$.} By introducing \eqref{eq:temp-qkhv}, \eqref{eq:temp-q73b} and \eqref{eq:temp-vviw3} into Thm.~\ref{th:conv-general}, we have

\begin{equation}
\begin{aligned}
    &\min_{r \in [R+1)} F(\vx_r) - F(\vx^*) \\
    \stackrel{(a)}{\leq}& \frac{2\left\|\vx_0 - \vx^*\right\|^2}{\eta RT} + \frac{1}{R}\sum_{r=0}^{R}\sum_{i=1}^N\sum_{t=1}^T \left(\frac{\eta}{NT}\sum_{\tau=1}^{t} S^{t-\tau}\Xi_{r+1,\tau}^{(i)} + \frac{8\eta}{N}\Xi_{r+1,t}^{(i)} + \frac{4\sqrt{d}}{NT}\sqrt{\Xi_{r+1,t}^{(i)}}\right)\\
    \stackrel{(b)}{\leq}& \gO\left(\frac{D_0}{\eta RT} + \frac{1}{\beta T^2}\left(\frac{T^2(\sqrt{G}+1)}{R} + T^2 \sqrt{\frac{NG}{M}}\right) + \frac{1}{\beta T}\left(\frac{T\sqrt{G}+1}{R} + T \sqrt{\frac{NG}{M}}\right) \right. \\
    &\qquad\qquad \left. + \frac{\sqrt{d}}{T} \left(\frac{T\sqrt[4]{G}+1}{R} + T\sqrt[4]{\frac{NG}{M}}\right)\right) \\
    \stackrel{(c)}{=}& \gO\left(\frac{D_0}{\eta RT} + \frac{\sqrt{G} + \sqrt[4]{d^2 G}}{R} + \sqrt{\frac{NG}{M}} + \sqrt[4]{\frac{NG}{M}}\right)
\end{aligned}
\end{equation}
where $(b)$ is due to the fact that $\eta \leq \frac{1}{10\beta T}$. Let each item above  achieve an $\eps/4$ error, we then realize the result in our Thm.~\ref{th:convergence-fzoos} when $F$ is convex and $\beta$-smooth.

\paragraph{Non-Convex $F$.} By introducing \eqref{eq:temp-qkhv} and \eqref{eq:temp-q73b} into Thm.~\ref{th:conv-general}, we have
\begin{equation}
\begin{aligned}
    &\min_{r \in [R+1)} \left\|\nabla F(\vx_r)\right\|^2 \\
    \stackrel{(a)}{\leq}& \frac{13(F(\vx_0) - F(\vx^*))}{\eta RT} + \frac{13}{\eta RT}\sum_{r=0}^R\sum_{i=1}^N\sum_{t=1}^T \left(\frac{\left(0.14 \eta + 1/(2\beta T)\right)}{N}\Xi_{r+1,t}^{(i)} \right. \\
    &\qquad\qquad \left. + \frac{1.02\eta^2 \beta}{N}\sum_{\tau=1}^{t} S^{t-\tau}\Xi_{r+1,\tau}^{(i)} \right) \\
    \stackrel{(b)}{\leq}& \gO\left(\frac{D_1}{\eta RT} + \frac{1}{T}\left(\frac{T\sqrt{G}+1}{R} + T \sqrt{\frac{NG}{M}}\right) + \frac{1}{\beta T^2}\left(\frac{T^2(\sqrt{G}+1)}{R} + T^2 \sqrt{\frac{NG}{M}}\right)\right) \\
    \stackrel{(c)}{=}& \gO\left(\frac{D_1}{\eta RT} + \frac{\sqrt{G}}{R} + \sqrt{\frac{NG}{M}}\right)
\end{aligned}
\end{equation}
where $(b)$ is due to the fact that $\eta \leq \frac{7}{100\beta T}$. Let each item above  achieve an $\eps/3$ error, we then realize the result in our Thm.~\ref{th:convergence-fzoos} when $F$ is non-convex and $\beta$-smooth. This hence finally concludes our proof of Thm.~\ref{th:convergence-fzoos}.

\newpage
\section{Theoretical Results for Existing Federated ZOO Algorithms}\label{app-sec:existing}
\subsection{Gradient Estimation in Existing Federated ZOO Algorithms}\label{app-sec:existing-disparity}

We first introduce the following lemma from the Thm.~2.6 in \citep{approx-error} to bound the gradient estimation error of the standard FD method, which usually serves as the foundation of existing federated ZOO baselines, e.g., \citep{fedzo}.
\begin{lemma}\label{le:fd-grad-error}
Let $\delta \in (0,1)$. Assume that function $f$ is $\beta$-smooth in its domain and $\vu_q \sim \gN(\vzero, \rmI)$ in \eqref{eq:grad-est-fd}, then the following holds with a probability of at least $1-\delta$,
\begin{equation*}
    \left\|\vDelta(\vx) - \nabla f(\vx)\right\| \leq \beta\lambda\sqrt{d} + \frac{\eps \sqrt{d}}{\lambda} + \sqrt{\frac{3n}{\delta Q}\left(3 \left\|\nabla f(\vx)\right\|^2 + \frac{\beta^2\lambda^2}{4}(d+2)(d+4)+\frac{4\eps^2}{\lambda^2}\right)}
\end{equation*}
where $\sup_{\vx \in \gX} \left|y(\vx) - f(\vx)\right| \leq \eps$.
\end{lemma}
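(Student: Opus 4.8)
The plan is to establish a bias--variance decomposition of the Gaussian-smoothed finite-difference estimator and to control the two parts separately. Writing $\vd_q \triangleq \frac{y(\vx+\lambda\vu_q)-y(\vx)}{\lambda}\vu_q$ so that $\vDelta(\vx)=\frac{1}{Q}\sum_{q\in[Q]}\vd_q$, and noting that (conditionally on the single shared base query $y(\vx)$) the summands are i.i.d., I would start from the triangle inequality
\begin{equation*}
\left\|\vDelta(\vx)-\nabla f(\vx)\right\| \leq \underbrace{\left\|\vDelta(\vx)-\E[\vd_1]\right\|}_{\text{fluctuation}} + \underbrace{\left\|\E[\vd_1]-\nabla f(\vx)\right\|}_{\text{bias}},
\end{equation*}
and show that the first two summands of the claimed bound capture the bias while the square-root term captures the fluctuation.

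For the bias I would split $\vd_q$ into a ``clean'' part built from $f$ and a ``noise'' part built from $e(\vz)\triangleq y(\vz)-f(\vz)$, which satisfies $|e(\vz)|\leq\eps$. The key step is the Gaussian integration-by-parts (Stein) identity $\E_{\vu}[f(\vx+\lambda\vu)\,\vu]=\lambda\,\E_{\vu}[\nabla f(\vx+\lambda\vu)]$, which (together with $\E[\vu]=\vzero$ killing the $f(\vx)\vu$ and $e(\vx)\vu$ contributions) gives $\E[\vd_1]=\E_{\vu}[\nabla f(\vx+\lambda\vu)]+\frac{1}{\lambda}\E_{\vu}[e(\vx+\lambda\vu)\vu]$. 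Then $\beta$-smoothness yields $\|\E_{\vu}[\nabla f(\vx+\lambda\vu)]-\nabla f(\vx)\|\leq\beta\lambda\,\E\|\vu\|\leq\beta\lambda\sqrt{d}$, while the crude bound $|e(\vx+\lambda\vu)|\leq\eps$ and $\E\|\vu\|\leq\sqrt{d}$ give $\frac{1}{\lambda}\|\E_{\vu}[e(\vx+\lambda\vu)\vu]\|\leq\frac{\eps\sqrt{d}}{\lambda}$; adding these reproduces the first two terms.

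For the fluctuation I would invoke Chebyshev's inequality for the vector average: since the $\vd_q$ are i.i.d., $\E\|\vDelta(\vx)-\E[\vd_1]\|^2=\frac{1}{Q}\E\|\vd_1-\E[\vd_1]\|^2\leq\frac{1}{Q}\E\|\vd_1\|^2$, so $\sP(\|\vDelta(\vx)-\E[\vd_1]\|\geq t)\leq \E\|\vd_1\|^2/(Qt^2)$, and equating the right-hand side to $\delta$ and solving for $t$ produces the $\sqrt{\,\cdot/(\delta Q)}$ scaling. It then remains to bound the single-sample second moment: writing $\vd_1=(\nabla f(\vx)^\top\vu)\vu+\frac{R}{\lambda}\vu+\frac{e(\vx+\lambda\vu)-e(\vx)}{\lambda}\vu$ with Taylor remainder $|R|\leq\frac{\beta}{2}\lambda^2\|\vu\|^2$ and noise gap $|e(\vx+\lambda\vu)-e(\vx)|\leq 2\eps$, the inequality $\|\va+\vb+\vc\|^2\leq3(\|\va\|^2+\|\vb\|^2+\|\vc\|^2)$ reduces $\E\|\vd_1\|^2$ to quantities controlled by $\|\nabla f(\vx)\|^2$, by $\beta^2\lambda^2$ times the Gaussian moments $\E\|\vu\|^4=d(d+2)$ and $\E\|\vu\|^6=d(d+2)(d+4)$, and by $\eps^2/\lambda^2$; collecting these reproduces the three summands $3\|\nabla f(\vx)\|^2$, $\frac{\beta^2\lambda^2}{4}(d+2)(d+4)$ and $\frac{4\eps^2}{\lambda^2}$ inside the root, with the residual dimensional factor absorbed into the prefactor.

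The main obstacle, and the step deserving the most care, is the bias estimate: a naive Taylor bound on $\E[\frac{R}{\lambda}\vu]$ produces a loose $\gO(\beta\lambda\,d^{3/2})$ term, so one must instead use the Stein identity to re-express the smoothed difference as $\E_{\vu}[\nabla f(\vx+\lambda\vu)]$ before invoking Lipschitzness of the gradient; this is what yields the sharp $\beta\lambda\sqrt{d}$ dependence. A secondary subtlety is that the observation noise $e(\vx+\lambda\vu)$ at the perturbed point is not independent of the sampling direction $\vu$, so it cannot be treated as mean-zero and must be controlled purely through its uniform magnitude bound $\eps$ in both the bias and the variance steps; conditioning on the shared base query $y(\vx)$ is what restores the i.i.d. structure used by Chebyshev. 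Finally, since the whole argument lives on a single probability-$(1-\delta)$ event (the one on which the Chebyshev tail bound holds), I would simply report the conclusion on that event, matching the statement.
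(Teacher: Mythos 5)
Your proof is correct. Note first that the paper itself does not prove this lemma: it is imported verbatim (including the stray symbol $n$, which is the ambient dimension $d$ in the source's notation) from Theorem~2.6 of the cited reference, so there is no in-paper argument to compare against. Your self-contained derivation follows the standard route for Gaussian-smoothed finite differences and it checks out: the Stein identity $\E_{\vu}[f(\vx+\lambda\vu)\vu]=\lambda\,\E_{\vu}[\nabla f(\vx+\lambda\vu)]$ combined with $\beta$-smoothness gives the sharp $\beta\lambda\sqrt d$ bias term (a naive Taylor bound would indeed lose a factor of $d$), the uniform bound $|e|\le\eps$ gives $\eps\sqrt d/\lambda$, and Chebyshev on the conditionally i.i.d.\ average gives the $\sqrt{\cdot/(\delta Q)}$ term, with the Gaussian moments $\E\|\vu\|^4=d(d+2)$ and $\E\|\vu\|^6=d(d+2)(d+4)$ producing exactly the three summands inside the root. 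The only step you gloss over is the bookkeeping for the $\|\nabla f(\vx)\|^2$ term: $3\,\E[(\nabla f(\vx)^{\top}\vu)^2\|\vu\|^2]=3(d+2)\|\nabla f(\vx)\|^2\le 9d\,\|\nabla f(\vx)\|^2$ for $d\ge 1$, which is what lets you write it as $3\|\nabla f(\vx)\|^2$ inside the root against the prefactor $3n/(\delta Q)$ with $n=d$; spelling this out would close the one hand-wave ("residual dimensional factor absorbed into the prefactor"). Your two flagged subtleties — that the perturbed-point noise cannot be treated as mean-zero and must be controlled through $\eps$, and that the shared base query must be conditioned on to restore independence — are both handled correctly (and the second is in fact vacuous under the lemma's hypothesis, where $y$ is a deterministic $\eps$-accurate oracle, so the $\vd_q$ are genuinely i.i.d.).
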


\begin{remark}
\normalfont
In our setting (see Sec.~\ref{sec:setting}), we in fact have the following result with a probability of at least $1-\delta$ by applying the Chernoff bound on the Gaussian observation noise $\zeta$:
\begin{equation}
    \eps = \sqrt{2\ln(2/\delta)} \sigma \ ,
\end{equation}
which is regarded as a constant in our following proofs. By additionally assuming that the gradient of $f$ be bounded (i.e., $\left\|\nabla f(\vx)\right\| \leq c$ for any $\vx$ in the domain of $f$ and some $c>0$), we have
\begin{equation}
    \left\|\vDelta(\vx) - \nabla f(\vx)\right\| \leq \Uplambda + \gO\left(\frac{1}{\sqrt{Q}}\right) \label{eq:temp-b83bc}
\end{equation}
where the constant $\Uplambda$ is defined as $\Uplambda \triangleq \beta\lambda\sqrt{d} + \frac{\eps \sqrt{d}}{\lambda}$. Note that this additional constant term in \eqref{eq:temp-b83bc} can not be avoided, which thus is another pitfall of the FD method in addition to its query inefficiency as discussed in our Sec.~\ref{sec:challenges}. 
\end{remark}

Based on the results above, we can get the following upper bounds for the gradient estimation methods in the existing federated ZOO algorithms. Note that, we usually keep the constant before $\gO\left(\frac{1}{Q}\right)$ to deliver a more detailed comparison among different federated ZOO algorithms throughout this section.

\paragraph{\fedzo{} Algorithm.} For \fedzo{} \citep{fedzo}, it applies the following gradient estimation for every local update in Algo.~\ref{alg:fedzoo}:
\begin{equation}
\begin{aligned}
    \widehat{\vg}_{r,t-1}^{(i)} = \vDelta^{(i)}(\vx_{r,t-1}^{(i)}) \ . \label{eq:fedzo-grad-est}
\end{aligned}
\end{equation}
That is, $\gamma_{r,t-1}^{(i)}=0$ and $\vg_{r,t-1}^{(i)} = \vDelta^{(i)}(\vx_{r,t-1}^{(i)})$ in \eqref{eq:general-grad-est}. We provide the following gradient disparity bound for such a gradient estimation method when it is applied in Algo.~\ref{alg:fedzoo}.

\begin{proposition}\label{prop:fedzo}
Assume that $\frac{1}{N}\sum_{i=1}^N \left\|\nabla f_i(\vx) - \nabla F(\vx)\right\|^2 \leq G$ for any $\vx \in \gX$ and $f_i$ is $\beta$-smooth with bounded gradient for any $i \in [N]$. When applying \eqref{eq:fedzo-grad-est} in Algo.~\ref{alg:fedzoo}, the following then holds with a constant probability for some $\Uplambda > 0$,
\begin{equation*}
    \frac{1}{N} \sum_{i=1}^N \Xi_{r,t}^{(i)} \leq 4\Uplambda^2 + 2G + 4\gO\left(\frac{1}{Q}\right) \ .
\end{equation*}
\end{proposition}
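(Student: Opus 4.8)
The plan is to decompose the gradient disparity $\Xi^{(i)}_{r,t}$ into two independent sources of error---the finite-difference estimation error and the client heterogeneity---and bound each separately. Writing $\vx \triangleq \vx^{(i)}_{r,t-1}$ for brevity, I would start from the definition $\Xi^{(i)}_{r,t} = \|\widehat{\vg}^{(i)}_{r,t-1} - \nabla F(\vx)\|^2$ and substitute the \fedzo{} gradient estimate \eqref{eq:fedzo-grad-est}, i.e.\ $\widehat{\vg}^{(i)}_{r,t-1} = \vDelta^{(i)}(\vx)$. Adding and subtracting $\nabla f_i(\vx)$ gives
\begin{equation*}
\Xi^{(i)}_{r,t} = \left\|\left(\vDelta^{(i)}(\vx) - \nabla f_i(\vx)\right) + \left(\nabla f_i(\vx) - \nabla F(\vx)\right)\right\|^2 ,
\end{equation*}
and applying \eqref{eq:triangle-3} of Lemma~\ref{le:triangle} with $\tau=2$ (equivalently $\|\va+\vb\|^2 \leq 2\|\va\|^2 + 2\|\vb\|^2$) splits this into $2\|\vDelta^{(i)}(\vx) - \nabla f_i(\vx)\|^2 + 2\|\nabla f_i(\vx) - \nabla F(\vx)\|^2$.

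Next I would average over $i \in [N]$. The second term is controlled directly by the heterogeneity assumption $\tfrac{1}{N}\sum_{i=1}^N \|\nabla f_i(\vx) - \nabla F(\vx)\|^2 \leq G$, contributing $2G$ to the final bound. For the first term I would invoke Lemma~\ref{le:fd-grad-error} through its consequence \eqref{eq:temp-b83bc}, which under the present hypotheses ($f_i$ being $\beta$-smooth with bounded gradient, and Gaussian observation noise as in Sec.~\ref{sec:setting}) yields $\|\vDelta^{(i)}(\vx) - \nabla f_i(\vx)\| \leq \Uplambda + \gO(1/\sqrt{Q})$ for each client. Squaring and once more using $(a+b)^2 \leq 2a^2 + 2b^2$ gives $\|\vDelta^{(i)}(\vx) - \nabla f_i(\vx)\|^2 \leq 2\Uplambda^2 + 2\gO(1/Q)$, so the doubled first term contributes $4\Uplambda^2 + 4\gO(1/Q)$ after averaging. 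Collecting the two pieces produces exactly $4\Uplambda^2 + 2G + 4\gO(1/Q)$, as claimed.

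The only genuinely delicate point is the probabilistic bookkeeping behind the phrase ``constant probability.'' Because \eqref{eq:temp-b83bc} (hence Lemma~\ref{le:fd-grad-error}) holds for a fixed client $i$ only with probability at least $1-\delta$, I would apply it to each of the $N$ functions $f_i$ and take a union bound, setting the per-client failure probability to $\delta/N$ so that all $N$ estimation-error bounds hold simultaneously with probability at least $1-\delta$; choosing $\delta$ to be a fixed constant independent of $N$ then delivers the stated guarantee, at the cost of only inflating the hidden constants inside $\gO(1/Q)$. All remaining manipulations are the routine $\|\va+\vb\|^2 \leq 2\|\va\|^2 + 2\|\vb\|^2$ expansions already catalogued in Lemma~\ref{le:triangle}, so no inequality beyond those stated earlier in the excerpt is required.
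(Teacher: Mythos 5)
Your proof is correct and follows essentially the same route as the paper's: decompose $\vDelta^{(i)}(\vx) - \nabla F(\vx)$ by adding and subtracting $\nabla f_i(\vx)$, apply the $\|\va+\vb\|^2 \le 2\|\va\|^2+2\|\vb\|^2$ inequality, bound the heterogeneity term by $G$ and the finite-difference error via \eqref{eq:temp-b83bc} after squaring. Your explicit union-bound bookkeeping over the $N$ clients is a point the paper leaves implicit, but it does not change the argument.
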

\begin{proof}
\begin{equation}
\begin{aligned}
    \frac{1}{N} \sum_{i=1}^N \Xi_{r,t}^{(i)} &\stackrel{(a)}{=} \frac{1}{N} \sum_{i=1}^N \left\|\vDelta^{(i)}(\vx_{r,t-1}^{(i)})  - \nabla F(\vx_{r,t-1}^{(i)})\right\|^2 \\
     &\stackrel{(b)}{=} \frac{1}{N} \sum_{i=1}^N \left\|\vDelta^{(i)}(\vx_{r,t-1}^{(i)})  - \nabla f_i(\vx_{r,t-1}^{(i)}) + \nabla f_i(\vx_{r,t-1}^{(i)}) - \nabla F(\vx_{r,t-1}^{(i)})\right\|^2 \\
     &\stackrel{(c)}{\leq} \frac{1}{N} \sum_{i=1}^N 2\left(\left\|\vDelta^{(i)}(\vx_{r,t-1}^{(i)})  - \nabla f_i(\vx_{r,t-1}^{(i)})\right\|^2 + \left\|\nabla f_i(\vx_{r,t-1}^{(i)}) - \nabla F(\vx_{r,t-1}^{(i)})\right\|^2\right) \\
     &\stackrel{(d)}{\leq} 4\Uplambda^2 + 2G + 4\gO\left(\frac{1}{Q}\right)
\end{aligned}
\end{equation}
where $(c)$ comes from Lemma~\ref{le:triangle} and $(d)$ is based on Lemma~\ref{le:triangle} as well as the result in \eqref{eq:temp-b83bc}.
\end{proof}

\paragraph{\fedprox{} Algorithm.} For \fedprox{} in the federated ZOO setting (i.e., by simply combining \fedprox{} from \citep{fedprox} with the standard FD method in \eqref{eq:grad-est-fd}), it has the gradient estimation form as follows:
\begin{equation}
\begin{aligned}
    \widehat{\vg}_{r,t-1}^{(i)} = \vDelta^{(i)}(\vx_{r,t-1}^{(i)}) + \gamma(\vx_{r,t-1}^{(i)} - \vx_{r-1}) \label{eq:fedprox-grad-est}
\end{aligned}
\end{equation}
where $\gamma$ is a constant. That is, $\gamma_{r,t-1}^{(i)}=\gamma$, $\vg_{r,t-1}^{(i)} = \vDelta^{(i)}(\vx_{r,t-1}^{(i)})$ and $\vg_{r-1}(\vx') - \vg_{r-1}^{(i)}(\vx'')=\vx_{r,t-1}^{(i)} - \vx_{r-1}$ in \eqref{eq:general-grad-est}. We provide the following gradient disparity bound for such a gradient estimation method when it is applied in Algo.~\ref{alg:fedzoo}.

\begin{proposition}\label{prop:fedprox}
Assume that $\frac{1}{N}\sum_{i=1}^N \left\|\nabla f_i(\vx) - \nabla F(\vx)\right\|^2 \leq G$ for any $\vx \in \gX$ and $f_i$ is $\beta$-smooth with bounded gradient for any $i \in [N]$. When applying \eqref{eq:fedprox-grad-est} in Algo.~\ref{alg:fedzoo}, the following then holds with a constant probability for some $\Uplambda > 0$,
\begin{equation*}
    \frac{1}{N} \sum_{i=1}^N \Xi_{r,t}^{(i)} \leq 6\Uplambda^2 + 3G + \frac{3\gamma^2}{N}\sum_{i=1}^N \left\|\vx_{r,t-1}^{(i)} - \vx_{r-1}\right\|^2 + 6\gO\left(\frac{1}{Q}\right) \ .
\end{equation*}
\end{proposition}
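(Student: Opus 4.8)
The plan is to mirror the proof of Prop.~\ref{prop:fedzo} almost verbatim, with the single structural change that the \fedprox{} estimator \eqref{eq:fedprox-grad-est} carries the extra proximal term $\gamma(\vx_{r,t-1}^{(i)} - \vx_{r-1})$, so the disparity now splits into \emph{three} pieces rather than two. First I would start from the definition $\Xi_{r,t}^{(i)} = \|\widehat{\vg}_{r,t-1}^{(i)} - \nabla F(\vx_{r,t-1}^{(i)})\|^2$, substitute \eqref{eq:fedprox-grad-est}, and insert $\pm\nabla f_i(\vx_{r,t-1}^{(i)})$ to obtain the decomposition into the \textbf{finite-difference estimation error} $\vDelta^{(i)}(\vx_{r,t-1}^{(i)}) - \nabla f_i(\vx_{r,t-1}^{(i)})$, the \textbf{client-heterogeneity term} $\nabla f_i(\vx_{r,t-1}^{(i)}) - \nabla F(\vx_{r,t-1}^{(i)})$, and the \textbf{proximal correction} $\gamma(\vx_{r,t-1}^{(i)} - \vx_{r-1})$.

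Next I would apply \eqref{eq:triangle-3} of Lemma~\ref{le:triangle} with $\tau = 3$ to the three-term sum, which produces the prefactor $3$ in front of each squared norm (this is exactly where the constants $6,3,3,6$ in the statement come from, as opposed to the $4,2,4$ of Prop.~\ref{prop:fedzo} which used $\tau=2$). Then I would bound the three resulting terms separately: for the finite-difference term I invoke \eqref{eq:temp-b83bc}, giving $\|\vDelta^{(i)} - \nabla f_i\| \leq \Uplambda + \gO(1/\sqrt{Q})$, and square it via $(a+b)^2 \leq 2a^2 + 2b^2$ so that $3\|\vDelta^{(i)} - \nabla f_i\|^2 \leq 6\Uplambda^2 + 6\gO(1/Q)$; for the heterogeneity term I average over $i$ and use the standing assumption $\frac{1}{N}\sum_i \|\nabla f_i - \nabla F\|^2 \leq G$ to get $3G$; and the proximal term $3\gamma^2\|\vx_{r,t-1}^{(i)} - \vx_{r-1}\|^2$ is simply carried along and averaged, yielding $\frac{3\gamma^2}{N}\sum_i \|\vx_{r,t-1}^{(i)} - \vx_{r-1}\|^2$. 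Summing the three contributions reproduces the claimed bound exactly.

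There is no genuine analytical obstacle here; the argument is routine once the three-way decomposition is set up, since all of the heavy lifting (the finite-difference error bound of Lemma~\ref{le:fd-grad-error} and its specialization \eqref{eq:temp-b83bc}) is already available. The only points demanding care are bookkeeping: using the $\tau=3$ rather than $\tau=2$ version of the triangle inequality, and correctly converting the squared finite-difference bound $(\Uplambda + \gO(1/\sqrt{Q}))^2$ into $2\Uplambda^2 + 2\gO(1/Q)$ so that the constant $6$ (rather than $3$) lands in front of $\Uplambda^2$ and of the $\gO(1/Q)$ remainder. I would also note, as in the preceding propositions, that the whole chain holds with constant probability because \eqref{eq:temp-b83bc} holds with probability at least $1-\delta$ uniformly over the relevant inputs.
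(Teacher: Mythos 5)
Your proposal matches the paper's proof essentially step for step: the same three-way decomposition via inserting $\pm\nabla f_i(\vx_{r,t-1}^{(i)})$, the same application of \eqref{eq:triangle-3} with three terms to produce the factor $3$, and the same use of \eqref{eq:temp-b83bc} together with $(a+b)^2\leq 2a^2+2b^2$ to land the constants $6\Uplambda^2$ and $6\gO(1/Q)$. No gaps; this is the paper's argument.
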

\begin{proof}
\begin{equation}
\begin{aligned}
     \frac{1}{N} \sum_{i=1}^N \Xi_{r,t}^{(i)} &\stackrel{(a)}{=} \frac{1}{N} \sum_{i=1}^N \left\|\vDelta^{(i)}(\vx_{r,t-1}^{(i)}) + \gamma\left(\vx_{r,t-1}^{(i)} - \vx_{r-1}\right)  - \nabla F(\vx_{r,t-1}^{(i)})\right\|^2 \\
     &\stackrel{(b)}{=} \frac{1}{N} \sum_{i=1}^N \left\|\vDelta^{(i)}(\vx_{r,t-1}^{(i)})  - \nabla f_i(\vx_{r,t-1}^{(i)}) + \nabla f_i(\vx_{r,t-1}^{(i)}) - \nabla F(\vx_{r,t-1}^{(i)}) + \gamma\left(\vx_{r,t-1}^{(i)} - \vx_{r-1}\right)\right\|^2 \\
     &\stackrel{(c)}{\leq} \frac{1}{N} \sum_{i=1}^N 3\left(\left\|\vDelta^{(i)}(\vx_{r,t-1}^{(i)})  - \nabla f_i(\vx_{r,t-1}^{(i)})\right\|^2 + \left\|\nabla f_i(\vx_{r,t-1}^{(i)}) - \nabla F(\vx_{r,t-1}^{(i)})\right\|^2 \right)  \\
     &\qquad\qquad + \frac{3\gamma^2}{N}\sum_{i=1}^N \left\|\vx_{r,t-1}^{(i)} - \vx_{r-1}\right\|^2 \\
     &\stackrel{(d)}{\leq} 6\Uplambda^2 + 3G + \frac{3\gamma^2}{N}\sum_{i=1}^N \left\|\vx_{r,t-1}^{(i)} - \vx_{r-1}\right\|^2 + 6\gO\left(\frac{1}{Q}\right) \ . 
\end{aligned}
\end{equation}
Similarly, $(c)$ is from Lemma~\ref{le:triangle} and $(d)$ is based on Lemma~\ref{le:triangle} as well as the result in \eqref{eq:temp-b83bc}.
\end{proof}

\paragraph{\scaffoldone{} Algorithm.} For \scaffold{} using its \typeone{} gradient correction in the federated ZOO setting (i.e., by simply combining \scaffoldone{} from \citep{scaffold} with the standard FD method in \eqref{eq:grad-est-fd}), it has the gradient estimation form as follows:
\begin{equation}
\begin{aligned}
    \widehat{\vg}_{r,t-1}^{(i)} = \vDelta^{(i)}(\vx_{r,t-1}^{(i)}) + \frac{1}{N}\sum_{j=1}^N \vDelta^{(j)}(\vx_{r-1}) - \vDelta^{(i)}(\vx_{r-1}) \ . \label{eq:scaff-grad-est-1}
\end{aligned}
\end{equation}
That is, $\gamma_{r,t-1}^{(i)}=1$, $\vg_{r,t-1}^{(i)} = \vDelta^{(i)}(\vx_{r,t-1}^{(i)})$ and $\vg_{r-1}(\vx') - \vg_{r-1}^{(i)}(\vx'')=\frac{1}{N}\sum_{j=1}^N \vDelta^{(j)}(\vx_{r-1}) - \vDelta^{(i)}(\vx_{r-1})$ in \eqref{eq:general-grad-est}. Of note, similar to our \alg{} where an additional transmission is required when we actively query in the neighborhood of $\vx_r$ in line 7 of Algo.~\ref{alg:fzoos}, \scaffoldone{} also needs another server-client transmission of $\frac{1}{N}\sum_{j=1}^N \vDelta^{(j)}(\vx_{r-1})$ for gradient correction. We provide the following gradient disparity bound for such a gradient estimation method when it is applied in Algo.~\ref{alg:fedzoo}.

\begin{proposition}\label{prop:scaffold-1}
Assume that $f_i$ is $\beta$-smooth with bounded gradient for any $i \in [N]$. When applying \eqref{eq:scaff-grad-est-1} in Algo.~\ref{alg:fedzoo}, the following then holds with a constant probability for some $\Uplambda > 0$,
\begin{equation*}
    \frac{1}{N} \sum_{i=1}^N \Xi_{r,t}^{(i)} \leq 18\Uplambda^2  + \frac{6\beta^2}{N}\sum_{i=1}^N \left\|\vx_{r,t-1}^{(i)} - \vx_{r-1}\right\|^2 + 18\gO\left(\frac{1}{Q}\right) \ .
\end{equation*}
\end{proposition}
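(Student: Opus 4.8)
The plan is to mirror the template already used for Propositions~\ref{prop:fedzo} and~\ref{prop:fedprox}: substitute the \scaffoldone{} estimate \eqref{eq:scaff-grad-est-1} into $\Xi^{(i)}_{r,t}=\|\widehat{\vg}^{(i)}_{r,t-1}-\nabla F(\vx^{(i)}_{r,t-1})\|^2$, split the error into a few additive pieces, bound each piece, and recombine through Lemma~\ref{le:triangle}. The decisive structural difference from those two proofs is that here the control variate must be made to \emph{cancel} the client-heterogeneity term, so that no $G$ survives in the final bound. I would therefore not isolate $\nabla f_i-\nabla F$ on its own (which is exactly what produces the $G$ in Prop.~\ref{prop:fedprox}); instead I would pair it with the exact counterpart $\nabla F(\vx_{r-1})-\nabla f_i(\vx_{r-1})$ of the control variate, exploiting $\frac1N\sum_{j}\nabla f_j(\vx_{r-1})=\nabla F(\vx_{r-1})$.

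Concretely, writing $e^{(j)}\triangleq\vDelta^{(j)}(\vx_{r-1})-\nabla f_j(\vx_{r-1})$ for the per-client finite-difference error at $\vx_{r-1}$, I would add and subtract $\nabla f_i$ at both $\vx^{(i)}_{r,t-1}$ and $\vx_{r-1}$ as well as $\nabla F(\vx_{r-1})$ to obtain the exact identity
\begin{equation*}
\begin{aligned}
\widehat{\vg}^{(i)}_{r,t-1} - \nabla F(\vx^{(i)}_{r,t-1})
&= \underbrace{\left(\vDelta^{(i)}(\vx^{(i)}_{r,t-1}) - \nabla f_i(\vx^{(i)}_{r,t-1})\right)}_{T_1}
+ \underbrace{\left(\frac{1}{N}\sum_{j=1}^N e^{(j)} - e^{(i)}\right)}_{T_2} \\
&\quad + \underbrace{\left(\left[\nabla f_i(\vx^{(i)}_{r,t-1}) - \nabla f_i(\vx_{r-1})\right] - \left[\nabla F(\vx^{(i)}_{r,t-1}) - \nabla F(\vx_{r-1})\right]\right)}_{T_3} .
\end{aligned}
\end{equation*}
Here $T_1$ and $T_2$ collect only finite-difference errors, whereas $T_3$ is the residual left once the control variate has absorbed the heterogeneity: a difference of gradient differences evaluated at the two nearby points $\vx^{(i)}_{r,t-1}$ and $\vx_{r-1}$.

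For $T_1$ I would invoke Lemma~\ref{le:fd-grad-error} in the form \eqref{eq:temp-b83bc}, giving $\|T_1\|\le\Uplambda+\gO(1/\sqrt{Q})$ and hence, via \eqref{eq:triangle-2} of Lemma~\ref{le:triangle}, $\|T_1\|^2\le 2\Uplambda^2+\gO(1/Q)$. The term $T_2$ is a difference between an averaged error $\frac1N\sum_j e^{(j)}$ and a single error $e^{(i)}$; bounding each $\|e^{(j)}\|$ by the same $\Uplambda+\gO(1/\sqrt{Q})$ and applying \eqref{eq:triangle-3} and \eqref{eq:triangle-2} keeps this contribution free of $G$ — the averaging over clients is precisely what eliminates the heterogeneity — and, crucially, free of any dependence on $N$. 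For $T_3$ I would use that each $f_i$, and therefore the aggregate $F=\frac1N\sum_i f_i$, is $\beta$-smooth, so each bracket is at most $\beta\|\vx^{(i)}_{r,t-1}-\vx_{r-1}\|$ and $\|T_3\|^2=\gO(\beta^2)\,\|\vx^{(i)}_{r,t-1}-\vx_{r-1}\|^2$; this drift term is what plays the role previously played by $G$. Finally I would apply \eqref{eq:triangle-3} to $T_1+T_2+T_3$, average over $i\in[N]$, and collect constants to reach the stated $\frac1N\sum_{i}\Xi^{(i)}_{r,t}\le 18\Uplambda^2+\frac{6\beta^2}{N}\sum_{i}\|\vx^{(i)}_{r,t-1}-\vx_{r-1}\|^2+18\gO(1/Q)$.

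The main obstacle is exactly the bookkeeping that guarantees the disappearance of $G$: one must combine the heterogeneity term with the control-variate term \emph{before} invoking any norm inequality, and then control the averaged error $\frac1N\sum_j e^{(j)}$ — for instance through the identity $\frac1N\sum_i\|\frac1N\sum_j e^{(j)}-e^{(i)}\|^2=\frac1N\sum_i\|e^{(i)}\|^2-\|\frac1N\sum_j e^{(j)}\|^2\le\frac1N\sum_i\|e^{(i)}\|^2$ — so that the constants neither blow up with the number of clients nor reintroduce a heterogeneity dependence. The $\beta$-smoothness of the aggregate $F$ and the additive finite-difference guarantee \eqref{eq:temp-b83bc} are the only external inputs; every remaining step is a routine application of Lemma~\ref{le:triangle}.
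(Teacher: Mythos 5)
Your proposal is correct and takes essentially the same route as the paper's proof: the same exact cancellation of client heterogeneity via $\frac{1}{N}\sum_j \nabla f_j(\vx_{r-1}) = \nabla F(\vx_{r-1})$, the same use of Lemma~\ref{le:fd-grad-error} (through \eqref{eq:temp-b83bc}) for the finite-difference errors, and $\beta$-smoothness for the residual drift term $\|\vx^{(i)}_{r,t-1}-\vx_{r-1}\|^2$; the only difference is that you group all FD errors at $\vx_{r-1}$ into one piece before splitting, whereas the paper pairs $\vDelta^{(j)}(\vx_{r-1})$ with $\nabla f_j(\vx^{(i)}_{r,t-1})$ per client and separates the smoothness contribution afterwards. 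Your bookkeeping as written yields a drift constant of $12\beta^2$ rather than $6\beta^2$, but this is immaterial given that the paper's own constants here are loose (its final proof line and the proposition statement already disagree by a factor of $N$ on that term).
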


\begin{proof}
\begin{equation}
\begin{aligned}
     \frac{1}{N} \sum_{i=1}^N \Xi_{r,t}^{(i)}
     &\stackrel{(a)}{=} \frac{1}{N} \sum_{i=1}^N \left\|\vDelta^{(i)}(\vx_{r,t-1}^{(i)}) + \left(\frac{1}{N}\sum_{j=1}^N \vDelta^{(j)}(\vx_{r-1}) - \vDelta^{(i)}(\vx_{r-1})\right)  - \nabla F(\vx_{r,t-1}^{(i)})\right\|^2 \\
     &\stackrel{(b)}{=} \frac{1}{N} \sum_{i=1}^N \left\|\vDelta^{(i)}(\vx_{r,t-1}^{(i)}) - \nabla f_i(\vx_{r,t-1}^{(i)}) + \frac{1}{N}\sum_{j=1,j \neq i}^N \left(\vDelta^{(j)}(\vx_{r-1}) - \nabla f_j(\vx_{r,t-1}^{(i)})\right)\right. \\
     &\qquad\qquad + \left. \frac{N-1}{N} \left(\nabla f_i(\vx_{r,t-1}^{(i)}) - \vDelta^{(i)}(\vx_{r-1})\right) \right\|^2 \\
     &\stackrel{(c)}{\leq} \frac{3}{N} \sum_{i=1}^N \left\|\vDelta^{(i)}(\vx_{r,t-1}^{(i)}) - \nabla f_i(\vx_{r,t-1}^{(i)})\right\|^2 + \frac{3}{N^3} \sum_{i=1}^N \left\|\sum_{j=1,j \neq i}^N \left(\vDelta^{(j)}(\vx_{r-1}) - \nabla f_j(\vx_{r,t-1}^{(i)})\right)\right\|^2 \\
     &\qquad\qquad + \frac{3(N-1)^2}{N^3} \sum_{i=1}^N \left\|\nabla f_i(\vx_{r,t-1}^{(i)}) - \vDelta^{(i)}(\vx_{r-1})\right\|^2 \\
     &\stackrel{(d)}{\leq} \frac{3}{N} \sum_{i=1}^N \left\|\vDelta^{(i)}(\vx_{r,t-1}^{(i)}) - \nabla f_i(\vx_{r,t-1}^{(i)})\right\|^2 + \frac{3(N-1)}{N^3}\sum_{i=1}^N\sum_{j=1,j\neq i}^N \left\|\vDelta^{(j)}(\vx_{r-1}) - \nabla f_j(\vx_{r,t-1}^{(i)})\right\|^2 \\
     &\qquad\qquad + \frac{3(N-1)^2}{N^3} \sum_{i=1}^N\left\|\nabla f_i(\vx_{r,t-1}^{(i)}) - \vDelta^{(i)}(\vx_{r-1})\right\|^2 \\
     &\stackrel{(e)}{\leq} \frac{3}{N} \sum_{i=1}^N \left\|\vDelta^{(i)}(\vx_{r,t-1}^{(i)}) - \nabla f_i(\vx_{r,t-1}^{(i)})\right\|^2 + \frac{6(N-1)}{N^2}\sum_{j=1}^N \left\|\vDelta^{(j)}(\vx_{r,t-1}^{(i)}) - \nabla f_j(\vx_{r,t-1}^{(i)})\right\|^2 \\
     &\qquad\qquad + \frac{6\beta^2(N-1)^2}{N^2}\sum_{j=1}^N\left\|\vx_{r,t-1}^{(i)} - \vx_{r-1}\right\|^2\\
     &\stackrel{(f)}{\leq} \frac{9}{N} \sum_{i=1}^N \left\|\vDelta^{(i)}(\vx_{r,t-1}^{(i)}) - \nabla f_i(\vx_{r,t-1}^{(i)})\right\|^2 + 6\beta^2\sum_{i=1}^N\left\|\vx_{r,t-1}^{(i)} - \vx_{r-1}\right\|^2 \\
     &\stackrel{(g)}{\leq} 18\Uplambda^2  +  6\beta^2\sum_{i=1}^N\left\|\vx_{r,t-1}^{(i)} - \vx_{r-1}\right\|^2 + 18\gO\left(\frac{1}{Q}\right)
\end{aligned} 
\end{equation}
Similarly, $(c),(d)$ are from Lemma~\ref{le:triangle} and $(e)$ is because of the smoothness of $F$ as well as \eqref{eq:triangle-2} with $a=\frac{1}{N-1}$. Finally, $(g)$ follows from the results in \eqref{eq:temp-b83bc} as well as the result in Lemma~\ref{le:triangle}.
\end{proof}

\paragraph{\scaffoldtwo{} Algorithm.} For \scaffold{} using its \typetwo{} gradient correction in the federated ZOO setting (i.e., by simply combining \scaffoldtwo{} from \citep{scaffold} with the standard FD method in \eqref{eq:grad-est-fd}), it has the gradient estimation form as follows:
\begin{equation}
\begin{aligned}
    \widehat{\vg}_{r,t-1}^{(i)} = \vDelta^{(i)}(\vx_{r,t-1}^{(i)}) + \frac{1}{NT}\sum_{j=1}^N \sum_{\tau=1}^{T}\vDelta^{(j)}(\vx_{r-1,\tau-1}^{(j)}) - \frac{1}{T}\sum_{\tau=1}^{T}\vDelta^{(i)}(\vx_{r-1,\tau-1}^{(i)}) \ . \label{eq:scaff-grad-est-2}
\end{aligned}
\end{equation}
That is, $\vg_{r-1}(\vx') - \vg_{r-1}^{(i)}(\vx'')=\frac{1}{NT}\sum_{j=1}^N \sum_{\tau=1}^{T}\vDelta^{(j)}(\vx_{r-1,\tau-1}^{(j)}) - \frac{1}{T}\sum_{\tau=1}^{T}\vDelta^{(i)}(\vx_{r-1,\tau-1}^{(i)})$, $\vg_{r,t-1}^{(i)} = \vDelta^{(i)}(\vx_{r,t-1}^{(i)})$ and $\gamma_{r,t-1}^{(i)}=1$ in \eqref{eq:general-grad-est}. Interestingly, \scaffoldtwo{} servers as an approximation of \scaffoldone{}, which in fact does not require another server-client transmission for gradient correction as discussed in \citep{scaffold}. This is because $\frac{1}{NT}\sum_{j=1}^N \sum_{\tau=1}^{T}\vDelta^{(j)}(\vx_{r-1,\tau-1}^{(j)})$ can be computed before the aggregation of $\{\vx_{r-1,T}^{(i)}\}_{i=1}^N$ on server. We provide the following gradient disparity bound for such a gradient estimation method when it is applied in Algo.~\ref{alg:fedzoo}.

\begin{proposition}\label{prop:scaffold-2}
Assume that $f_i$ is $c$-continuous and $\beta$-smooth for any $i \in [N]$ and the randomly sampled $\{\vu_q\}_{q=1}^Q$ in \eqref{eq:grad-est-fd} are shared across all iterations and rounds. When applying \eqref{eq:scaff-grad-est-2} in Algo.~\ref{alg:fedzoo}, the following then holds with a constant probability for some $\Uplambda, a> 0$,
\begin{equation*}
\begin{aligned}
    \frac{1}{N} \sum_{i=1}^N \Xi_{r,t}^{(i)} &\leq 18\Uplambda^2 + \frac{24ac^2}{\lambda^2 T}\sum_{i=1}^N\sum_{\tau=1}^T \left\|\vx_{r,t-1}^{(i)} - \vx_{r-1,\tau-1}^{(i)}\right\|^2 + 6 \gO\left(\frac{1}{Q}\right) + 12 \gO\left(\frac{1}{TQ}\right) \ .
\end{aligned}
\end{equation*}
\end{proposition}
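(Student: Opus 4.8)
The plan is to follow the template of the Type~I proof (Prop.~\ref{prop:scaffold-1}): decompose $\widehat{\vg}_{r,t-1}^{(i)} - \nabla F(\vx_{r,t-1}^{(i)})$ into a few pieces, pass to squared norms with Lemma~\ref{le:triangle}, and control each piece either by the finite-difference error bound \eqref{eq:temp-b83bc} or by a drift term. The two structural differences from Type~I that I must exploit are that the control variate in \eqref{eq:scaff-grad-est-2} is evaluated along the previous round's local trajectory $\{\vx_{r-1,\tau-1}^{(j)}\}$ rather than at the shared iterate $\vx_{r-1}$, and that the probing directions $\{\vu_q\}$ are shared across iterations and rounds.

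First I would absorb the local control variate into the current estimate by writing, using $\nabla F = \tfrac1N\sum_j\nabla f_j$ and the definition \eqref{eq:scaff-grad-est-2},
\[
\widehat{\vg}_{r,t-1}^{(i)} - \nabla F(\vx_{r,t-1}^{(i)}) = \underbrace{\frac1T\sum_{\tau=1}^T\Big(\vDelta^{(i)}(\vx_{r,t-1}^{(i)}) - \vDelta^{(i)}(\vx_{r-1,\tau-1}^{(i)})\Big)}_{B_1} + \underbrace{\Big(\frac1{NT}\sum_{j,\tau}\vDelta^{(j)}(\vx_{r-1,\tau-1}^{(j)}) - \nabla F(\vx_{r,t-1}^{(i)})\Big)}_{B_2}.
\]
The key observation is that $B_1$ is a difference of finite-difference estimators \emph{on the same client} $i$. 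Because $\{\vu_q\}$ is shared, the two estimators use identical directions, so the random directions do not contribute to $B_1$; expanding \eqref{eq:grad-est-fd} and applying the $c$-Lipschitz continuity of $f_i$ to each numerator gives $\|\vDelta^{(i)}(\vx) - \vDelta^{(i)}(\vx')\| \le (2c/\lambda)\,(\tfrac1Q\sum_q\|\vu_q\|)\,\|\vx-\vx'\|$. Squaring, applying Jensen over $\tau$, and folding $(\tfrac1Q\sum_q\|\vu_q\|)^2$ into a constant $a$ yields exactly the drift contribution $\tfrac{ac^2}{\lambda^2 T}\sum_\tau\|\vx_{r,t-1}^{(i)}-\vx_{r-1,\tau-1}^{(i)}\|^2$, which after averaging $\tfrac1N\sum_i$ and tracking constants through Lemma~\ref{le:triangle} produces the stated $\tfrac{24ac^2}{\lambda^2 T}$ term. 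This is also why the drift here carries $c^2/\lambda^2$ rather than the $\beta^2$ of Type~I.

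For $B_2$ I would split each $\vDelta^{(j)}(\vx_{r-1,\tau-1}^{(j)})$ into $\nabla f_j(\vx_{r-1,\tau-1}^{(j)})$ plus its finite-difference error, and separate the resulting average of errors from the averaged gradients. Here the shared-directions hypothesis is used a second time, for the opposite reason: the bias from finite $Q$ is \emph{common} to all $NT$ estimators and does not average away, contributing the $\Uplambda$-type bias, whereas the observation-noise part of \eqref{eq:temp-b83bc} is independent across the $NT$ distinct queries and therefore averages down, turning a per-estimator $\gO(1/Q)$ into the improved $\gO(1/(TQ))$ seen in the statement. Repeatedly invoking Lemma~\ref{le:triangle} (an outer three-way split, followed by inner two-way splits as in the step $(d)\!\to\!(e)$ of Prop.~\ref{prop:scaffold-1}) accounts for the constants $18$, $6$, and $12$.

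The hard part is the residual gradient discrepancy inside $B_2$, namely $\tfrac1{NT}\sum_{j,\tau}\big(\nabla f_j(\vx_{r-1,\tau-1}^{(j)})-\nabla f_j(\vx_{r,t-1}^{(i)})\big)$, which a priori mixes client $j$'s past iterates with client $i$'s current one and would naively produce either a heterogeneity ($G$) term or a cross-client drift $\|\vx_{r-1,\tau-1}^{(j)}-\vx_{r,t-1}^{(i)}\|$ --- neither of which appears on the right-hand side. The delicate step, on which I would spend the most care, is to route these comparisons through the shared global iterate so that, after the $\tfrac1N\sum_i$ aggregation, every cross-client distance collapses back onto the single-client drift $\sum_i\sum_\tau\|\vx_{r,t-1}^{(i)}-\vx_{r-1,\tau-1}^{(i)}\|^2$ while the heterogeneity contribution is cancelled by the control-variate structure (the mechanism that distinguishes \scaffold{} from \fedzo{} and \fedprox{}). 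Making this collapse rigorous --- rather than crudely bounding it by $G$ --- is where the argument most likely needs the full $\beta$-smoothness assumption together with the server-aggregation identity $\vx_{r-1}=\tfrac1N\sum_j\vx_{r-1,T}^{(j)}$.
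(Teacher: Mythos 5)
Your overall architecture (a multi-way decomposition, repeated use of Lemma~\ref{le:triangle}, FD error bounds from \eqref{eq:temp-b83bc}, and a drift term extracted via the shared directions plus $c$-continuity) matches the paper's, and your treatment of $B_1$ is essentially the paper's key estimate \eqref{eq:temp-bvkw8}. The genuine gap is in $B_2$. Your decomposition pairs each past estimate $\vDelta^{(j)}(\vx_{r-1,\tau-1}^{(j)})$ with $\nabla f_j(\vx_{r,t-1}^{(i)})$, i.e.\ client $j$'s gradient at client $i$'s current iterate, and you correctly observe that this produces cross-client distances $\|\vx_{r-1,\tau-1}^{(j)}-\vx_{r,t-1}^{(i)}\|$. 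But the ``collapse onto single-client drift with the heterogeneity cancelled'' that you then invoke is asserted, not proved, and it does not follow from the stated ingredients: routing through $\vx_{r-1}=\vx_{r,0}^{(j)}$ by the triangle inequality leaves residuals of the form $\|\vx_{r,t-1}^{(j)}-\vx_{r-1}\|$ and $\|\vx_{r,t-1}^{(i)}-\vx_{r-1}\|$, which are within-round drifts relative to $\vx_{r-1}$ rather than the cross-round, same-client quantities $\|\vx_{r,t-1}^{(i)}-\vx_{r-1,\tau-1}^{(i)}\|$ appearing in the statement, and no cancellation of $G$ occurs once you have passed to squared norms. The paper avoids the problem at the source: in its step $(b)$ it pairs $\vDelta^{(j)}(\vx_{r-1,\tau-1}^{(j)})$ with $\nabla f_j(\vx_{r,t-1}^{(j)})$ --- client $j$'s gradient at client $j$'s \emph{own} current iterate --- so that only same-client distances $\|\vx_{r,t-1}^{(j)}-\vx_{r-1,\tau-1}^{(j)}\|$ are ever generated and no $G$ or cross-client term arises. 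To reach the stated bound you would have to adopt that pairing (accepting the same looseness the paper accepts in identifying the resulting expression with $\widehat{\vg}^{(i)}_{r,t-1}-\nabla F(\vx^{(i)}_{r,t-1})$), not repair the cross-client terms after the fact.

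A second, subtler problem: your mechanism for the $\gO\left(\frac{1}{TQ}\right)$ term --- the noise being ``independent across the $NT$ distinct queries'' and hence averaging down --- is in tension with the very hypothesis you use for the drift term. With $\{\vu_q\}_{q=1}^Q$ shared across iterations and rounds, the $T$ estimators $\vDelta^{(j)}(\vx_{r-1,\tau-1}^{(j)})$ use identical directions, so the direction-induced error, which is what produces the dominant $\gO\left(\frac{1}{Q}\right)$ term in Lemma~\ref{le:fd-grad-error}, does not average down over $\tau$; only the observation noise does. The paper instead obtains $\gO\left(\frac{1}{TQ}\right)$ by inserting an auxiliary estimator $\vDelta_T^{(i)}(\vx_{r,t-1}^{(i)})$ built from $TQ$ queries at the single current point and applying Lemma~\ref{le:fd-grad-error} with $Q$ replaced by $TQ$; the discrepancy between this auxiliary object and the actual control variate is then absorbed entirely into the drift estimate \eqref{eq:temp-bvkw8}. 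You should restructure your $B_2$ argument around that auxiliary estimator rather than around an averaging claim.
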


\begin{proof}
We slightly abuse notation and use $\vDelta_T^{(i)}(\vx_{r,t-1}^{(i)})$ to denote the FD method in \eqref{eq:grad-est-fd} using $TQ$ function queries for the gradient estimation at input $\vx_{r,t-1}^{(i)}$ on client $i$. Based on this notation, we then have
\begin{equation}
\begin{aligned}
     &\frac{1}{N} \sum_{i=1}^N \Xi_{r,t}^{(i)} \\
     \stackrel{(a)}{=}& \frac{1}{N} \sum_{i=1}^N \left\|\vDelta^{(i)}(\vx_{r,t-1}^{(i)}) + \left(\frac{1}{NT}\sum_{j=1}^N \sum_{\tau=1}^{T}\vDelta^{(j)}(\vx_{r-1,\tau-1}^{(j)}) - \frac{1}{T}\sum_{\tau=1}^{T}\vDelta^{(i)}(\vx_{r-1,\tau-1}^{(i)})\right)  - \nabla F(\vx_{r,t-1}^{(i)})\right\|^2 \\
     \stackrel{(b)}{=}& \frac{1}{N} \sum_{i=1}^N \left\|\vDelta^{(i)}(\vx_{r,t-1}^{(i)}) - \nabla f_i(\vx_{r,t-1}^{(i)}) + \frac{N-1}{N}\left(\nabla f_i(\vx_{r,t-1}^{(i)}) - \frac{1}{T}\sum_{\tau=1}^{T}\vDelta^{(i)}(\vx_{r-1,\tau-1}^{(i)})\right)\right. \\
     &\qquad + \left.\frac{1}{NT}\sum_{j=1,j\neq i}^N \sum_{\tau=1}^{T}\left(\vDelta^{(j)}(\vx_{r-1,\tau-1}^{(j)}) - \nabla f_j(\vx_{r,t-1}^{(j)})\right)\right\|^2 \\
     \stackrel{(c)}{\leq}& \frac{3}{N} \sum_{i=1}^N \left\|\vDelta^{(i)}(\vx_{r,t-1}^{(i)}) - \nabla f_i(\vx_{r,t-1}^{(i)})\right\|^2 \\
     &\qquad + \frac{3(N-1)^2}{N^3} \sum_{i=1}^N \left\| \left(\nabla f_i(\vx_{r,t-1}^{(i)} - \vDelta_T^{(i)}(\vx_{r,t-1}^{(i)})\right) + \left(\vDelta_T^{(i)}(\vx_{r,t-1}^{(i)}) - \frac{1}{T}\sum_{\tau=1}^T \vDelta^{(i)}(\vx_{r-1,\tau-1}^{(i)})\right)\right\|^2 \\
     &\qquad + \frac{3}{N^3}\sum_{i=1}^N\left\|\sum_{j=1,j\neq 1}^N \left[\left(\nabla f_j(\vx_{r,t-1}^{(j)}) - \vDelta_T^{(j)}(\vx_{r,t-1}^{(j)})\right) + \left(\vDelta_T^{(j)}(\vx_{r,t-1}^{(j)}) - \frac{1}{T}\sum_{\tau=1}^T\vDelta^{(j)}(\vx_{r-1,\tau-1}^{(j)})\right)\right]\right\|^2 \\
     \stackrel{(d)}{\leq}& \frac{3}{N} \sum_{i=1}^N \left\|\vDelta^{(i)}(\vx_{r,t-1}^{(i)}) - \nabla f_i(\vx_{r,t-1}^{(i)})\right\|^2 \\
     &\qquad + \frac{3(N-1)^2}{N^3} \sum_{i=1}^N \Biggr(\left(1 + \frac{1}{N-1}\right)\left\| \nabla f_i(\vx_{r,t-1}^{(i)} - \vDelta_T^{(i)}(\vx_{r,t-1}^{(i)})\right\|^2 \Biggr. \\
     &\qquad\qquad\qquad \Biggr.+ N\left\|\vDelta_T^{(i)}(\vx_{r,t-1}^{(i)}) - \frac{1}{T}\sum_{\tau=1}^T \vDelta^{(i)}(\vx_{r-1,\tau-1}^{(i)})\right\|^2\Biggr) \\
     &\qquad + \frac{3(N-1)}{N^3}\sum_{i=1}^N\sum_{j=1,j\neq 1}^N\Biggl(\left(1 + \frac{1}{N-1}\right)\left\|\nabla f_j(\vx_{r,t-1}^{(j)}) - \vDelta_T^{(j)}(\vx_{r,t-1}^{(j)})\right\|^2 \Biggr. \\
     &\qquad\qquad\qquad + \Biggl. N\left\|\vDelta_T^{(j)}(\vx_{r,t-1}^{(j)}) - \frac{1}{T}\sum_{\tau=1}^T\vDelta^{(j)}(\vx_{r-1,\tau-1}^{(j)})\right\|^2 \Biggr) \label{eq:temp-ghkv}
\end{aligned}
\end{equation}
Similarly, $(c)$ are from \eqref{eq:triangle-3} in Lemma~\ref{le:triangle} and $(d)$ is because of \eqref{eq:triangle-2} in Lemma~\ref{le:triangle} with $a=\frac{N}{N-1}$.

We then bound $\left\|\vDelta_T^{(i)}(\vx_{r,t-1}^{(i)}) - \frac{1}{T}\sum_{\tau=1}^T \vDelta^{(i)}(\vx_{r-1,\tau-1}^{(i)})\right\|^2$ as below
\begin{equation}
\begin{aligned}
    &\left\|\vDelta_T^{(i)}(\vx_{r,t-1}^{(i)}) - \frac{1}{T}\sum_{\tau=1}^T \vDelta^{(i)}(\vx_{r-1,\tau-1}^{(i)})\right\|^2 \\
    \stackrel{(a)}{\leq}&\frac{1}{T}\sum_{\tau=1}^T\left\|\vDelta^{(i)}(\vx_{r,t-1}^{(i)}) - \vDelta^{(i)}(\vx_{r-1,\tau-1}^{(i)})\right\|^2 \\
    \stackrel{(b)}{=}& \frac{1}{T}\sum_{\tau=1}^T\left\|\frac{1}{Q}\sum_{q=1}^Q\left(y_i(\vx_{r-1,\tau-1}^{(i)}+ \lambda \vu_q) - y_i(\vx_{r,t-1}^{(i)}+ \lambda \vu_q) + y_i(\vx_{r,t-1}^{(i)}) - y_i(\vx_{r-1,\tau-1}^{(i)})\right)\frac{\vu_q}{\lambda}\right\|^2 \\
    \stackrel{(c)}{\leq}&\frac{1}{\lambda^2 TQ}\sum_{\tau=1}^T\sum_{q=1}^Q\left|y_i(\vx_{r-1,\tau-1}^{(i)}+ \lambda \vu_q) - y_i(\vx_{r,t-1}^{(i)}+ \lambda \vu_q) + y_i(\vx_{r,t-1}^{(i)}) - y_i(\vx_{r-1,\tau-1}^{(i)})\right\|^2\left\|\vu_q\right|^2 \\
    \stackrel{(d)}{=}& \frac{1}{\lambda^2 TQ}\sum_{\tau=1}^T\sum_{q=1}^Q 2\left|f_i(\vx_{r-1,\tau-1}^{(i)}+ \lambda \vu_q) - f_i(\vx_{r,t-1}^{(i)}+ \lambda \vu_q) + f_i(\vx_{r,t-1}^{(i)}) - f_i(\vx_{r-1,\tau-1}^{(i)})\right|^2\left\|\vu_q\right\|^2 \\
    &\qquad + \frac{1}{\lambda^2 TQ}\sum_{q=1}^Q 2 \left| \zeta^{(i)}_{r-1,\tau-1} - \zeta_{r,t-1}^{(i)} + \zeta^{(i)'}_{r-1,\tau-1} - \zeta_{r,t-1}^{(i)'} \right|^2 \left\|\vu_q\right\|^2 \\
    \stackrel{(e)}{\leq}& \frac{1}{\lambda^2 TQ}\sum_{\tau=1}^T\sum_{q=1}^Q 4\left(\left|f_i(\vx_{r-1,\tau-1}^{(i)}+ \lambda \vu_q) - f_i(\vx_{r,t-1}^{(i)}+ \lambda \vu_q)\right|^2 + \left|f_i(\vx_{r,t-1}^{(i)}) - f_i(\vx_{r-1,\tau-1}^{(i)})\right|^2\right)\left\|\vu_q\right\|^2 \\
    &\qquad + \frac{1}{\lambda^2 TQ}\sum_{q=1}^Q 8\eps^2\left\|\vu_q\right\|^2 \\
    \stackrel{(f)}{\leq}& \frac{8}{\lambda^2 T} \sum_{\tau=1}^T \left(c^2 \left\|\vx_{r,t-1}^{(i)} - \vx_{r-1,\tau-1}^{(i)}\right\|^2 + \eps^2 \right) \left(\frac{1}{Q}\sum_{q=1}^Q\left\|\vu_q\right\|^2\right) \\
    \stackrel{(g)}{\leq}& \frac{8a}{\lambda^2 T}\sum_{\tau=1}^T\left(c^2 \left\|\vx_{r,t-1}^{(i)} - \vx_{r-1,\tau-1}^{(i)}\right\|^2 + \eps^2\right) \label{eq:temp-bvkw8}
\end{aligned}
\end{equation}
where $(a),(d),(e)$ are due to \eqref{eq:triangle-3} in Lemma~\ref{le:triangle}. Note that $(d)$ is valid because $\{\vu_q\}_{q=1}^Q$ in \eqref{eq:grad-est-fd} is assumed to be shared across all iterations and rounds. In addition, $(c)$ is from the Cauchy–Schwarz inequality and $(f)$ is based on the continuity of $F$, i.e., $\left\|F(\vx) - F(\vx')\right\| \leq c$ for any $\vx,\vx' \in \gX$. Finally, $(g)$ is from Lemma~\ref{le:chi-square} and $a \triangleq d + 2\sqrt{dQ^{-1}\ln(1/\delta)}+2Q^{-1}\ln(1/\delta)$.

Finally, by introducing \eqref{eq:temp-bvkw8} into \eqref{eq:temp-ghkv}, we have
\begin{equation}
\begin{aligned}
    \frac{1}{N}\sum_{i=1}^N \Xi_{r,t}^{(i)} \stackrel{(a)}{\leq}& \frac{3}{N} \sum_{i=1}^N \left\|\vDelta^{(i)}(\vx_{r,t-1}^{(i)}) - \nabla f_i(\vx_{r,t-1}^{(i)})\right\|^2 + \frac{6(N-1)}{N^2} \sum_{i=1}^N \left\|\nabla f_i(\vx_{r,t-1}^{(i)}) - \vDelta_T^{(i)}(\vx_{r,\tau-1}^{(i)})\right\|^2\\
    &\qquad + \frac{24a(N-1)^2}{\lambda^2 TN^2}\sum_{i=1}^N \sum_{\tau=1}^T\left(c^2 \left\|\vx_{r,t-1}^{(i)} - \vx_{r-1,\tau-1}^{(i)}\right\|^2 + \eps^2\right) \\
    &\qquad + \frac{24a(N-1)}{\lambda^2 TN^2} \sum_{j=1,j\neq 1}^N\sum_{\tau=1}^T\left(c^2 \left\|\vx_{r,t-1}^{(j)} - \vx_{r-1,\tau-1}^{(j)}\right\|^2 + \eps^2\right) \\
    \stackrel{(b)}{\leq}& 18\Uplambda^2 + \frac{24ac^2}{\lambda^2 T}\sum_{i=1}^N\sum_{\tau=1}^T \left\|\vx_{r,t-1}^{(i)} - \vx_{r-1,\tau-1}^{(i)}\right\|^2 + 6 \gO\left(\frac{1}{Q}\right) + 12 \gO\left(\frac{1}{TQ}\right)
\end{aligned}
\end{equation}
Finally, $(b)$ follows from the results in \eqref{eq:temp-b83bc} as well as the result in Lemma~\ref{le:triangle}, which finally concludes our proof.
\end{proof}

\paragraph{Comparison and Discussion.} By comparing the upper bounds in Prop.~\ref{prop:fedzo}, \ref{prop:fedprox}, \ref{prop:scaffold-1}, and \ref{prop:scaffold-2} above with the one in our Thm.~\ref{th:grad-error}, we can summarize certain interesting insights as follows, which, to the best of our knowledge, has never been formally presented in the literature of federated ZOO.
\begin{enumerate}[\hspace{0pt}\normalfont(i)]
    \item The gradient disparity of existing federated ZOO algorithms consistently has an additional constant error term (i.e., $\Uplambda^2$) that can not be avoided. Remarkably, no additional constant error term occurs in the gradient disparity bound of our \eqref{eq:fzoos-grad-est}.
    \item The gradient disparity of existing federated ZOO algorithms typically can only be reduced at a polynomial rate of $Q$ whereas our \eqref{eq:fzoos-grad-est} is able to achieve an exponential rate of reduction for its gradient disparity.
    \item \fedprox{} achieves an even worse gradient disparity when compared with \fedzo{} by introducing an additional error term $\frac{3\gamma^2}{N}\sum_{i=1}^N \left\|\vx_{r,t-1}^{(i)} - \vx_{r-1}\right\|^2$. This may explain its worst convergence in Sec.~\ref{sec:exps}.
    \item \scaffoldone{} and \scaffoldtwo{} are typically able to mitigate the impact of client heterogeneity (i.e., $G$) by enlarging the impact of the gradient estimation error that is resulting from the FD method applied in these two algorithms. This may lead to worse practical performance when the gradient estimation error outweighs the client heterogeneity, as shown in our Sec.~\ref{sec:exps}.
    \item Although \scaffoldtwo{} is proposed to approximate \scaffoldone{} in the original paper \citep{scaffold}, \scaffoldtwo{} in fact has the advantage of achieving a smaller gradient estimation error for gradient correction by increasing the number of additional function queries (i.e., the term $\gO\left(\frac{1}{TQ}\right)$ in Prop.~\ref{prop:scaffold-2}), which is however at the cost of a likely increased input disparity (i.e., the term $\frac{24ac^2}{\lambda^2 T}\sum_{i=1}^N\sum_{\tau=1}^T \left\|\vx_{r,t-1}^{(i)} - \vx_{r-1,\tau-1}^{(i)}\right\|^2$ in Prop.~\ref{prop:scaffold-2}). Interestingly, federated ZOO usually prefers gradient correction of smaller gradient estimation errors, as suggested by the empirical results in our Sec.~\ref{sec:exps}. This explains the reason why \scaffoldtwo{} usually outperforms \scaffoldone{} in federated ZOO, which differs from the scenario of federated FOO and therefore highlights the importance of an accurate gradient correction in federated ZOO.
\end{enumerate}

\subsection{Convergence of Existing Federated ZOO Algorithms}\label{app-sec:conv-existing}
To establish the proof for the convergence of existing federated ZOO algorithms, we introduce the upper bound of gradient disparity $\frac{1}{N}\sum_{i=1}^N \Xi^{(i)}_{r,t}$ derived from our Prop.~\ref{prop:fedzo}, \ref{prop:fedprox}, \ref{prop:scaffold-1}, and \ref{prop:scaffold-2}, into Thm.~\ref{th:conv-general}. Particularly, to ease our proof, we mainly prove the convergence of existing federated ZOO algorithms when $F$ is non-convex and $\beta$-smooth. Similar to our Thm.~\ref{th:convergence-fzoos}, we define $D_0 \triangleq \left\|\vx_0 - \vx^*\right\|^2$ and $D_1 \triangleq F(\vx_0) - F(\vx^*)$, and assume that $\frac{1}{N}\sum_{i=1}^N \left\|\nabla f_i(\vx) - \nabla F(\vx)\right\|^2 \leq G$ for any $\vx \in \gX$.

\begin{theorem}\label{th:fedzo}
\fedzo{} enjoys the following convergence with a constant probability for some $\Uplambda > 0$ when $\eta \leq \frac{7}{100 \beta T}$,
\begin{equation*}
    \min_{r \in [R+1)} \left\|\nabla F(\vx_r)\right\|^2 \leq \gO\left(\frac{D_1}{\eta RT} + \Uplambda^2 + G + \frac{1}{Q}\right) \ .
\end{equation*}
\end{theorem}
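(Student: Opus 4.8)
The plan is to derive Thm.~\ref{th:fedzo} as a direct specialization of the general convergence guarantee established in Thm.~\ref{th:conv-general}, fed with the per-step gradient-disparity bound proved for \fedzo{} in Prop.~\ref{prop:fedzo}. Since \fedzo{} uses the update \eqref{eq:fedzo-grad-est}, it is an instance of Algo.~\ref{alg:fedzoo}; hence I would invoke part (iii) of Thm.~\ref{th:conv-general} (the non-convex, $\beta$-smooth case, valid under $\eta \le \tfrac{7}{100\beta T}$) and then control the two families of $\Xi^{(i)}_{r+1,t}$-terms on its right-hand side.

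The key step is that Prop.~\ref{prop:fedzo} supplies a bound that is \emph{uniform} in $r$ and $t$: $\tfrac1N\sum_{i=1}^N \Xi^{(i)}_{r,t} \le B$ with $B \triangleq 4\Uplambda^2 + 2G + 4\gO(1/Q)$, and, crucially, $B$ does \emph{not} decay across rounds or local iterations. Substituting this constant into the triple sums of Thm.~\ref{th:conv-general}(iii) reduces them to elementary counting sums. For the first family I would use $\tfrac{13}{\eta RT}\sum_{r=0}^R\sum_{t=1}^T (0.14\eta + \tfrac{1}{2\beta T})\,B = \tfrac{13(R+1)}{\eta R}(0.14\eta + \tfrac{1}{2\beta T})B$; for the nested family I would bound the inner geometric sum by $\sum_{\tau=1}^{t} S^{t-\tau} \le \tfrac{S^T-1}{S-1} \le 11T$ via \eqref{eq:inter-3}, so that $\tfrac{13}{\eta RT}\cdot 1.02\eta^2\beta \sum_{r=0}^R\sum_{t=1}^T\sum_{\tau=1}^t S^{t-\tau} B \le \tfrac{13\cdot1.02\cdot11\,\eta\beta T (R+1)}{R}B$.

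Finally I would collapse the prefactors using the stepsize constraint $\eta \le \tfrac{7}{100\beta T}$, which makes $\eta\beta T$ an absolute constant; under the standard choice $\eta = \Theta(1/(\beta T))$ both $\tfrac{1}{2\beta T\eta}$ and $(R+1)/R$ are order-one, so each $\Xi$-contribution reduces to $\gO(B)=\gO(\Uplambda^2 + G + 1/Q)$, and combining with the leading term $\tfrac{13 D_1}{\eta RT}$ yields the claimed bound. The only substantive point — and the contrast that makes the result meaningful — is that the constant, non-decaying nature of $B$ in Prop.~\ref{prop:fedzo} is precisely what produces a non-vanishing noise floor $\Uplambda^2 + G + 1/Q$, unlike the exponentially shrinking disparity that \alg{} enjoys through Thm.~\ref{th:grad-error}; the main (bookkeeping) obstacle is simply verifying that the residual $\eta,\beta,T$ factors telescope to $\gO(1)$ under the stepsize bound rather than reintroducing hidden $T$- or $\eta$-dependence into the floor.
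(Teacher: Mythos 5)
Your proposal is correct and follows essentially the same route as the paper: both invoke part (iii) of Thm.~\ref{th:conv-general}, substitute the uniform (non-decaying) disparity bound from Prop.~\ref{prop:fedzo}, bound the nested sum via $\sum_{\tau=1}^{t} S^{t-\tau}\leq 11T$ from \eqref{eq:inter-3}, and absorb the prefactors using the stepsize condition. If anything, you are slightly more careful than the paper in flagging that the $1/(2\beta T\eta)$ factor only collapses to $\gO(1)$ under $\eta=\Theta(1/(\beta T))$, an assumption the paper leaves implicit.
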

\begin{proof}
Following the proof in our Appx.~\ref{app-sec:proof:conv-fzoos}, we have
\begin{equation}
\begin{aligned}
    \min_{r \in [R+1)} \left\|\nabla F(\vx_r)\right\|^2 &\leq \frac{13(F(\vx_0) - F(\vx^*))}{\eta RT} + \frac{13}{\eta RT}\sum_{r=0}^R\sum_{i=1}^N\sum_{t=1}^T \left(\frac{\left(0.14 \eta + 1/(2\beta T)\right)}{N}\Xi_{r+1,t}^{(i)} \right. \\
    &\qquad \left. + \frac{1.02\eta^2 \beta}{N}\sum_{\tau=1}^{t} S^{t-\tau}\Xi_{r+1,\tau}^{(i)} \right) \\
    &\leq \gO\left(\frac{D_1}{\eta RT} + \left(\Uplambda^2 + G + \frac{1}{Q}\right) + \frac{1}{\beta}\left(\Uplambda^2 + G + \frac{1}{Q}\right)\right) \\
    &= \gO\left(\frac{D_1}{\eta RT} + \Uplambda^2 + G + \frac{1}{Q}\right) \ ,
\end{aligned}
\end{equation}
which concludes our proof.
\end{proof}

\begin{remark}
\normalfont
Of note, this convergence aligns with one provided in \citep{fedzo}, which hence supports the validity of our Thm.~\ref{th:conv-general} and Prop.~\ref{prop:fedzo}.    
\end{remark}

\paragraph{Discussion.} Of note, the key to proving the convergence of other existing federated ZOO algorithms (i.e., \fedprox{} and \scaffold{}) lies in the bounded client drift (i.e., Lemma~\ref{le:drift}) when additional input disparity is introduced in these algorithms. This in fact takes up a lot of space as shown in their original paper and is also out of the scope of this paper. As a consequence, we leave out the proof of the convergence of \fedprox{} and \scaffold{} in federated ZOO. Fortunately, the convergence (i.e., Thm.~\ref{th:conv-general}) for the general optimization framework Algo.~\ref{alg:fedzoo} implies that the key difference among the convergence of various federated ZOO algorithms in fact lies in their difference of gradient disparity. In light of this, based on our theoretical insights about the gradient disparity in different federated ZOO algorithms (Sec.~\ref{app-sec:existing-disparity}), we are still able to present the following insights into the advantages of our \alg{} intuitively from the perspective of convergence:
\begin{enumerate}[\hspace{0pt}\normalfont(i)]
\item In general, the convergence of our \alg{} in Appx.~\ref{app-sec:proof:conv-fzoos} avoids the constant error term that can not be omitted in existing federated ZOO algorithms. Note that even the error term caused by RFF approximation (see Thm.~\ref{th:convergence-fzoos}) is in fact able to be mitigated by using a large number $M$ of random features.
\item Compared with the convergence of \fedzo{} in Thm.~\ref{th:fedzo}, the 
 convergence of \alg{} in Appx.~\ref{app-sec:proof:conv-fzoos} demonstrates that the client heterogeneity can be effectively mitigated in \alg{} and the gradient estimation term enjoys a better reduction rate (i.e., exponential rate vs. polynomial rate).
\item The bounded client drift in Lemma~\ref{le:drift} for the framework Algo.~\ref{alg:fedzoo} implies that the additional input disparity from the \fedprox{} in Prop.~\ref{prop:fedprox}, the \scaffoldone{} in Prop.~\ref{prop:scaffold-1} and the \scaffoldtwo{} in Prop.~\ref{prop:scaffold-2} likely leads to a larger client drift and consequently results in worse convergence compared with our \alg{}, which has been empirically supported by the results in our Sec.~\ref{sec:exps} and Appx.~\ref{app-sec:more}.
\end{enumerate}

\newpage
\section{Experimental Settings}\label{app-sec:exp-settings}
\paragraph{General Settings.} The gradient correction length is set to be $\gamma_{r,t-1}^{(i)} = 1/t$ such that it decays with the iteration of local updates $t$. We set the learning rate $\eta = 0.01$ and use Adam as the optimizer. As we described in line 7-8 of Algo.~\ref{alg:fzoos} and in Sec.~\ref{sec:global-surrogates}, at each local update iteration, we actively query in the neighborhood of the input $\vx^{\smash{(i)}}_{r,t}$ on each client. Each time we generate $100$ values of $\vx^{\smash{(i)}}_{r,t} + \boldsymbol{\delta}$ where each dimension of $\boldsymbol{\delta}'$ is uniformly sampled from $[-0.01, 0.01]$. We select the top $5$ values with the highest uncertainty $\big\|\partial (\sigma^{\smash{(i)}}_{r,t})^2(\vx^{\smash{(i)}}_{r,t} + \boldsymbol{\delta})\big\|$. We set the number of random features $M=10000$ for the squared exponential kernel with a length scale of $1$. Each dimension of the function input is normalized to be within $[0,1]$ using the min-max normalization. The number of clients $N$, the number of local updates $T$, and the number of rounds $R$ vary for different experiments.

\subsection{Synthetic Experiments}\label{app-sec:setting-syn}
Let input $\vx = [x_j]_{j=1}^d \in [-10,10]^d$, $\va^{(i)} = [a^{(i)}_j]_{j=1}^d$, and $\vb^{(i)} = [b^{(i)}_j]_{j=1}^d$, then the quadratic functions on each client $i$ that has been applied in our Sec.~\ref{sec:syn} is in the form of
\begin{equation}
\begin{aligned}
    f_i(\vx) = \frac{1}{10 d} \left(\sum_{j \in [d]} \left[\left(1 + C \left(a^{(i)}_j - \frac{1}{N}\right)\right)x_j^2 +\left(1 + C \left(b^{(i)}_j - \frac{1}{N}\right)\right)x_j\right] + 1\right)
\end{aligned}
\end{equation}
where every $[a^{(i)}_j]_{i=1}^N$ and $[b^{(i)}_j]_{i=1}^N$ are independently randomly sampled from the same Dirichlet distribution $\text{Dir}(\valpha)$ where $\valpha = \frac{1}{N} \cdot \vone$. So, given any $C>0$, the final objective function remains
\begin{equation}
    F(\vx) = \frac{1}{10 d}\left(\sum_{j \in [d]} \left[x_j^2 + x_j\right] + 1 \right) \ .
\end{equation}
Of note, $C$ is the constant that controls the client shift in our federated setting. Specifically, a larger $C$ typically leads to larger client shifts whereas a smaller $C$ usually enjoys smaller client shifts. We set the number of clients to be $N=5$. We set $C \in \{0.5, 5, 50\}$ to vary the degree of heterogeneity (i.e., client shifts) among the local functions. The dimension of the function input is set to be $d=300$. We set the number of local updates to be $T=10$ and the number of rounds to be $R=50$. 

\subsection{Federated Black-Box Adversarial Attack}\label{app-sec:setting-attack}
We set the number of clients $N=10$ in this experiment. Before we conduct the adversarial attack, we need to train $N=10$ models on different datasets to get the heterogeneous local model functions. To control the degree of heterogeneity among these functions, each time we sample $P \times 10$ classes among the $10$ classes of the dataset (i.e., MNIST or CIFAR-10) and construct a dataset that only contains data points from these $P \times 10$ classes where $P \in [0,1]$. Repeat the above procedures for $10$ times to get $10$ different datasets. Consequently, a higher $P$ means that the degree of heterogeneity among the local model functions is lower. As an example, when $P=1$, all the local models of these clients will be exactly the same since they are all trained on the dataset with all $10$ classes data points. For MNIST, we train a convolutional neural network (CNN) with two convolution layers followed by two fully connected layers on each dataset. For CIFAR-10, we train a ResNet18 on each dataset.

After obtaining these $10$ local model functions for the clients, we proceed to select $15$ data points from the test dataset. Specifically, we choose these data points among the ones that have been correctly classified by all of the $10$ local models. These selected data points will be used as the targets for our attack. The goal is to find a perturbation $\vx$, such that the modified image $\vz + \vx$ will be classified incorrectly by the model of each client. The local function takes the perturbed image $\vz + \vx$ as input and outputs the difference between the logit of the true class and the highest logit among all other classes except the true class. The condition for the attack to be successful is that the averaged output of $N=10$ models misclassify the image $\vz + \vx$. The success rate is the portion of images that are successfully attacked among the selected $15$ images. We set the number of local updates $T=10$ and the number of rounds to be $R=100$.

\subsection{Federated Non-Differentiable Metric Optimization}\label{app-sec:setting-metric}
Following the practice in \citep{zord}, we first train a 3-layer MLP model on the training dataset of Covertype \citep{Dua19} using the Cross-Entropy loss to obtain its fully converged parameters $\vtheta^*$. This is to simulate the federated learning (i.e., fine-tuning) of a pre-trained model with other non-differentiable metrics. Similar to the setting in Appx.~\ref{app-sec:setting-attack}, we construct $N=7$ datasets by sampling $P \times 7$ ($P \in [0,1]$) classes from the test dataset each time. Again, the degree of heterogeneity among the local functions of the clients is controlled by $P$. The higher the value of $P$, the more heterogeneous local functions will be. In this experiment, we aim to find a perturbation $\vx$ to the model parameters $\vtheta^*$, such that $\vtheta^* + \vx$ will yield better performance for other non-differentiable metrics, e.g., precision and recall, by using the distributed datasets on clients. Specifically, the local function takes the perturbed model parameter as input and outputs the result of a non-differentiable metric (e.g., $1-\text{precision}$) that evaluates the performance of the model on the corresponding constructed dataset. We set $T=10$ and $R=50$. As in \citep{zord}, we conduct experiments on four non-differentiable metrics, namely precision, recall, Jaccard score, and F1 score.

\section{More Results}\label{app-sec:more}
\subsection{Synthetic Experiments}\label{app-sec:syn}
In this section, we first compare the gradient disparity of existing federated ZOO algorithms and our \alg{} algorithm using the quadratic functions (see Appx.~\ref{app-sec:setting-syn}) with $d=300$, $N=5$, and $C=5$. The results are in Fig.~\ref{fig:error}, showing that our proposed adaptive gradient estimation is indeed able to realize significantly improved estimation quality than other existing methods while requiring fewer function queries. This consequently verified the theoretical insights of Thm.~\ref{th:grad-error}. 
Interestingly, we notice that the quality of our \eqref{eq:fzoos-grad-est} decreases when the number of iterations for local updates is increased, which is likely because the performance of our gradient surrogates suffers when the input $\vx$ for gradient estimation is far away from the historical function queries (i.e., few function information at $\vx$ can be used for predictions), as theoretically supported in our Appx.~\ref{app-sec:prac-gamma}. This also indicates the importance of active queries in our \alg{} for consistently high-quality \eqref{eq:fzoos-grad-est} by collecting more function information in the neighborhood of the potential updated inputs within the local updates.

\begin{figure}[t]
\centering
\hspace{-4mm}
\includegraphics[width=0.95\columnwidth]{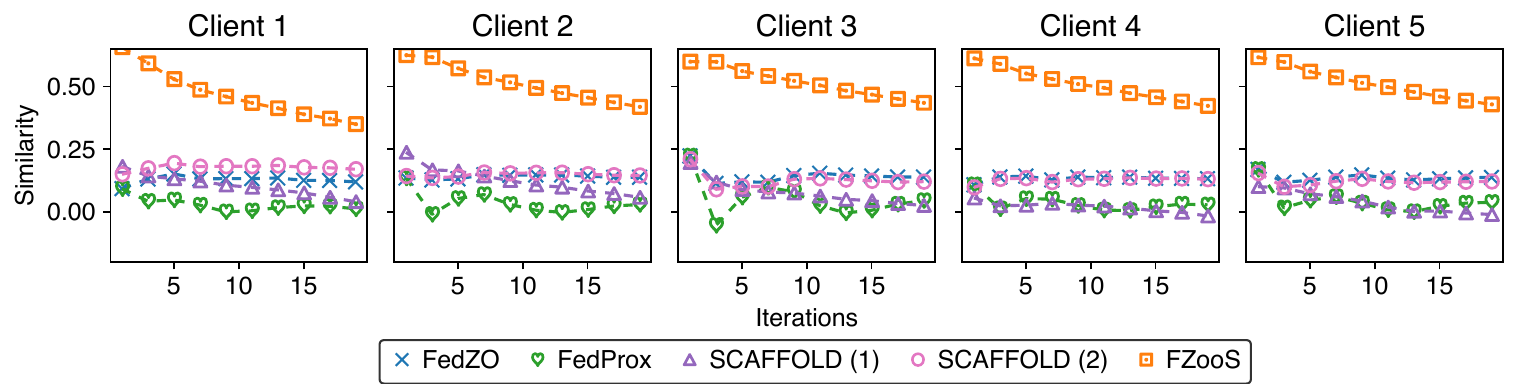}
\vskip -0.1in
\caption{Comparison of the cosine similarity between $\widehat{\vg}_{r,t-1}^{(i)}$ and $\nabla F(\vx_{r,t-1})$ within one round (with local iterations $T=20$) among different federated ZOO algorithms, where the $y$-axis denotes the cumulatively averaged similarity w.r.t. the $x$-axis (i.e., the iterations of local updates). Of note, for every iteration, our \eqref{eq:fzoos-grad-est} will actively query only 5 additional function values, which is much fewer than the 20 additional queries in other existing algorithms based on FD methods.
}
\label{fig:error}
\vspace{-2mm}
\end{figure}

\begin{figure}[t]
\centering
\begin{tabular}{cc}
    \hspace{-4mm}
    \includegraphics[width=0.5\columnwidth]{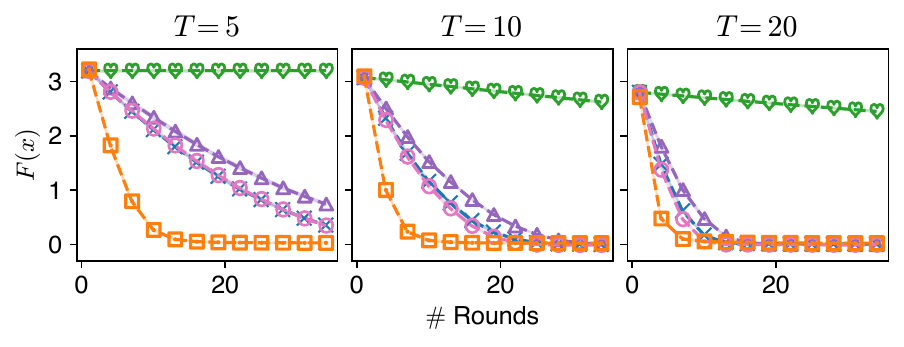}&
    \hspace{-4mm}
    \includegraphics[width=0.5\columnwidth]{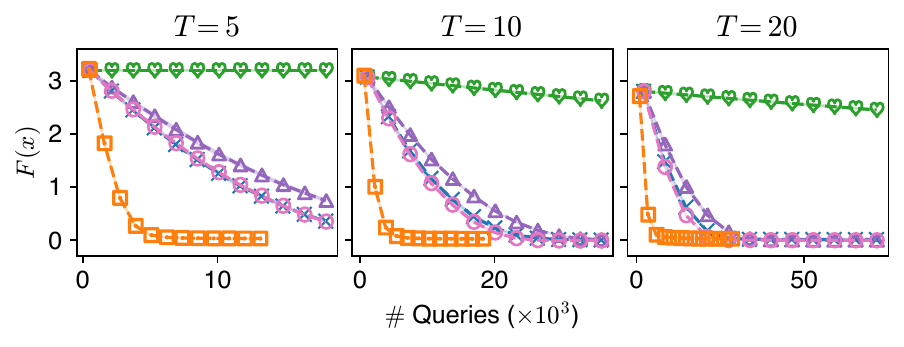} \\
\end{tabular}
\vskip -0.05in
\includegraphics[width=0.57\columnwidth]{workspace/figs/legend.pdf}
\vskip -0.1in
\caption{
Comparison of the communication and query efficiency between our \alg{} and other existing baselines on the federated synthetic functions with a varying number $T$ of local updates.
}
\label{fig:quadratic-varying-t}
\end{figure}

In addition to the comparison using a quadratic function that is under varying heterogeneity through different $C$ in our Fig.~\ref{fig:quadratic}, we present the comparison using a quadratic function that is under a varying number $T$ of local updates in Fig.~\ref{fig:quadratic-varying-t}. Remarkably, our \alg{} still considerably outperforms other baselines in terms of both communication efficiency and query efficiency. Interestingly, Fig.~\ref{fig:quadratic-varying-t} shows that a larger $T$ usually improves the communication efficiency of both our \alg{}, as theoretically supported in our Thm.~\ref{th:convergence-fzoos}. However, such an improvement is usually smaller than the increasing scale of $T$. This also aligns with our Thm.~\ref{th:convergence-fzoos} since our Thm.~\ref{th:convergence-fzoos} demonstrates that the increasing $T$ fails to mitigate the impact of client heterogeneity. That is, term $G$ in Thm.~\ref{th:convergence-fzoos} can not be reduced when $T$ is increased.

\begin{figure}[t]
\centering
\begin{tabular}{cc}
    \hspace{-4mm}
    \includegraphics[width=0.50\columnwidth]{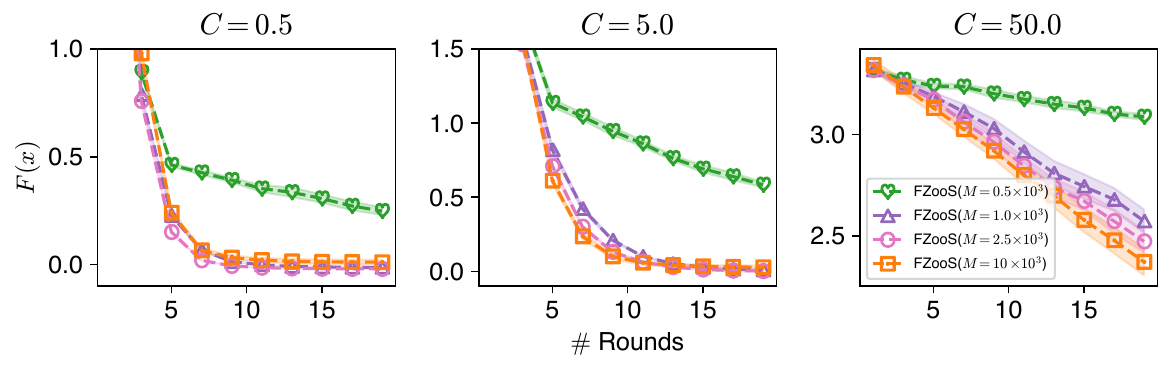}&
    \hspace{-4mm}
    \includegraphics[width=0.50\columnwidth]{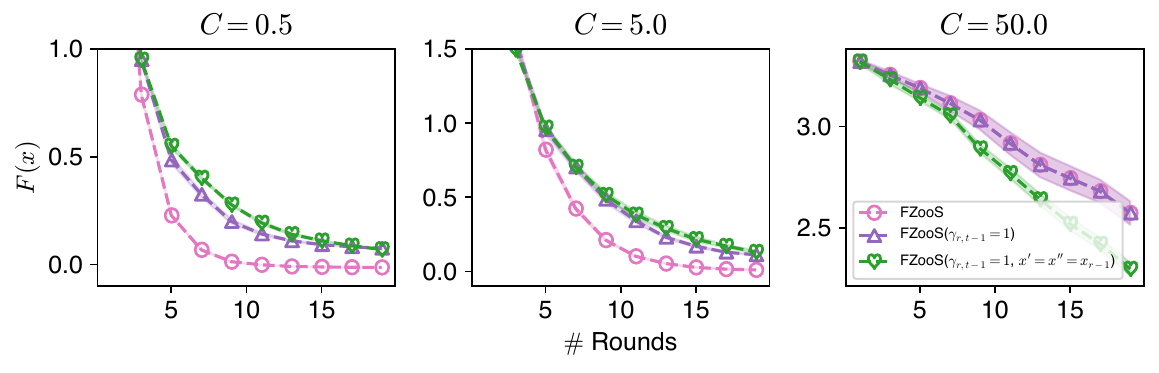} \\
    {(a)} & {(b)}
\end{tabular}
\caption{
Comparison of the communication efficiency of our \alg{} (a) with a varying number $M$ of random features and (b) without adaptive gradient correction. Of note, $\gamma_{r,t-1}=1$ means a fixed gradient correction length and $\vx'=\vx''=\vx_{r-1}$ stands for a fixed gradient correction vector as in \scaffold{}.
}
\label{fig:quadratic-ablation}
\end{figure}

We finally present the comparison of the communication efficiency of our \alg{} (a) with a varying number $M$ of random features and (b) without adaptive gradient correction under varying client heterogeneity in Fig.~\ref{fig:quadratic-ablation}. Of note, in Fig.~\ref{fig:quadratic-ablation}, we only apply $M=1000$ random features to facilitate a clear and direct comparison. Interestingly, Fig.~\ref{fig:quadratic-ablation}(a) demonstrates that our \alg{} of a larger number $M$ of random features generally is preferred for an improved communication efficiency when the client heterogeneity (i.e., $C$) is increased, which thus aligns with the theoretical insights from our Thm.~\ref{th:convergence-fzoos} in Sec.~\ref{sec:conv-analysis}. Nevertheless, when client heterogeneity is small (e.g., $C \leq 5.0$), a moderate number of random features can already produce compelling and competitive convergence. Meanwhile, Fig.~\ref{fig:quadratic-ablation}(b) illustrates that, in general, both our adaptive gradient correction vector and adaptive gradient correction length are essential for our \alg{} to achieve remarkable convergence in practice. Surprisingly, our \alg{} with fixed gradient correction outperforms  its counterpart with adaptive gradient correction when client heterogeneity is large (i.e., $C=50$). This is likely because a small number of random features (i.e., $M=1000$) are applied when $C=50$, making adaptive gradient correction generally inaccurate for a long horizon of local updates since the quality of our gradient surrogates decays w.r.t. the horizon (i.e., iterations) as shown in Fig.~\ref{fig:error}. This can also be verified from Fig.~\ref{fig:quadratic-ablation}(a). On the contrary, the fixed gradient correction is already of reasonably good quality due to the smoothness of the global function $F$ (i.e., its gradients are continuous), which consequently can provide consistently good gradient correction along a long horizon of local updates when client heterogeneity is large (i.e., $C=50$).

\subsection{Federated Black-Box Adversarial Attack}\label{app-sec:attack}

\begin{figure}[t]
\centering
\begin{tabular}{cc}
    \hspace{-4mm}
    \includegraphics[width=0.5\columnwidth]{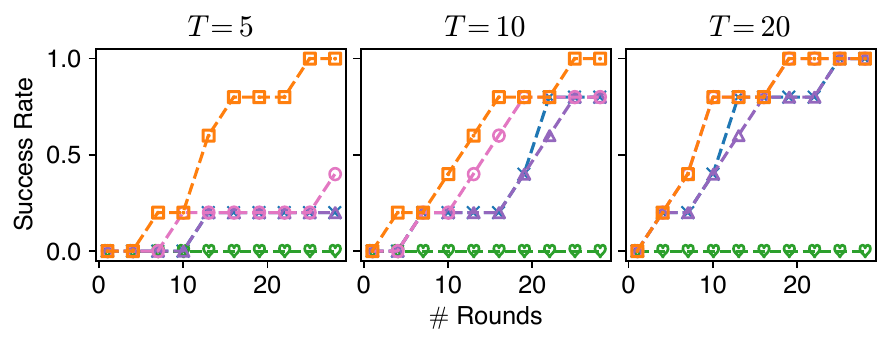}&
    \hspace{-4mm}
    \includegraphics[width=0.5\columnwidth]{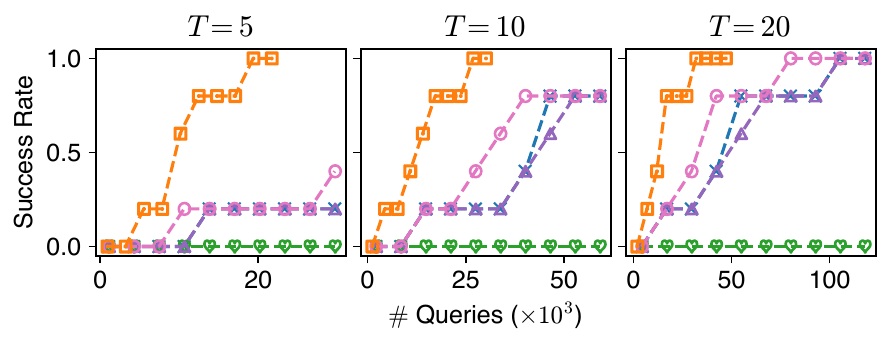}
\end{tabular}
\vskip -0.05in
\includegraphics[width=0.57\columnwidth]{workspace/figs/legend.pdf}
\vskip -0.1in
\caption{
Comparison of the success rate achieved by \alg{} and other existing federated ZOO algorithms on CIFAR-10 under a varying number $T$ of local updates.
}
\label{fig:attack-cifar10-varying-t}
\end{figure}

\begin{figure}[t]
\centering
\begin{tabular}{cc}
    \hspace{-4mm}
    \includegraphics[width=0.5\columnwidth]{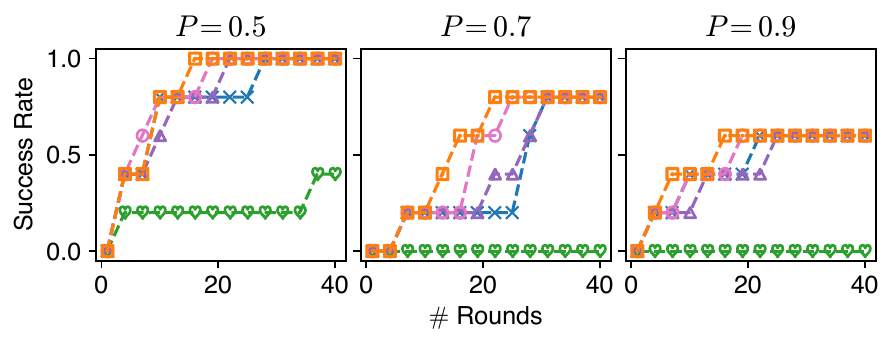}&
    \hspace{-4mm}
    \includegraphics[width=0.5\columnwidth]{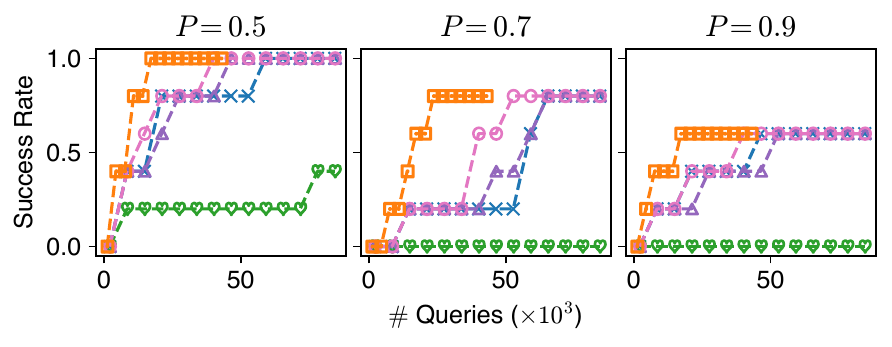}\\
    \hspace{-4mm}
    \includegraphics[width=0.5\columnwidth]{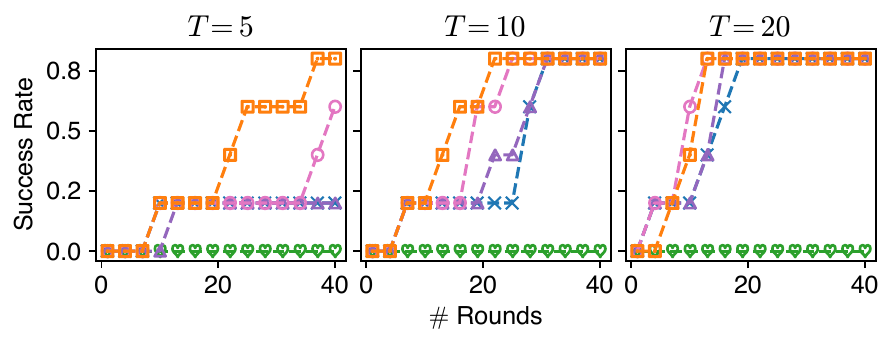}&
    \hspace{-4mm}
    \includegraphics[width=0.5\columnwidth]{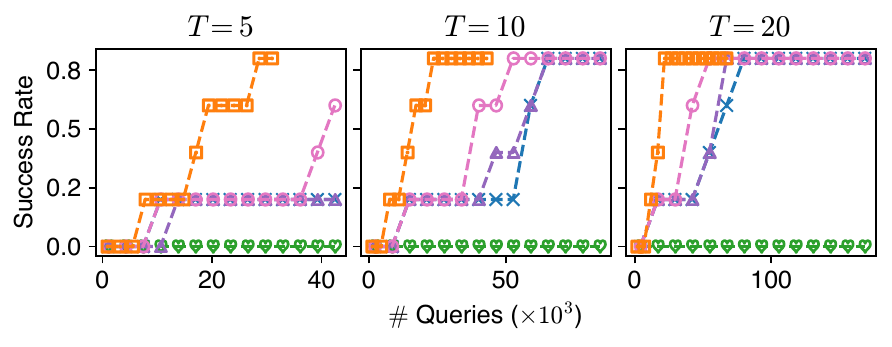}
\end{tabular}
\vskip -0.05in
\includegraphics[width=0.57\columnwidth]{workspace/figs/legend.pdf}
\vskip -0.1in
\caption{
Comparison of the success rate in federated black-box adversarial attack achieved by \alg{} and other existing federated ZOO algorithms on MNIST under varying client heterogeneity (controlled by $P \in [0,1]$, a larger $P$ implies smaller client heterogeneity) and a varying number $T$ of local updates. The $x$ and $y$-axis are the number of rounds/queries and the corresponding success rate (higher is better).
}
\label{fig:attack-mnist}
\end{figure}
In addition to depicting the success rate of attacks on CIFAR-10 in Fig.\ref{fig:attack}, which accounts for varying client heterogeneity, we also present the success rate of attacks on CIFAR-10 considering a variable number of local updates, as showcased in Fig.\ref{fig:attack-cifar10-varying-t}. Furthermore, we provide an illustration of the attack success rate on MNIST, considering both varying client heterogeneity and a variable number of local updates, as presented in Fig.~\ref{fig:attack-mnist}. Notably, our proposed algorithm consistently demonstrates enhanced efficiency in terms of communication when compared to other baselines, across different levels of client heterogeneity and varying numbers of local updates.

\subsection{Federated Non-Differentiable Metric Optimization}\label{app-sec:metric}
\begin{figure}[t]
\centering
\begin{tabular}{cc}
    \hspace{-4mm}
    \includegraphics[width=0.5\columnwidth]{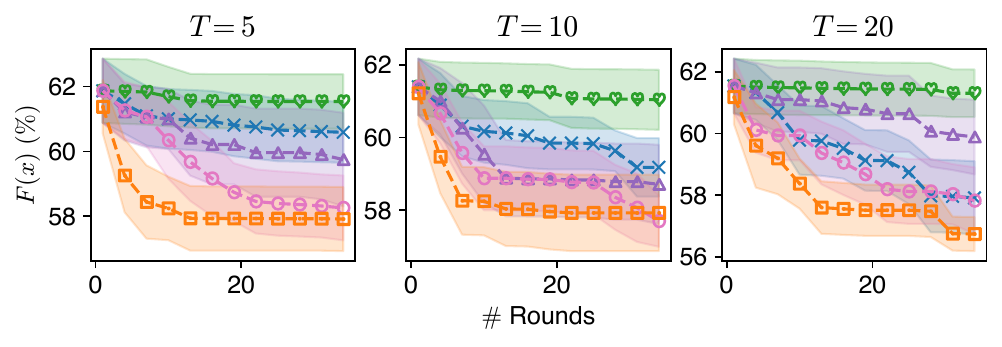}&
    \hspace{-4mm}
    \includegraphics[width=0.5\columnwidth]{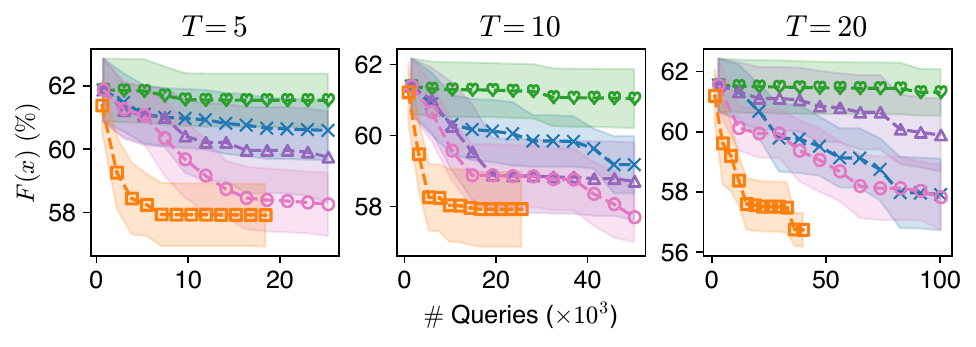}
\end{tabular}
\vskip -0.05in
\includegraphics[width=0.57\columnwidth]{workspace/figs/legend.pdf}
\vskip -0.1in
\caption{Comparison of the non-differentiable metric optimization between \alg{} and other existing federated ZOO algorithms under a varying number $T$ of local updates. Note that the $y$-axis is $(1-\text{precision})\times 100\%$ and each curve is the mean $\pm$ standard error from five independent runs.}
\label{fig:metricopt-vary-t}
\end{figure}
Besides the non-differentiable metric optimization result for the precision score that is under a varying heterogeneity through different $P$ in Fig.~\ref{fig:metricopt},  we also report the corresponding result under a varying number $T$ of local updates in Fig.~\ref{fig:metricopt-vary-t}. Moreover, we provide results for recall, F1 score, and Jaccard as the non-differentiable metric in Fig.~\ref{fig:metricopt-recall}, Fig.~\ref{fig:metricopt-f1}, and Fig.~\ref{fig:metricopt-jaccard} respectively. Notably, our \alg{} still consistently outperforms other baselines in terms of both communication efficiency and query efficiency when under the comparison of varying client heterogeneity and a varying number of local updates with different non-differentiable metrics.

\begin{figure}[ht]
\centering
\begin{tabular}{cc}
    \hspace{-4mm}
    \includegraphics[width=0.5\columnwidth]{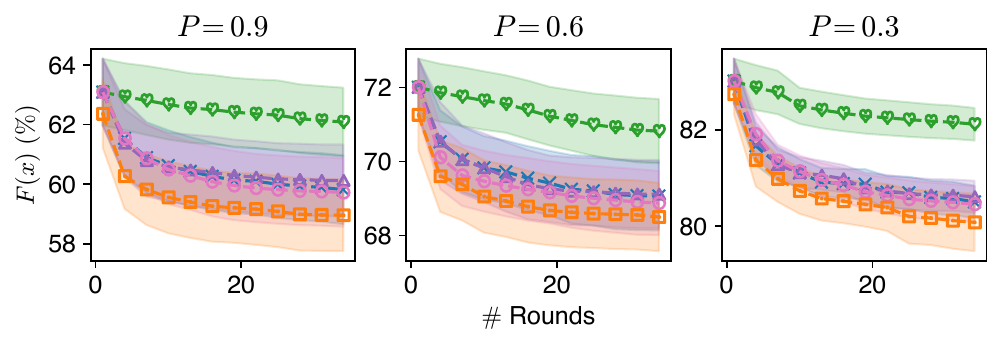}&
    \hspace{-4mm}
    \includegraphics[width=0.5\columnwidth]{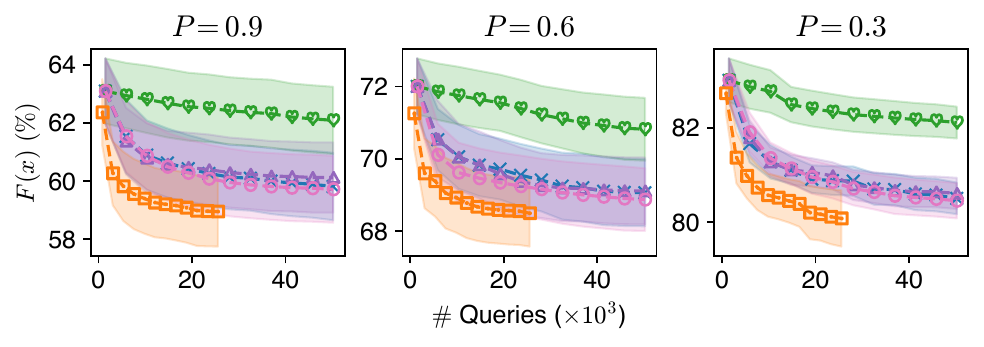}\\
    \hspace{-4mm}
    \includegraphics[width=0.503\columnwidth]{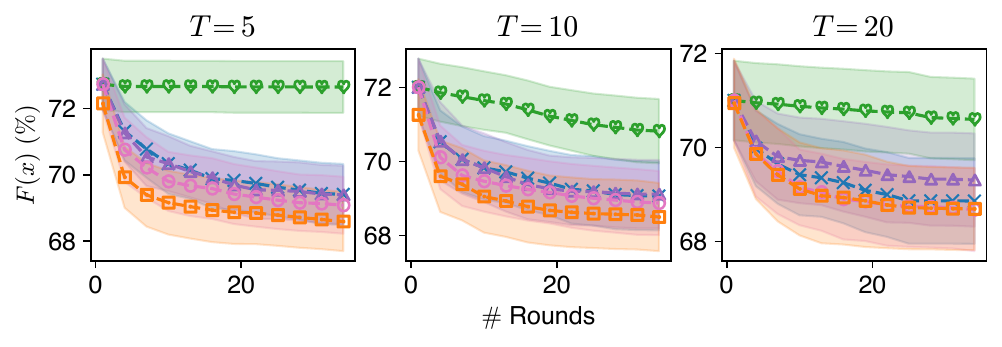}&
    \hspace{-4mm}
    \includegraphics[width=0.503\columnwidth]{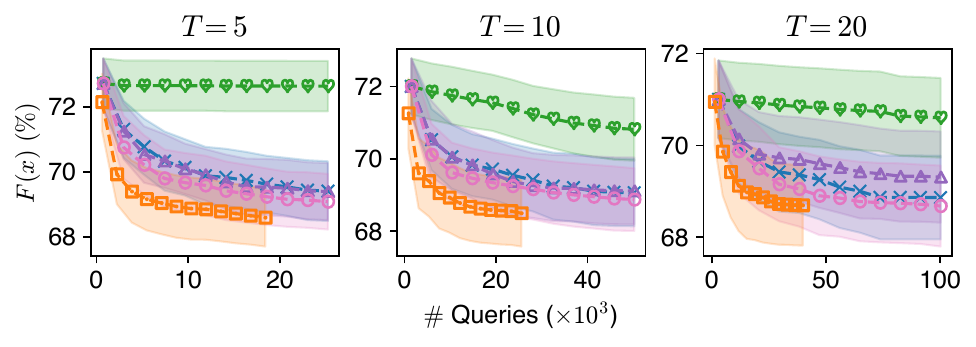} \\
\end{tabular}
\vskip -0.05in
\includegraphics[width=0.57\columnwidth]{workspace/figs/legend.pdf}
\vskip -0.1in
\caption{Comparison of the non-differentiable metric optimization between \alg{} and other existing federated ZOO algorithms under varying client heterogeneity (controlled by $P \in [0,1]$, a larger $P$ implies smaller client heterogeneity) and a varying number $T$ of local updates. Note that the $y$-axis is $(1-\text{recall})\times 100\%$ and each curve is the mean $\pm$ standard error from five independent runs.
}
\label{fig:metricopt-recall}
\end{figure}

\begin{figure}[ht]
\centering
\begin{tabular}{cc}
    \hspace{-4mm}
    \includegraphics[width=0.5\columnwidth]{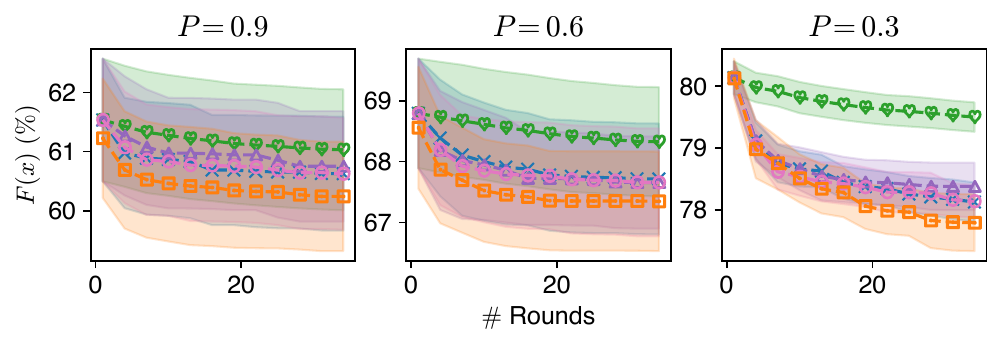}&
    \hspace{-4mm}
    \includegraphics[width=0.5\columnwidth]{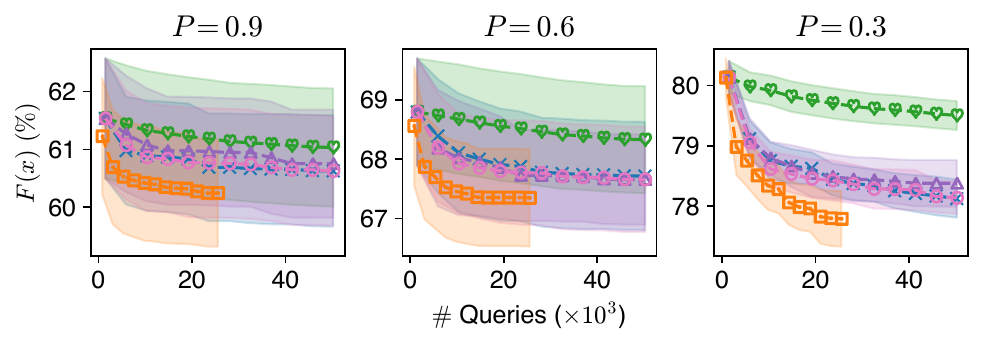}\\
    \hspace{-4mm}
    \includegraphics[width=0.503\columnwidth]{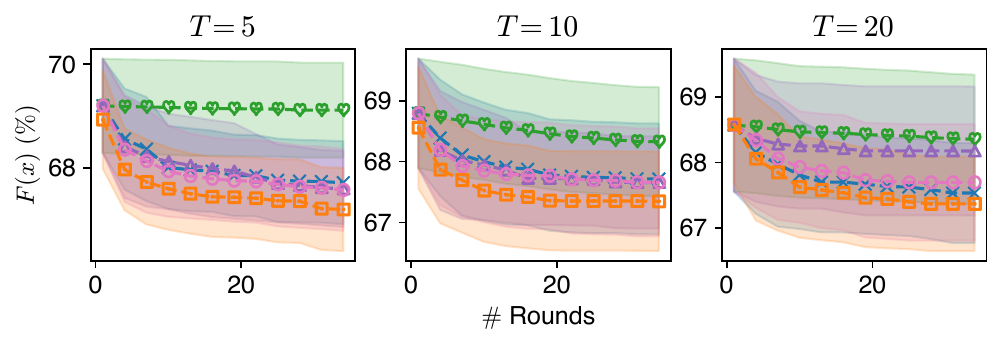}&
    \hspace{-4mm}
    \includegraphics[width=0.503\columnwidth]{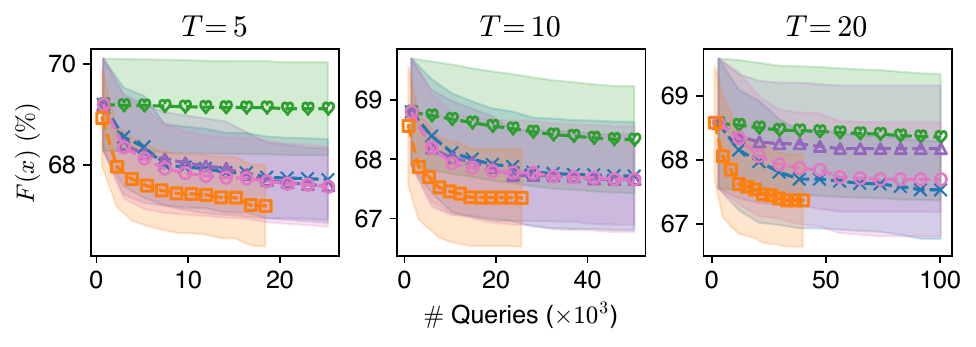} \\
\end{tabular}
\vskip -0.05in
\includegraphics[width=0.57\columnwidth]{workspace/figs/legend.pdf}
\vskip -0.1in
\caption{Comparison of the non-differentiable metric optimization between \alg{} and other existing federated ZOO algorithms under varying client heterogeneity (controlled by $P \in [0,1]$, a larger $P$ implies smaller client heterogeneity) and a varying number $T$ of local updates. Note that the $y$-axis is $(1-\text{F1 score})\times 100\%$ and each curve is the mean $\pm$ standard error from five independent runs.
}
\label{fig:metricopt-f1}
\vspace{-4mm}
\end{figure}

\begin{figure}[ht]
\centering
\begin{tabular}{cc}
    \hspace{-4mm}
    \includegraphics[width=0.5\columnwidth]{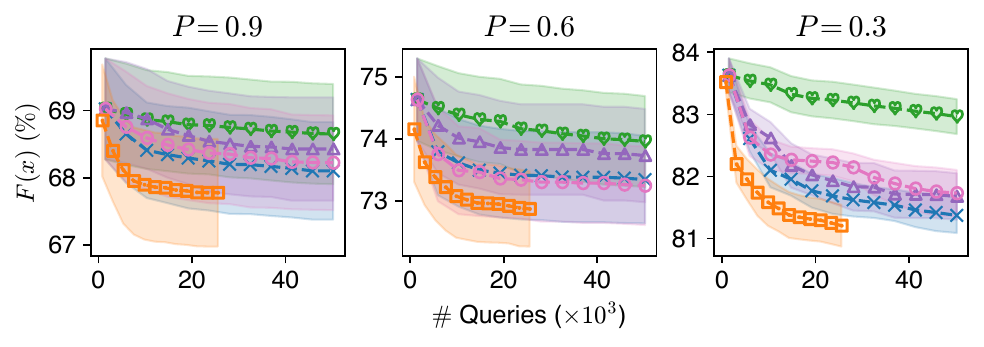}&
    \hspace{-4mm}
    \includegraphics[width=0.5\columnwidth]{workspace/figs/metric-opt/convergence-MetricOpt-jaccard_score-div-query.pdf}\\
    \hspace{-4mm}
    \includegraphics[width=0.503\columnwidth]{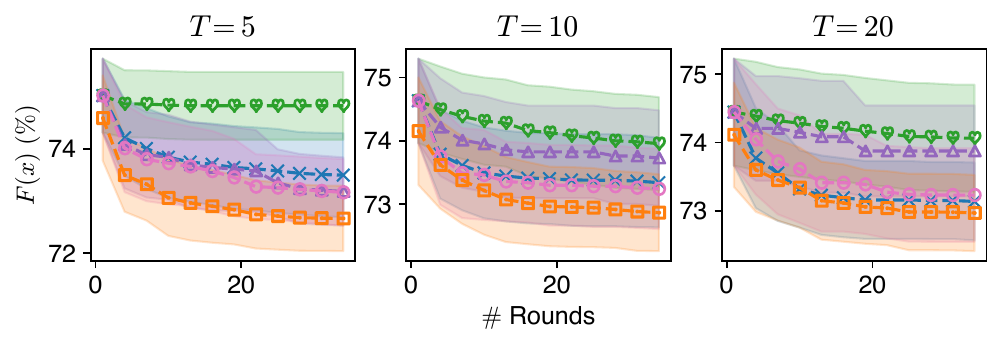}&
    \hspace{-4mm}
    \includegraphics[width=0.503\columnwidth]{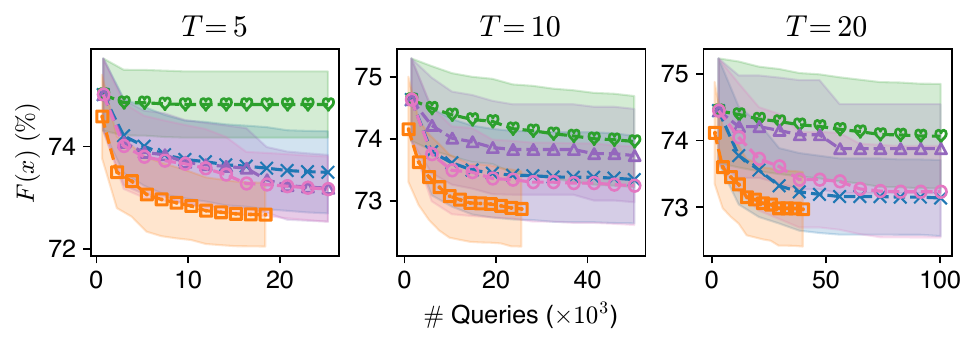} \\
\end{tabular}
\vskip -0.05in
\includegraphics[width=0.57\columnwidth]{workspace/figs/legend.pdf}
\vskip -0.1in
\caption{Comparison of the non-differentiable metric optimization between \alg{} and other existing federated ZOO algorithms under varying client heterogeneity (controlled by $P \in [0,1]$, a larger $P$ implies smaller client heterogeneity) and a varying number $T$ of local updates. The $y$-axis is $(1-\text{Jaccard score})\times 100\%$ and each curve is the mean $\pm$ standard error from five independent runs.
}
\label{fig:metricopt-jaccard}
\vspace{-4mm}
\end{figure}

\end{appendices}    

\end{document}